\documentclass[a4paper,11pt]{article}

\usepackage{anyfontsize}
\usepackage{natbib}
\usepackage[utf8]{inputenc}
\usepackage[english]{babel}
\usepackage{amssymb,amsmath,amsthm}
\usepackage{bbm}
\usepackage{dsfont}
\usepackage{xcolor}
\usepackage[margin=1in]{geometry}
\usepackage{verbatim}
\usepackage{subcaption}
\usepackage{algorithm}
\usepackage[noend]{algpseudocode}
\usepackage{soul}
\newcommand{\colred}[1]{{\color{red}#1}}

\usepackage{fonttable}
\usepackage{xr}
\externaldocument{supplement_RFGLS_revision}

\usepackage{bm}
\usepackage[colorlinks]{hyperref}
\usepackage[nameinlink,capitalise]{cleveref}

\theoremstyle{definition}
\newtheorem{theorem}{Theorem}[section]

\theoremstyle{definition}
\newtheorem{assume}{Assumption}

\theoremstyle{definition}
\newtheorem{corollary}{Corollary}[section]

\theoremstyle{definition}
\newtheorem{lemma}{Lemma}[section]

\theoremstyle{definition}
\newtheorem{proposition}{Proposition}[section]
\newcommand\independent{\protect\mathpalette{\protect\independenT}{\perp}}
\def\independenT#1#2{\mathrel{\rlap{$#1#2$}\mkern2mu{#1#2}}}
\usepackage{mathtools}

\DeclareMathOperator*{\argmaxA}{arg\,max}
\newcommand{\bzero}{\textbf{0}}

\newcommand{\ba}{\textbf{a}}
\newcommand{\bb}{\textbf{b}}
\newcommand{\bB}{\textbf{B}}
\newcommand{\bc}{\textbf{c}}
\newcommand{\bC}{\textbf{C}}
\newcommand{\bA}{\textbf{A}}

\newcommand{\bD}{\textbf{D}}

\newcommand{\bs}{\textbf{s}}

\newcommand{\bO}{\textbf{O}}
\newcommand{\bM}{\textbf{M}}
\newcommand{\bP}{\textbf{P}}
\newcommand{\bQ}{\textbf{Q}}
\newcommand{\bI}{\textbf{I}}

\newcommand{\bu}{\textbf{u}}
\newcommand{\bv}{\textbf{v}}
\newcommand{\bw}{\textbf{w}}

\newcommand{\bx}{\textbf{x}}
\newcommand{\bX}{\textbf{X}}
\newcommand{\by}{\textbf{y}}
\newcommand{\bom}{\textbf{m}}
\newcommand{\bY}{\textbf{Y}}

\newcommand{\bZ}{\textbf{Z}}
\newcommand{\bL}{\textbf{L}}
\newcommand{\br}{\textbf{r}}

\newcommand{\given}{\,|\,}

\newcommand{\eps}{\epsilon}
\newcommand{\sigs}{\sigma^2}
\newcommand{\taus}{\tau^2}
\newcommand{\iid}{\stackrel{\mathrm{i.i.d.}}{\sim}}

\newcommand{\blue}[1]{{\leavevmode\color{black}{#1}}}

\newcommand{\calB}{{\cal B}}
\newcommand{\calW}{{\cal W}}

\newcommand{\calC}{{\cal C}}
\newcommand{\calN}{{\cal N}}
\newcommand{\calH}{{\cal H}}
\newcommand{\calF}{{\cal F}}

\newcommand{\calK}{{\cal K}}

\newcommand{\calD}{{\cal D}}
\newcommand{\calG}{{\cal G}}

\newcommand{\calP}{{\cal P}}

\newcommand{\calL}{{\cal L}}

\newcommand{\bbeta}{ \mbox{\boldmath $ \beta $} }

\newcommand{\bphi}{ \mbox{\boldmath $\phi$}}

\newcommand{\beps}{ \mbox{\boldmath $\epsilon$}}

\newcommand{\btheta}{ \mbox{\boldmath $ \theta $} }

\newcommand{\bSig}{ \mbox{$\bm{\Sigma}$} }

\newcommand{\bSigma}{ \mbox{\boldmath $\Sigma$} }

\newcommand{\bTheta}{ \mbox{\boldmath $\Theta$} }

\newcommand{\bgamma}{ \mbox{\boldmath $\gamma$} }
\newcommand{\brho}{ \mbox{\boldmath $\rho$} }

\newcommand{\boeta}{\mbox{\boldmath $\eta$} }

\newcommand{\calI}{\mathcal I}

\allowdisplaybreaks
\usepackage[shortlabels]{enumitem}

\usepackage{setspace}
\doublespacing

\usepackage{chngcntr}
\usepackage{apptools}
\AtAppendix{\counterwithin{lemma}{section}}

\newcommand{\blind}{1}
\begin{document}
	\def\spacingset#1{\renewcommand{\baselinestretch}%
		{#1}\small\normalsize} \spacingset{1}

\if1\blind
{
	\title{\bf Random forests for spatially dependent data}
 \author{Arkajyoti Saha$^1$\footnote{AS and AD were supported by NSF award DMS-1915803 and the Johns Hopkins Bloomberg American Health Initiative Spark Award.}, Sumanta Basu$^2$\footnote{SB was supported by NSF award DMS-1812128, and NIH awards R01GM135926 and R21NS120227.} , Abhirup Datta$^{1*}$\footnote{Corresponding email: abhidatta@jhu.edu }}
 
 \date{
	$^1$ Department of Biostatistics, Johns Hopkins University\\
	$^2$ Department of Statistics and Data Science, Cornell University\\[2ex]
}
	\maketitle
} \fi

\if0\blind
{
	\bigskip
	\bigskip
	\bigskip
	\title{\bf Random Forests for spatially dependent Data}
	\date{}
		\maketitle
} \fi



\maketitle

\setlength{\abovedisplayskip}{0pt}
\setlength{\belowdisplayskip}{0pt}

\setcounter{page}{1}

\begin{abstract} 
Spatial linear mixed-models, consisting of a linear covariate effect and a Gaussian Process (GP) distributed spatial random effect, are widely used for analyses of geospatial data.  
We consider the setting where the covariate effect is non-linear. Random forests (RF) are popular for estimating non-linear functions but 
applications of RF for spatial data have often ignored 
the spatial correlation. We show that this impacts the performance of RF adversely. 
We propose RF-GLS, a novel and well-principled extension of RF, for estimating non-linear covariate effects  in spatial mixed models where the spatial correlation is modeled using GP. RF-GLS extends RF in the same way generalized least squares (GLS) fundamentally extends ordinary least squares (OLS) to accommodate for dependence in linear models. 
RF becomes a special case of RF-GLS, and is substantially outperformed by RF-GLS for both estimation and prediction across extensive numerical experiments with  spatially correlated data. RF-GLS can be used for functional estimation in other types of dependent data like time series. 
We prove consistency of RF-GLS for $\beta$-mixing dependent error processes that includes the popular spatial 
Mat\'ern GP. As a byproduct, we also establish, to our knowledge, the first consistency result for RF 
under dependence. We establish results of independent importance, including a general consistency result of GLS optimizers of data-driven function classes, and a uniform laws of large numbers 
under $\beta$-mixing dependence with weaker assumptions. These new tools can be potentially useful for asymptotic analysis of other GLS-style estimators in nonparametric regression with dependent data. 
\end{abstract}

Keywords: Spatial, Gaussian Processes, Random forests, generalized least-squares.

\spacingset{1.42} 

\section{Introduction}
Geo-referenced data, exhibiting spatial correlation, are commonly analyzed in a mixed-model framework consisting of a fixed-effect component for the covariates and a spatial random-effect \citep{banerjee2014hierarchical}. If $Y, X$ and $\ell$ respectively denote the response, the $D$-dimensional vector of covariates, and the spatial location, then the spatial linear mixed model can be expressed as $Y = X^\top\bbeta + w(\ell) + \mbox{error}$. The linear regression term $\bx^\top\bbeta$ parsimoniously models the fixed covariate effect $m(\bx) = E(Y \given X=\bx)$, while the spatial random effect $w(\ell)$ \textcolor{black}{is often modeled flexibly using Gaussian Processes (GP) that encode the dependence in the data across locations. The mixed model allows for inference on the covariate effect as the estimate of $\beta$ while adjusting for the spatial dependence via $w$. At the same time, it leverages the conveniences of a GP to seamlessly predict the outcome at a new location via kriging. This flexibility has made the spatial linear mixed model with GP-distributed random effects the flag-bearer in geo-statistics.}

This manuscript focuses on applications where the linearity assumption for the covariate effect $m(\bx)$ is inappropriate. We consider the spatial non-linear mixed effect model $Y = m(X) + w(\ell) + \mbox{error}$. A non-linear $m(\bx)$ 
can be modeled in terms of splines or other basis function expansions, while still modeling the spatial effect using GP. However, 
 smooth and continuous basis functions not ideal for modeling non-smooth and possibly discontinuous (like piecewise constant) covariate effects. Also, 
  basis functions on multi-dimensional covariate domains experience curse of dimensionality even for only $3$ or $4$ covariates as the knots are usually too far apart to adequately represent the covariate space 
 \citep{taylor2013challenging}. 

Random forests (RF)  \citep{breiman2001random} estimates non-linear regression functions using an ensemble of non-smooth, data-adaptive family of basis functions, which have more expressive powers than traditional fixed basis methods \citep{lin2006random, scornet2016random}. 
RF has become one of the most popular method for flexible non-linear function estimation, with wide applications in different scientific and engineering fields. While RF has also been used for a number of geo-spatial applications \citep{ahijevych2016probabilistic,di2019ensemble,lim2019mapping,viscarra2014mapping,fayad2016regional},  
most of the aforementioned work 
do not make procedural considerations for the RF algorithm to address the spatial correlation in the data. This is fundamentally at odds with the tenets of spatial modeling where the spatial correlation is explicitly accommodated using GP-distributed random effects. 

Very little attention has been paid to how ignoring this spatial correlation affects function estimation using RF. 
The criterion (loss function) used to recursively split the nodes of decision trees of RF 
is essentially an ordinary least squares (OLS) loss. 
It is well known that OLS is sub-optimal for dependent data as it
ignores data correlation. 
Also, RF creates and consolidates the trees using ``bagging" (bootstrap aggregation) \citep{breiman1996bagging}. As spatial data are correlated, 
this resampling violates the assumption of independent and identically distributed (i.i.d.) data units fundamental to bootstrapping. We provide empirical evidence that these limitations manifest in inferior estimation/prediction performance of RF under spatial dependence. 
\blue{There have been two ways of accounting for spatial dependence in RF. The first approach performs spatial adjustment through kriging \emph{after} fitting an RF without accounting for the spatial dependence. For prediction at new locations, it uses the spatial mixed model framework to combine this vanilla RF estimate of the covariate effect $m(X)$ with the kriging from the GP part (using the residuals from the RF fit) \citep{fayad2016regional,fox2020comparing}. This  is referred to as Random Forest residual kriging (RF-RK).
The other approaches attempt to explicitly use spatial information in RF.}
The spatial RF (RFsp) of \cite{hengl2018random} adds all pairwise spatial-distances as additional covariates. \cite{georganos2019geographical} proposes geographically local estimation of RF. These approaches abandon the spatial mixed model framework, \blue{only focusing on prediction, modeling the response as a joint function $g$ of the covariates and the locations, i.e., $E(Y) = g(X,\ell)$, and are thus not able to isolate or estimate the covariate effect $m(\bx)$.}  
Also, to our knowledge, there is no  asymptotic theory justifying these approaches.

In this article, we bridge the gap between spatial mixed effect modeling using GP and regression function estimation using RF by proposing a well-principled rendition of RF to explicitly incorporate the spatial correlation structure implied by GP. {\em This enables using RF for estimating a non-linear covariate effect in the spatial mixed effect model while explicitly accounting for spatial correlation via the GP-distributed random effects.} 
Our approach is motivated by the fact that in a GP-based spatial linear mixed model, marginal likelihood maximization for $\beta$ is equivalent to a generalized least squares (GLS) optimization. Under data dependence, GLS is more efficient than OLS for linear regression 
leading to better finite sample performance. 

GLS has been previously used for parameter estimation in non-linear regression under data dependence \citep{kimeldorf1971some,diggle1989spline}.
Since regression with polynomials, splines or other \emph{known} basis functions are essentially linear in the parameters, the extension of these approaches to a GLS-style quadratic form global optimization is immediate and the solution is available in closed form. However, RF uses \emph{basis functions obtained by a data-adaptive, greedy algorithm}, so a GLS formulation for estimating the regression part using RF in a spatial GP regression is not immediate and has not been explored before.

 
An observation, central to our GLS extension of regression trees and forests is that to split a node of decision tree in RF, 
the local (intra-node) loss 
can be equivalently represented as a global (in all nodes) OLS linear regression problem with a binary design matrix. The subsequent node representative assignment is simply the OLS fit given the chosen split. 
For dependent data, we can replace this OLS step with a GLS optimization problem at every node split and grow the tree accordingly. The node representatives also become the GLS fits instead of node means. The global GLS loss-based splitting and fits thus use information from all current nodes as is desirable under data-dependence and are more efficient than the OLS analogs (Figure \ref{fig:gls}). 

Our GLS-style RF also naturally accommodates data resampling or sub-sampling used for creating a forest of trees. The key observation for this is that in a linear regression between response $\bY$ and covariates $\bX$, the GLS loss with covariance matrix $\bSigma_0$ 
coincides with an OLS loss with $\tilde{\bY} = \bSigma_0^{-1/2} \bY$, $\tilde{\bX} = \bSigma_0^{-1/2} \bX$. 
Thus GLS can be thought of as OLS with the decorrelated responses $\tilde \bY$. 
Analogously, we introduce resampling after the decorrelation in our algorithm, 
essentially resampling the uncorrelated contrasts $\tilde \bY$ instead of the correlated outcomes $\bY$. 

We refer to our method as RF-GLS and present a computationally efficient algorithm for implementing it. RF-GLS reduces exactly to RF when the working correlation matrix used in the GLS-loss is the identity matrix. For spatial mixed model regression, RF-GLS estimates the non-linear covariate effect while using the GP to model the spatial random effect. Hence, traditional spatial tasks like kriging (spatial predictions) can be performed easily. For large spatial data, RF-GLS also avoids onerous big GP computations by harmonizing with Nearest Neighbor Gaussian Processes \citep{nngp} to yield an algorithm of linear time-complexity. 

\blue{We show that RF-GLS has distinct advantages over competing methods for both estimation and prediction. For estimation, RF-GLS, accounting explicitly for the spatial correlation via GLS loss, provides improved estimates of the covariate effect $m(X)$ in a wide range of simulation scenarios over RF which completely ignores the spatial information. Other methods like RFsp do not even produce a separate estimate of the covariate effect and cannot be considered for the task of estimation. 
For prediction, RF-GLS outperforms both RF-RK and RFsp. RF-GLS conveniently uses the spatial mixed model framework where the structured spatial dependence is parsimoniously encoded via GP. 
Methods like RFsp abandon this mixed-model framework and uses a large number of additional distance-based covariates in RF. A downside to this unnecessary escalation of the problem to high-dimensional settings is that the several additional distance-based covariates  far outnumber the true set of covariates thereby biasing the covariate selection at each node-splitting towards the spatial covariates. This 
leads to poor prediction performance of RFsp when the spatial noise is small relative to the covariate signal. 
On the other hand, when the spatial noise is large, RF-RK, estimating the covariate effect $m(X)$ without accounting for the spatial dependence, performs substantially poorly while RFsp (dominated by spatial covariates) performs comparably to RF-GLS. RF-GLS performs best at all ranges of the covariate-signal-to-spatial-noise-ratio (SNR), where each of RF-RK and RFsp suffers at opposite ends of this SNR spectrum (See Figure \ref{Fig:friedman_MSE_ratio}).} 


\subsection{Theoretical contributions}
Another major contribution of this article is a thorough theoretical analysis of RF-GLS under dependent error processes like GP. For i.i.d. data, \cite{scornet2015consistency} proved the consistency of Breiman's RF, which allows the node splitting and node representation to be based on the same data. The other  strand of RF theory consider ``honest'' trees \citep{mentch2016quantifying,wager2018estimation} which use disjoint data subsets for  splitting and representation. To our knowledge, study of RF under dependent processes has not been conducted in either paradigms. 
RF-GLS, like Breiman's RF, uses the same data for partitioning and node representation.  Hence we adopt the framework of \cite{scornet2015consistency} to study  consistency. 


Our main result (Theorem \ref{th:main}) proves RF-GLS is $\mathbb L_2$ consistent if the error process is absolutely-regular ($\beta$-) mixing \citep{bradley2005basic}. 
As corollary, we prove RF-GLS is consistent for non-linear mean estimation in the spatial mixed model regression under the ubiquitous Mat\'ern family of covariance functions.  
As a byproduct of the theory, we also establish consistency of the vanilla CART \citep{breiman1984classification} and of RF under $\beta$-mixing error processes. To our knowledge, this is the first result on consistency of these procedures under dependence.  

\blue{The theoretical analysis, besides being novel in terms of establishing consistency for forest estimators under dependence, required addressing several new challenges that do not arise for the analysis of classic RF for iid settings. These complications were introduced by the global GLS-style quadratic loss and the dependent errors.
    We summarize the new contributions here:
    
 \noindent   \begin{enumerate}[(i), wide=0pt]
        \item \textbf{\textit{Limits of GLS regression-trees:}} 
        Node splitting and node representation in regression trees of classic RF uses local (intra-node) sample means and variances which trivially asymptote to their population analogs. For RF-GLS, to adjust for data correlation, we use a global GLS (quadratic form) loss for node-splitting and the set of node representatives is the global GLS estimate. 
        Both steps now depend on data from all nodes (weighted by their spatial dependence) and their asymptotic limits are non-trivial. 
        We meticulously track these data-weights from all nodes when splitting a given node to show in Lemma \ref{lem:beta} and Theorem \ref{lemma:DART-theoretical} that the RF-GLS split-criterion and node representatives converge to the same desirable local population limits as in RF while being empirically more efficient under dependence (Figure \ref{fig:gls}). 
        These findings may seem similar to the well-known fact that GLS and OLS estimates both converge to the same limits with the GLS being more efficient under dependence. 
        However, those classic results assume a true linear model 
        whereas for us the true function is non-linear $\mathbb{E}(y)=m(\bX)$, and is being estimated by a mis-specified tree estimate. 
        To our knowledge, limits of the global GLS tree  estimates 
        as population-level local node means for that tree is a new contribution.
        
        \item \textbf{\textit{Equicontinuity of split-criterion:}} A center-piece in the proof of consistency of RF in \cite{scornet2015consistency} was stochastic equicontinuity of the CART split criterion, such that if two set of splits 
        are close, their corresponding empirical split-criterion values are close, irrespective of the location of the splits. 
        For RF-GLS, the split criterion (GLS loss with plugged in GLS estimate) is
        is a complicated matrix-based function of the design matrix representing a set of splits. Hence, the scalar techniques of \cite{scornet2015consistency} do not apply here. Our equicontinuity proof is quite involved and 
        is built on viewing GLS predictions as oblique projections on the design matrices corresponding to the splits and subsequently using results on perturbations of projection operators. 
        More details about the technique used can be found in Section \ref{sec:equi}.
        
        \item \textbf{\textit{New uniform laws of large number (ULLNs):}} Controlling the quadratic form estimation error for RF-GLS under dependence by an ULLN is key to establishing consistency. The standard technique of first proving an ULLN that replaces the dependent errors by iid ones fails here as the iid error sequence does not preserve the distribution of the cross-product terms of the quadratic form. We use a novel strategy of creating separate bivariate iid error sequences for each of the off-diagonal bands of cross-product terms in the quadratic form. We then establish a new ULLN (Proposition \ref{lemma:cross_prod_indep}) for these cross-product terms having the same concentration bound (in terms of random $\mathbb L_p$ norm entropy numbers) as the squared (diagonal) terms. 
   		Next, to generalize the ULLNs from iid to dependent settings, existing results require Lipschitz continuity or an uniform bound on the estimators none of which are satisfied by regression-trees. We established a general ULLN (Proposition \ref{lemma:dependent_data}) for $\beta$-mixing processes that uses a weaker $(2+\delta)^{th}$ moment assumption 
        that is satisfied by regression-trees. Neither of the two ULLNs were needed in the asymptotic study of RF for iid data as there was no quadratic form loss or cross-product terms, and the error process was independent thereby not requiring any dependent ULLNs.  
        \item \textbf{\textit{General tools for machine learning for dependent data:}} Using these ULLNs described above we established a general consistency result (Theorem \ref{th:gyorfi}) for GLS estimates for a broad class of functions under dependent error. This result generalizes Theorem 10.2 of \cite{gyorfi2006distribution} (which was for iid data and OLS estimates) to $\beta$-mixing dependent processes and GLS losses. We believe this general result would be of widespread interest. Just like the iid result of \cite{gyorfi2006distribution} was used to establish consistency of a wide range of OLS estimators (including piecewise polynomials, univariate and multivariate splines, data-driven partitioning based estimators like random forests and trees), our Theorem \ref{th:gyorfi} can potentially be used to establish consistency of the analogs of each of these methods that
        explicitly accounts for dependence (spatial/serial correlation) in the data by switching to a  GLS procedure. 
        
        \item \textbf{\textit{Consistency for Mat\'ern Gaussian Processes:}} 
        We prove consistency of RF-GLS when the true dependent error process comes from the Mat\'ern Gaussian family (the staple choice for geospatial analysis due to interpretability of the parameters in terms of spatial surface smoothness). The proof combines several results spread over the fields of time-series, 
        stochastic differential equations, 
         information theory, 
          and spatial statistics. 
        To our knowledge, this is the first consistency result for an RF-type algorithm under Mat\'ern spatial correlation. 
 
    \end{enumerate}}

While RF-GLS is motivated by the spatial GP-based mixed model framework, the method and the theory is developed much more broadly and can be used for regression function estimation in many other dependent data settings. The method only relies on knowledge of the residual covariance matrix and thus can work with any dependent error process with a valid second moment. The theory is applicable for the large class of $\beta$-mixing processes. We briefly discuss how RF-GLS can be used for function estimation under serially correlated errors (time series). In particular, we show that for autoregressive errors, one of the mainstays of time-series analysis, RF-GLS  would yield a consistent estimate of the regression function. 

The rest of the paper is organized as follows. In Section \ref{sec:method} we present our methodology. 
The theory of consistency is presented in Section \ref{sec:theory}, including theory for common spatial and time-series processes in 
Section \ref{sec:examples}. 
Results from a variety of simulation experiments validating our method is presented in Section \ref{sec:sim}. 
While the formal proofs of all the results 
are provided in the Supplementary Materials, an  outline of the proof of the main consistency result is presented in Section \ref{sec:outline} that highlights the new technical tools developed in the process.  We discuss future extension of the presented work in Section \ref{sec:disc}. 

\subsection*{Notations}

 $|S|$ denotes the cardinality of any set $S$.  $\mathds{I}(\cdot)$ is the indicator function. The null set is $\{ \}$. For any matrix $\bM$, $\mathbf{{M}}^{+}$ denotes its generalized Moore-Penrose inverse, and $\|\bM\|_p$, for $1 \leq p \leq \infty$, denotes its matrix $\mathbb L_p$ norm. 
For any $n \times n$ symmetric matrix $\bM$, $\lambda_{\min}=\lambda_{1} \leq \lambda_{2} \leq \ldots \leq \lambda_{n}=\lambda_{\max}$ denotes its eigenvalues. A sequence of numbers $\{a_n\}_{n \ge 1}$ is  $O(b_n)$ (or $o(b_n)$) when the sequence $|a_n/b_n|$ is bounded above (or goes to $0$) as $n \rightarrow \infty$. A sequence of random variables is called $O_b(1)$ if it is uniformly bounded almost surely, $O_p(1)$ if it is bounded in probability, and $o_p(1)$ if it goes to 0 in probability. $X \sim Y$ implies $X$ follows the same distribution as $Y$. $\mathbb R$, $\mathbb Z$, and $\mathbb N$ 
denote the set of real numbers, integers, and natural numbers 
respectively. For $M \in \mathbb{R}^+, \: T_{M}$ is the truncation operator, i.e.
$T_{M}(u) = \max(-M,\min(u,M))$.

\section{Method}\label{sec:method}
\subsection{Review of spatial Gaussian Process regression}\label{sec:gpintro}
The standard geospatial data unit is the triplet $(Y_i, X_i, \ell_i)$ where $Y_i$ is the univariate response, $X_i$ is a $D$-dimensional covariate (feature) vector, and $\ell_i$ is the spatial location (often, geographical co-ordinates). A spatial linear mixed-model for such data is specified as $Y_i = X_i^\top\bbeta + w(\ell_i) + \eps^*_i$, where $X_i^\top\bbeta$ is the linear mean (covariate effect), $w(\ell_i)$ models the spatial structure in the response not accounted for by the covariates, and $\eps^*_i \iid N(0,\taus)$ is the random noise. The spatial effect $w(\ell_i)$ is often posit to be a smooth surface across space and is  modeled as a centered Gaussian Process (GP) $w(\cdot) \sim GP(0,C(\cdot,\cdot \given \theta))$ with covariance function $C$. This essentially means that for any finite collection of locations $\ell_1,\ldots, \ell_n$, we have $\bw=(w(\ell_1),\ldots,w(\ell_n))^\top \sim N(\bzero,\bC)$ where $\bC$ is the $n \times n$ matrix with entries $Cov(w(\ell_i),w(\ell_j))=C(\ell_i,\ell_j \given \btheta)$. 

We can marginalize over the spatial random effects $w(\ell_i)$ and write $\bY=(Y_1, \ldots, Y_n)^\top \sim N(\bX\bbeta, \bSigma_0)$ where $\bSigma_0 = \bC + \taus \bI$. 
For known $\bSigma_0$, maximizing this marginalized likelihood of $\bY$ is equivalent to minimizing the quadratic form
$
\min_{\bbeta} \frac{1}{n} (\bY - \bX \bbeta)^\top \bSigma_0^{-1} (\bY - \bX \bbeta). 
$ This expression can be recognized as the generalized least square (GLS) loss which  replaces the squared error loss $(1/n)\sum_{i=1}^n (Y_i - X_i^\top \bbeta)^2$ of OLS when the data are correlated. 

We focus on estimating a general mean function in the spatial non-linear mixed model

\begin{equation}\label{eq:spnlmm}
    Y_i = m(X_i) + w(\ell_i) + \eps^*_i;\, w(\cdot) \sim GP(0,C(\cdot,\cdot \given \theta))
\end{equation}
When $m(\bx)$ is modeled in terms of a fixed basis expansion, the marginalized model becomes $\bY \sim N(B(\bX)\bgamma, \bSigma_0)$ where $B(\bX)$ are the bases and $\bgamma$ are the coefficients. Hence, the model remains linear in the unknown coefficients $\bgamma$ which can be once again estimated using GLS. 

As discussed in the introduction, the scope of fixed basis function expansions are limited due to their inability to model regression functions with discontinuities, and curse of dimensionality in multi-dimensional covariate domains. Random forests (RF) are particularly suitable for such general regression function estimation due to their use of regression trees, and natural accommodation of higher covariate-dimensionality using random selection. However, unlike fixed basis expansion models which are linear in the parameters, RF estimation is a non-linear greedy algorithm for which a GLS extension is not straightforward. In the next sections, we will develop a GLS-style RF algorithm for estimating $m(X)$ under dependence. 

\subsection{Revisiting the RF algorithm}\label{sec:revisit}
We first review the original RF algorithm. Given data $(Y_i,X_i) \in \mathbb R \times \mathbb R^D$, $i=1,\ldots,n$, the RF estimate of the mean function $m(\bx)=\mathbb E(Y\given X=\bx)$ is the average of $n_{tree}$ regression tree estimates of $m$. 
In a regression tree, data are split recursively into nodes of a tree starting from a root node.   
To split a node, a set of $M_{try} (\ll D)$ features are chosen randomly and the best split point is determined with respect to each feature $d$ by searching over all the ``gaps" in that feature of the data as cutoff candidate $c:=c(d)$. Here ``best'' is determined as the feature-cutoff combination $(d,c)$ maximizing the CART (classification and regression trees) split criterion \citep{breiman1984classification}: 
\begin{equation}\label{eq:cart_local}
v_{n}^{CART}((d,c)) = \frac{1}{n_P}\left[\sum_{i=1}^{n_P}(Y_i^P - \bar{Y}^P)^2 - \sum_{i_r=1}^{n_R}(Y_{i}^R - \bar{Y}^R)^2 - \sum_{i_l=1}^{n_L}(Y_{i}^L - \bar{Y}^L)^2\right]
\end{equation}
where, $Y_i^P, Y_i^R$ and $Y_i^L$ denote the responses in parent node, right child and left child respectively. $\bar{Y}^P, \bar{Y}^R$ and $\bar{Y}^L$ denote the respective node means, $n_P = n_R + n_L$, $n_R$ and $n_L$ are the respective node cardinalities. The feature and cut-off value combination that minimizes the CART-split criterion $(\ref{eq:cart_local})$ is chosen to create the child nodes. As each split is only based on one feature, all nodes are hyper-rectangles. Each newly created node is assigned a node representative -- the mean of the responses of the node members. 

The nodes are iteratively partitioned this way till a pre-specified stopping rule is met -- we arrive at leaf nodes having single element (fully grown tree) or the number of data points in each leaf node is less than a pre-specified number or the total number of nodes reaches a pre-specified threshold. The regression tree estimate for an input feature $\bx \in \mathbb R^D$ is given by the representative value of the leaf node containing $\bx$. RF is the average of an ensemble of regression trees with each tree only using a resample or subsample of the data. 

\subsection{Dependency adjusted  node-splitting}\label{sec:dart}

The RF algorithm of Section \ref{sec:revisit} does not utilize any information on the locations $\ell_i$ or the ensuing spatial correlation. The fundamental unit of the algorithm is splitting of a given node, and it is inherently local  in nature (in the covariate domain). The split criterion (\ref{eq:cart_local}) and subsequent assignment of the node representatives are both based only on members within the parent node. For i.i.d. data, this local approach is reasonable as members of one node are independent of the others. However, geo-referenced data units data can be distant in the covariate-domain while being close in the spatial domain, and therefore strongly correlated. 


The spatial non-linear mixed model from Section \ref{sec:gpintro} can be written more generally as $Y_i = m(X_i) + \eps_i$ where $\eps_i = w(\ell_i) + \eps^*_i$ is not an i.i.d process but a stochastic (Gaussian) Process capturing the spatial dependence. Hence, $Cov(Y_i,Y_j) = Cov(\eps_i,\eps_j) \neq 0$ for most or all $(i,j)$ pairs implying that members of the other nodes can be highly correlated with those of a node-to-be-split and operating locally (intra-node) leaves out this information. We now propose a new global split-criterion using all the correlations in the data, as is desirable and  in line with the common practice in spatial analysis. 

To explore generalizations of RF for dependent settings, we first recall an equivalent global representation of the CART-split criterion (\ref{eq:cart_local}). Consider a regression tree grown up to the set of leaf nodes $\{\mathcal{C}_1, \mathcal{C}_2, \cdots, \mathcal{C}_K\}$ forming a partition of the feature space. The node representatives $\widehat\bbeta^{(0)}=(\hat\beta_1,\ldots,\hat\beta_{K})^\top$ are the corresponding means.  
To split the next node, say $\calC_{K}$ without loss of generality, the CART-split criterion (\ref{eq:cart_local})  
determines the best feature-cutoff combination $(d^*,c^*)$ and creates child nodes $\calC^{(L)}_{K}=\calC_{K} \cap \{\bx \in \mathbb R^D | x_{d^*} < c^*\}$ and $\calC^{(R)}_{K}=\calC_{K} \cap \{\bx \in \mathbb R^D | x_{d^*} \geq c^*\}$ and the node representatives $\hat\beta_K^{(L)}$ and $\hat\beta_K^{(R)}$ (simply the means of the left and right child nodes respectively). We can write the optimal split-direction $(d^*,c^*)$ and node representatives $\hat\beta_K^{(L)},\hat\beta_K^{(R)}$ as the maximizer 
\begin{equation}\label{eq:cart_local2}
\begin{array}{c}
(d^*,c^*,\hat\beta_K^{(L)},\hat\beta_K^{(R)}) = \underset{{d,c,(\beta^{(L)},\beta^{(R)}) \in {\mathbb{R}^2}}}{\arg \max} \frac{1}{|\calC_K|}\Bigg[\sum_{X_i \in \calC_K}(y_i - \hat\beta_K)^2 -\\
 \Big(\sum_{X_i \in \calC_K, X_{id} < c}\left(y_{i} - \beta^{(L)}\right)^2 + \sum_{X_i \in \calC_K, X_{id} \geq c}\left(y_{i} - \beta^{(R)}\right)^2\Big)\Bigg]
\end{array}
\end{equation}

After the split, the new set of nodes is $\{\calC_1, \mathcal{C}_2, \cdots, \mathcal{C}_{K-1}, \mathcal{C}_K^{(L)},\mathcal{C}_K^{(R)}\}$ and the set of representatives is updated to $\widehat\bbeta=(\hat\beta_1,\ldots,\hat\beta_{K-1},\hat\beta_K^{(L)},\hat\beta_K^{(R)})^\top$. Let $\bZ_0 =(\mathds{I}(X_{i}) \in \calC_j)$ be the $n \times K$ membership matrix before the split, and $\bZ$ denote the $n \times (K+1)$ membership matrix for a split of $\calC_K$ using $(d,c)$, i.e. $\bZ_{ij} = \bZ^{(0)}_{ij}$ for $j < k$, $\bZ_{iK}=\mathds{I}(\{X_i \in \calC_K\} \cap \{X_{id} < c\})$ and $\bZ_{i(K+1)}=\mathds{I}(\{X_i \in \calC_K\} \cap \{X_{id} \geq c\})$. We note but suppress the dependence of $\bZ$ on $(d,c)$. Then the optimization in (\ref{eq:cart_local2}) can be rewritten in the following way. 
\begin{align}\label{eq:cart_global}
(d^*,c^*,\hat\bbeta) = \underset{d,c,\bbeta \in \mathbb{R}^{K+1}}{\arg \max} \frac{1}{n} \left( \|\bY - \bZ^{(0)}\widehat\bbeta^{(0)}\|_2^2 - \|\bY - \bZ\bbeta\|_2^2 \right).
\end{align}
This connection of OLS regression with RF is well-known \citep[see, e.g.,][]{friedman2008predictive} where RF rules are used in downstream Lasso fit). To see why (\ref{eq:cart_local2}) and (\ref{eq:cart_global}) are equivalent, note that, for a given $(d,c)$ the $\hat{\bbeta}$ optimizing (\ref{eq:cart_global}) is simply $\widehat\bbeta_{OLS}=(\bZ^\top\bZ)^{-1}\bZ^\top\bY$, the minimizer of the OLS loss $\|\bY - \bZ\bbeta\|_2^2$. Since the membership matrices have orthogonal columns consisting of only $1$s and $0$s, the components of $\widehat \bbeta_{OLS}$ are simply the node means. Hence, the last two components of $\bbeta$ are $\hat\beta_K^{(L)}$ and $\beta_K^{(R)}$. Since the first $(K-1)$ columns of $\bZ$ are same as that of $\bZ^{(0)}$,  this implies that the first $(K-1)$ components of $\hat\bbeta$ are the same as that of $\hat\bbeta^{(0)}$, i.e. the means of the first $(K-1)$ nodes. So although $\hat{\bbeta}$ in  (\ref{eq:cart_global}) is a global optimizer of a linear regression re-estimating all the node representatives, in practice, only the representatives of the child nodes of $\calC_K$ are updated and this is equivalent to the local optimization (\ref{eq:cart_local2}). 

The global formulation of node-splitting offers an avenue for GLS-style generalization of the split criterion for the spatial GP regression. Let $\bSigma_0=Cov(\bY)=Cov(\beps)$ denote the covariance matrix of the marginalized response, and $\bQ=\bSigma_0^{-1}$. Then \`a la GLS we can simply replace the squared error loss $\|\bY - \bZ\bbeta\|_2^2$ with a quadratic  loss $(\bY - \bZ\bbeta)^\top\bQ(\bY - \bZ\bbeta)$, and   propose a GLS-style split criterion
\begin{equation}
\label{DART_def}
\begin{aligned}
v_{n,\bQ}^{DART}((d,c))  =& 
\frac{1}{n} \Bigg[\left(\mathbf{Y} - \mathbf{Z}^{(0)}\bm{\hat{\beta}}_{GLS}(\mathbf Z^{(0)}) \right)^\top \bQ\left(\mathbf{Y} - \mathbf{Z}^{(0)}\bm{\hat{\beta}}_{GLS}(\mathbf Z^{(0)}) \right)\\ 
-&\left(\mathbf{Y} - \mathbf{Z}\bm{\hat{\beta}}_{GLS}(\mathbf Z) \right)^\top \bQ\left(\mathbf{Y} - \mathbf{Z}\bm{\hat{\beta}}_{GLS}(\mathbf Z) \right) \Bigg].
\end{aligned}
\end{equation}
We refer to (\ref{DART_def}) as the Dependency Adjusted Regression Tree (DART) split criterion.
For a split $(d^*,c^*)$ maximizing (\ref{DART_def}), the new set of node representatives are given by the GLS estimate 
\begin{equation}
\label{DART_update}
\hat{\bm{\beta}}_{GLS}(\bZ) = \hat{\bm{\beta}} = \left({\mathbf{Z}}^\top\bQ\mathbf{Z}\right)^{-1}\left({\mathbf{Z}}^\top\bQ\mathbf{Y}\right).
\end{equation}
Both the loss used to choose the optimal split  and the node representatives now depend on data from all nodes, weighted by the precision (inverse covariance) matrix $\bQ$, akin to the spatial linear mixed-model. 
Even though we preserve the greedy recursive partitioning strategy of RF, for each node split our loss function and node representation are now global in the sense that they consider all the data points, not just the ones inside the node to be split.

To demonstrate why the DART loss function and the GLS node representatives are respectively more appropriate for dependent data than the CART loss and node means used in the original RF, we now present a theoretical result and an empirical example. To split a given parent node $\calB$ into left and right child nodes $\calB^L$ and $\calB^R$ respectively based on the split direction $(d,c)$, it is almost immediate that the CART split criterion (\ref{eq:cart_local}), being the difference between the sample variance of the parent node with the those of the children nodes, is an estimator of its asymptotic limit 
\begin{equation}\label{eq:cart_lim}
    Vol(\mathcal B) \Big[\mathbb{V}(Y | \mathbf{X} \in \mathcal B) - \mathbb{P}(\mathbf{X} \in \mathcal B^R | \mathbf{X} \in \mathcal B)\mathbb{V}(Y | \mathbf{X} \in \mathcal B^R)\\
	- \mathbb{P}(\mathbf{X} \in \mathcal B^L |  \mathbf{X} \in \mathcal B)\mathbb{V}(Y | \mathbf{X} \in \mathcal B^L) \Big],
\end{equation}
i.e, the population difference between the variance in the parent node and the pooled variance in the potential children nodes. Indeed, if we were privileged to infinite amount of data, we should use (\ref{eq:cart_lim}) to split a node as it minimizes the total intra-node variances in the children. Hence, for i.i.d. data it is reasonable to use the CART criterion (\ref{eq:cart_local}) which is the finite sample analogue of (\ref{eq:cart_lim}). Under dependence, however, it is unclear if (\ref{eq:cart_local}) is an efficient estimator of (\ref{eq:cart_lim}). 

Our construction of the DART split criterion  is guided by the same principles of GLS used in linear regression for dependent data. The resulting loss function (\ref{DART_def}) depends on data from not only the node $\calB$ to be split, but on all data, and it is not immediately clear what its population limit is. Optimization of (\ref{DART_def}) for the optimal split $(d^*,c^*)$ relies on the GLS estimate (\ref{DART_update}) which is also a function of all data. Hence, we first provide a result about its asymptotic limit. We use assumptions on the dependence structure of the error process (Assumption \ref{as:mix}) and regularity of the precision matrix (Assumption \ref{as:chol}) discussed in more details later in Section \ref{sec:theory} on consistency of our method. The proofs of these results can be found in Section \ref{sec:pf-approx}. 
\blue{\begin{assume}[Mixing condition]\label{as:mix} $Y_i=m(X_i) + \eps_i$ where the error process $\{\eps_i\}$ is a stationary, {\em absolutely regular} ( $\beta$-mixing) process \citep{bradley2005basic} with finite $(2+\delta)^{th}$ moment for some $\delta > 0$.
\end{assume}\begin{assume}[Regularity of the working precision matrix]\label{as:chol}
	The working precision matrix $\bQ=\bSigma^{-1}$ admits a regular and sparse lower-triangular Cholesky factor $\bSig^{-\frac 12}$ such that 
	\begin{equation*}
	\bSig^{-\frac 12} = \left(\begin{array}{ccccc}
	\bL_{q \times q} & 0 & 0 & \cdots & \cdots\\
	\multicolumn{2}{c}{\brho^\top_{1 \times (q+1)}} & 0 & \cdots & \cdots\\
	0 & \multicolumn{2}{c}{\brho^\top_{1 \times (q+1)}} & 0 & \cdots\\
	\vdots & \multicolumn{3}{c}{\ddots} & \vdots \\
	\cdots & 0 & 0 & \multicolumn{2}{c}{\brho^\top_{1 \times (q+1)}}
	\end{array} \right), 
	\end{equation*}
	where $\bm{\rho}  = (\rho_q, \rho_{q-1},\cdots, \rho_{0} )^\top \in \mathbb{R}^{q+1}$ for some fixed $q \in \mathbb N$, and $\bL$ is a fixed lower-triangular $q \times q$ matrix . 
\end{assume}}
\begin{lemma}[Limit of GLS estimate for fixed partition regression-tree]\label{lem:beta} Under Assumptions \ref{as:mix} and  \ref{as:chol}, for a regression-tree based on a fixed (data-independent) partition $\calC_1, \ldots, \calC_K$, the GLS estimate $\widehat \bbeta = (\hat \beta_1, \ldots, \hat \beta_K)^\top$ from (\ref{DART_update}) has the following limit.
	$$
	\bm{\hat\beta}_{{l}} \overset{a.s.}{\to} \mathbb{E}(Y | X \in \mathcal{C}_{{l}}) \text{ as $n \to \infty$} \mbox{ for } l=1,\ldots,K.
	$$
\end{lemma}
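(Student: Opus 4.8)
The key algebraic observation is that the GLS estimate in~(\ref{DART_update}) is an OLS fit on \emph{decorrelated} data. Writing $\bQ=\big(\bSig^{-\frac12}\big)^\top\bSig^{-\frac12}$ and setting $\widetilde\bZ:=\bSig^{-\frac12}\bZ$ and $\widetilde\bY:=\bSig^{-\frac12}\bY$, we get $\widehat\bbeta=\big(\widetilde\bZ^\top\widetilde\bZ\big)^{-1}\widetilde\bZ^\top\widetilde\bY=\big(\tfrac1n\widetilde\bZ^\top\widetilde\bZ\big)^{-1}\big(\tfrac1n\widetilde\bZ^\top\widetilde\bY\big)$. Hence it suffices to identify the almost-sure limits of the two normalized matrices, verify that the limit of the first is nonsingular, and pass to the limit by the continuous mapping theorem.

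The banded Cholesky structure of Assumption~\ref{as:chol} makes both matrices explicit. For every row index $i>q$ the $i$-th row of $\bSig^{-\frac12}$ carries $\brho^\top=(\rho_q,\ldots,\rho_0)$ in columns $i-q,\ldots,i$, so $\widetilde\bZ_{i\cdot}=\sum_{m=0}^q\rho_m\bZ_{(i-m)\cdot}$ and $\widetilde Y_i=\sum_{m=0}^q\rho_m Y_{i-m}$; the first $q$ rows (which involve $\bL$) contribute $O(q/n)\to0$ to any $\tfrac1n$-normalized sum and may be dropped. Thus
\begin{align*}
\tfrac1n\big(\widetilde\bZ^\top\widetilde\bZ\big)_{ll'}&=\tfrac1n\sum_{i>q}\sum_{m,m'=0}^q\rho_m\rho_{m'}\,\mathds{I}(X_{i-m}\in\calC_l)\,\mathds{I}(X_{i-m'}\in\calC_{l'})+o(1),\\
\tfrac1n\big(\widetilde\bZ^\top\widetilde\bY\big)_{l}&=\tfrac1n\sum_{i>q}\sum_{m,m'=0}^q\rho_m\rho_{m'}\,\mathds{I}(X_{i-m}\in\calC_l)\,Y_{i-m'}+o(1).
\end{align*}
Since the partition $\calC_1,\ldots,\calC_K$ is fixed (data-independent), every summand is a fixed, measurable function of finitely many consecutive terms of the stationary, $\beta$-mixing (hence ergodic) data sequence, with finite $(2+\delta)$-th moment by Assumption~\ref{as:mix}; a strong law of large numbers therefore applies term by term. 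Using additionally that the covariates are i.i.d.\ and independent of the errors with $\mathbb E(\eps_i\mid X_i)=0$, one obtains almost surely $\tfrac1n\widetilde\bZ^\top\widetilde\bZ\to\bM$ and $\tfrac1n\widetilde\bZ^\top\widetilde\bY\to\bv$, where, writing $p_l=\mathbb P(X\in\calC_l)$, $\mu_l=\mathbb E(Y\mid X\in\calC_l)$, $\bar m=\sum_{l'}p_{l'}\mu_{l'}$, $a=\sum_m\rho_m^2$, $s=\sum_m\rho_m$ and $b=s^2-a$:
\[
\bM=a\,\mathrm{diag}(p_1,\ldots,p_K)+b\,\mathbf p\mathbf p^\top,\qquad \bv_l=p_l\big(a\mu_l+b\bar m\big),\qquad \mathbf p=(p_1,\ldots,p_K)^\top .
\]
Indeed the $m=m'$ terms of the sums give the diagonal part $p_l\,\mathds{I}(l=l')$ and $\mathbb E[\mathds{I}(X\in\calC_l)Y]=p_l\mu_l$ (cells are disjoint and $\mathbb E[\eps\mid X]=0$), while the $m\ne m'$ terms give $p_lp_{l'}$ and $p_l\bar m$ (the two covariate coordinates are independent and $\mathbb E[\eps]=0$).

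Finally we invert. Conjugating by $\mathrm{diag}(\mathbf p)^{1/2}$ shows that the eigenvalues of $\bM$ are $a$ with multiplicity $K-1$ and $a+b=s^2$; both are strictly positive by the nonsingularity and regularity of $\bSig^{-\frac12}$ assumed in Assumption~\ref{as:chol} (the diagonal entry $\rho_0$ of the nonsingular lower-triangular $\bSig^{-\frac12}$ is nonzero, so $a>0$; and $s\ne0$, since $s=0$ would amount to a unit-root degeneracy incompatible with a valid stationary covariance), so $\bM$ is invertible. With $\bmu=(\mu_1,\ldots,\mu_K)^\top$ a one-line computation gives $(\bM\bmu)_l=ap_l\mu_l+bp_l\sum_{l'}p_{l'}\mu_{l'}=p_l(a\mu_l+b\bar m)=\bv_l$, i.e.\ $\bM\bmu=\bv$. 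Therefore $\widehat\bbeta=\big(\tfrac1n\widetilde\bZ^\top\widetilde\bZ\big)^{-1}\big(\tfrac1n\widetilde\bZ^\top\widetilde\bY\big)\to\bM^{-1}\bv=\bmu$ almost surely, i.e.\ $\hat\beta_l\to\mathbb E(Y\mid X\in\calC_l)$ for every $l$.

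The one genuinely delicate ingredient is the term-by-term strong law above: the cross-terms $\mathds{I}(X_{i-m}\in\calC_l)\,Y_{i-m'}$ with $m\ne m'$ involve the \emph{dependent} errors, so no i.i.d.\ SLLN is available. But here the partition is fixed, so each such term is a fixed bounded-times-$\mathbb L_{2+\delta}$ functional of a stationary $\beta$-mixing sequence, for which Birkhoff's ergodic theorem --- or an elementary Markov-inequality-plus-Borel--Cantelli argument exploiting the $\beta$-mixing decay together with the $(2+\delta)$-th moment bound --- is enough. The heavier uniform machinery of Proposition~\ref{lemma:dependent_data} and Section~\ref{sec:outline} becomes necessary only later, when the partition is itself data-driven and these quantities must be controlled uniformly over all admissible regression trees.
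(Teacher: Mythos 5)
Your proof is correct and follows essentially the same route as the paper's: expand $\tfrac1n\bZ^\top\bQ\bZ$ and $\tfrac1n\bZ^\top\bQ\bY$ using the banded Cholesky structure, apply laws of large numbers term by term (i.i.d.\ for the diagonal terms, $m$-dependent/$\beta$-mixing for the cross terms), and combine the limits; your constants $a$, $b$, $\bM$, $\bv$ coincide exactly with the paper's $\alpha$, $\sum_{j\neq j'}\rho_j\rho_{j'}$, and the limits in (\ref{eq:gramlim}) and (\ref{eq:r}). The only (harmless and arguably cleaner) deviation is at the end, where you verify $\bM\bmu=\bv$ and check invertibility of $\bM$ via its eigenvalues rather than explicitly inverting through Sherman--Morrison--Woodbury as the paper does.
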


Lemma \ref{lem:beta} shows that the GLS-estimate for the node representatives, although using data from all the nodes, asymptote to within node population means -- the same limit as that of the sample node means used in RF. This is a new result of independent importance showing that the GLS estimate for a fixed-partition regression-tree, is a consistent estimator of the node-specific conditional means. This in turn leads to the following result about the DART loss. 

 \begin{theorem}[Theoretical DART-split criterion]
	\label{lemma:DART-theoretical}
	Under Assumptions \ref{as:mix} and \ref{as:chol}, for a tree built with a fixed (data-independent) set of splits $\calC_1, \ldots, \calC_K$, as $n \to \infty$ the empirical DART-split criterion (\ref{DART_def}) to split a node $\calB=\calC_l$ into left and right child nodes $\calB^L$ and $\calB^R$ respectively, converges almost surely to the following for some constant $\alpha=\alpha(\bQ)$:
	\begin{equation}
	\begin{array}{cc}
	v_{\bQ}^*((d,c))=& \alpha Vol(\mathcal B) \Big[\mathbb{V}(Y | \mathbf{X} \in \mathcal B) - \mathbb{P}(\mathbf{X} \in \mathcal B^R | \mathbf{X} \in \mathcal B)\mathbb{V}(Y | \mathbf{X} \in \mathcal B^R)\\
	&- \mathbb{P}(\mathbf{X} \in \mathcal B^L |  \mathbf{X} \in \mathcal B)\mathbb{V}(Y | \mathbf{X} \in \mathcal B^L) \Big].
	\end{array}                    
	\end{equation}
\end{theorem}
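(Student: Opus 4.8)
The plan is to reduce the empirical DART criterion to a difference of two generalized residual sums of squares, replace each GLS fit by its almost-sure limit using Lemma~\ref{lem:beta}, and then evaluate the limit of each residual sum of squares by exploiting the banded Cholesky structure of Assumption~\ref{as:chol} together with a strong-law/ergodic argument. Throughout I use the standing data-generating assumptions of Section~\ref{sec:theory}: the covariates $X_i$ are i.i.d.\ (uniform on the feature space, so that $\mathbb{P}(X\in\calA)=Vol(\calA)$) and independent of the stationary, mean-zero $\beta$-mixing error process $\{\eps_i\}$.

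First I would write $\bQ=\bSig^{-1}=\bA^\top\bA$ with $\bA:=\bSig^{-\frac12}$ the banded lower-triangular factor of Assumption~\ref{as:chol}, so that $(\bY-\bZ\bbeta)^\top\bQ(\bY-\bZ\bbeta)=\|\bA(\bY-\bZ\bbeta)\|_2^2$. For either membership matrix ($\bZ^{(0)}$ or $\bZ$), the map $\bbeta\mapsto\frac1n(\bY-\bZ\bbeta)^\top\bQ(\bY-\bZ\bbeta)$ is a quadratic with minimizer $\hat\bbeta_{GLS}(\bZ)$ and Hessian $\frac2n\bZ^\top\bQ\bZ$; hence its value at the minimizer equals its value at the ``oracle'' vector $\bbeta^{\ast}$ (whose $l$-th entry is the population node mean $\bar m_l:=\mathbb{E}(Y|X\in\calC_l)$) minus $(\bbeta^{\ast}-\hat\bbeta_{GLS}(\bZ))^\top(\frac1n\bZ^\top\bQ\bZ)(\bbeta^{\ast}-\hat\bbeta_{GLS}(\bZ))$. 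By Lemma~\ref{lem:beta} applied to the (data-independent) pre-split and post-split partitions, $\hat\bbeta_{GLS}(\bZ)\overset{a.s.}{\to}\bbeta^{\ast}$, while $\frac1n\bZ^\top\bQ\bZ$ converges a.s.\ to a finite deterministic matrix; so this correction is $o(1)$ a.s., and it suffices to identify the a.s.\ limit of the oracle quantity $\frac1n(\bY-\bZ\bbeta^{\ast})^\top\bQ(\bY-\bZ\bbeta^{\ast})$ for each partition.

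Next I would decompose $\bY-\bZ\bbeta^{\ast}=\bom^{\ast}+\beps$, where $(\bom^{\ast})_i=m(X_i)-\bar m_{c(i)}$ and $c(i)$ indexes the cell containing $X_i$, and expand the quadratic form as $\frac1n(\bom^{\ast})^\top\bQ\bom^{\ast}+\frac2n(\bom^{\ast})^\top\bQ\beps+\frac1n\beps^\top\bQ\beps$. Since $\bA$ is banded, $(\bA\bv)_i=\sum_{j=0}^q\rho_j v_{i-j}$ for $i>q$ (the first $q$ rows contribute only $q$ terms to the quadratic form, negligible after division by $n$), so each of the three pieces is an average over a \emph{fixed} number of off-diagonal bands, $\frac1n\sum_i\sum_{j,k}\rho_j\rho_k\,(\cdot)_{i-j}(\cdot)_{i-k}$. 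Applying the strong law to the i.i.d.\ covariates, Birkhoff's ergodic theorem to the stationary $\beta$-mixing errors (finite second moment by Assumption~\ref{as:mix}), and $X\perp\eps$ with $\mathbb{E}\eps=0$: only the $j=k$ band survives in $(\bom^{\ast})^\top\bQ\bom^{\ast}$ (the $j\ne k$ terms being products of independent centred factors), the cross-term $(\bom^{\ast})^\top\bQ\beps$ vanishes entirely, and all bands persist in $\beps^\top\bQ\beps$, giving $\frac1n(\bom^{\ast})^\top\bQ\bom^{\ast}\overset{a.s.}{\to}\|\brho\|_2^2\,\mathbb{E}[(m(X)-\bar m_{c(X)})^2]$, $\frac2n(\bom^{\ast})^\top\bQ\beps\overset{a.s.}{\to}0$, and $\frac1n\beps^\top\bQ\beps\overset{a.s.}{\to}\tau^2(\bQ):=\sum_{j,k}\rho_j\rho_k\,Cov(\eps_0,\eps_{k-j})$, a constant depending on $\bQ$ and the error law but \emph{not} on the partition. (The auxiliary limit of $\frac1n\bZ^\top\bQ\bZ$ used above is the same computation with $\bom^{\ast},\beps$ replaced by indicator columns of $\bZ$.)

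Finally I would take the difference of this limit for the pre-split partition $\{\calC_1,\dots,\calC_K\}$ and for the partition in which $\calB=\calC_l$ is replaced by $\calB^L,\calB^R$. The $\tau^2(\bQ)$ terms and all unchanged-cell contributions cancel, and since $\bar m_\calA$ minimizes $a\mapsto\mathbb{E}[(m(X)-a)^2|X\in\calA]$ to $\mathbb{V}(m(X)|X\in\calA)$, the limit is $\|\brho\|_2^2$ times $\mathbb{P}(X\in\calB)\mathbb{V}(m(X)|X\in\calB)-\mathbb{P}(X\in\calB^L)\mathbb{V}(m(X)|X\in\calB^L)-\mathbb{P}(X\in\calB^R)\mathbb{V}(m(X)|X\in\calB^R)$. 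Converting $\mathbb{V}(m(X)|X\in\calA)=\mathbb{V}(Y|X\in\calA)-\mathbb{V}(\eps)$ by the law of total variance (using $\eps\perp X$), the $\mathbb{V}(\eps)$ terms cancel because $\mathbb{P}(X\in\calB)=\mathbb{P}(X\in\calB^L)+\mathbb{P}(X\in\calB^R)$, and writing $\mathbb{P}(X\in\calB)=Vol(\calB)$ and $\mathbb{P}(X\in\calB^{L})=Vol(\calB)\mathbb{P}(X\in\calB^{L}|X\in\calB)$ (similarly for $\calB^R$) reproduces exactly $v_{\bQ}^{\ast}((d,c))$ with $\alpha=\alpha(\bQ)=\|\brho\|_2^2$. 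I expect the main obstacle to be this last quadratic-form limit: rigorously controlling the off-diagonal (cross-product) bands of $\beps^\top\bQ\beps$ and $(\bom^{\ast})^\top\bQ\beps$ under dependence, handling the finitely many boundary rows, and ensuring the plug-in error from replacing $\hat\bbeta_{GLS}$ by $\bbeta^{\ast}$ is $o(1)$ almost surely (not merely in probability) --- the matrix, rather than scalar, nature of the loss is precisely what makes this step more delicate than its CART analog, and an equivalent oblique-projection view, $v_{n,\bQ}^{DART}((d,c))=\frac1n\|(\bH_{\bA\bZ}-\bH_{\bA\bZ^{(0)}})\bA\bY\|_2^2$ with $\bH_{\bullet}$ the orthogonal projector onto the relevant column space, makes non-negativity of the limit transparent but reduces to the same computation.
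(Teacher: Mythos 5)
Your proposal is correct, but it follows a genuinely different decomposition from the paper's. The paper exploits the normal equations to write $(\bY-\bZ\hat\bbeta)^\top\bQ(\bY-\bZ\hat\bbeta)=\bY^\top\bQ\bY-\bY^\top\bQ\bZ\hat\bbeta$, so the split criterion collapses to $\frac1n\big(\bY^\top\bQ\bZ\,\hat\bbeta(\bZ)-\bY^\top\bQ\bZ^{(0)}\hat\bbeta(\bZ^{(0)})\big)$; the only stochastic limits it then needs are exactly the two already computed in Lemma \ref{lem:beta} (the gram matrix $\frac1n\bZ^\top\bQ\bZ$ and the cross-moment vector $\frac1n\bZ^\top\bQ\bY$), and the pure-noise quadratic form $\bY^\top\bQ\bY$ cancels identically before any limit is taken. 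You instead expand around the oracle $\bbeta^\ast$ and split the residual into signal $\bom^\ast$ and noise $\beps$, which forces you to control three quadratic forms, including $\frac1n\beps^\top\bQ\beps$ and the cross term $\frac1n(\bom^\ast)^\top\bQ\beps$. Two remarks on that route: (i) you do not actually need Birkhoff's theorem for $\beps^\top\bQ\beps$, since that term is partition-free and cancels exactly in the difference, just as $\bY^\top\bQ\bY$ does in the paper's version; (ii) the $j\neq k$ bands of $(\bom^\ast)^\top\bQ\bom^\ast$ and the cross term are averages of $m$-dependent (hence $\beta$-mixing), not i.i.d., summands, so you must invoke the Nobel--Dembo-type strong law the paper uses in Lemma \ref{lem:beta} rather than the plain SLLN; with the bounded envelope $2M_0|\eps|$ and the finite $(2+\delta)$-th moment this goes through. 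What your route buys is transparency: the identification $\alpha=\|\brho\|_2^2$, the non-negativity of the limit, and the fact that the limit depends only on the split node all fall out of the signal/noise separation, at the cost of an extra law-of-total-variance step to convert $\mathbb{V}(m(X)\mid X\in\calA)$ into $\mathbb{V}(Y\mid X\in\calA)$, which the paper's algebra produces directly.
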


The aforementioned result shows that remarkably the limit of the 
the DART-split criterion converge to the same respective limit (\ref{eq:cart_lim}) of the CART-split criterion up to a constant $\alpha$. This result is intriguing as although the empirical DART split criterion depends on the entire set of splits 
and data from all the nodes, its asymptotic limit is simply the population variance difference (\ref{eq:cart_lim}) between the parent node and the children nodes and does not depend on the other nodes. As discussed before, the quantity (\ref{eq:cart_lim}) would be the ideal one to use for splitting a node if one had knowledge of the population distribution, and its reassuring that the DART criterion asymptotes to it.  

These asymptotic results are in line with the conventional GLS wisdom. GLS estimators are known to have the same asymptotic limit as the OLS estimators but are more efficient under dependence. To demonstrate how these asymptotic results translate to finite sample performance, we conduct a simple experiment using data generated from $Y_i = m(X_i) + \eps_i$, where $\eps_i$ is an GP with exponential covariance function on the regularly spaced one-dimensional lattice, and $m(x)$ is a function of a single covariate supported on $[0,1]$ such  that $m(x)=1$ for $x \leq 0.5$, and $m(x) =1.5$ for $x > 0.5$. If we use a two-node decision tree to estimate $m$, it is obvious that a good loss criterion should be maximized near the cutoff value of $0.5$ where there is a discontinuity in the true regression function. In Figure \ref{fig:gls}(a), we plot the average CART and DART  split criterion as a function of the choice of cutoff, and the point-wise confidence bands. The DART split criterion was scaled by $\alpha$ to have the same asymptotic limit as the CART split criterion. For reference we also plot the true regression function on the secondary $y$-axis. We see that the average curves for both the CART and DART criterion are quite identical, both peaking near the true cutoff of $0.5$. However, the CART criterion using the OLS loss has large uncertainty as reflected by the much wider confidence bands, than the DART criterion. This in turn affects the estimates of the cutoff and the node representatives as reflected in Figure \ref{fig:gls}(b). While both losses lead to similar mean estimates of the cutoff, and node representatives, the variability is substantially higher for the OLS loss. This large variability is especially evident for the choice of the cutoff where the CART loss can choose cutoff far away from $0.5$ with high probability whereas the DART loss chooses a cutoff near $0.5$ almost always. We will show in Section \ref{sec:sim} over a wide range of simulation studies how this leads to poor finite sample estimation and prediction performance of RF for dependent data. 

\begin{figure}[t!]
    \centering
    \begin{subfigure}[t]{0.5\textwidth}
    \label{Fig:cost_known_sinpi}
        \centering
        \includegraphics[height=2.5in]{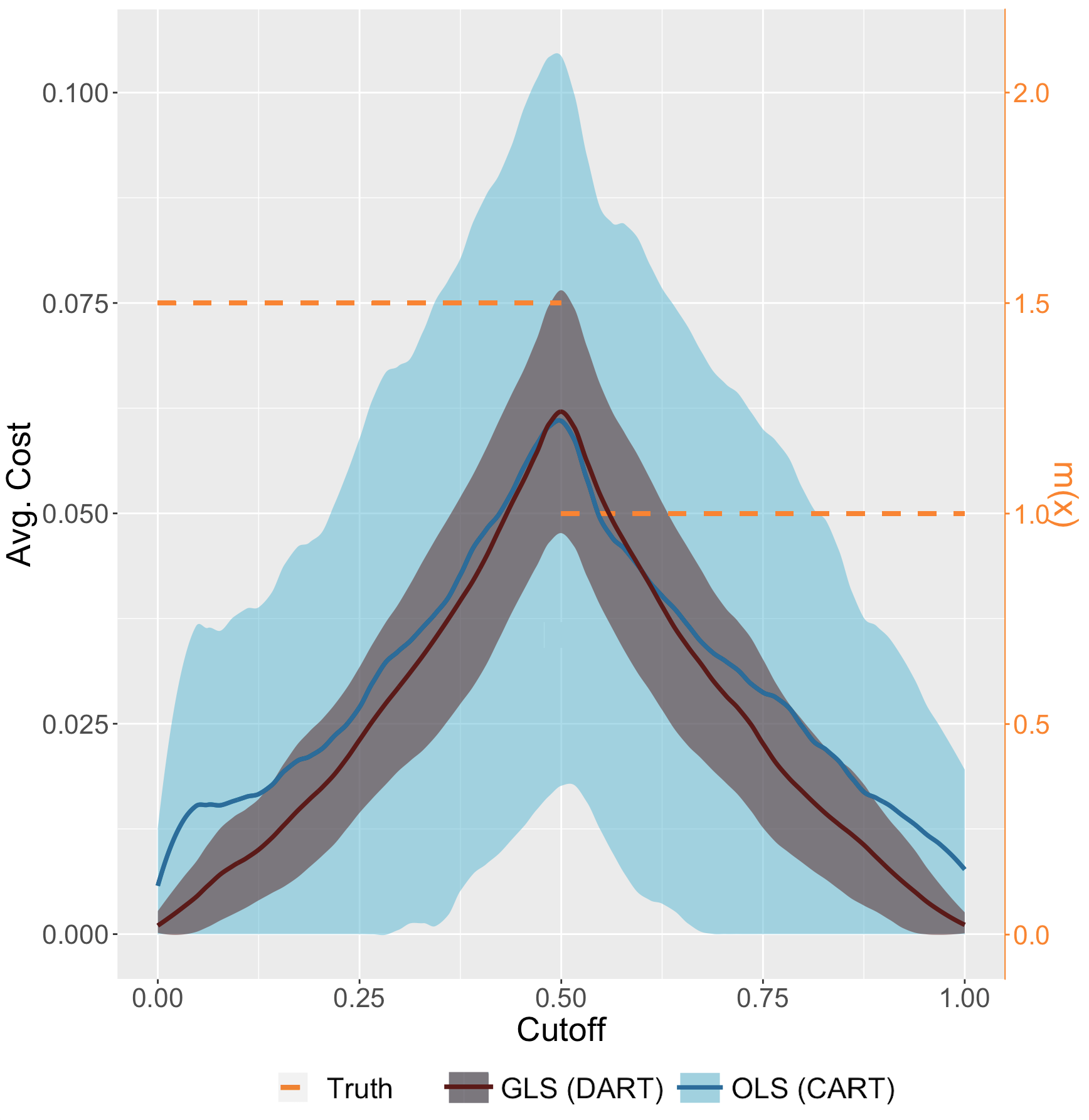}
        \caption{CART and DART criterion}
    \end{subfigure}%
    \hskip -1cm 
    \begin{subfigure}[t]{0.55\textwidth}
    \label{Fig:}
        \centering
        \includegraphics[height=2.5in]{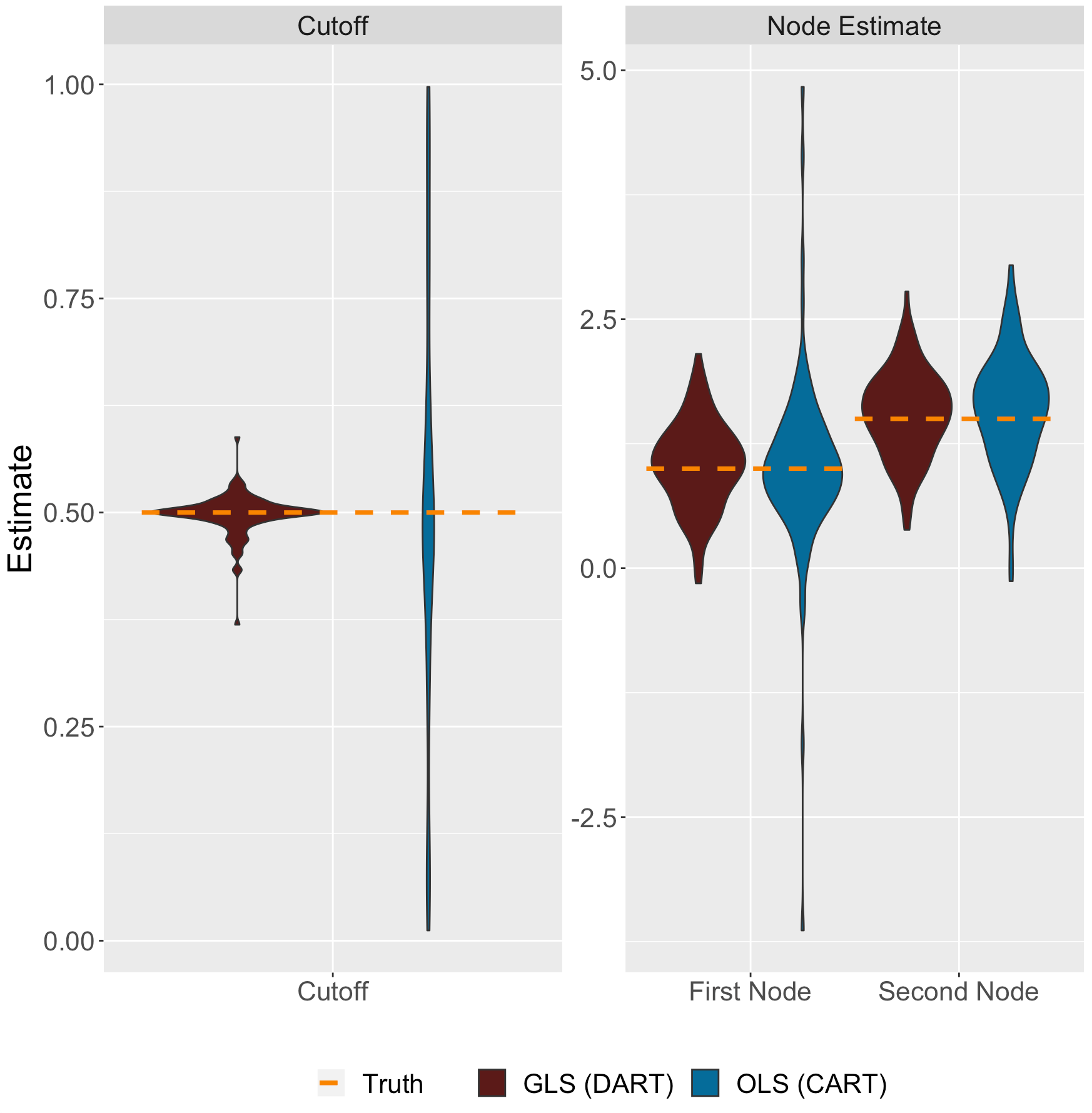}
        \caption{Estimates of split-cutoff and node representatives.}
    \end{subfigure}
    \caption{Comparison between OLS loss (CART criterion) and our GLS loss (DART  criterion) for node-splitting in a regression tree under Gaussian Process correlated errors. Left Figure (a) plots the average CART (blue) and DART (brown) criterion and point-wise $95\%$ confidence bands over $100$ replicate datasets. The true regression function $m(x)$ is plotted in orange in the secondary axis. Right figure (b) plots the densities (violin plots) of the estimates of the cutoff used for node-splitting and the node representatives from the two loss functions.}\label{fig:gls}
\end{figure}


\subsection{GLS-style regression tree}\label{sec:glstree}
The DART split criterion (\ref{DART_def}) and node representation (\ref{DART_update}) constitute the fundamental node-splitting operation of a GLS-style regression tree algorithm that incorporates the spatial information via the correlation matrix. We now introduce additional notation to formally detail the algorithm of this GLS-style regression tree. 
Creation of a forest from the trees will be discussed in Section \ref{sec:rfgls}. 

For ease of presentation, so far we have talked about the case of splitting the last node ($K^{th}$) at a given level of the tree. 
More generally, we can denote the 
complete set of nodes in level $k - 1$   by $\mathfrak{C}^{(k-1)} = \{\mathcal{C}_1^{(k-1)}, \mathcal{C}_2^{(k-1)}, \cdots, \mathcal{C}_{g^{(k-1)}}^{(k-1)}\}$ which is a partition of the feature space.  
To split the $l_1^{th}$ node $\mathcal{C}_{l_1}^{(k-1)}$, we consider the following membership matrices:  $\mathbf{Z}^{(0)}$, which corresponds to the nodes in parent level, with the column for the node-to-be-split pushed to the last column, and $\mathbf{Z}$ which corresponds to the membership of the potential child nodes 
\begin{equation}\label{eq:child}
    \mathcal{C}_{l_1^{(1)}}^{(k)} = \mathcal{C}_{l_1}^{(k-1)} \cap \{\mathbf{x} \in \mathbb R^D | x_d < c\},\, 
    \mathcal{C}_{l_1^{(2)}}^{(k)} = \mathcal{C}_{l_1}^{(k-1)} \cap \{\mathbf{x} \in \mathbb R^D | x_d \geq c\}
\end{equation} 
based on a split $(d,c)$. Note that $\bZ$ above is a function of the previous set of nodes $\mathfrak C^{(k-1)}$, the node to be split $l_1$ and the split $(d,c)$ all of which is kept implicit. 
We denote 
the GLS-style split criterion (\ref{DART_def}) as $v_{n,\bQ}^{DART}(\mathfrak{C}^{(k-1)},l_1,(d,c)) :=v_{n,\bQ}^{DART}(d,c)$ for the node $\mathcal{C}_{l_1}^{(k-1)}$ at 
$(d,c)$. 

Equipped with the notation, 
the GLS-style regression tree algorithm is presented below: 

\begin{algorithm}[!h]
	\caption{GLS-style random regression tree}\label{algo:glstree}
	\hspace*{\algorithmicindent} \textbf{Input:} Data $\mathcal{D}_n=(Y_1,X_1,\ldots,Y_n,X_n)$, working precision matrix $\bQ$, stopping rules $t_n$ (maximum number of nodes) and $t_c$ (minimum number of members per node), number of features considered for each split $M_{try}$, randomness $R_\Theta$ (some probability distribution to choose a subsample of size $M_{try}$ from $\{1.\ldots,D\}$), output point $\bx_0$. \\
	\hspace*{\algorithmicindent} \textbf{Output:} Estimate $m_n(\bx_0 ; \Theta)$ of the mean function $m$ at $\mathbf{x}_0$.
	\begin{algorithmic}[1]
		\Procedure{}{}
	
		\State \text{Initialize} $k \gets 1; \mathfrak{C}^{(1)} \gets \{\mathbb R^D\},$   
		$g^{(1)} \gets 1$; 
		\While{$g^{(k)} < t_n$ and $|\mathcal{C}_{l}^{(k)}| > t_c$ for at least one $l \in 1,2,\ldots,g^{(k)}$}
		\State Update $k \gets k+1$;
		\State \text{Initialize} $\mathfrak{C}^{(k)} \gets \{ \};\,$  
		$g^{(k)} \gets 0; $
		\For{$l_1 \in 1:g^{(k-1)}$}
		\If {$|\mathcal{C}_{l_1}^{(k-1)}| \leq t_c$ or $g^{(k)} 
							\geq t_n$}
		\State $\mathfrak{C}^{(k)} \gets \mathfrak{C}^{(k)} \cup \mathcal{C}_{l_1}^{(k-1)};$  
		$g^{(k)} \gets g^{(k)} + 1;$ 
		\Else
		\State  
		$R \gets \iid R_\Theta$
		\For{$d \in R$, $c \in \mbox{gaps}(\{X_{id} |1 \leq i \leq n-1\})\footnotemark$
		}
		\State $v_{n,\bQ}^{DART}((d,c)) \gets $ Equation (\ref{DART_def})
		\EndFor		
		\State $(d^*,c^*) \gets \arg \min _{(d,c)} v_{n,\bQ}^{DART}((d,c))$
	\State \text{$\mathcal{C}_{l_1^{(1)}}^{(k)}$ and $\mathcal{C}_{l_1^{(2)}}^{(k)} \gets$ Equation (\ref{eq:child}) with $(d^*,c^*)$} 
		\State $\mathfrak{C}^{(k)} \gets \mathfrak{C}^{(k)} \cup \mathcal{C}_{l_1^{(1)}}^{(k)} \cup \mathcal{C}_{l_1^{(2)}}^{(k)};$
		\State $g^{(k)} \gets g^{(k)} + 2;$
		\EndIf
		\EndFor
		\EndWhile
		\State Representatives $\widehat\bbeta =(\hat\beta_1,\ldots,\hat\beta_{g^{(k)}})^\top \gets $ Equation (\ref{DART_update}) with $\bZ_{n \times g^{(k)}}=\left(\mathds{I}(X_i \in \calC^{(k)}_l)\right)$;\label{line:finupdate}
		\State Output $m_n(\mathbf{x}_0; \Theta) = \sum_{l = 1}^{g^{(k)}} \hat\beta_l \mathds{I}(\bx_0 \in \calC^{(k)}_l)$;
		\EndProcedure
	\end{algorithmic}
\end{algorithm}
\footnotetext{For any set of real numbers $S=\{r_1 <  \ldots < r_s\}$, gaps$(A)=\{ (r_i + r_{i+1})/2 : i=1,\ldots,s-1\}$}

Note that, if the true covariance matrix $\bSigma_0$ is unknown, as in practice, $\bQ$ will be $\bSigma^{-1}$ where $\bSigma$ is some working covariance matrix (estimate and/or computational approximation of $\bSigma_0$).  
We discuss estimation of $\bSigma_0$ and choice of $\bQ$ in Section \ref{sec:prac}. The algorithm works with any choice of the working precision matrix $\bQ$. For example, with $\bQ=\bI$, the algorithm is identical to the usual regression-tree used in RF. 

\subsection{RF-GLS}\label{sec:rfgls}
We now focus on growing a random forest from $n_{tree}$ number of GLS-style trees. In RF, each tree is built using a resample or subsample of the data $\calD_{n,t}=(\bY_t,\bX_t)$ where $\bY_t$ is the resampled (or subsampled) vector of the responses and $\bX_t$ is the design matrix using the corresponding rows. For our GLS-style approach, naive emulation of this is not recommended. To elucidate, if one resorts to resampling, under dependent settings, this would mean resampling correlated data units $(Y_i, X_i)$ thereby violating the principle of bootstrap. Also, it is unclear what the working covariance matrix would be between 
the resampled points. 
If subsampling is used, this is avoided, as one can use the submatrix $\bSigma_t$ corresponding to the subsample. However, each tree will use a different subsample and hence a different $\bSigma_t$. Inverting covariance matrices of dimension $O(n)$ require $O(n^3)$ operations, and this approach would require inverting $n_{tree}$ such matrices, thereby substantially increasing the computation. 

Interestingly, the GLS-loss itself offers a synergistic solution to resampling of dependent data. To motivate our approach, we once again revisit RF, and note that the CART-split criterion (\ref{eq:cart_global}) for a re(sub)sample can be expressed using the squared error loss $\|\bP_t\bY - \bP_t\bZ\bbeta\|_2^2$, where $\bP_t$ is the selection matrix for the resample. Now GLS loss with $\bY$ and $\bZ\bbeta$ 
coincides with an OLS loss with $\tilde{\bY} = \bSigma^{-1/2} \bY$, $\tilde{\bZ} = \bSigma^{-1/2} \bZ$. Hence, the immediate extension for the resample (subsample) in our setup would be using the loss 
\begin{equation}\label{eq:resample}
\begin{array}{cl}
\|\bP_t \tilde\bY - \bP_t\tilde \bZ\bbeta\|_2^2 &= \|\bP_t\bSigma^{-\frac 12} \bY - \bP_t\bSigma^{-\frac 12}\bZ\bbeta\|_2^2\\
&= (\bY - \bZ\bbeta)^\top\bSigma^{-\top/2}\bP_t^\top\bP_t\bSigma^{-1/2}(\bY - \bZ\bbeta).
\end{array}
\end{equation}
Thus to use a GLS-loss in RF, we essentially resample the  contrasts $\tilde \bY$ instead of the outcomes $\bY$. This principle of contrast resampling has been used in parametric bootstrapping of spatial data \citep{pardo2012varboot,saha2018brisc}. 
In our algorithm the resampling amounts to simply   
replacing the $\bQ$ in the DART-split criterion (\ref{DART_def}) and node representative calculation (\ref{DART_update}) with 
$\bQ_t=\bSigma^{-\top/2}\bP_t^\top\bP_t\bSigma^{-1/2}$. \blue{Computationally, our approach has the advantage of only requiring a one-time evaluation of the Cholesky factor $\bSigma^{-1/2}$ that is used in all trees.} As in RF, subsequent to growing $n_{tree}$ trees corresponding to each different resample we take the average to get the forest estimate. 
This completes the specification of a novel GLS-style random forest for dependent data. We refer to the algorithm as {\em RF-GLS} and summarize it in Algorithm \ref{algo:rfgls}. It is clear when $\bQ=\bI$, the node-split criterion, the node representatives, and the resampling step all become identical to RF. Hence, RF is simply a sub-case of RF-GLS with an identity working correlation matrix. 

\begin{algorithm}[!ht]
	\caption{RF-GLS}\label{algo:rfgls}
	\hspace*{\algorithmicindent} \textbf{Input:} Data $\mathcal{D}_n=(Y_1,X_1,\ldots,Y_n,X_n)$, working correlation matrix $\bSigma$, number of trees $n_{tree}$, stopping rules for the trees: $t_n$ (maximum number of nodes) and $t_c$ (minimum number of members per node), number of features considered for each split $M_{try}$, randomness $R^{(1)}_\Theta$ (some probability distribution to choose a resample of size $n$ from $\{1.\ldots,n\}$), randomness $R^{(2)}_\Theta$ (some probability distribution to choose a subsample of size $M_{try}$ from $\{1.\ldots,D\}$), output point $\bx_0$. \\
	\hspace*{\algorithmicindent} \textbf{Output:} Estimate $m_n(\bx_0 ; \Theta)$ of the mean function $m$ at $\mathbf{x}_0$.
	\begin{algorithmic}[1]
		\Procedure{}{}
	    \State Calculate $\bSigma^{-1/2}$;
	    \State Initialize $t=1$;
	    \For{$t = 1:n_{tree}$}
	    \State Generate $R_t \gets \iid R^{(1)}_\Theta$
	    \State $\bP_t \gets (I(i = R_t[j]))$
	    \State $\bQ_t=\bSigma^{-\top/2}\bP_t^\top\bP_t\bSigma^{-1/2}$
		\State $m_n(\bx_0;\Theta_t) \gets$ Algorithm \ref{algo:glstree} with $\calD_n,\bQ_t,t_n,t_c,M_{try},R^{(2)}_\Theta,\bx_0$;
		\EndFor
		\State Output $\hat m_n(\mathbf{x}_0) = \frac 1{n_{tree}}\sum_{t = 1}^{n_{tree}} m_n(\bx_0;\Theta_t)$
		\EndProcedure
	\end{algorithmic}
\end{algorithm}

\subsection{Kriging using RF-GLS with Gaussian Processes}\label{sec:krig}
Subsequent to estimating the regression function $m(\bx)$ using RF-GLS in the spatial non-linear mixed model (\ref{eq:spnlmm}), we can seamlessly perform traditional spatial tasks like predictions (kriging) and recovery of the latent spatial random surface $w(\ell)$. This is because RF-GLS only estimates the mean part, and the covariance is still being modeled using a GP. 
This facilitates easy formulation of the predictive or latent distribution conditional on the data as standard conditional normal distributions. 
If $\calL$ denotes the training data locations,  $\bY=(Y_1,\ldots,Y_n)^\top$, and $\bom=(\widehat m(X_1),\ldots,\hat m(X_n))^\top$ where $\widehat m$ is the estimate of $m$ from RF-GLS, then prediction at a new location $\ell_{new}$ with covariate $\bx_{new}$ will simply be given by the kriging estimate
\begin{equation}\label{eq:rfpred}
\widehat y_{new}(\bx_{new},\ell_{new}) = \widehat m(\bx_{new}) + \bv^\top\bSig^{-1}(\bY-\widehat \bom)
\end{equation}
where  $\bv^\top=Cov(\eps(\ell_{new}),\eps(\calL)), \bSigma=Cov(\eps(\calL),\eps(\calL))$. The prediction equation (\ref{eq:rfpred}) 
possess the advantage of non-parametrically estimating the mean function of the covariates while retaining the spatial structure encoded in the GP covariance function which 
adheres to the philosophy of {\em first law of geography}, i.e., proximal things are more correlated than distant ones. The prediction framework is completely agnostic to the choice of the covariance function and can work with non-stationary or multi-resolutional covariance functions if deemed appropriate. One can also obtain estimates of the  latent surface using the conditional distributions $w(\ell) \given \bY, \bX$ akin to the spatial linear model. 

\blue{Prediction equations of the form (\ref{eq:rfpred}) has been used in applications of RF to spatial settings and has been termed as random forest {\em residual kriging} (RF-RK) \citep{viscarra2014mapping,fayad2016regional}. The estimate $\widehat m$ in these applications come from a naive application of RF without accounting for the dependence. Our empirical studies in Section \ref{sec:sim} will demonstrate how residual kriging using RF-GLS improves over use of RF in dependent settings. }

RF-GLS also has several advantages over the spatial RF (RFsp) of \cite{hengl2018random}, which does not use GP but include pairwise distances between a location and all other locations, as additional covariates. For $n$ locations, this adds $n-1$ covariates.
RF-GLS avoids such unnecessary escalation of the problem to  high-dimensional settings. Direct use of the GP and the mixed-model framework helps model the spatial structure parsimoniously via the covariance function parameters. Our simulation studies in Section \ref{sec:sim} demonstrates the improved prediction performance of RF-GLS over RFsp. Most importantly, RF-GLS separates out the contribution of the covariates and the spatial component, thereby allowing estimation of the regression function $m$. 
Estimate of $m$ is not available from the spatial RF of \cite{hengl2018random} which only offers a prediction given  the covariates and the location.

\subsection{Practical considerations}\label{sec:prac} The development of the RF-GLS method has been presented using a working covariance matrix $\bSigma$. In practice, the true correlation $\bSigma_0$ will not be known and needs to be estimated assuming a parametric form $\bSigma$ based on the covariance function used. In Section \ref{sec:theory} we present the result that both RF and RF-GLS are consistent under dependent errors (akin to both OLS and GLS being consistent for linear models). Hence, for data analysis, we recommend running a first pass of RF on the data to get a preliminary estimate of $m$, use it to obtain the residuals from which the parameters of $\bSigma$ can be estimated using a maximum likelihood approach. This once again parallels the practice in linear models where the {\em oracle GLS} assuming knowledge of the true covariance matrix is replaced by a {\em feasible GLS} where the covariance matrix is estimated based on residuals from an initial OLS estimate. 

The second consideration concerns computational scalability of the approach. RF-GLS requires computing the Cholesky factor $\bSig^{-1/2}$. It is clear from Algorithm \ref{algo:rfgls}, that this is only a one-time cost, unlike the possible alternate approach discussed in Section \ref{sec:rfgls} where subsampling is conducted before decorrelation, which would lead to computing a different Cholesky factor for each tree. 
However, spatial covariance and precision matrices arising from GP are dense and for large $n$, evaluating $\bSig^{-1}$ even once still incurs the computational cost of $O(n^3)$ and storage cost of $O(n^2)$ both of which are taxing on typical personal computing resources. 

Over the last decade, the inventory of approximation techniques attacking the computational weakness of GP has grown and become increasingly sophisticated \citep[see][for a review]{heaton2019case}. Nearest Neighbor Gaussian Processes (NNGP)  \citep{nngp,finley2019efficient,datta2016nearest,dnngp} has emerged as one of the leading candidates. Centered on the principle that a few nearby locations are enough to capture the spatial dependence at a given location \citep{vecchia1988estimation}, NNGP replaces the dense graph among spatial locations with a nearest neighbor graphical model. This was shown to directly yield a sparse Cholesky factor $\widetilde \bSig^{-1/2}$ that offers an excellent approximation to the original dense $\bSig^{-1/2}$ \citep{nngp}. Software packages implementing NNGP are also publicly available \citep{spnngppaper,brisc}. 
Hence, for very large data, we recommend using this NNGP  sparse  
Cholesky factor $\widetilde\bSig^{-1/2}$ instead of $\bSig^{-1/2}$. 
This will reduce both the computation and storage cost from cubic to linear in sample size. We will show in Proposition \ref{th:mat}, that using NNGP for RF-GLS ensures a consistent estimate of $m$ even when the true data generation is from a full Mat\'ern GP. 

\section{Consistency}\label{sec:theory}
In this Section we present the main theoretical result on consistency of RF-GLS for a very general class of dependent error processes. 
The outline of the proof highlighting the new theoretical challenges addressed are presented in Section \ref{sec:outline} along with some general results of independent importance. The formal proofs are provided in the Supplementary materials. 

\subsection{Assumptions}
We first discuss assumptions \ref{as:mix} and \ref{as:chol} and make additional assumptions required for the proof of consistency. 
In Assumption \ref{as:mix}, we focus on absolutely regular or $\beta$-mixing processes, since this class of stochastic processes is rich enough to accommodate many commonly used dependent error processes like ARMA \citep{mokkadem1988mixing}, GARCH \citep{carrasco2002mixing}, certain Markov processes \citep{doukhan2012mixing} and  Gaussian processes with Mat\'ern covariance family. At the same time, uniform law of large numbers (ULLN) from independent processes can be extended to this dependent process under moderate restriction on the class of  functions under consideration. No additional assumption is required on the decay rate of the $\beta$-mixing coefficients (which are often hard to check).

\blue{Assumption \ref{as:chol} requires the Cholesky factor of the precision matrix to be sparse and regular. Such structured Cholesky factors routinely appear in time series analysis for AR$(p)$ process. For spatial data, exponential covariance family on a $1$-dimensional grid satisfies this. Other covariances like the Mat\'ern family (except the exponential covariance) do not generally satisfy this assumption. However, NNGP covariance matrices satisfy this 
and are now commonly used as an excellent approximation to the full GP covariance matrices \citep{nngp}. Since this assumption is on the working covariance matrix and not on the true covariance of the process, we can always use an approximate working covariance matrix like ones arising from NNGP to satisfy this.} 
We discuss these examples in Section \ref{sec:examples}.
Under Assumption \ref{as:chol}, for any two vectors $\bx$ and $\by$, defining $x_i = y_i = 0$ for $i \leq 0$, we have
\begin{align}\label{eq:qf}
\bx^\top\bQ\by 
&= \alpha \sum_i x_iy_i + \sum_{j \neq j' = 0}^q \rho_j\rho_{j'} \sum_{i} x_{i-j}y_{i-j'}+\sum_{i \in \tilde{\mathcal{A}}_1}\sum_{i' \in \tilde{\mathcal{A}}_2} \tilde\gamma_{i,i'}x_iy_{i'},
\end{align}
where $\alpha=\|\brho\|_2^2$, $\tilde{\mathcal{A}}_1, \tilde{\mathcal{A}}_2 \subset \{1,2,\cdots,n \}$ with $|\tilde{\mathcal{A}}_1|, |\tilde{\mathcal{A}}_2| \leq 2q$, $\tilde \gamma_{i,i'}$'s are fixed (independent of $n$) functions of $\bL$ and $\brho$.
The expression of the quadratic form in (\ref{eq:qf}) makes it evident that $\lambda_{\max}(\bQ)$ is bounded as $n \rightarrow \infty$. As the third term is a sum of fixed (at most $4q^2$) number of terms, it is $O(1)$ as long as $\bx$ and $\by$ are bounded. 

\begin{assume}[Diagonal dominance of the working precision matrix]\label{as:diag} 
	$\bQ$ is diagonally dominant satisfying $\bQ_{ii} - \sum_{j \neq i} |\bQ_{ij}| > \xi$ for all $i$, for some constant $\xi >0$. 
\end{assume}
Diagonal dominance implies $\lambda_{\min}(\bQ)$ is bounded away from zero as $n \rightarrow \infty$ which is needed to ensure stability of the GLS estimate. We will discuss in Section \ref{sec:examples} how working correlation matrices from popular time series and spatial processes with regular design  satisfy this Assumption. Note that under Assumption \ref{as:chol}, checking that the first $(q+1)$ rows of $\bQ$ are diagonally dominant is enough to verify Assumption \ref{as:diag}.

\begin{assume}[Tail behavior of the error distribution]\label{as:tail} 
\hfill
\begin{enumerate}[(a)]
	\item $\exists \{\zeta_n \}_{n \ge 1}$ such that  
	$$
	\begin{aligned}
	&\zeta_n \to \infty,\,
	\frac{t_n (\log n)\zeta_n^4}{n} \to 0,\; \mbox{and }\\ 
	&\mathbb{E}\left[\left(\max_{ i } \eps_i^2 \right) \mathds{I} \left(\max_{ i} \eps_i^2 > \zeta_n^2\right) \right] \to 0 \text{ as } n \to \infty.
	\end{aligned}
	$$
	\item $\exists$ constant $C_{\pi} > 0$ and $n_0 \in \mathbb{N}^{*}$ such that with probability $1 - \pi$, $\forall n > n_0$,
	$$
	\max_{i} |\eps_i| \leqslant C_{\pi} \sqrt{\log n}.
	$$
	\item Let  $\mathcal{I}_n \subseteq \{1,2,\cdots,n \}$  
	with $|\mathcal{I}_n| = a_n$ and $a_n \to \infty$ as $n \to \infty$. Then 
	$\frac 1{a_n} | \sum_{i \in \calI_n} \eps_i | > \delta$ with probability at most $C \exp(- ca_n)$ and $\frac 1{n} | \sum_{i} \eps_i^2 | > \sigs_0$ with probability at most $C 
	\exp(- cn)$ for any $\delta > 0$, and some constants $c,C,\sigs_0 > 0$.
\end{enumerate}
\end{assume}
 
\blue{We show for Gaussian errors, $\zeta_n$ needs to be $O(\log n)^2$ which makes the scaling condition in Assumption \ref{as:tail}(a) as $t_n (\log n)^9 /n \to 0$. This is the same scaling used in \cite{scornet2015consistency} for Gaussian errors and using the entire sample.} In general, the choice of $\zeta_n$ will be dependent on the error distribution. 
Assumption \ref{as:tail}(a), (b) and (c) will all be satisfied by sub-Gaussian errors. 

\begin{assume}[Additive model]\label{as:add}
The true mean function $m(\bx)$ is additive on the coordinates $x_d$ of $\bx$, i.e.,  
$m(\bx) = \sum_{d = 1}^D m_d({x}_d)$, where each component $m_d$ is continuous.  
\end{assume}

As demonstrated in \cite{scornet2015consistency}, additive models provide a rich enough environment to address the asymptotic properties of nonparametric methods like RF sans the additional complexities in controlling asymptotic variation of $m$ in leaf nodes. Since RF is invariant to monotone transformations of covariates \citep{friedman2001elements,friedman2006recent}, without loss of generality, the covariates can be distribution function transformed to be Unif$[0,1]$ distributed. Hence we assume that the components (functions) $m_d$ are supported on $[0,1]$, implying $m$ is uniformly bounded by some constant $M_0$.

\subsection{Main result}
For the $t^{th}$ tree, the predicted value from our method at a new point $\mathbf{x}_0$ in covariate space is denoted by $m_n (\mathbf{x}_0; \Theta_t, \mathbf{\Sigma}, \mathcal{D}_n)$ where $\calD_n=\{(\bx_i,y_i) \given i=1,\ldots,n\}$ denote the data.  \blue{Note that the $i^{th}$ data unit corresponds to location $\ell_i$ and that the covariance matrix $\bSigma$ is based on the covariance function evaluated at pairs of locations $\ell_i$ and $\ell_j$. But unless otherwise needed we suppress the locations $\ell_i$ and simply use the sub-script $i$.} 

$\Theta_t$ indicates the randomness associated with each tree. In practice, $\Theta_t$ will include both the re-sampling of data-points used in each tree as well as the choice of random splitting variable for iterative splitting in the tree. For tractability, \cite{scornet2015consistency} considered sub-sampling instead of re-sampling for the theoretical study. In our theoretical study, for analytical tractability of the GLS weights, we consider trees that use the entire set of samples and the randomness $\Theta_t$ in each tree is only used to choose the candidate set of features for each split.
The randomness for each tree are i.i.d., i.e.,  $\Theta_t \overset{i.i.d}{\sim} \Theta; \:\: \Theta \independent \mathcal{D}_n, \forall t \in \{ 1,\cdots, n_{tree}\}$. 
The finite RF-GLS  estimate $\widehat{m}_{n,n_{tree} }(\mathbf{x}_0; \Theta_1,\cdots,\Theta_{n_{tree}}, \mathbf{\Sigma}, \mathcal{D}_n)$ that will be used in practice is given by the sample average of the individual tree estimates. Conceptually, $n_{tree}$ can be arbitrarily large, hence following \cite{scornet2015consistency}, we focus on ``infinite" RF-GLS estimate given by $\bar{m}_n  (\mathbf{x}_0; \mathbf{\Sigma}, \mathcal{D}_n) = \mathbb{E}_{\Theta}m_n (\mathbf{x}_0; \Theta, \mathbf{\Sigma},  \mathcal{D}_n)$ where the expectation w.r.t $\Theta$ is conditional on $\mathcal{D}_n$. For notational convenience, we hide the dependence of $m_n, \widehat{m}_{n,n_{tree}}, \bar{m}_n$ on $\mathbf{\Sigma}$ and $\mathcal{D}_n$ throughout the rest of this article. Our main result on $\mathbb{L}_2$-consistency is stated next, the proof is deferred to Section  \ref{sec:gyorfi}.
\begin{theorem}\label{th:main}
	Under Assumptions 1-5 and if for some $\delta>0$,  
	$\lim_{n \to \infty }  \mathbb{E} \frac{1}{{n}} \sum_i |m_n(X_i)|^{2+\delta} < \infty$, then 
	RF-GLS is $\mathbb L_2$-consistent, i.e., 
	 $
	\lim_{n \to \infty} \mathbb{E} \int \left(\bar{m}_n(X) - m(X) \right)^2 \, dX = 0,
	$.
\end{theorem}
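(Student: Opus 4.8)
The plan is to derive Theorem~\ref{th:main} as a special case of the general consistency theorem for GLS optimizers over data-driven function classes, Theorem~\ref{th:gyorfi}, which extends the Gy\"orfi-style error decomposition to $\beta$-mixing errors and quadratic-form losses. As a preliminary reduction, note that $\bar m_n(\bx) = \mathbb E_\Theta[m_n(\bx;\Theta)]$ and, since $t\mapsto t^2$ is convex and $\Theta\independent\mathcal D_n$, Jensen's inequality gives $\mathbb E\int(\bar m_n(X)-m(X))^2\,dX \le \mathbb E_\Theta\,\mathbb E\int(m_n(X;\Theta)-m(X))^2\,dX$, so it suffices to establish $\mathbb L_2$-consistency of a single RF-GLS tree, uniform integrability over $\Theta$ being supplied by the assumed $(2+\delta)$-th moment bound on the fitted values. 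A single tree $m_n(\cdot;\Theta)$ is piecewise constant on a greedily built axis-parallel partition with at most $t_n$ leaves, and by (\ref{DART_update}) its leaf values are the GLS fit; hence it is a (greedy) minimizer over the data-driven class $\mathcal F_n$ of such partition-constant functions of the $\bQ$-weighted empirical risk $\frac1n(\bY-\bZ\bbeta)^\top\bQ(\bY-\bZ\bbeta)$. Theorem~\ref{th:gyorfi} reduces consistency of such an estimator to (a) an \emph{approximation} condition, that some sequence in $\mathcal F_n$ has $\bQ$-weighted risk approaching that of $m$, and (b) an \emph{estimation} condition, a uniform law of large numbers for the $\bQ$-weighted risk over $\mathcal F_n$ with complexity controlled through random $\mathbb L_p$ entropy numbers.

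For (a) I would combine Theorem~\ref{lemma:DART-theoretical} --- the empirical DART criterion converges a.s. to $\alpha$ times the population CART criterion (\ref{eq:cart_lim}) --- with Lemma~\ref{lem:beta} --- the GLS leaf representatives converge to the within-leaf conditional means --- to conclude that, up to the fixed constant $\alpha>0$, RF-GLS splits behave asymptotically like theoretical-CART splits. Using the additive structure of $m$ (Assumption~\ref{as:add}) together with stochastic equicontinuity of the DART criterion --- obtained by writing GLS predictions as oblique projections onto the column spaces of the split design matrices and bounding perturbations of these projections --- one transfers the argument of \cite{scornet2015consistency} that theoretical-CART partitions have leaves on which the oscillation of $m$ vanishes; with $t_n\to\infty$ this produces the required approximating sequence in $\mathcal F_n$.

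For (b) I would expand the quadratic form via the banded Cholesky structure of Assumption~\ref{as:chol} as in (\ref{eq:qf}), into a diagonal term $\alpha\sum_i(Y_i-f(X_i))^2$, finitely many off-diagonal band terms $\sum_i(Y_{i-j}-f(X_{i-j}))(Y_{i-j'}-f(X_{i-j'}))$, and an $O(1)$ boundary term. The diagonal term is controlled by the $\beta$-mixing ULLN of Proposition~\ref{lemma:dependent_data}, which needs only a $(2+\delta)$-th moment (Assumption~\ref{as:mix}) and hence applies to the unbounded, non-Lipschitz tree class after truncating the errors at level $\zeta_n$; Assumption~\ref{as:tail}(a) (with $t_n(\log n)\zeta_n^4/n\to0$), parts (b)--(c), and the hypothesis $\lim_{n\to\infty}\mathbb E\frac1n\sum_i|m_n(X_i)|^{2+\delta}<\infty$ make the truncated remainder and the fitted-value tails negligible, while $\log$ of the partition entropy is $O(t_n\log n)$. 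For each off-diagonal band I would introduce an auxiliary bivariate i.i.d. sequence matching the marginal law of the relevant cross products, prove the entropy-number ULLN for these cross-product averages (Proposition~\ref{lemma:cross_prod_indep}) with the same rate as for the squared terms, and lift it to the original $\beta$-mixing data through Proposition~\ref{lemma:dependent_data}. Diagonal dominance (Assumption~\ref{as:diag}) keeps $\lambda_{\min}(\bQ)$ bounded below, so the $\bQ$-weighted risk is equivalent to the $\mathbb L_2$ risk. Feeding (a) and (b) into Theorem~\ref{th:gyorfi} gives $\mathbb L_2$-consistency of each tree, and hence, by the Jensen reduction, of $\bar m_n$.

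The main obstacle is the combination needed for (b) together with the equicontinuity used in (a). The standard device of replacing the dependent errors by an i.i.d.\ copy breaks down for a quadratic-form loss, since it does not preserve the joint law of the cross-product terms; this is what forces the per-band bivariate-i.i.d.\ construction and a new entropy-based ULLN (Proposition~\ref{lemma:cross_prod_indep}). Separately, because the DART criterion is a matrix-valued rational function of the split design matrix rather than a scalar sample variance, the scalar equicontinuity arguments of \cite{scornet2015consistency} do not apply and must be replaced by perturbation bounds for oblique projection operators.
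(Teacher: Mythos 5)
Your architecture matches the paper's proof almost exactly: Theorem \ref{th:gyorfi} is the engine, Proposition \ref{lem:trunc} supplies the truncation condition \textbf{C.1}, Proposition \ref{lem:approx} (via Lemma \ref{lem:beta}, Theorem \ref{lemma:DART-theoretical} and the oblique-projection equicontinuity of Proposition \ref{lemma:equicontinuity}) supplies \textbf{C.2}, and the estimation error is handled exactly as you describe --- banded expansion of the quadratic form, per-band bivariate i.i.d.\ copies with Proposition \ref{lemma:cross_prod_indep} for the cross terms, then Proposition \ref{lemma:dependent_data} to lift to $\beta$-mixing. The Jensen step for $\bar m_n$ is also how the paper concludes (it is built into the second assertion of Theorem \ref{th:gyorfi}).

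There is, however, one concrete gap: you propose to run the estimation-error ULLN ``over $\mathcal F_n$'', the full class of functions piecewise constant on the RF-GLS partition. That step fails. The lift from the i.i.d.\ ULLN to the $\beta$-mixing one (Proposition \ref{lemma:dependent_data}) requires an envelope $G_n \geq \sup_{g\in\mathcal G_n}|g|$ satisfying the mean uniform integrability condition (\ref{eq:ui}); for $T_{\zeta_n}\mathcal F_n$ the only available envelope is the constant $\zeta_n$, and since $\zeta_n \to \infty$ condition (\ref{eq:ui}) is violated, so Proposition \ref{sec:prop-rf-ulln} cannot be applied to the whole class. Your moment hypothesis on $\frac1n\sum_i\mathbb E|m_n(X_i)|^{2+\delta}$ controls the fitted tree but says nothing about arbitrary members of $\mathcal F_n$, which may take any values up to $\pm\zeta_n$ on the cells. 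The paper's resolution is to carve out the subclass $\tilde{\mathcal F}_n$ in (\ref{eq:class}), consisting of $m_n$ itself together with functions of the form $\sum_{\mathcal B}m(\mathbf x_{\mathcal B})\mathds{I}(\cdot\in\mathcal B)$; this class admits the envelope $F_n = |m_n| + M_0$, for which \textbf{C.4} reduces to exactly your assumed moment bound, while remaining rich enough that the approximation-error argument of Proposition \ref{lem:approx} (which only ever uses approximants of that form) still goes through, and $m_n$ remains the GLS optimizer over it. Without this restriction of the function class, your estimation-error step does not close.
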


The uniformly-bounded $(2+\delta)^{th}$ moment assumption in Theorem \ref{th:main} is needed to \blue{generalize uniform laws of large number to bound the GLS estimation error from the i.i.d. setting to the dependent setting. We discuss this in details in Section \ref{sec:estimation}.} 
The following corollaries discuss three specific cases where this assumption is met. 
Their proofs are deferred to Section \ref{sec:pf-l2}. 

\begin{corollary}\label{cor:rfgls} Under Assumptions 1-5, RF-GLS is $\mathbb L_2$ consistent if either:
	\begin{enumerate}[(a)]
		\setlength\itemsep{0em}
		\item Case 1: The errors are bounded. 
		\item Case 2: The working precision matrix $\bQ$ satisfies $\min_i \bQ_{ii} > \sqrt 2 \max_i \sum_{j \neq i} |\bQ_{ij}|$. 
	\end{enumerate} 
\end{corollary}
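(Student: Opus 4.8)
The plan is to derive both statements from Theorem \ref{th:main} by verifying its sufficient moment condition $\limsup_{n\to\infty}\mathbb E\,\frac1n\sum_i|m_n(X_i)|^{2+\delta}<\infty$ in each case. In the theoretical setup every tree is grown on the full sample with the common working precision $\bQ$, the feature-randomness $\Theta$ being the only source of variation, so for a single tree the vector of fitted values at the data is $\bom_n:=\bZ\widehat\bbeta$, with $\widehat\bbeta=(\bZ^\top\bQ\bZ)^{-1}\bZ^\top\bQ\bY$ the GLS estimate (\ref{DART_update}) on the final leaf partition and $\bZ$ its (binary, full column rank) membership matrix. Then $\frac1n\sum_i|m_n(X_i)|^{2+\delta}=\frac1n\|\bom_n\|_{2+\delta}^{2+\delta}$, and since $\bar m_n$ is a $\Theta$-average of such trees, Jensen's inequality lets any uniform per-tree bound pass to the forest.

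The heart of the argument is the estimate: for every $p>1$,
\[
\|\bom_n\|_p\ \le\ \frac{\|\bQ\|_\infty}{\xi}\,\|\bY\|_p ,
\]
holding uniformly in $n$ and over all leaf partitions, where $\xi>0$ is the diagonal-dominance constant of Assumption \ref{as:diag} and $\|\bQ\|_\infty<\infty$ by Assumption \ref{as:chol} — indeed (\ref{eq:qf}) exhibits $\bQ$ as a banded matrix with uniformly bounded entries, so $\max_i\sum_j|\bQ_{ij}|$ is bounded and $\|\bQ\|_p\le\|\bQ\|_\infty$. To prove it, test the GLS normal equations $\bZ^\top\bQ(\bY-\bom_n)=\mathbf 0$ against $\psi_p(\widehat\bbeta)$, where $\psi_p(x):=\mathrm{sign}(x)\,|x|^{p-1}$ is applied entrywise; because $\bZ$ is a membership matrix, $\bZ\,\psi_p(\widehat\bbeta)=\psi_p(\bom_n)$ entrywise, giving $\psi_p(\bom_n)^\top\bQ\,\bom_n=\psi_p(\bom_n)^\top\bQ\,\bY$. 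On the left side, the diagonal of $\bQ$ contributes $\sum_i\bQ_{ii}|\bom_{n,i}|^p$; for the off-diagonal part, Young's inequality $|\bom_{n,i}|^{p-1}|\bom_{n,j}|\le\frac{p-1}{p}|\bom_{n,i}|^p+\frac1p|\bom_{n,j}|^p$ together with the symmetry of $\bQ$ bounds its absolute value by $\sum_i\big(\sum_{j\ne i}|\bQ_{ij}|\big)|\bom_{n,i}|^p$, so Assumption \ref{as:diag} yields $\psi_p(\bom_n)^\top\bQ\,\bom_n\ge\sum_i\big(\bQ_{ii}-\sum_{j\ne i}|\bQ_{ij}|\big)|\bom_{n,i}|^p\ge\xi\|\bom_n\|_p^p$. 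On the right side, Hölder with conjugate exponents $p/(p-1)$ and $p$ gives $|\psi_p(\bom_n)^\top\bQ\,\bY|\le\|\psi_p(\bom_n)\|_{p/(p-1)}\,\|\bQ\,\bY\|_p=\|\bom_n\|_p^{\,p-1}\,\|\bQ\,\bY\|_p\le\|\bom_n\|_p^{\,p-1}\,\|\bQ\|_\infty\,\|\bY\|_p$. Dividing by $\|\bom_n\|_p^{\,p-1}$ yields the displayed bound.

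With this in hand, Case 1 is immediate: bounded errors together with $\|m\|_\infty\le M_0$ (Assumption \ref{as:add}) give $\|\bY\|_\infty\le M_0+B$ for some $B<\infty$, hence $\|\bom_n\|_\infty\le(\|\bQ\|_\infty/\xi)(M_0+B)=:C_1$ deterministically, so $\frac1n\sum_i|m_n(X_i)|^{2+\delta}\le C_1^{2+\delta}$ for every $\delta>0$ and the hypothesis of Theorem \ref{th:main} holds. For Case 2 the errors need only the $(2+\delta)$-moment of Assumption \ref{as:mix}, so boundedness is unavailable; take $p=2+\delta$ with that same $\delta$ (the hypothesis $\min_i\bQ_{ii}>\sqrt2\max_i\sum_{j\ne i}|\bQ_{ij}|$ gives ample room for the estimate above — Assumption \ref{as:diag} alone in fact secures the margin $\xi$ used there). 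Then
\[
\mathbb E\,\frac1n\|\bom_n\|_{2+\delta}^{2+\delta}\ \le\ \Big(\tfrac{\|\bQ\|_\infty}{\xi}\Big)^{2+\delta}\frac1n\sum_i\mathbb E|Y_i|^{2+\delta}\ =\ \Big(\tfrac{\|\bQ\|_\infty}{\xi}\Big)^{2+\delta}\,\mathbb E|Y_1|^{2+\delta}
\]
by stationarity, and the right side is finite because $m$ is bounded and $\mathbb E|\eps_1|^{2+\delta}<\infty$. In either case Theorem \ref{th:main} then delivers $\mathbb L_2$-consistency.

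The main obstacle is the $\ell_p$ energy estimate above, and in particular obtaining it uniformly over the data-dependent leaf partitions a tree may produce: the naive route of bounding $\|\widehat\bbeta\|_p$ by $\|(\bZ^\top\bQ\bZ)^{-1}\|\cdot\|\bZ^\top\bQ\bY\|$ loses powers of the node sizes, and the testing argument circumvents this only by keeping $\psi_p(\widehat\bbeta)$ inside the column space of $\bZ$ and exploiting the banded, diagonally dominant structure of $\bQ$ (Assumptions \ref{as:chol} and \ref{as:diag}), so that each GLS node representative behaves like a bounded-coefficient local average of the $Y_i$. Everything else — the Jensen step for the forest, and checking that the remaining hypotheses of Theorem \ref{th:main} (Assumptions \ref{as:tail} and \ref{as:add}) are in force — is routine.
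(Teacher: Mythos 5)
Your proposal is correct, and for Case 2 it takes a genuinely different route from the paper. The paper works leaf by leaf: it isolates the diagonal of the gram matrix $\bZ^\top\bQ\bZ$, derives $D\,|\calC_{l}|\,|\br_{l}|\le|\bu_{l}|+\sum_{l'}w_{l'}^{(l)}|\br_{l'}|$ with $D=\min_i\bQ_{ii}$ and $O=\max_i\sum_{j\ne i}|\bQ_{ij}|$, raises this to the $(2+\delta)$-th power via $(a+b)^{2+\delta}\le2^{1+\delta}(a^{2+\delta}+b^{2+\delta})$ and two applications of Jensen, then sums over leaves and absorbs the $\sum_{l'}|\br_{l'}|^{2+\delta}|\calC_{l'}|$ term back into the left-hand side; the factor $2^{1+\delta}$ created in that step is precisely what forces the strengthened condition $D>\sqrt2\,O$ (so that $1-2^{1+\delta}O^{2+\delta}/D^{2+\delta}>0$ for small $\delta$). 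You instead test the exact normal equations $\bZ^\top\bQ(\bY-\bZ\widehat\bbeta)=\mathbf 0$ against $\psi_p(\widehat\bbeta)$, exploiting that for a membership matrix $\bZ\psi_p(\widehat\bbeta)=\psi_p(\bom_n)$ entrywise, and control the off-diagonal part of the resulting quadratic form by Young's inequality together with the symmetry of $\bQ$; this symmetrization loses nothing, so plain diagonal dominance (Assumption \ref{as:diag}) already gives $\|\bom_n\|_p\le(\|\bQ\|_\infty/\xi)\|\bY\|_p$ uniformly over data-dependent partitions, and the $\sqrt2$ hypothesis of Case 2 is never actually invoked. You therefore prove something slightly stronger than the stated corollary (Case 2 would follow from Assumption \ref{as:diag} alone), which in particular implies it; what the paper's cruder recursion buys is only that it avoids the duality/interpolation step $\|\bQ\|_{p\to p}\le\max_i\sum_j|\bQ_{ij}|$, which in your argument needs the symmetry of $\bQ$ (Schur's test) but is harmless here. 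Your Case 1 is essentially the paper's own argument: the $p=\infty$ limit of your estimate coincides with the bound (\ref{eq:bound}) obtained from the same normal-equation manipulation. The only cosmetic remark is that the Jensen step for the forest is unnecessary, since the moment hypothesis of Theorem \ref{th:main} is stated for the individual tree $m_n$ and the averaging over $\Theta$ is handled inside Theorem \ref{th:gyorfi}.
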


For bounded errors (part (a)), the $(2+\delta)^{th}$ moment-bound of Theorem \ref{th:main} is immediately satisfied, and hence consistency can be established without further assumptions. 
For unbounded errors, a stronger form of diagonal dominance condition is needed in Corollary \ref{cor:rfgls} Part (b). This is used to control the $(2+\delta)^{th}$ moment of the data weights arising from the gram-matrix $(\bZ^\top\bQ\bZ)^{-1}$ which in turn ensures the moment-bound. We discuss examples and specific parameter choices ensuring this in Section \ref{sec:examples}. Also note that, the assumption of diagonal dominance is not on the true correlation matrix of the error process and hence is not a restriction on the data-generation mechanism, but rather on the working correlation matrix which is chosen by the user. One can always use parameters in the working correlation matrix that satisfies this  (although enforcing this is not needed in practice).  

RF is RF-GLS with $\bQ=\bI$. Hence the assumption of Corollary \ref{cor:rfgls} part (b) is trivially satisfied. 
This proves consistency of RF under $\beta$-mixing dependence. 

\begin{corollary}\label{cor:rf} Under Assumptions 1, 4 and 5, RF \citep{breiman2001random} is $\mathbb L_2$-consistent. 
\end{corollary}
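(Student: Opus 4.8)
The plan is to obtain Corollary~\ref{cor:rf} as an immediate specialization of Corollary~\ref{cor:rfgls}(b), exploiting the fact—already recorded in Section~\ref{sec:rfgls}—that Breiman's RF is precisely RF-GLS run with working precision matrix $\bQ=\bI$ (equivalently working covariance $\bSigma=\bI$): with this choice the DART split criterion (\ref{DART_def}), the node representatives (\ref{DART_update}), and, in the full-sample theoretical version, the resampling step all collapse to the classical CART/RF objects. Since Corollary~\ref{cor:rfgls}(b) guarantees $\mathbb L_2$-consistency of RF-GLS under Assumptions~\ref{as:mix}--\ref{as:add} whenever $\min_i \bQ_{ii} > \sqrt 2\,\max_i\sum_{j\neq i}|\bQ_{ij}|$, it suffices to verify that inequality together with Assumptions~\ref{as:chol}--\ref{as:diag} for $\bQ=\bI$, so that only the genuine data-generating-process conditions—Assumptions~\ref{as:mix}, \ref{as:tail} and \ref{as:add}—need be imposed in the statement.

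Each verification is routine bookkeeping. The strict diagonal dominance holds trivially: $\bQ_{ii}=1$ and $\sum_{j\neq i}|\bQ_{ij}|=0$ for every $i$, so $\min_i\bQ_{ii}=1>0=\sqrt 2\,\max_i\sum_{j\neq i}|\bQ_{ij}|$; consequently the uniformly bounded $(2+\delta)^{th}$-moment requirement on $m_n$ needed for Theorem~\ref{th:main} is obtained exactly as in the proof of Corollary~\ref{cor:rfgls}(b), where now the GLS node representatives reduce to leaf means and the same control of within-leaf error sums applies. For Assumption~\ref{as:chol}, the Cholesky factor of $\bQ=\bI$ is $\bSig^{-\frac12}=\bI$, which fits the prescribed banded lower-triangular template (take any fixed $q$, $\bL$ the $q\times q$ identity and $\brho=(0,\dots,0,1)^\top$)—a fixed, sparse, regular factor independent of $n$. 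For Assumption~\ref{as:diag}, $\bQ_{ii}-\sum_{j\neq i}|\bQ_{ij}|=1>\xi$ for all $i$ with, e.g., $\xi=1/2$, and $\lambda_{\min}(\bI)=\lambda_{\max}(\bI)=1$ makes the ensuing eigenvalue regularity immediate.

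With these checks in hand, the conclusion follows by directly invoking Corollary~\ref{cor:rfgls}(b): under Assumptions~\ref{as:mix}, \ref{as:tail} and \ref{as:add}, RF-GLS with $\bQ=\bI$—that is, RF—meets all of that corollary's hypotheses and is therefore $\mathbb L_2$-consistent, $\lim_{n\to\infty}\mathbb E\int(\bar m_n(X)-m(X))^2\,dX=0$. There is no genuine obstacle here; the only point worth flagging explicitly is that the ``RF'' covered is the version within the paper's theoretical framework—full-sample trees whose sole randomness $\Theta_t$ is the per-split feature subsampling—mirroring the scope of the i.i.d.\ analysis of \cite{scornet2015consistency}, so that no work is required beyond matching $\bQ=\bI$ to hypotheses already discharged for general RF-GLS.
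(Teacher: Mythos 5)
Your proposal is correct and follows essentially the same route as the paper: the paper's own proof of Corollary \ref{cor:rf} simply notes that RF is RF-GLS with $\bQ=\bI$, that the strengthened diagonal dominance condition of Corollary \ref{cor:rfgls}(b) is then trivially satisfied, and that Assumptions \ref{as:chol} and \ref{as:diag} hold automatically. Your additional explicit verifications (the identity Cholesky factor fitting the banded template, the choice of $\xi$) are routine bookkeeping the paper leaves implicit, so there is nothing substantively different to compare.
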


To our knowledge, Corollary \ref{cor:rf} is the first result on consistency of RF under a dependent ($\beta$-mixing) error process. Since RF is simply RF-GLS with the working correlation matrix $\bSig=\bI$, Assumptions 2 and 3 are automatically satisfied, and hence we only need the Assumptions of $\beta$-mixing process, tail bounds and additive model. The consistency result is analogous to the ordinary least squares estimate being consistent even for correlated errors. Besides its own importance, Corollary \ref{cor:rf} also heuristically justifies the first step used in practical implementation of RF-GLS. The parameters in the working correlation matrix is unknown, and as highlighted in Section \ref{sec:prac}, we use the RF to get a preliminary estimate of $m$, estimate the spatial parameters using the residuals, and use these estimated parameters in the working correlation matrix for RF-GLS. This is again, analogous to feasible GLS which estimates the working correlation matrix using residuals based on OLS. Corollary \ref{cor:rf} guarantees that the initial estimator used to obtain the residuals is consistent. 

\subsection{Examples}\label{sec:examples} In this Section, we give examples of two popular dependent error processes under which a consistent estimate of $m$ can be obtained using RF-GLS. 

\subsubsection{Spatial Mat\'ern Gaussian processes}\label{sec:gp} 
Our main example focuses on the spatial non-linear mixed model using Gaussian Processes as described in Section \ref{sec:krig}. 
While many candidates exist for the covariance function of GP, the class of Mat\'ern covariances enjoy hegemonic popularity in the spatial literature owing to its remarkable property of characterizing the smoothness of the spatial surface $\eps(\ell)$ \citep{stein2012interpolation}. The stationary (isotropic) Mat\'ern covariance function is specified by 
\begin{equation}\label{eq:matern}
C(\ell_i,\ell_j \given \bphi) = C(\|\ell_i - \ell_j\|_2) = \sigs \frac{2^{1-\nu} \left(\sqrt 2 \phi\|\ell_i - \ell_j\|_2\right)^\nu}{\Gamma(\nu)} \calK_\nu \left(\sqrt 2 \phi\|\ell_i - \ell_j\|_2\right), 
\end{equation}
where $\bphi=(\sigs,\phi,\nu)^\top$ and $\calK_\nu$ is the modified Bessel function of second kind. 

We consider a Mat\'ern process sampled on one-dimensional regular lattice. This regular design is considered both for tractability of the Mat\'ern GP likelihood but also for ensuring stationarity of the process in the sense required in Theorem \ref{th:main} as for irregular spaced data $Cov(\eps_1,\eps_2) \neq Cov(\eps_2,\eps_3)$ whenever $\|\ell_1-\ell_2\|_2 \neq  \|\ell_2 - \ell_3\|_2$. Such assumptions on the dimensionality and/or regularity of design has been widely used for theoretical studies of spatial processes \citep{du2009fixed,stein2002screening}. By keeping the gap in the lattice fixed, we are also essentially using increasing-domain asymptotics, as parameters are generally not identifiable in fixed domain asymptotics for Mat\'ern GPs \citep{zhang2004inconsistent}. 


The error process arising from the marginalization of (\ref{eq:spnlmm}) is the sum of a Mat\'ern process and a nugget (random error) process. We consider half-integer $\nu \in {1.5,2.5,\ldots}$. This class of processes are popularly studied and used owing to their convenient state-space representation \citep{hartikainen2010kalman} which in turn leads to efficient computation of these Mat\'ern GP likelihoods. The state-space representation of half-integer Mat\'ern GP is equivalent to that of a stable $AR(q_0)$ process on the continuous one-dimensional domain with $q_0=\nu + 1/2$. 
However, unlike an $AR(q_0)$ time series, the Mat\'ern GP when sampled on the discrete integer lattice is no longer an $AR(q_0)$ process. Consequently, unlike covariance matrices from AR processes, covariance matrices $\bSig$ generated from Mat\'ern GP (expect for exponential GP), do not satisfy the sparsity and regularity of the working correlation matrix of Assumption \ref{as:chol}. 

Instead, we consider the working correlation $\bSigma$ to come from a Nearest Neighbor Gaussian Process (NNGP) \citep{nngp} based on the Mat\'ern covariance. As discussed in Section \ref{sec:prac}, NNGP covariance matrices are one of the most successful surrogates for full GP covariances for large spatial data, reducing likelihood computations from $O(n^3)$ to $O(n)$. What is important for the theoretical study is that an NNGP is constructed  by sequentially specifying the conditional distributions as $\eps_i \given \eps_{1:i-1} \sim \eps_i \given \eps_{N_q(i)}$ where $N_q(i) \subset \{1,\ldots,i-1\}$ is the set of $q$-nearest neighbors of $\ell_i$ among $\ell_1, \ldots, \ell_{i-1}$. When the locations are the integer grid, $N_q(i)$ becomes $\{i-1,\ldots,i-q\}$, and the NNGP construction is akin to an $AR(q)$ process. Consequently, the 
Cholesky factor $\bSig^{-1/2}$ from NNGP on an integer lattice satisfies Assumption \ref{as:chol} with $\brho = (1,-\bc^\top\bC^{-1})^\top/\sqrt{1-\bc^\top\bC^{-1}\bc}$ and $\bL$ such that $\bL^\top\bL = \bC^{-1}$ where $\bC=Cov(\eps_{1:q})$, $\bc=Cov(\eps_{1:q},\eps_{q+1})$ \citep{finley2019efficient}. This ensures the following consistency result of RF-GLS fitted with NNGP for data generated using Mat\'ern GP. The proof is in Section \ref{sec:pf-examples}.
\begin{proposition}\label{th:mat}
	Consider a spatial process $y(\ell_i)=m(X_i) + \eps(\ell_i)$ from (\ref{eq:spnlmm}) where $m$ is an additive model as specified in Assumption \ref{as:add}, $\eps(\ell_i)=w(\ell_i)+\eps^*(\ell_i)$ where $\eps^*(\ell)$ denote i.i.d. $N(0,\tau_0^2)$ noise, and $w(\ell)$ be a Mat\'ern GP, sampled on the integer lattice, with parameters $\bphi_0=(\sigs_0,\phi_0,\nu_0)^\top$, $\nu_0$ being a half-integer. Let $\bSigma$ denote a covariance matrix from a Nearest Neighbor Gaussian Process (NNGP) derived from a Mat\'ern covariance with parameters $\bphi=(\sigs,\phi,\nu)^\top$ and $\taus$. Then there exists some $K >0$ such if $\phi > K$, then RF-GLS using $\bSigma$ yields an $\mathbb L_2$ consistent estimate of $m$.
\end{proposition}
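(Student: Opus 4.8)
The plan is to verify, for $\phi$ large enough, the hypotheses of Corollary~\ref{cor:rfgls}(b) --- equivalently Assumptions~\ref{as:mix}--\ref{as:add} together with the strengthened diagonal dominance $\min_i \bQ_{ii} > \sqrt 2\,\max_i \sum_{j\neq i}|\bQ_{ij}|$ --- for the error sequence $\eps_i = w(\ell_i) + \eps^*(\ell_i)$ on the integer lattice and the working precision matrix $\bQ = \bSigma^{-1}$ built from the NNGP. Several ingredients are immediate. Assumption~\ref{as:add} is assumed. Assumption~\ref{as:chol} holds because, as recalled just before the proposition, the NNGP Cholesky factor on the integer lattice has exactly the banded structure required, with the fixed $q\times q$ block $\bL$ ($\bL^\top\bL=\bC^{-1}$) and vector $\brho$ given in terms of $\bC=Cov(\eps_{1:q})$ and $\bc=Cov(\eps_{1:q},\eps_{q+1})$. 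Assumption~\ref{as:tail} holds because $\eps$ is Gaussian: each $\eps_i\sim N(0,\sigma_0^2+\tau_0^2)$ is sub-Gaussian, and for half-integer $\nu_0$ the autocovariance of $w$ decays exponentially and is hence summable, so parts (b)--(c) follow from Gaussian tail and Gaussian/Hanson--Wright concentration bounds, while part (a) holds with $\zeta_n=O((\log n)^2)$ under the usual stopping-rule scaling $t_n(\log n)^9/n\to0$, exactly as for i.i.d.\ Gaussian errors in \cite{scornet2015consistency}. What remains is Assumption~\ref{as:mix} and the diagonal dominance.

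For Assumption~\ref{as:mix} I would invoke the state-space / SDE representation of the half-integer Mat\'ern GP: with $q_0=\nu_0+\tfrac12$, $w(\cdot)$ is a coordinate of the stationary solution of a $q_0$-dimensional linear SDE $d\bu_t=\bA\bu_t\,dt+\bB\,dW_t$ with $\bA$ Hurwitz (equivalently, a continuous-time $\mathrm{AR}(q_0)$ with rational spectral density). The augmented Gaussian diffusion $\bu_t$ is geometrically ergodic, hence absolutely regular with exponentially decaying $\beta$-mixing coefficients; $\beta$-mixing passes to the fixed-grid sample $\{w(\ell_i)\}$ because the generated past/future $\sigma$-fields shrink, and it survives the addition of the independent i.i.d.\ nugget (the bivariate sequence $(w(\ell_i),\eps^*(\ell_i))$ is $\beta$-mixing, and $\eps_i$ is an instantaneous function of it). Strict stationarity of $\{\eps_i\}$ in the sense needed by Theorem~\ref{th:main} --- $Cov(\eps_i,\eps_j)$ a function of $|i-j|$ only --- follows from stationarity of $w$ on $\mathbb R$, the regular lattice spacing, and the i.i.d.\ nugget, and finite $(2+\delta)$ moments are automatic in the Gaussian case. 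Since Assumption~\ref{as:mix} only requires $\beta(k)\to0$, the exponential rate is far more than enough.

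For the diagonal dominance, note that by Assumption~\ref{as:chol} the entries of $\bQ=(\bSig^{-1/2})^\top\bSig^{-1/2}$ are determined entirely by the fixed, $n$-independent objects $\bL,\brho$, so $\bQ$ is banded with at most $q$ off-diagonals and has only finitely many distinct rows; hence $\min_i\bQ_{ii}>\sqrt2\max_i\sum_{j\neq i}|\bQ_{ij}|$ reduces to finitely many inequalities (checking the first $q+1$ rows, as the text observes), uniformly in $n$. As $\phi\to\infty$ with $\sigs,\nu$ fixed, the large-argument asymptotics of $\calK_\nu$ force the Mat\'ern correlation at every nonzero lag to vanish, so $\bc\to\bzero$, $\bC\to(\sigs+\taus)\bI_q$, hence $\bSigma\to(\sigs+\taus)\bI$ entrywise, and therefore $\bQ=\bSigma^{-1}\to(\sigs+\taus)^{-1}\bI$ entrywise (the entries of $\bQ$ depending continuously on $\bphi$ through the finite set of fixed blocks in Assumption~\ref{as:chol}). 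Thus $\min_i\bQ_{ii}\big/\max_i\sum_{j\neq i}|\bQ_{ij}|\to\infty$, and there exists $K>0$ so that for every $\phi>K$ the $\sqrt2$-condition holds (and with it Assumption~\ref{as:diag}). All hypotheses of Corollary~\ref{cor:rfgls}(b) now hold, giving $\mathbb L_2$ consistency. Notice that the argument uses no relation whatsoever between the working parameters $(\bphi,\taus)$ and the true parameters $(\bphi_0,\tau_0^2)$: $\phi>K$ is needed only to make the working NNGP precision matrix well-conditioned and diagonally dominant.

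The principal obstacle is the $\beta$-mixing step. Even though only $\beta(k)\to0$ is required, a rigorous proof must assemble the SDE/state-space representation of half-integer Mat\'ern GPs, geometric ergodicity of the associated Gaussian diffusion, stability of $\beta$-mixing under fixed-grid subsampling and under adding an independent noise sequence, and a careful verification of \emph{strict} (not merely weak) stationarity of the sampled sequence --- which is precisely why the $1$-dimensional regular lattice and half-integer $\nu_0$ are imposed. The diagonal-dominance step, by contrast, is elementary once one observes that only finitely many inequalities need to be checked and that $\bQ$ degenerates to a multiple of the identity as $\phi\to\infty$.
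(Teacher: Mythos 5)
Your proposal is correct and reaches the conclusion by the same overall architecture as the paper: reduce to Corollary~\ref{cor:rfgls}(b), get Assumption~\ref{as:chol} for free from the NNGP Cholesky structure on the lattice, obtain the strengthened diagonal dominance (and hence Assumption~\ref{as:diag}) by noting that $\bQ$ has only $q+1$ distinct row patterns and degenerates to a multiple of the identity as $\phi\to\infty$, verify the tail bounds via sub-Gaussian maximal inequalities plus boundedness of $\|\bSigma_0\|_2$ (which, as you note, comes from summability of the exponentially decaying Mat\'ern autocovariance --- the paper phrases this through the spectral density $f_{\mathbb Z}(\omega)=\sum_k C(k)e^{ik\omega}$ and the Bessel-function asymptotics $C(k)\asymp k^{q_0-1}e^{-k}$), and handle the nugget at the end via the subadditivity of $\beta$-mixing coefficients for independent processes, i.e.\ (\ref{eq:betamix}). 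The one place where you genuinely diverge is the $\beta$-mixing verification for the sampled half-integer Mat\'ern process. The paper identifies the lattice sample of the continuous-domain $AR(q_0)$ process as a discrete-time $\mathrm{ARMA}(q_0,q_0')$ process via \cite{ihara1993information} (Theorem 2.7.1) and then invokes the known $\beta$-mixing of ARMA processes from \cite{mokkadem1988mixing}; you instead argue geometric ergodicity of the $q_0$-dimensional stationary Gaussian diffusion underlying the state-space representation, pass $\beta$-mixing to the discrete skeleton by monotonicity of the generated $\sigma$-fields, and project onto the first coordinate. Both routes are valid; the paper's is shorter given the two citations it leans on, while yours avoids the ARMA identification entirely at the cost of assembling the geometric-ergodicity-implies-$\beta$-mixing chain for the augmented Markov process. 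Since Assumption~\ref{as:mix} only needs $\beta(k)\to 0$, either suffices, and your exponential rate is (as you say) more than is required.
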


One observation is central to the proof. The half-integer Mat\'ern GP, which is an $AR(q_0)$ process in the continuous domain, when sampled on a discrete lattice becomes an ARMA process (\citep{ihara1993information} Theorem 2.7.1). 
This will establish absolutely regular mixing of these Mat\'ern processes using the result of \cite{mokkadem1988mixing} on ARMA processes, and subsequently consistency of RF-GLS by Theorem \ref{th:main}. 

\subsubsection{Autoregressive time series}\label{sec:ar}
Our main focus in this manuscript is estimation of nonlinear regression function in the spatial mixed model (\ref{eq:spnlmm}). However, the scope of our RF-GLS algorithm is much broader. It can be used for functional estimation in the general nonlinear regression model $Y_i=m(X_i) + \eps_i$ where $\eps_i$ is a dependent stochastic process with valid second moment. The method only relies on knowledge of an estimate of the residual covariance matrix $\bSig=Cov(\beps)$. The general consistency result (Theorem \ref{th:main}) is also not specific to the spatial GP setting, and only relies on general assumptions on the nature of the dependence, tail bounds of the error, and structure of the working correlation matrix. 
In this Section, we demonstrate that RF-GLS can be used for consistent function estimation for  time series data, i.e., where $\eps_i$ models the serial (temporal) correlation. In particular, we discuss consistency of RF-GLS for Autoregressive (AR) error processes, one of the mainstays of time series studies. 
An $AR(q)$ model can be written as:
\begin{align}\label{eq:ar}
\eps_i = a_1 \eps_{i-1} + a_2 \eps_{i-2} + \ldots + a_q \eps_{i-q} + \eta_i
\end{align}
where $\eta_i$ is a realization of a white noise process at time $i$. AR processes are $\beta$-mixing \citep{mokkadem1988mixing}, they also produce a banded Cholesky factor of the precision matrix as required in Assumption \ref{as:chol}. Hence, we have the following assertion. Its proof is in Section \ref{sec:pf-examples}.

\begin{proposition}\label{th:ar} Consider a time series $Y_i=m(X_i) + \eps_i$ where $i$ is the time, $m$ satisfies Assumption \ref{as:add}, $\eps_i$ denote a sub-Gaussian stable $AR(q_0)$ process. Let $\bSigma$ denote a working correlation matrix from a stationary $AR(q)$ process. 
Then RF-GLS using $\bSigma$ produces an $\mathbb L_2$ consistent estimate of $m$ if
	\begin{enumerate}
		\item $q=1$ and the working autocorrelation parameter $\rho$ used in $\bSigma$ satisfies $|\rho| < 1$ (for bounded errors) or $|\rho| < 1/(2 \sqrt 2)$ (for unbounded errors). 
		\item $q > 1$ and the AR(q) working precision matrix $\bQ=\bSigma^{-1}$ satisfies Assumption \ref{as:diag} (for bounded error) or $\min_i \bQ_{ii} > \sqrt 2 \max_i \sum_{j \neq i} |\bQ_{ij}|$ (for unbounded error). 
	\end{enumerate}
\end{proposition}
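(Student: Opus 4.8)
The plan is to reduce everything to checking the hypotheses of Theorem~\ref{th:main} (and its companion Corollary~\ref{cor:rfgls}), after which $\mathbb L_2$-consistency is immediate. Assumption~\ref{as:add} is assumed in the statement, so four things must be verified for the working correlation matrix $\bSigma$ arising from a stationary $AR(q)$ model and the true $AR(q_0)$ error process: the mixing condition (Assumption~\ref{as:mix}), the Cholesky structure (Assumption~\ref{as:chol}), diagonal dominance (Assumption~\ref{as:diag}), and the error tail behaviour (Assumption~\ref{as:tail}).

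First, the error process. A stable sub-Gaussian $AR(q_0)$ process is stationary and $\beta$-mixing by \cite{mokkadem1988mixing}, and sub-Gaussianity supplies finite moments of all orders, so Assumption~\ref{as:mix} holds. For Assumption~\ref{as:tail}: a union bound over the sub-Gaussian tails gives $\max_i |\eps_i| = O(\sqrt{\log n})$ with probability tending to one, which is part~(b) and also lets us take $\zeta_n$ of order $(\log n)^2$ so that the truncated-moment and scaling conditions of part~(a) hold (the scaling becoming $t_n(\log n)^9/n \to 0$, as in \cite{scornet2015consistency}). Part~(c), the exponential concentration of $a_n^{-1}|\sum_{i\in\mathcal I_n}\eps_i|$ and of $n^{-1}\sum_i\eps_i^2$, follows from the causal moving-average representation $\eps_i=\sum_{k\ge 0}\psi_k\eta_{i-k}$ with geometrically decaying $\psi_k$: the partial sums are sub-Gaussian linear functionals of the i.i.d.\ innovations, and $\sum_i\eps_i^2$ is a quadratic form in those innovations controlled by a Hanson--Wright-type bound.

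Second, the working precision matrix. For a stationary $AR(q)$ model the Cholesky factor of $\bQ=\bSigma^{-1}$ is banded: the first $q$ rows encode the Cholesky factor of the $q\times q$ stationary covariance of $(\eps_1,\dots,\eps_q)$ and form the block $\bL$, while each subsequent row repeats the vector $\brho=(-a_q,\dots,-a_1,1)^\top/\sigma_\eta$ (normalized appropriately for the correlation matrix), exactly the form of Assumption~\ref{as:chol}. For $q=1$ this is explicit: $\bQ$ is tridiagonal with interior diagonal $(1+\rho^2)/(1-\rho^2)$, corner diagonal $1/(1-\rho^2)$, and off-diagonal $-\rho/(1-\rho^2)$; then $\bQ_{ii}-\sum_{j\ne i}|\bQ_{ij}|=(1-|\rho|)^2/(1-\rho^2)=(1-|\rho|)/(1+|\rho|)>0$ whenever $|\rho|<1$, giving Assumption~\ref{as:diag}. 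For $q>1$, Assumption~\ref{as:diag} is imposed directly in the statement; under Assumption~\ref{as:chol} it suffices to check the first $q+1$ rows.

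Finally, the $(2+\delta)^{th}$-moment bound on $m_n$ required by Theorem~\ref{th:main}. For bounded errors this is automatic, so Corollary~\ref{cor:rfgls}, Case~1 applies. For unbounded (sub-Gaussian) errors we use Corollary~\ref{cor:rfgls}, Case~2, which requires $\min_i\bQ_{ii}>\sqrt2\,\max_i\sum_{j\ne i}|\bQ_{ij}|$; for $q=1$ the two sides are $1/(1-\rho^2)$ and $2\sqrt2\,|\rho|/(1-\rho^2)$, so the condition is precisely $|\rho|<1/(2\sqrt2)$ (which also ensures Assumption~\ref{as:diag}), and for $q>1$ it is the stated hypothesis. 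Collecting these verifications and invoking Theorem~\ref{th:main}/Corollary~\ref{cor:rfgls} gives the claim. I expect the main obstacle to be Assumption~\ref{as:tail}(c): getting the exponential concentration of the quadratic functional $n^{-1}\sum_i\eps_i^2$, and of partial sums over arbitrary index sets $\mathcal I_n$, for a \emph{dependent} sub-Gaussian sequence; the rest is either a direct citation or an explicit, if slightly tedious, banded-matrix computation, with only harmless boundary effects from the first $q$ rows.
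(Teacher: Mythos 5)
Your proposal is correct and follows essentially the same route as the paper: reduce to Corollary \ref{cor:rfgls} by verifying Assumptions \ref{as:mix}--\ref{as:tail}, cite \cite{mokkadem1988mixing} for $\beta$-mixing, construct the banded Cholesky factor of the $AR(q)$ precision matrix explicitly, and do the same $q=1$ diagonal-dominance arithmetic yielding $|\rho|<1$ and $|\rho|<1/(2\sqrt2)$ (your normalization of $\bQ$ differs from the paper's by the scalar $1-\rho^2$, which is immaterial since both dominance conditions are scale-invariant). The only minor technical divergence is in Assumption \ref{as:tail}(c), where you invoke the MA$(\infty)$ representation and a Hanson--Wright bound for $n^{-1}\sum_i\eps_i^2$, while the paper writes $\beps=\bSigma_0^{1/2}\boeta$, bounds $\|\beps\|_2^2\le\lambda_{\max}(\bSigma_0)\|\boeta\|_2^2$ using the bounded spectral density of a stable AR process, and applies a Bernstein inequality for the sub-exponential $\eta_i^2$ --- two equivalent ways of exploiting the same spectral-norm control.
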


We separate the results for $q=1$ and $q \geq 2$ since unlike $AR(1)$, for general $AR(q)$ it is challenging to derive closed form expressions of the constraints on the parameter space needed to satisfy Assumption \ref{as:diag} or the stronger diagonal dominance condition in Proposition \ref{th:ar} part 2 (for unbounded errors). However, verifying these conditions for a given AR precision matrix $\bQ$ is straightforward due to stationarity and banded structure. 
One only needs to check the first $q+1$ rows of $\bQ$ irrespective of the sample size $n$. 
The necessary condition for 
row $q+1$ is 
\begin{equation}\label{eq:diagdiff}
\|\brho\|^2 > 2 \kappa  \sum_{j=1}^q |\sum_{j'=j}^q \rho_{j'} \rho_{j'-j}| 
\end{equation}
where $\kappa$ equals $1$ for bounded errors and equals $\sqrt 2$ for unbounded errors. Additional checks are only needed for the first $q$ rows of $\bQ$. 

In practical implementation of AR processes, the order of the autoregression is often chosen based on analysis of the auto-correlation function of the residuals and may not equal the true order of the autoregression. Proposition \ref{th:ar} accommodates this scenario by not restricting the working autoregressive covariance to be of the same order or have the same coefficients as the ones generating the data. 

\section{Illustrations}\label{sec:sim}
We conduct simulation experiments to demonstrate the advantages of the RF-GLS over competing methods for both estimation and prediction in finite samples. We simulate data from the spatial non-linear mixed model of (\ref{eq:spnlmm}). We consider the following choices for the true mean function $m_1(x) = 10\sin(\pi x)$ and $m_2(\mathbf x) = (10\sin(\pi x_1 x_2) + 20(x_3 - 0.5)^2 + 10x_4 + 5x_5)/6$  \citep[Friedman function,][]{friedman1991multivariate}. $w(\ell)$ is an exponential GP on a two-dimensional spatial domain with $27$ different combinations of covariance parameters spatial variance $\sigma^2$, spatial decay $\phi$ and error variance $\taus$ as a $\%$ of the spatial variance. For each setting, we perform simulations for $100$ replicate datasets. To evaluate prediction performance, we keep $10\%$ hold-out data. Details of the parameter choices and hold-out design are in Section \ref{sec:simdetails}.

For evaluating estimation performance, we consider  
RF (which does not use spatial information), RF-GLS (Oracle) using true covariance parameter values, and RF-GLS which 
    obtains an estimate of the covariance parameters from RF residuals using the BRISC package  \citep{brisc} and uses them in RF-GLS. 
The estimation performance of the methods are evaluated based on a discrete Mean Integrated Square Errors (MISE) for $m$, evaluated over uniformly generated data points from the covariate space that provides a good coverage of the entire space. \blue{MISE for the estimated function $\hat{m}$ is given as:

$$
\text{MISE} = \int (m(\bx) - \hat{m}(\bx))^2 d\bx \approx \frac{1}{n_0} \sum_{i = 1}^{n_0} (m(\bx_i) - \hat{m}(\bx_i))^2
$$
where $\bx_1, \cdots, \bx_{n_0}$ are a dense set of points in the covariate domain (See Section \ref{sec:simdetails} for details).}

\begin{figure}[t!]
    \centering
    \begin{subfigure}[t]{0.5\textwidth}
    \label{Fig:known_sinpi}
        \centering
        \includegraphics[height=3.2in]{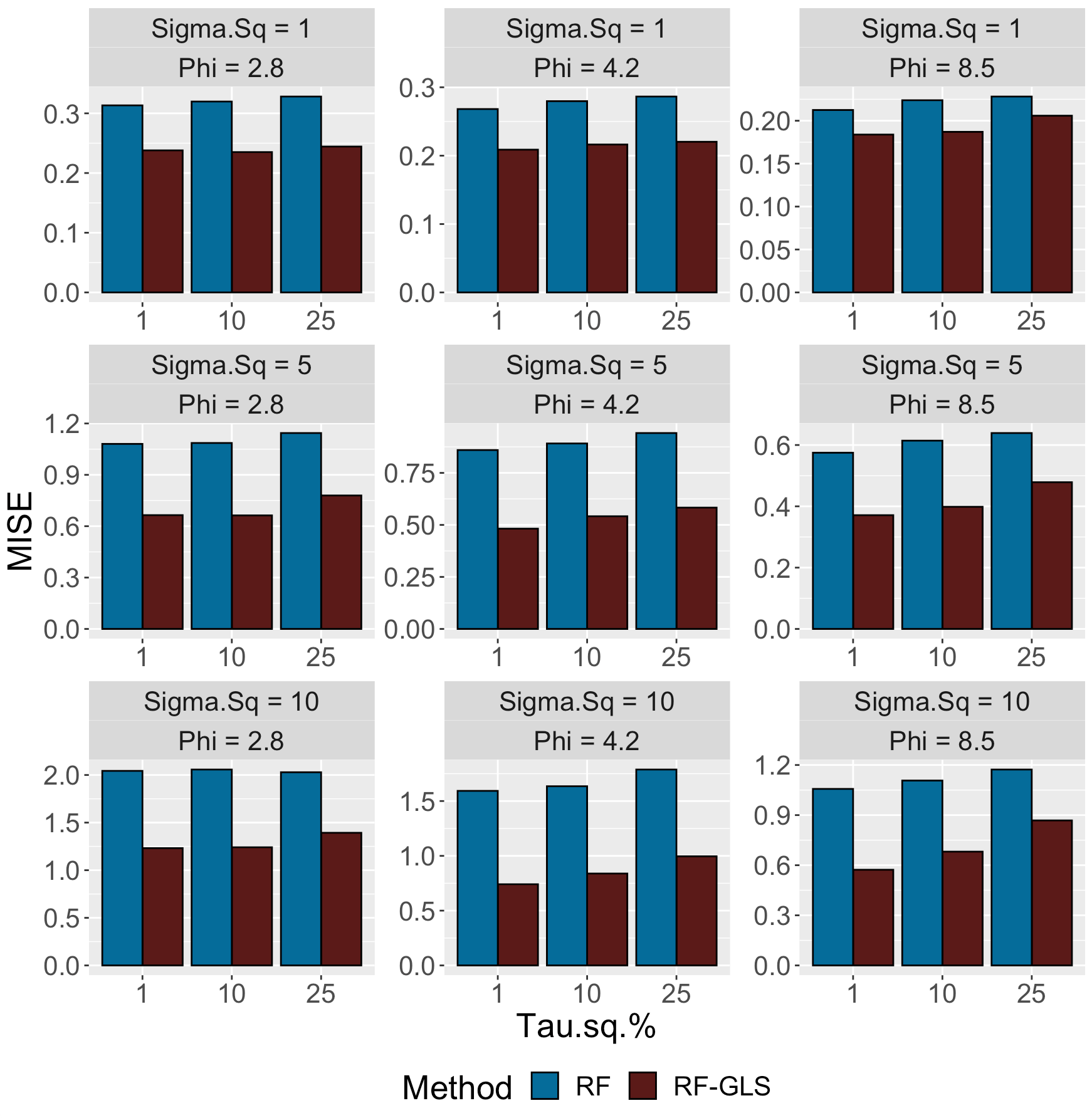}
        \caption{Estimation performance}
    \end{subfigure}%
    ~ 
    \begin{subfigure}[t]{0.5\textwidth}
    \label{Fig:known_friedby6}
        \centering
        \includegraphics[height=3.2in]{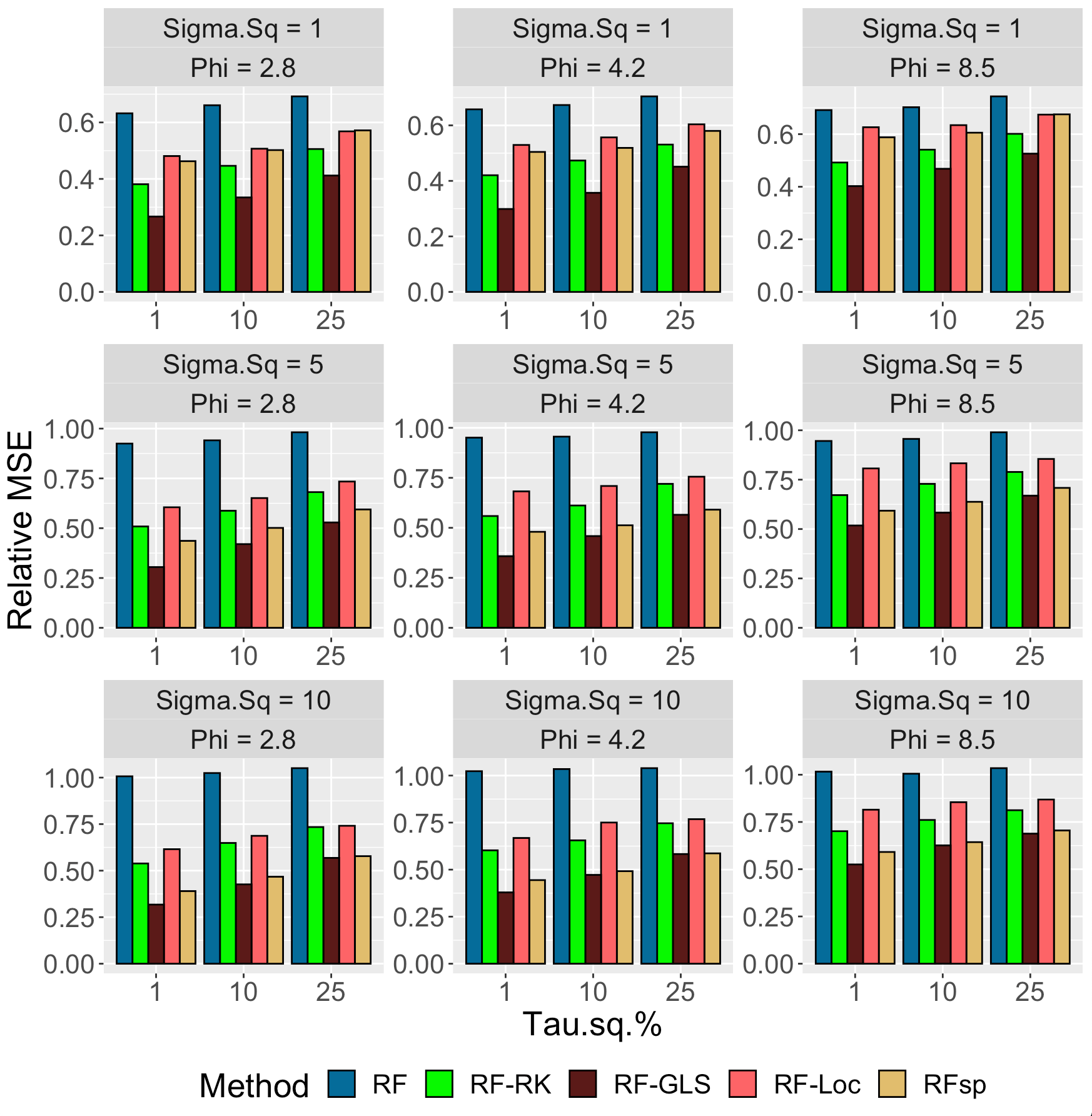}
        \caption{Prediction performance}
    \end{subfigure}
    \caption{Comparison between competing methods on (a) estimation and (b) spatial prediction when the mean function is $m = m_2$.}\label{Fig:friedman_performance}
\end{figure}

We observe that the RF-GLS performs at par with RF-GLS (Oracle) which assumes knowledge of the true spatial parameters (Supplementary Materials, \ref{sec:Oracle_comparison}). As in reality, we won't have knowledge of the true parameters, for the rest of the article we will be comparing the competing methods only with RF-GLS, where model parameters are unknown and estimated from RF residuals. We present the estimation and prediction results for $m = m_2$. The results for $m = m_1$ are similar, and are provided in the Supplementary Materials (\ref{sec:sin_comparison}).

The median MISE over 100 simulations for all the 27 setups are shown in Figure \ref{Fig:friedman_performance} (a). RF-GLS outperforms RF across all the scenarios demonstrating that exploiting the spatial information substantially improves estimation performance over the vanilla RF. Additionally, we also notice that as the strength of the spatial variation increases ($\sigs$ goes from $1$ to $10$), the ratio of MISE of RF and RF-GLS increases indicating a greater gain in terms of MISE.

\blue{We consider five candidates for evaluating prediction performance: RF, RF-RK \citep{fox2020comparing}, RF-GLS, RF-Loc (the 2-D spatial locations of the data are used as additional covariates in RF  to account for the spatial structure), and RFsp \citep{hengl2018random}. 
Among these, RF is the only one that does not use spatial information and only accounts for the mean. For RFsp, we used unordered pairwise Euclidean distances as additional covariates in RF to account for the spatial structure in the data.
The prediction performance are evaluated based on the Relative Mean Squared Error (MSE) for the test data. MSE measures the average squared difference between the estimated values and the actual value of the response. Relative MSE helps compare MSE across different simulation setups, while standardizing by the variance of the response.

$$
\text{MSE} = \frac{1}{n_{test}}\sum_{i = 1}^{n_{test}}(y_i - \hat{y}_i)^2; \text{Relative MSE} = \frac{\text{MSE}}{\frac{1}{n_{test}}\sum_{i = 1}^{n_{test}}(y_i - \bar{y}_{test})^2}
$$

Figure \ref{Fig:friedman_performance} (b) shows the median Relative MSE over 100 simulations for all the 27 setups. Expectedly, all methods performed better than RF, as unlike the other methods, RF doesn't use any spatial information. RF-GLS performs best or at par with the best across all settings.  

RF-Loc and RFsp belong to the class of approaches that add extra spatial covariates to RF. RF-Loc always performed worse than RFsp and RF-GLS. However, when the spatial variance $(\sigma^2)$ is high, RFsp performs comparably to that of RF-GLS, both of which outperform the others. 
For lower $\sigs$, RF-GLS significantly outperforms all the other methods. In this case, RFsp performs worse than RF-RK which is now the second best method.  

We conducted an in-depth study in Section \ref{sec:extra} to understand the performance of RFsp. We summarize the findings here. \cite{mentch2020getting} showed that for iid errors, adding additional noise covariates can improve prediction performance of RF for low (covariate-)signal-to-(random-)noise ratio (SNR). 
For high SNR, the trend was reversed and using extra noise covariates (when uncorrelated with the true covariates) did not help. For our setting of spatially correlated errors, akin to how noise covariates can be added to RF to explain random variation, RFsp adds distance-based covariates to explain spatial variation. The appropriate quantity to understand the performance of RFsp would be the 
(covariate-)signal-to-(spatial-)noise ratio $\mbox{SNR} = Var(m(\bX))/Var(w(\ell)) = Var(m(\bX))/\sigs$. This SNR is low when $\sigs$ is large and vice versa (Figure \ref{Fig:SNR_comparison}). 

RFsp adds $n$ pairwise distance-based covariates to RF to explain the spatial variation. 
If $D$ denotes the number of true covariates, then RFsp uses a total of $n+D$ covariates in RF and the probability to include a true covariate in the $M_{try}$ set of candidates for splitting a node becomes vanishingly small when $n\gg D$. When $\sigs$ is small (high SNR), the true covariates predominantly dictates the variation in the outcome but is rarely selected in RFsp (Figure \ref{Fig:friedman_variable}) resulting in its poor performance. In fact for high SNR, RFsp performs even worse than RF-RK which can estimate $m$ reasonably well in this setting despite ignoring the spatial dependence as the covariate signal dominates. For low SNR ($\sigs=10$), as the spatial contribution increases, RFsp (being naturally equipped to capture the spatial correlation in the data) performs comparably to RF-GLS (Figure \ref{Fig:friedman_performance}). For low SNR, RF-RK, ignoring the dominant spatial variation when estimating $m$,  performs worse than both RF-GLS and RFsp.

\begin{figure}[]
	\centering
	\includegraphics[height=2.7in]{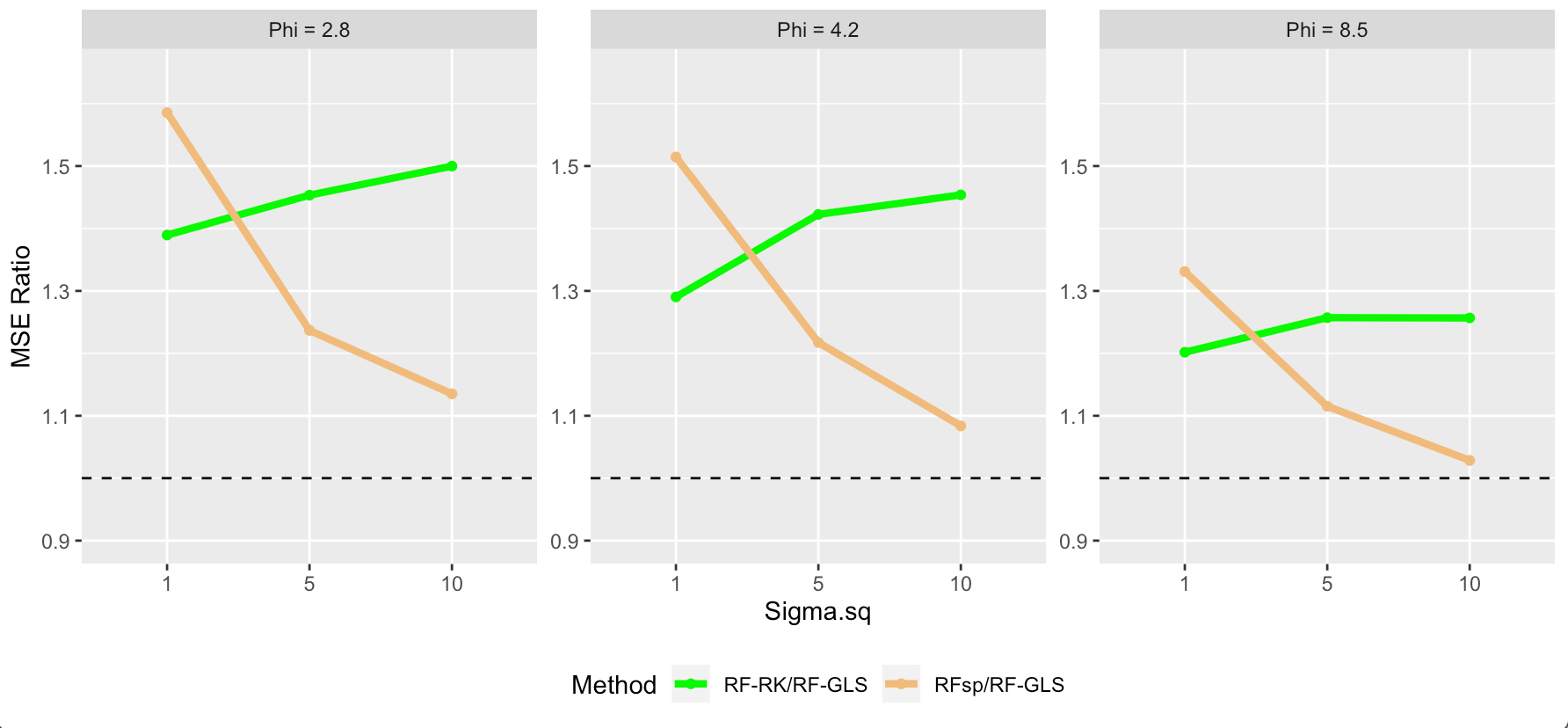}
	\caption{MSE ratio for RF-RK \& RF-GLS ({\tiny$\frac{\mbox{MSE(RF-RK)}}{\mbox{MSE(RF-GLS)}}$}) and RFsp \& RF-GLS ({\tiny$\frac{\mbox{MSE(RFsp)}}{\mbox{MSE(RF-GLS)}}$}) for $\tau^2 = 10\% \sigma^2$ when the mean function is $m = m_2$. }\label{Fig:friedman_MSE_ratio}
\end{figure}

RF-RK and RFsp outperforms each other in two  ends of the SNR spectrum. RF-GLS combines the strengths of both, using RF to estimate a non-linear covariate effect while parsimoniously modeling the spatial effect using a GP specified by only $2$ or $3$ parameters thereby avoiding introduction of a large number of spatial covariates. Consequently, RF-GLS produce comparable or better results to both RF-RK and RFsp across the SNR spectrum (Figure \ref{Fig:friedman_MSE_ratio}). 

Section \ref{sec:suppsim} of the Supplementary Materials file contains a number of additional simulation studies. These include results for larger sample size (Section \ref{sec:large}), mean function with more covariates (Section \ref{sec:highdim}), misspecified Gaussian Process smoothness (Section \ref{sec:matern_simul}), and misspecified entire spatial effect (not generated from a Gaussian Process, Section \ref{sec:smooth_simul}).  
For each study, and all choices of data generation parameters, 
RF-GLS performed as the best or comparably with the best method across all scenarios for both estimation and prediction.
}

\section{Proof of Consistency}\label{sec:outline}

 For studying RF-GLS with dependent data, we adopt the framework of consistency analysis of nonparametric regression (introduced in \cite{nobel1996histogram}, generalized in  \cite{gyorfi2006distribution}). In \cite{scornet2015consistency}, the authors also adopted this framework to prove consistency of RF for i.i.d. errors. After presenting an informal outline of the consistency argument in \cite{gyorfi2006distribution}, we provide a road map of how we extend different pieces of this argument for RF-GLS with dependent data, and highlight additional technical challenges that we resolved in this work. All formal proofs are provided in the Supplementary materials. 

 \cite{gyorfi2006distribution} consider a least squares estimator of the form 
$$m_n(, \Theta) \in arg \min_{f \in \mathcal{F}_n} \frac{1}{n} \sum_i \left[f(X_i) - Y_i \right]^2$$
where $\mathcal{F}_n$ is a carefully chosen, data-dependent function class which is large enough to control {\em approximation error} (i.e., how well the true function is estimated by the function class), and small enough to control {\em estimation error} (i.e., how far the estimate $m_n$ is from the representative of $\calF_n$ closest to $m$) 
in the sense that a Uniform Law of Large Number (ULLN) holds on this class. $m_n$ is consistent if both errors vanish asymptotically. 

In order to use powerful exponential inequalities which hold for classes of bounded functions, the proof of $\mathbb L_2$-consistency in \cite{gyorfi2006distribution} uses a standard truncation argument in probability theory with a diverging sequence of truncation thresholds $\{ \zeta_n \}$. They first show that if \emph{uniformly over the class $T_{\zeta_n} \mathcal{F}_n$ of truncated functions} in $\mathcal{F}_n$, 
\begin{enumerate}
    \item \emph{\underline{Approximation Error}} of $m$ by a data-driven class $\calF_n$ containing $m_n$ is small, i.e.,  
    $$\mathbb{E}\left[\inf_{f \in T_{\zeta_n} \mathcal{F}_n} \mathbb{E}_X \left[f(X) - m(X) \right]^2 \right] \to 0;
    $$
    \item \emph{\underline{Estimation Error}} is  small so that a ULLN holds over $\mathcal{F}_n$ for squared error loss, i.e. \\ $\mathbb{E} \left[\sup_{f \in T_{\zeta_n} \mathcal{F}_n} \left| \frac{1}{n} \sum_i (f(X_i) - Y_i))^2 - \mathbb{E} [f(X) - Y]^2 \right| \right] \rightarrow 0$; 
\end{enumerate}
then the truncated estimator $T_{\zeta_n}m_n$ is $\mathbb L_2$-consistent for $f$. Then they extended the consistency guarantee from truncated to the original estimators by showing that the {\em truncation error} vanishes under suitable tail decay assumptions on the error distribution.

In the analysis of RF-GLS, there are two main challenges. The error process $\epsilon_i$ is no longer an i.i.d process but a stochastic process capturing the dependence. Also, we work with a quadratic loss $(\bY-f(\bX))^\top \mathbf Q (\bY - f(\bX))$ instead of $\|\bY - f(\bX)\|^2$ where, $f (\mathbf X) -\mathbf Y = \left(f (X_1) -Y_1, f (X_2) -Y_2, \cdots, f (X_n) -Y_n \right)^\top$. 
Addressing these require non-trivial generalizations of each of the above pieces. 

\subsection{Consistency of quadratic loss optimizers in data-driven function classes under dependent errors}\label{sec:gyorfi} 
Our main technical statement (Theorem \ref{th:gyorfi}) is a generalization of   \cite[Theorem $10.2$]{gyorfi2006distribution} to the setting of dependent error processes and quadratic loss functions. 
Let $\mathcal{D}_n = \big\{(X_1, Y_1), \cdots, (X_n, Y_n)\big\}$ be the data where $Y_i=m(X_i) + \eps_i$. With randomness parameter $\Theta$, let $\mathcal{F}_n = \mathcal{F}_n(\mathcal{D}_n, \Theta)$ be a data-dependent class of functions. We will consider an optimal estimator $m_n \in \calF_n$ with respect to quadratic loss:
\begin{equation}\label{eqn:m_n}
		m_n \in 
		 \arg \min_{f \in \mathcal{F}_n} \frac{1}{{n}}( f (\mathbf X) -\mathbf Y)^\top \bQ( f (\mathbf X) -\mathbf Y).
\end{equation}		
This simply states that $m_n$ is the GLS estimate with respect to the working precision matrix $\bQ$ in the class $\calF_n$. This is analogous to the OLS assumption used in \cite{gyorfi2006distribution}. When $\calF_n$ is the class of piecewise constant functions on the partitions generated by a regression tree, $m_n$ is our GLS-style regression-tree estimate. Hence studying estimators of the form (\ref{eqn:m_n}) more generally suffices to prove consistency of GLS-style regression tree and henceforth of RF-GLS.

We now state a general technical result establishing $\mathbb{L}_2$-consistency of such GLS estimators $m_n$ under $\beta$-mixing (absolutely regular) error processes. This is a sufficiently large class of processes that includes spatial Mat\'ern GP and autoregressive time series as discussed in Sections \ref{sec:gp} and \ref{sec:ar}. The result is applicable beyond RF to more general nonparametric GLS estimators from dependent data using other suitable of function classes. 


\begin{theorem}\label{th:gyorfi}
	Let $\{\eps_i\}$ be a stationary $\beta$-mixing process satisfying Assumption \ref{as:mix}, and the matrix $\bQ$ satisfies Assumptions \ref{as:chol} and \ref{as:diag}. Let $m_n(.,\Theta) : \mathbb{R}^D \to \mathbb{R}$ denote a quadratic-loss optimizer (with respect to $\bQ$) of the form \eqref{eqn:m_n} in a data-dependent function class $\calF_n$. If $m_n$ and $\calF_n$ satisfies the following conditions:\\
	\noindent \textbf{(C.1)} (Truncation error) $\exists$ $\{\zeta_n\}$ such that $\lim_{n \to \infty} \zeta_n = \infty$ and $\zeta^2_n/n \to 0$, such that we have, 
$$\lim_{n \to \infty}\mathbb{E}\max_i \left[m_n(X_i) - T_{\zeta_n}m_n(X_i) \right]^2 = 0$$
	\noindent \textbf{(C.2)}  (Approximation error) 
$\lim_{n \to \infty} \mathbb E_{\Theta} \left[\inf_{f \in T_{\zeta_n}\mathcal{F}_n}  \mathbb{E}_{X} |f(X) -m(X) |^2 \right] =0$\\
\noindent \textbf{(C.3)} (Estimation error)
		Let $\dot X_i$, $\dot \eps_i$ and $\dot Y_{i} = m(\dot X_{i})+ \dot\eps_{i}$ be such that $\dot \calD_n=\{(\dot X_i, \dot Y_i) | i=1,\ldots,n\}$ be identically distributed as $\calD_n$ but independent of $\calD_n$. Define $f(\dot \bX)$ and $\dot \bY$ similar to $f(\bX)$ and $\bY$. Then, 
		we have for all arbitrary $L > 0$
		$$
			\lim_{n \to \infty} \mathbb{E} \left[ \sup_{f \in T_{\zeta_n}\mathcal{F}_n}  |\frac{1}{{n}}(f(\bX)-\bY)^\top\bQ(f(\bX)-\bY)  - \mathbb{E}\frac{1}{{n}}(f(\dot\bX)-\dot\bY)^\top\bQ(f(\dot\bX)-\dot\bY)|\right]=0.
			$$
Then we have 
$$
		\begin{aligned}
		\lim_{n \to \infty}\mathbb{E}\left[\mathbb{E}_{X} ( m_n( X, \Theta) - m( X))^2)\right]&=0, \mbox{ and }\\
		\lim_{n \to \infty} \mathbb{E}_{X} ( \bar{m}_n( X) - m( X))^2 &= 0; 
		\end{aligned}
		$$
		where $\bar{m}_n( X) =\mathbb{E}_{\Theta} m_n( X, \Theta)$ and $X$ is a new sample independent of the data.
\end{theorem}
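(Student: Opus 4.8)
The plan is to adapt the proof of \cite[Theorem~10.2]{gyorfi2006distribution} — which treats i.i.d.\ errors and ordinary least squares — by recasting every estimate in the $\bQ$-geometry and replacing the i.i.d.-error arguments with the decoupling already built into condition (C.3). Two reductions come first. Since $\mathbb E_X(\bar m_n(X)-m(X))^2 \le \mathbb E_\Theta\,\mathbb E_X(m_n(X,\Theta)-m(X))^2$ by Jensen, the averaged-estimator conclusion follows from the per-tree one, so it suffices to prove $\lim_{n\to\infty} \mathbb E\,\mathbb E_X(m_n(X,\Theta)-m(X))^2=0$. Writing $\tilde m_n := T_{\zeta_n}m_n$ and using $\|m_n-m\|_{L_2(\mu)}^2 \le 2\|m_n-\tilde m_n\|_{L_2(\mu)}^2 + 2\|\tilde m_n-m\|_{L_2(\mu)}^2$, condition (C.1) disposes of the first term (for the regression-tree classes $\calF_n$ of interest, each cell of the partition contains a sample point, so the $L_2(\mu)$ truncation gap is dominated by $\max_i(m_n(X_i)-\tilde m_n(X_i))^2$, whence $\mathbb E\|m_n-\tilde m_n\|_{L_2(\mu)}^2 \le \mathbb E\max_i(m_n(X_i)-\tilde m_n(X_i))^2\to 0$), reducing the problem to $\mathbb E\|\tilde m_n - m\|_{L_2(\mu)}^2 \to 0$.

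The heart of the argument is a GLS oracle inequality. Put $L_n(f) := \tfrac1n(f(\bX)-\bY)^\top\bQ(f(\bX)-\bY)$ and, for a fixed function $g$, let $\bar L_n(g) := \mathbb E\,\tfrac1n(g(\dot\bX)-\dot\bY)^\top\bQ(g(\dot\bX)-\dot\bY)$ be its decoupled expectation over the independent copy $\dot\calD_n$ of (C.3). Because the covariates are independent of the stationary, mean-zero error process (Assumption \ref{as:mix}), the cross term $\mathbb E[(g(\dot\bX)-m(\dot\bX))^\top\bQ\dot\beps]$ vanishes, so $\bar L_n(g)-\bar L_n(m) = \tfrac1n\,\mathbb E\,\|\bQ^{1/2}(g(\dot\bX)-m(\dot\bX))\|_2^2$ — the exact $\bQ$-analogue of the identity $\mathbb E(g(\dot X)-\dot Y)^2-\mathbb E(m(\dot X)-\dot Y)^2=\mathbb E(g(\dot X)-m(\dot X))^2$ used in \cite{gyorfi2006distribution}. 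By Assumption \ref{as:diag} ($\lambda_{\min}(\bQ)\ge\xi$) and $\lambda_{\max}(\bQ)=O(1)$ (from the expansion \eqref{eq:qf} under Assumption \ref{as:chol}), this is sandwiched between $\xi\|g-m\|_{L_2(\mu)}^2$ and $\lambda_{\max}(\bQ)\|g-m\|_{L_2(\mu)}^2$. Applying the lower bound at $g=\tilde m_n$ and telescoping through the empirical loss gives
\begin{equation*}
\xi\,\|\tilde m_n-m\|_{L_2(\mu)}^2 \;\le\; 2\Delta_n \;+\; \big[L_n(\tilde m_n)-L_n(m_n)\big] \;+\; \lambda_{\max}(\bQ)\,\|f_n^*-m\|_{L_2(\mu)}^2,
\end{equation*}
where $\Delta_n := \sup_{f\in T_{\zeta_n}\calF_n}|L_n(f)-\bar L_n(f)|$, and $f_n^*\in T_{\zeta_n}\calF_n$ is a near-minimiser of the $L_2(\mu)$-distance to $m$ (which, as $\|m\|_\infty\le M_0<\zeta_n$, may be taken so that $T_{\zeta_n}$ acts trivially on it and $L_n(m_n)\le L_n(f_n^*)$ by optimality of $m_n$ over $\calF_n$); the $2\Delta_n$ absorbs $[\bar L_n(\tilde m_n)-L_n(\tilde m_n)]$ and $[L_n(f_n^*)-\bar L_n(f_n^*)]$, and the last summand bounds $\bar L_n(f_n^*)-\bar L_n(m)$. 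Taking expectations, $\mathbb E\Delta_n\to 0$ is exactly the estimation-error condition (C.3) and $\mathbb E\|f_n^*-m\|_{L_2(\mu)}^2\to 0$ is the approximation-error condition (C.2); so the proof is complete once the truncation slack $L_n(\tilde m_n)-L_n(m_n)$ is shown negligible in expectation.

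I expect this truncation slack to be the main obstacle, as it has no analogue in \cite{gyorfi2006distribution}: there, on $\{\max_i|Y_i|\le\zeta_n\}$, coordinatewise truncation of $f$ toward $[-\zeta_n,\zeta_n]$ only shrinks each residual, so $L_n^{\mathrm{OLS}}(T_{\zeta_n}f)\le L_n^{\mathrm{OLS}}(f)$ for free; in the $\bQ$-metric, truncating the function is not the oblique projection of $\bQ^{1/2}\bY$, so this monotonicity fails. Instead I would write $L_n(\tilde m_n)-L_n(m_n)=\tfrac1n u^\top\bQ v$ with $u := \tilde m_n(\bX)-m_n(\bX)$, supported on $\{i:|m_n(X_i)|>\zeta_n\}$, and $v := u + 2(m_n(\bX)-m(\bX)) - 2\beps$, and split accordingly: $\tfrac1n u^\top\bQ u = O(\max_i u_i^2)$ and, via the banded structure of $\bQ$ from Assumption \ref{as:chol} together with $\tfrac1n\sum_i\eps_i^2=O_p(1)$ by stationarity, $\tfrac1n u^\top\bQ\beps = O\big((\mathbb E\max_i u_i^2)^{1/2}\big)$, both of which vanish in expectation by (C.1); the remaining term $\tfrac1n u^\top\bQ(m_n(\bX)-m(\bX))$, after replacing $m_n-m$ by $(\tilde m_n-m)-u$, reduces to controlling $\tfrac1n u^\top\bQ(\tilde m_n(\bX)-m(\bX))$, which is the delicate piece and requires the band structure of $\bQ$ together with the full quantitative strength of (C.1) (and, in the RF-GLS application verified via Corollary \ref{cor:rfgls}, the boundedness of the GLS node fits). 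The other substantial ingredient — condition (C.3) itself — I would not reprove here: it is established separately by creating bivariate i.i.d.\ sequences for each off-diagonal band of cross-product terms in the quadratic form (Proposition \ref{lemma:cross_prod_indep}) and transferring the resulting i.i.d.\ entropy bounds to the $\beta$-mixing setting under only a $(2+\delta)$-moment assumption (Proposition \ref{lemma:dependent_data}).
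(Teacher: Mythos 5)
Your overall architecture matches the paper's: reduce the forest statement to the per-tree one by Jensen, split off the truncation error via (C.1) using the fact that $|m_n(\bx)|\le\max_i|m_n(X_i)|$, relate the $\bQ$-weighted population risk excess to the plain $\mathbb L_2(\mu)$ error through diagonal dominance (your sandwich $\xi\|g-m\|^2\le\bar L_n(g)-\bar L_n(m)$ is exactly the paper's inequality (\ref{eq:l2alt}), stated there in terms of $\brho^\top f(\dot X^{(q+1)})$), and then run a Gy\"orfi-style oracle inequality whose three error terms are killed by (C.3), the optimality of $m_n$, and (C.2). Where you genuinely diverge is the decomposition itself: you telescope the quadratic losses $L_n(\cdot)$ directly, whereas the paper (Lemma \ref{lem:gyorfiqf}) telescopes their \emph{square roots}, producing the ten-term chain $b_1,\dots,b_{10}$ in which the optimality term $b_6$ is nonpositive and every remaining difference of square roots is controlled by the reverse triangle inequality in $\ell_2(\mathbb R^n)$.

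This difference is not cosmetic, and it is exactly where your argument has a gap — one you partly flag yourself. In your route the truncation slack is $L_n(\tilde m_n)-L_n(m_n)=\frac1n u^\top\bQ\left(u+2(m_n(\bX)-m(\bX))-2\beps\right)$ with $u=\tilde m_n(\bX)-m_n(\bX)$, and after your substitution the surviving piece is $\frac1n u^\top\bQ(\tilde m_n(\bX)-m(\bX))$. Since $\|\tilde m_n-m\|_\infty$ can only be bounded by $\zeta_n+M_0$, every bound you can extract here (Cauchy--Schwarz in the $\bQ$-inner product, or the band structure of $\bQ$) yields something of order $\zeta_n\,(\max_i u_i^2)^{1/2}$, so you need $\zeta_n^2\,\mathbb E\max_i u_i^2\to 0$. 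Condition (C.1) as stated only gives $\mathbb E\max_i u_i^2\to 0$; the extra factor of $\zeta_n^2$ is not available from the hypotheses of the theorem, so the proof does not close at the level of generality claimed. The paper's square-root decomposition avoids this entirely: its term $b_3$ is a difference of two $\ell_2(\mathbb R^n)$ norms of the filtered residuals, hence bounded by $\left(\frac1n\sum_i[\brho^\top(\tilde m_n(\bX^{(i)})-m_n(\bX^{(i)}))]^2\right)^{1/2}\le\left((q+1)\alpha\max_i u_i^2\right)^{1/2}$ with no $\zeta_n$ factor, so (C.1) suffices verbatim. If you switch your telescoping to the square-root scale (and then square at the end, paying only a fixed multiplicative constant via $(\sum_t b_t)^2\le 9\sum_t b_t^2$ as the paper does), your argument goes through; as written, the truncation-slack step would fail.
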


The proof 
is deferred to Section \ref{sec:pf-l2}. Theorem \ref{th:gyorfi} is a result of independent importance as it is a general statement on  $\mathbb L_2$ consistency of a wide class of GLS estimates under $\beta$-mixing dependent errors. Besides data-driven-partitioning-based estimates like RF or RF-GLS, it can be used to study properties of histograms, kernel-density estimates, local polynomials, etc., under $\beta$-mixing error processes. 
The second part of the theorem states that the consistency also holds for an ensemble estimator $\bar m_n$ that averages many such estimates $m_n$ each specified with random parameters $\Theta$. This will be used to show consistency of the RF-GLS forest estimate subsequent to showing consistency of each RF-GLS tree estimate.

\subsection{Approximation Error}\label{sec:approx}
The condition \textbf{C.2} of asymptotically vanishing approximation error ensures that as sample size increases, the growing class of approximating functions (e.g. piece-wise constant functions in the case of regression trees) is rich enough to approximate the target function. In earlier works on consistency of RF, approximation error was controlled under a stringent  assumption of vanishing diameter of leaf nodes. \cite{scornet2015consistency} replaced this by a condition that the variance of $Y$ (or equivalently, variation of $m$) within a leaf node of a regression-tree vanishes asymptotically, and verified this condition for RF. 
There are two steps to show this. 

\noindent (i) Establish a {\em theoretical or population-level split-criterion} --- an asymptotic limit of the empirical split-criterion used in practice, such that variation of $m$ in the leaf-nodes of a hypothetical regression tree generated using the theoretical criterion is small. 

\noindent (ii) 
Establish stochastic equicontinuity of the empirical split-criterion, such that if two set of qualifying splits $\bZ^{(1)}$ and $\bZ^{(2)}$ are close, their corresponding empirical split-criterion values are close, irrespective of the location of the splits. 

For our RF-GLS trees, 
the partitioning is driven by the DART criterion (\ref{DART_def}) and is different from the CART (\ref{eq:cart_local}) criterion used in the RF trees. Since the GLS loss and estimator involves the matrix $\bQ$, they are not available in simple scalar expressions unlike the OLS loss (sum of squares) and estimator (mean response within a node). So we address a number of technical challenges for  steps (i) and (ii) that do not appear in the analysis of RF. 

For (i), Lemma \ref{lem:beta} and Theorem \ref{lemma:DART-theoretical},  establishes that the DART split-criterion remarkably has the same limit of as that for the CART criterion. Hence, variation of $m$ in trees generated by this theoretical criterion is controlled in the same way as for RF.

For (ii), we require an entirely new and involved proof of equicontinuity for the DART-split criterion of RF-GLS as the previous arguments of \cite{scornet2015consistency} do not immediately generalize for RF-GLS loss function (\ref{DART_def}). We discuss the new contributions in Section \ref{sec:equi}.

\subsubsection{Equicontinuity of the split criterion}\label{sec:equi}
Equicontinuity of the CART-split criterion $\frac{1}{n_P}[\sum_{i=1}^{n_P}(Y_i^P - \bar{Y}^P)^2 - \sum_{i_r=1}^{n_R}(Y_{i}^R - \bar{Y}^R)^2 - \sum_{i_l=1}^{n_L}(Y_{i}^L - \bar{Y}^L)^2]$ was the center-piece of the theory in \cite{scornet2015consistency}, requiring involved but elegant arguments on the geometry of splits. Since the CART criterion only concerns the parent node to be split and its potential two child nodes,  the equicontinuity essentially boiled down to showing closeness of the respective means and variances of these 3 nodes for the two sets of splits. These 3 scalar mean and variance differences are functions of the difference in volumes of the respective nodes   
 which goes to zero uniformly as the splits come closer.



For RF-GLS, to update each node, 
the entire set of node representatives get updated via the GLS-estimate (\ref{DART_update}) which is analytically intractable due to the matrix inversion. Also, the DART-split criterion (\ref{DART_def}) 
\begin{small}$$\frac{1}{n} \Bigg[\left(\mathbf{Y} - \mathbf{Z}^{(0)}\bm{\hat{\beta}}_{GLS}(\mathbf Z^{(0)}) \right)^\top \bQ\left(\mathbf{Y} - \mathbf{Z}^{(0)}\bm{\hat{\beta}}_{GLS}(\mathbf Z^{(0)}) \right)
- \frac{1}{n} \left(\mathbf{Y} - \mathbf{Z}\bm{\hat{\beta}}_{GLS}(\mathbf Z) \right)^\top \bQ\left(\mathbf{Y} - \mathbf{Z}\bm{\hat{\beta}}_{GLS}(\mathbf Z) \right) \Bigg]
$$
\end{small}
is a quadratic form of the plugged-in GLS-estimate and thus a function of the representatives of all nodes and not just the 3 nodes as in RF. 
Our equicontinuity proof is built on viewing GLS predictions as oblique projections on the design matrices corresponding to the splits. 

We consider two scenarios: $\textbf{R1}$  -- where for at least one set of split, both potential child nodes have substantial volumes, and $\textbf{R2}$  -- where for each set of split, one potential child node have ignorable volume. 
Under $\textbf{R1}$, all nodes for at least one set of split have substantial representation ensuring that the gram matrix has norm bounded away from zero and equicontinuity can be established using perturbation bounds on orthogonal projection operators \citep{chen2016perturbation}. Under $\textbf{R2}$, this will not be the case as for both set of splits one child node will have small volume. Instead, we first show that difference of the DART-split criterion at the previous level of (parent) splits is small using the same perturbation argument as the parent node volumes are bounded away from zero. Subsequently we argue that the creation of the children nodes do not change the split criterion substantially as one of the child nodes is essentially empty. 
This new matrix-based proof for equicontinuity also circumvents the need to invoke mathematical induction as required in \cite{scornet2015consistency}. 

\begin{proposition}[Equicontinuity of empirical DART-split criterion]
	\label{lemma:equicontinuity}
	 Under Assumptions \ref{as:chol}, \ref{as:diag}, and \ref{as:tail}(b) and \ref{as:tail}(c), the DART split criterion (\ref{DART_def}) is stochastically equicontinuous with respect to the set of splits. 
\end{proposition}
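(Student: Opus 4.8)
The plan is to pass to the decorrelated regression, recognise the DART-split criterion as a difference of \emph{ordinary} orthogonal projections of the decorrelated response, and then reduce stochastic equicontinuity to a perturbation statement for those projectors, which I will control separately in the nondegenerate regime $\textbf{R1}$ and the degenerate regime $\textbf{R2}$. Let $\bSigma^{-1/2}$ be the banded Cholesky factor of $\bQ$ from Assumption~\ref{as:chol}, and put $\tilde\bY=\bSigma^{-1/2}\bY$, $\tilde\bZ=\bSigma^{-1/2}\bZ$, $\tilde\bZ^{(0)}=\bSigma^{-1/2}\bZ^{(0)}$. Since $\bZ\widehat\bbeta_{GLS}(\bZ)$ is precisely the $\bQ$-orthogonal projection of $\bY$ onto the column space of $\bZ$, a direct computation turns \eqref{DART_def} into
\begin{equation*}
v_{n,\bQ}^{DART}((d,c))=\tfrac1n\,\tilde\bY^\top\big(\bP_{\tilde\bZ}-\bP_{\tilde\bZ^{(0)}}\big)\tilde\bY=\tfrac1n\,(\bu^\top\tilde\bY)^2,
\end{equation*}
where $\bP_{\tilde\bZ},\bP_{\tilde\bZ^{(0)}}$ are ordinary orthoprojectors onto nested column spaces $\mathrm{col}(\tilde\bZ^{(0)})\subseteq\mathrm{col}(\tilde\bZ)$, and $\bu=\br/\|\br\|_2$ spans the one-dimensional space $\mathrm{col}(\tilde\bZ)\ominus\mathrm{col}(\tilde\bZ^{(0)})$ with $\br=(\bI-\bP_{\tilde\bZ^{(0)}})\bSigma^{-1/2}\bz_{\mathrm{new}}$ and $\bz_{\mathrm{new}}$ the indicator of one of the two new child cells. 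I will repeatedly use that $\lambda_{\max}(\bQ)=O(1)$ and $\lambda_{\min}(\bQ)\ge\xi>0$ (from \eqref{eq:qf} and Assumption~\ref{as:diag}), that $\tfrac1n\|\tilde\bY\|_2^2=\tfrac1n\bY^\top\bQ\bY=O_p(1)$ and $\max_i|\tilde Y_i|=O_p(\sqrt{\log n})$ (using $\|m\|_\infty\le M_0$, the bandedness of $\bSigma^{-1/2}$, and Assumption~\ref{as:tail}(b),(c)), and that $\max_j|\widehat\beta_j|=O_p(1)$ uniformly over partitions all of whose cells carry a fixed fraction of the sample (the leading term of the GLS fit is the within-cell sample mean, controlled by Assumption~\ref{as:tail}(c)).

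Next I translate closeness of two sequences of splits --- meaning their corresponding cells have small symmetric-difference volume --- into a \emph{sparse} perturbation of the design. Let $N_{\mathrm{diff}}$ count the sample points assigned to different cells by the two partitions; a Vapnik--Chervonenkis / Glivenko--Cantelli argument over the finite-VC class of differences of axis-aligned boxes, as in \cite{scornet2015consistency}, gives $N_{\mathrm{diff}}/n\to0$ uniformly over all such pairs once the symmetric-difference volume is below a vanishing threshold. Since $\|\bZ^{(1)}-\bZ^{(2)}\|_2\le\|\bZ^{(1)}-\bZ^{(2)}\|_F=\sqrt{2N_{\mathrm{diff}}}$ and $\|\bSigma^{-1/2}\|_2^2=\lambda_{\max}(\bQ)=O(1)$, all of $\|\tilde\bZ^{(1)}-\tilde\bZ^{(2)}\|_2$, $\|\tilde\bZ^{(0),(1)}-\tilde\bZ^{(0),(2)}\|_2$ and $\|\bSigma^{-1/2}(\bz_{\mathrm{new}}^{(1)}-\bz_{\mathrm{new}}^{(2)})\|_2$ are $O(\sqrt{N_{\mathrm{diff}}})$. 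Now fix a small $\epsilon>0$ and peel from $\bZ^{(0)}$ and $\bZ$ every cell carrying fewer than $\epsilon n$ points; using the bandedness of $\bSigma^{-1/2}$ together with Assumption~\ref{as:tail}(b),(c), removing each such cell perturbs every quadratic form $\tfrac1n\tilde\bY^\top\bP_{(\cdot)}\tilde\bY$ by $O_p(\epsilon)$, hence the criterion by $O_p(\epsilon)$ overall (the number of cells at a fixed tree level being bounded), while the peeled Gram matrix obeys $\sigma_{\min}^2(\tilde\bZ_{\mathrm{peel}})=\lambda_{\min}(\bZ_{\mathrm{peel}}^\top\bQ\bZ_{\mathrm{peel}})\ge\lambda_{\min}(\bQ)\min_\ell|\calC_\ell|\ge\xi\epsilon n$. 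In regime $\textbf{R1}$ --- where for one, and hence for every nearby, split-set both children of the node $\calC_l$ retain order-$\epsilon n$ points --- the projector-perturbation bounds of \cite{chen2016perturbation} give $\|\bP_{\tilde\bZ^{(1)}_{\mathrm{peel}}}-\bP_{\tilde\bZ^{(2)}_{\mathrm{peel}}}\|_2\le C\|\tilde\bZ^{(1)}_{\mathrm{peel}}-\tilde\bZ^{(2)}_{\mathrm{peel}}\|_2/\sigma_{\min}(\tilde\bZ_{\mathrm{peel}})=O(\sqrt{N_{\mathrm{diff}}/(\epsilon n)})$, and identically for the $\bZ^{(0)}$-projectors, so, combining with $\tfrac1n\|\tilde\bY\|_2^2=O_p(1)$, $|v_{n,\bQ}^{DART,(1)}-v_{n,\bQ}^{DART,(2)}|=O_p(\epsilon)+O_p\big(\sqrt{N_{\mathrm{diff}}/(\epsilon n)}\big)$.

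Regime $\textbf{R2}$ is where for \emph{both} split-sets one child of $\calC_l$ has only $n_{\mathrm{tiny}}<\epsilon n$ points, so the Gram matrix of $\bZ$ is near-singular and the direct perturbation bound is unavailable. Writing $\bP_{\tilde\bZ^{(k)}}=\bP_{\tilde\bZ^{(0),(k)}}+\bu_k\bu_k^\top$, the parent-level difference $\tfrac1n\tilde\bY^\top(\bP_{\tilde\bZ^{(0),(1)}}-\bP_{\tilde\bZ^{(0),(2)}})\tilde\bY$ is $o_p(1)$ by the $\textbf{R1}$-type argument of the previous paragraph, because the (peeled) cells of $\bZ^{(0)}$, including $\calC_l$, are well-conditioned. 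For the rank-one child term, $\|\br_k\|_2^2\ge\lambda_{\min}(\bQ)\min_{\bbeta}\|\bz_{\mathrm{new}}^{(k)}-\bZ^{(0),(k)}\bbeta\|_2^2\ge\tfrac{\xi}{2}n_{\mathrm{tiny}}$ (the Euclidean minimum equals $n_{\mathrm{tiny}}(1-n_{\mathrm{tiny}}/|\calC_l|)$), whereas the numerator $|\bz_{\mathrm{new}}^{(k)\top}\bQ(\bY-\bZ^{(0),(k)}\widehat\bbeta_{GLS}(\bZ^{(0),(k)}))|=O_p(n_{\mathrm{tiny}})$ --- its $m$-contribution is bounded by $n_{\mathrm{tiny}}(M_0+\max_j|\widehat\beta_j|)$ up to the bounded $\bQ$-bandwidth, and its $\beps$-contribution is $o_p(n_{\mathrm{tiny}})$ by Assumption~\ref{as:tail}(c) (resp.\ negligible via Assumption~\ref{as:tail}(b) when $n_{\mathrm{tiny}}\lesssim\log n$). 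Hence $\tfrac1n(\bu_k^\top\tilde\bY)^2=O_p(n_{\mathrm{tiny}}/n)=O_p(\epsilon)$ for each $k$, giving $|v_{n,\bQ}^{DART,(1)}-v_{n,\bQ}^{DART,(2)}|=o_p(1)+O_p(\epsilon)$. This is the matrix-analytic substitute for the induction of \cite{scornet2015consistency}: a near-empty child is absorbed into the $O(\epsilon)$ error rather than tracked level by level.

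To finish, given a target accuracy $\eta$ I first fix $\epsilon$ small enough that every $O(\epsilon)$ term above is below $\eta/2$ (this uses only that the number of cells at the level in question is bounded), and then require the two split-sets to be close enough that $N_{\mathrm{diff}}/n$ --- which tends to $0$ uniformly by the Glivenko--Cantelli step --- is small enough to push the remaining perturbation terms below $\eta/2$; the resulting bound is uniform in the location of the splits, which is exactly stochastic equicontinuity. I expect the hard part to be the uniformity bookkeeping: obtaining the Glivenko--Cantelli control of $N_{\mathrm{diff}}$ over the whole class of qualifying split-pairs, and bounding the per-cell peeling error and the maximal GLS coefficients \emph{uniformly} over all admissible partitions, while simultaneously navigating regime $\textbf{R2}$, where the near-degeneracy of the Gram matrix rules out the clean perturbation bound and one must instead argue that creating the near-empty child changes the criterion only negligibly. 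That near-degenerate case is the technically most delicate piece of the argument.
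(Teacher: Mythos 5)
Your proposal follows essentially the same route as the paper's proof: both recast the GLS fits as projections (oblique projections of $\bY$, equivalently orthogonal projections of the decorrelated $\bSigma^{-1/2}\bY$), control the design perturbation via Glivenko--Cantelli bounds on symmetric differences of axis-aligned boxes, split into the regimes \textbf{R1}/\textbf{R2} according to whether a child node is nearly empty, invoke the projector-perturbation bound of \cite{chen2016perturbation} together with the diagonal-dominance lower bound on $\lambda_{\min}(\bZ^\top\bQ\bZ)$ in \textbf{R1}, and in \textbf{R2} reduce to the parent-level comparison plus a negligible contribution from the near-empty child. The only substantive difference is bookkeeping: you isolate the child's contribution as an explicit rank-one update $(\bu^\top\tilde\bY)^2/n$ bounded as a Rayleigh-type quotient, whereas the paper pads $\bZ^{(0)}$ with a zero column and tracks the terms $J_1,\dots,J_5$ through Moore--Penrose inverses, Schur complements and Weyl's inequality --- an equivalent route to the same estimate.
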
 

The proof is involved, and is deferred to Section \ref{sec:pf-approx} where the Proposition is more technically phrased using additional notation on splits. Subsequent to proving equicontinuity, we can prove the following result of approximation error

\begin{proposition}\label{lem:approx} Let $\calF_n = \calF_n(\Theta)$ is the set of all functions $f:[0,1]^D \to \mathbb{R}$, piecewise constant on each
cell of the partition obtained by an RF-GLS tree. Then, under Assumptions \ref{as:mix} - \ref{as:add}, the class $T_{\zeta_n}\calF_n$ satisfies the approximation error condition \textbf{(C.2)}.
\end{proposition}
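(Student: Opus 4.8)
The plan is to bound the approximation error in \textbf{(C.2)} by the expected within-leaf variation of the true function $m$ over the cells of an RF-GLS tree, and then to show this variation vanishes by combining the equality (up to a positive constant) of the population-level DART and CART criteria with the stochastic equicontinuity of the empirical DART criterion. First I would remove the truncation: since $m = \sum_d m_d$ is additive with continuous components on $[0,1]$ (Assumption~\ref{as:add}), $|m|\le M_0$, so for $n$ large enough $\zeta_n > M_0$ and any element of $\calF_n$ bounded by $M_0$ automatically lies in $T_{\zeta_n}\calF_n$. Given $\calD_n$ and $\Theta$, let $\{A_1,\dots,A_{J_n}\}$ be the leaves of the RF-GLS tree, let $A_n(X)$ be the leaf containing $X$, write $\Delta_m(A) = \sup_{\bx,\bx'\in A}|m(\bx)-m(\bx')|$, and take $f^\star = \sum_l \mathbb{E}[m(X)\mid X\in A_l]\,\mathds{I}(\cdot \in A_l) \in \calF_n$, the $\mathbb{L}_2(X)$-projection of $m$ onto $\calF_n$, which satisfies $|f^\star|\le M_0$ and hence $f^\star\in T_{\zeta_n}\calF_n$ for $n$ large. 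Then
\begin{equation*}
\begin{aligned}
\inf_{f\in T_{\zeta_n}\calF_n}\mathbb{E}_X|f(X)-m(X)|^2
&\le \mathbb{E}_X|f^\star(X)-m(X)|^2 \\
&= \sum_{l=1}^{J_n}\mathbb{P}(X\in A_l)\,\mathrm{Var}\big(m(X)\mid X\in A_l\big)
\;\le\; \mathbb{E}_X\!\left[\Delta_m\big(A_n(X)\big)^2\right],
\end{aligned}
\end{equation*}
so it suffices to prove $\mathbb{E}_{\calD_n}\mathbb{E}_\Theta\,\mathbb{E}_X[\Delta_m(A_n(X))^2]\to 0$.

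Next I would control this expected within-leaf variation. By Theorem~\ref{lemma:DART-theoretical}, for a data-independent set of splits the empirical DART criterion converges a.s.\ to $v_\bQ^\star = \alpha\,v^{CART}$ with $\alpha=\alpha(\bQ)>0$, so the maximizers of the population DART criterion coincide with those of the population CART criterion; hence the ``theoretical RF-GLS tree'' built by recursively maximizing $v_\bQ^\star$ in each cell is exactly the theoretical CART tree analyzed in \cite{scornet2015consistency}. For additive $m$ with continuous components, \cite{scornet2015consistency} show --- by exploiting that a cell on which some component $m_d$ still varies non-negligibly admits a split along coordinate $d$ capturing a definite fraction of that variation --- that the within-cell variation of $m$ along this theoretical tree decays with depth, so that the leaf-weighted variation of $m$ tends to $0$ as the number of leaves diverges (which holds under Assumption~\ref{as:tail}(a)). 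This deterministic step transfers verbatim since $v_\bQ^\star$ and $v^{CART}$ have the same argmax. To pass to the \emph{empirical} RF-GLS tree, which maximizes $v_{n,\bQ}^{DART}$, I would invoke Proposition~\ref{lemma:equicontinuity}: stochastic equicontinuity of $v_{n,\bQ}^{DART}$ together with the pointwise convergence of Theorem~\ref{lemma:DART-theoretical} and the consistency of the GLS node representatives (Lemma~\ref{lem:beta}) yields uniform-in-splits closeness of the empirical criterion to $v_\bQ^\star$; following the argument of \cite{scornet2015consistency}, this forces the empirically chosen splits to be, with probability tending to one, uniformly close to splits near-optimal for $v_\bQ^\star$, so the within-leaf variation of $m$ for the empirical tree is dominated, up to a vanishing term, by that of the theoretical tree. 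Taking expectations over $(\calD_n,\Theta)$ and bounding $\Delta_m(\cdot)\le 2M_0$ on the exceptional event completes the argument, and with the first step gives \textbf{(C.2)}.

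The main obstacle is this last transfer: carrying Scornet et al.'s level-by-level propagation of within-leaf variation --- including their handling of cells that end up with vanishingly small probability mass, which lets them avoid requiring cell diameters to shrink --- through the matrix-valued DART criterion rather than the scalar CART criterion. The two facts that drive their proof, equal population limits and stochastic equicontinuity of the split criterion, are precisely what Theorem~\ref{lemma:DART-theoretical} and Proposition~\ref{lemma:equicontinuity} supply for RF-GLS, so the remaining work is careful bookkeeping rather than a new idea; the removal of truncation and the reduction to within-leaf variation of $m$ are routine once $m$ is known to be bounded.
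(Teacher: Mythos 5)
Your proposal is correct and follows essentially the same route as the paper's proof: bound the approximation error by the expected squared within-leaf variation of $m$ (the paper uses point evaluations $m(\bx_{\mathcal B})$ where you use conditional means, an immaterial difference), invoke Theorem \ref{lemma:DART-theoretical} to identify the theoretical DART tree with the theoretical CART tree so that Lemma 1 of \cite{scornet2015consistency} controls the variation, and use Proposition \ref{lemma:equicontinuity} to transfer from theoretical to empirical splits via the arguments of Lemma 3 and Proposition 2 of \cite{scornet2015consistency}.
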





\subsection{Estimation Error}\label{sec:estimation}
Theorem \ref{th:gyorfi} shows that for GLS estimators like (\ref{eqn:m_n}), one needs to control the quadratic form estimation error (ULLN \textbf{C.3}). 
A common technique for proving ULLNs under dependence is \\
\noindent (i) to prove an analogous ULLN for i.i.d. error processes,  and then \\
\noindent (ii) use mixing conditions to generalize the result for the dependent process of interest. 

Both steps require addressing new challenges for RF-GLS which we discuss in the next two subsections. 

\subsubsection{Cross-product function classes}\label{sec:crossprod}
For step (i), it is difficult to directly state an i.i.d. analogue of the ULLN \textbf{C.3} as it is not possible to have an error process $\{\eps_i^*\}$ which is simultaneously i.i.d., satisfies $\eps^*_i \sim \eps_i$ and  $E(\by^\top\bQ\by)=E(\by^{*\top}\bQ\by^*)$. To see this, simply note that as $Cov(\eps^*_i,\eps^*_{i-j})=0 \neq Cov(\eps_i,\eps_{i-j})$, with $y_i^*=m(X_i)+\eps_i^*$ we will have $E(q_{i,i-j} y_i y_{i-j}) \neq E(q_{i,i-j} y^*_i y^*_{i-j})$ where $\bQ=(q_{ii'})$. 
Instead, we create separate i.i.d. analogues of ULLN for each term in the expansion of the quadratic form, i.e., both the 
squared error and the cross-product terms.
These ULLN are stated as condition \textbf{C.3.iid} in the Supplementary materials. 
Condition \textbf{C.3.iid}(a) for the squared (diagonal) terms is the standard squared error ULLN using the i.i.d error processes, 
 and has been proved in \cite{gyorfi2006distribution} Theorem 10.2 generally, and in \cite{scornet2015consistency} in particular for RF.

For the cross-product (off-diagonal) terms, the ULLN is stated in Condition \textbf{C.3.iid}(b) of the Supplement, and is to our knowledge a novel strategy. \blue{The analysis of RF under iid settings in \cite{scornet2015consistency} only uses a squared error loss which does not involve any cross-product terms and hence did not need to prove any ULLN for cross-product terms. We construct separate ULLN for the cross-product terms $\sum_i q_{i,i-j}\eps_i\eps_{i-j}$ for each lag $j=1,\ldots,q$.} As mentioned earlier, use of the univariate copy $\{\eps^*_i\}$ will not allow to generalize to the corresponding term in \textbf{C.3}. This is because $\eps^*_i \perp \eps^*_{i-j}$ but $\eps_i$ and $\eps_{i-j}$ are correlated. 
Instead, \blue{for each lag $j=1,\ldots,q$, we create bivariate i.i.d. sequences $(\tilde \eps_i,\ddot \eps_{i-j})$ 
that are identically distributed as the joint (bivariate) distribution of the pairs $(\eps_i,\eps_{i-j})$, but are independent over $i$. 
We thus exploit the banded nature of the working precision matrix $\bQ$ to prove i.i.d ULLNs for cross-product terms at each of the $q$ lags. 
Combining these with the ULLN for the squared terms   
 gives us a ULLN for the entire quadratic form. }

Formulation and establishing this cross-product ULLN \blue{using lag-specific bivariate i.i.d copies} is a new contribution and is of independent importance for establishing vanishing limits of any estimation error that involves interaction terms. 
We prove this ULLN in Proposition \ref{lemma:cross_prod_indep} of the Supplement by showing that  
cross-product function classes has the same concentration rate as that for squared-error function classes with respect to the 
random $\mathbb L_p$ norm entropy number.  
	
\subsubsection{ULLN for $\beta$-mixing processes}\label{sec:ulln_dep}
For step (ii), we need to go from Conditions \textbf{(C.3.iid)}(a) and \textbf{(C.3.iid)}(b) for i.i.d processes to their analogs for dependent error processes, which would then immediately establish 
 \textbf{C.3}. 
 It has been shown that the mixing condition of the stochastic process determines the assumptions required on the class $\mathcal{F}_n$ 
 \citep{dehling2002empirical}. If we look at the ``hierarchy" of dependence structures, strong-mixing or $\alpha$-mixing \citep{bradley2005basic}, is one of the broadest family of dependent processes accommodating dependent structures ``furthest" from independence. However, existing ULLN results for $\alpha$-mixing processes require the class of functions in $\mathcal{F}_n$ to be Lipschitz continuous  \citep{dehling2002empirical}. As regression-tree estimates are inherently discontinuous due to nature of discrete partitioning, this will not be satisfied here. 
 
 Hence we focus on absolutely regular or $\beta$-mixing process. This class of mixing processes is rich enough to include a number of commonly used spatial or time-series structures as discussed in the examples of Section \ref{sec:examples}. Our main challenge here is that no existing ULLN for $\beta$-mixing processes apply to the class of functions on partitions of the RF-GLS trees. 
ULLN for Glivenko-Cantelli classes under $\beta$-mixing was established in \cite{nobel1993note}. Similar results have been established for a class of $\tilde \phi$-mixing processes in \cite{peligrad2001note}. Both results, do not need any convergence rate on the mixing coefficients, but require the class $\calF_n$ to have an envelope (dominator) $F$ (free of $n$). For data-driven partitioning based estimates like RF or RF-GLS trees such uniform envelopes are not available (as the envelop is the truncation threshold $\zeta_n \to \infty$). Instead we propose a ULLN for dependent processes that uses a weaker assumption of a $n$-varying envelop with a moment-bound. 
The proof is deferred to 
Section \ref{sec:pf-estimn}.
\begin{proposition}
	\label{lemma:dependent_data}
	(A general ULLN for $\beta$-mixing processes) Let $\{U_i\}$ be an $\mathbb{R}^d$-valued stationary $\beta$-mixing process. Let $\mathcal G_n (\{U_i\}_{i = 0}^{n-1})$ be a class of functions $\mathbb{R}^d \to \mathbb{R}$ with envelope $G_n \geq \sup_{g \in \calG_n} |g|$, such that $G_n$ is ``uniformly mean integrable", i.e.
	\begin{equation}\label{eq:ui}
	\lim_{C\to\infty}\lim_{n \to \infty }\mathbb{E}\frac{1}{{n}} \sum_i |G_n(U_i)|\mathds{I}(|G_n(U_i)|> C ) = 0.
	\end{equation}
	Let $\{U_i^*\}$ be such that $U_i^*$ is identically distributed as $U_i, \: \: \forall \: i$ and $U_i^* \independent U_j^*; \: \: \forall i \neq j$. 
	Then, $\sup_{g \in \mathcal{G}_n}\Big|\frac{1}{{n}}\sum_i (g(U_i^*) - \mathbb{E}g(U_i^*))\Big| \overset{\mathbb{L}_1}{\to} 0
	\implies 
	\sup_{g \in \mathcal{G}_n}\Big|\frac{1}{{n}}\sum_i (g(U_i) - \mathbb{E}g(U_i))\Big| \overset{\mathbb{L}_1}{\to} 0.
	$
\end{proposition}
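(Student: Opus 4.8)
The plan is to combine a truncation step---which replaces the sample-size–dependent envelope $G_n$ by a deterministic bound, and is exactly what condition \eqref{eq:ui} is for---with a blocking-plus-coupling scheme for $\beta$-mixing sequences that reduces the dependent average to an almost i.i.d.\ one. For the truncation I would fix $C>0$ and decompose $g=T_Cg+(g-T_Cg)$ for $g\in\calG_n$, with $|g-T_Cg|\le G_n\,\mathds{I}(G_n>C)$; by stationarity $\mathbb{E}\tfrac1n\sum_i G_n(U_i)\mathds{I}(G_n(U_i)>C)=\mathbb{E}\,G_n(U_0)\mathds{I}(G_n(U_0)>C)=:T_n(C)$, and the same quantity bounds the corresponding sum for $\{U_i^*\}$ since $U_i^*\overset{d}{=}U_i$. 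Hence \[ \mathbb{E}\sup_{g\in\calG_n}\Big|\tfrac1n\sum_i\big(g(U_i)-\mathbb{E}g(U_i)\big)\Big|\le\mathbb{E}\sup_{g\in\calG_n}\Big|\tfrac1n\sum_i\big(T_Cg(U_i)-\mathbb{E}T_Cg(U_i)\big)\Big|+2T_n(C), \] and likewise on the i.i.d.\ side. Since $\lim_{C\to\infty}\limsup_n T_n(C)=0$ by \eqref{eq:ui}, it suffices to prove the $\beta$-mixing ULLN, for each fixed $C$, for the truncated class $\mathcal{H}_n:=T_C\calG_n$, which now carries the \emph{deterministic} envelope $C$, knowing the i.i.d.\ ULLN for $\mathcal{H}_n$ (which itself follows from the hypothesis for $\calG_n$ plus the same estimate, up to an additive error that vanishes as $C\to\infty$).

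For the bounded class $\mathcal{H}_n$ I would partition $\{0,\dots,n-1\}$ into $\mu_n$ ``large'' blocks of length $p_n$ alternating with ``small'' gaps of length $q_n\le p_n$. Taking, say, $q_n=\lfloor\log n\rfloor$ and $p_n=\lceil n\max(\beta(q_n)^{1/2},n^{-1/2})\rceil$ gives $q_n,p_n\to\infty$, $q_n=o(p_n)$, $p_n=o(n)$, $\mu_n\to\infty$, and $\mu_n\beta(q_n)=O(\beta(q_n)^{1/2})\to0$; crucially, this uses only $\beta(k)\to0$ and \emph{no} decay rate. By Berbee's coupling lemma \citep{bradley2005basic} the large blocks $B_1,\dots,B_{\mu_n}$ may be replaced by mutually independent $\tilde B_k\overset{d}{=}B_k\overset{d}{=}(U_0,\dots,U_{p_n-1})$ with $\mathbb{P}(\exists\,k:B_k\ne\tilde B_k)\le\mu_n\beta(q_n)$. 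The gap observations contribute at most $\tfrac{\mu_n q_n}{n}C\le\tfrac{q_n}{p_n}C\to0$ to $\sup_{h\in\mathcal{H}_n}|\tfrac1n\sum_i(h(U_i)-\mathbb{E}h(U_i))|$, the $o(n)$ leftover terms contribute $o(1)$, and on the coupling-failure event the average moves by at most $2C$; so it remains to bound $\mathbb{E}\sup_{h\in\mathcal{H}_n}|\tfrac1n\sum_{k=1}^{\mu_n}\sum_{j=0}^{p_n-1}(h(\tilde U_{k,j})-\mathbb{E}h(\tilde U_{k,j}))|$. Reorganising by the within-block position $j$ and using $\mu_n p_n\le n$, this is at most $\tfrac1{p_n}\sum_{j=0}^{p_n-1}\mathbb{E}\sup_{h\in\mathcal{H}_n}|\tfrac1{\mu_n}\sum_{k=1}^{\mu_n}(h(\tilde U_{k,j})-\mathbb{E}h(\tilde U_{k,j}))|$; and for each fixed $j$ the variables $\{\tilde U_{k,j}\}_{k=1}^{\mu_n}$ are i.i.d.\ with the marginal law of $U_0$, so each summand equals $\mathbb{E}\sup_{h\in\mathcal{H}_n}|\tfrac1{\mu_n}\sum_{k=1}^{\mu_n}(h(V_k)-\mathbb{E}h(V_k))|$ for i.i.d.\ copies $V_1,\dots,V_{\mu_n}$ of $U_0$. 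This last quantity is governed by the covering-number/entropy control of $\mathcal{H}_n$ that underlies the i.i.d.\ ULLN hypothesis and tends to $0$; assembling the bounds and letting $n\to\infty$ then $C\to\infty$ finishes the argument.

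The main obstacle is reconciling the two non-standard features at once: the envelope $G_n$ is not uniformly bounded, so the ready-made $\beta$-mixing ULLNs of \citep{nobel1993note,peligrad2001note} (which require a fixed envelope $F$) do not apply; and no rate is assumed on the mixing coefficients, which rules out the conventional design with $\mu_n\asymp n$ exactly independent blocks. The first difficulty is met by the truncation of the first step, but that truncation has to be mirrored on the i.i.d.\ average and is precisely why \eqref{eq:ui} is imposed; the second forces the ``few large blocks'' scheme, at the cost of reducing the effective sample size from $n$ to $\mu_n=o(n)$ before independence becomes available, so the argument has to survive that reduction---the point at which the quantitative complexity of the truncated class, rather than merely the bare i.i.d.\ ULLN, must be invoked. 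A last routine technicality, the data-dependence of $\calG_n=\calG_n(\{U_i\})$ and measurability of the suprema, I would handle by conditioning on the data determining the class (in the RF-GLS application, the covariates and the tree randomness), over-bounding $\calF_n$ by a combinatorial super-class, and assuming the relevant classes permissible.
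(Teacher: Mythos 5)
Your overall architecture coincides with the paper's: both proofs first use the uniform mean integrability condition (\ref{eq:ui}) to strip off the unbounded part of the envelope (the paper splits $g = g\,\mathds{I}(G_n\le C) + g\,\mathds{I}(G_n>C)$ rather than $g = T_Cg + (g-T_Cg)$, but the remainder is dominated by $G_n\mathds{I}(G_n>C)$ either way), and then reduce the resulting bounded class to the i.i.d.\ setting. Where you diverge is the bounded case. The paper simply invokes Theorem 1 of \cite{nobel1993note} (noting that its a.s.\ i.i.d.\ hypothesis can be weakened to a weak ULLN): that proof splits $\{0,\dots,n-1\}$ into $n^{(1)}$ interleaved arithmetic progressions of length $n^{(2)}$, compares the law of each progression to the product law in total variation at cost $n^{(2)}\beta(n^{(1)})$ without any coupling, and upgrades $\inf_n \mathbb{E}[\,\cdot\,]\le\delta$ to convergence via Kingman's subadditive ergodic theorem. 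You instead rebuild the reduction from scratch with Berbee coupling on consecutive blocks separated by gaps. Both routes avoid any decay-rate assumption on $\beta(\cdot)$; yours is more self-contained and quantitative, at the price of having to manage the coupling-failure event and the gap contributions explicitly (which you do correctly).

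The genuine gap is the one you flag yourself. After blocking you need
$\mathbb{E}\sup_{h\in T_C\calG_n}\big|\mu_n^{-1}\sum_{k=1}^{\mu_n}(h(V_k)-\mathbb{E}h(V_k))\big|\to 0$
for i.i.d.\ $V_k$ with sample size $\mu_n=o(n)$ but the \emph{same} class $\calG_n$. The proposition's only hypothesis is the bare i.i.d.\ ULLN at sample size $n$ for $\calG_n$; since the class may grow with $n$, this does not imply the ULLN at the smaller sample size $\mu_n$ for that same class. Appealing to ``the covering-number control that underlies the hypothesis'' imports an assumption the proposition does not make, so as written you prove a weaker statement (a ULLN under an additional entropy condition, which does hold in the RF-GLS application) rather than the proposition itself. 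Be aware that this sample-size mismatch is not unique to your route: the Nobel--Dembo proof also passes through a reduced sample size $n^{(2)}$ and resolves it by subadditivity and Kingman's theorem, which requires a class that does not vary with $n$; the paper sidesteps the issue by citing the theorem. If you want a self-contained proof valid for $n$-dependent classes under only the stated hypotheses, you either need to adopt the interleaving-plus-subadditivity device (applied to a suitable fixed super-class) or explicitly add the quantitative complexity assumption you are implicitly using.
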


Proposition \ref{lemma:dependent_data} ensures that ULLN for i.i.d. errors is enough to generalize to $\beta$-mixing error processes as long as the function classes are contained within a sequence of {\em `` mean uniform integrable"} envelopes in the sense of (\ref{eq:ui}). Next, we show that the ULLN holds for RF-GLS trees under a $(2+\delta)^{th}$ moment assumption that is sufficient for mean uniform integrability. 

\begin{proposition}[Estimation error for RF-GLS]\label{sec:prop-rf-ulln}
Let $\calF_n = \calF_n(\Theta)$ is the set of all functions $f:[0,1]^D \to \mathbb{R}$, piecewise constant on each
cell of the partition obtained by an RF-GLS tree. If any subset $\tilde F_n \subseteq \calF_n$ satisfies the following condition:\\
\noindent \textbf{(C.4)} (Moment bound:) $\exists$ an envelope $F_n \geqslant \sup_{f \in \tilde \calF_n} |f|$, such that 
$\lim_{n \to \infty }  \mathbb{E} \frac{1}{{n}} \sum_i |F_n(X_i)|^{2+\delta} < \infty$ for some $\delta > 0$,\\ then $T_{\zeta_n}\tilde \calF_n$ satisfies the ULLN \textbf{(C3)} for $\beta$-mixing error processes, and under Assumption \ref{as:chol}.
\end{proposition}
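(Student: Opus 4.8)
The plan is to reduce the quadratic‑form ULLN \textbf{(C.3)} for $T_{\zeta_n}\tilde\calF_n$ to a finite family of ULLNs for scalar sums, and to handle each piece either by the known i.i.d.\ squared‑error result or by the two new tools, Proposition~\ref{lemma:cross_prod_indep} and Proposition~\ref{lemma:dependent_data}. Write $r_i^{(f)} := f(X_i) - Y_i = (f(X_i)-m(X_i)) - \eps_i$. Under Assumption~\ref{as:chol}, the identity (\ref{eq:qf}) applied with $\bx = \by = (r_i^{(f)})$ gives
$$
\tfrac1n (f(\bX)-\bY)^\top \bQ (f(\bX)-\bY) = \tfrac{\alpha}{n}\sum_i (r_i^{(f)})^2 \;+\; \sum_{j\neq j'=0}^q \rho_j\rho_{j'}\,\tfrac1n\sum_i r_{i-j}^{(f)} r_{i-j'}^{(f)} \;+\; \tfrac1n\sum_{i\in\tilde{\calA}_1}\sum_{i'\in\tilde{\calA}_2}\tilde\gamma_{i,i'}\, r_i^{(f)} r_{i'}^{(f)},
$$
and the same decomposition holds verbatim for the independent‑copy functional with $\dot\bX,\dot\bY$. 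After reindexing, the middle sum is a linear combination, with fixed coefficients $\rho_j\rho_{j'}$, of the $q$ band statistics $\tfrac1n\sum_i r_i^{(f)} r_{i-k}^{(f)}$, $k=1,\dots,q$, while the first term is the $k=0$ (squared‑error) statistic scaled by $\alpha=\|\brho\|_2^2$. Since $\tilde{\calA}_1,\tilde{\calA}_2$ have cardinality at most $2q$ (independent of $n$), and on the high‑probability event of Assumption~\ref{as:tail}(b) we have $\max_i |r_i^{(f)}| \leq \zeta_n + M_0 + C_\pi\sqrt{\log n}$ uniformly over $f \in T_{\zeta_n}\tilde\calF_n$, the boundary term and its expectation are both $O\!\big(q^2(\zeta_n^2 + \log n)/n\big)\to 0$ by the scaling in Assumption~\ref{as:tail}(a); hence the boundary term is uniformly negligible and may be dropped. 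It therefore suffices to prove, for each $k \in \{0,1,\dots,q\}$,
$$
\mathbb E\, \sup_{f \in T_{\zeta_n}\tilde\calF_n} \Big| \tfrac1n\sum_i r_i^{(f)} r_{i-k}^{(f)} - \mathbb E\, \tfrac1n\sum_i \dot r_i^{(f)} \dot r_{i-k}^{(f)} \Big| \longrightarrow 0 .
$$

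For each fixed $k$ I would first pass through an i.i.d.\ surrogate and then upgrade to the actual dependent error process. For $k=0$ the required i.i.d.\ statement is the squared‑error ULLN for partition‑induced classes, i.e.\ condition \textbf{(C.3.iid)}(a), which is established in \cite[Theorem 10.2]{gyorfi2006distribution} and \cite{scornet2015consistency}; it holds here because RF‑GLS partitions have at most $t_n$ cells (controlling the random $\mathbb L_p$‑entropy), because $t_n(\log n)\zeta_n^4/n\to 0$ by Assumption~\ref{as:tail}(a), and because \textbf{(C.4)} supplies the envelope bound. For $k\geq1$, rewrite the band statistic as $\tfrac1n\sum_i g_f(U_i^{(k)})$ with $U_i^{(k)} := \big((X_i,\eps_i),(X_{i-k},\eps_{i-k})\big)$ and $g_f(u)=(f(x)-m(x)-e)(f(x')-m(x')-e')$; the i.i.d.\ surrogate is the sequence $\{U_i^{(k)*}\}$ that is independent across $i$ but has the correct lag‑$k$ bivariate marginal (the lag‑specific bivariate i.i.d.\ copies), and the corresponding i.i.d.\ ULLN, condition \textbf{(C.3.iid)}(b), is exactly Proposition~\ref{lemma:cross_prod_indep}, whose proof shows the cross‑product class has the same random $\mathbb L_p$‑norm entropy behaviour as the squared class, so the same scaling and envelope conditions suffice. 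Because $U_i^{(k)*}$ shares its bivariate marginal with $U_i^{(k)}$, stationarity gives $\mathbb E g_f(U_i^{(k)*}) = \mathbb E\, \dot r_i^{(f)}\dot r_{i-k}^{(f)}$, so the centering in the surrogate ULLN matches the one we need.

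To transfer the i.i.d.\ ULLN to the dependent errors I would invoke Proposition~\ref{lemma:dependent_data} with the stationary process $\{U_i^{(k)}\}$ (for $k\geq1$) or $\{(X_i,\eps_i)\}$ (for $k=0$): since $\{X_i\}$ is i.i.d.\ and independent of the stationary $\beta$‑mixing process $\{\eps_i\}$ (Assumption~\ref{as:mix}), the joint process $\{(X_i,\eps_i)\}$ is stationary $\beta$‑mixing, and any block built from a fixed finite window of it is again stationary $\beta$‑mixing. The class $\calG_n^{(k)}=\{g_f : f\in T_{\zeta_n}\tilde\calF_n\}$ has envelope $G_n^{(k)}(u)\leq (F_n(x)+M_0+|e|)(F_n(x')+M_0+|e'|)$, and Hölder's/Cauchy–Schwarz inequality together with \textbf{(C.4)} and the finite $(2+\delta)^{th}$ moment of $\eps_i$ from Assumption~\ref{as:mix} gives
$$
\limsup_{n\to\infty}\; \mathbb E\, \tfrac1n\sum_i \big|G_n^{(k)}(U_i^{(k)})\big|^{1+\delta/2} \;\lesssim\; \limsup_{n\to\infty}\Big(\tfrac1n\sum_i \mathbb E\,(F_n(X_i)+M_0+|\eps_i|)^{2+\delta}\Big) \;<\;\infty,
$$
so the envelope $G_n^{(k)}$ is uniformly mean integrable in the sense of (\ref{eq:ui}). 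Proposition~\ref{lemma:dependent_data} then converts the i.i.d.\ ULLN into the corresponding ULLN for $\{U_i^{(k)}\}$, i.e.\ for the true dependent errors. Summing the $q+1$ limits with their fixed coefficients $\alpha$ and $\rho_j\rho_{j'}$, and adding the negligible boundary term, yields \textbf{(C.3)}.

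I expect the main obstacle to be twofold and essentially bookkeeping‑driven. First, matching the i.i.d.\ cross‑product ULLN of Proposition~\ref{lemma:cross_prod_indep} to each band requires checking that the random $\mathbb L_p$‑entropy numbers of the lag‑$k$ cross‑product class behave like those of the squared class, so that the scaling $t_n(\log n)\zeta_n^4/n\to0$ from Assumption~\ref{as:tail}(a) remains sufficient and the truncation at $\zeta_n$ interacts cleanly with the product structure. Second, one must verify the uniform mean integrability of the \emph{product} envelope $G_n^{(k)}$; this is exactly where the $(2+\delta)^{th}$‑moment hypothesis \textbf{(C.4)} is used, and is the conceptual reason why a moment bound on a growing envelope — rather than a fixed uniform envelope, which regression‑tree classes do not admit — is enough for the general $\beta$‑mixing ULLN of Proposition~\ref{lemma:dependent_data} to apply.
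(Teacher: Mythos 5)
Your proof is correct and follows essentially the same route as the paper: decompose the quadratic form via the banded Cholesky structure of Assumption \ref{as:chol}, establish i.i.d.\ ULLNs for the squared term and for each lag-specific cross-product band (the latter via the bivariate i.i.d.\ copies of Proposition \ref{lemma:cross_prod_indep}, with the entropy number controlled measure-freely), and transfer to the $\beta$-mixing errors via Proposition \ref{lemma:dependent_data} using \textbf{(C.4)} to verify mean uniform integrability of the growing envelope. The only organizational difference is that you inline the content of Proposition \ref{prop:dep-square-crossloss} (and the negligibility of the $O(q^2)$ boundary terms, which the paper defers to the proof of Theorem \ref{th:gyorfi}) rather than citing it.
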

The proof is in Section \ref{sec:pf-estimn}. The $(2+\delta)^{th}$ moment condition \textbf{C.4} is easier to verify for RF or RF-GLS as discussed in Corollaries \ref{cor:rfgls} and \ref{cor:rf}.  

\subsection{Proof of Theorem \ref{th:main}}
Equipped with Theorem \ref{th:gyorfi}, we can prove Theorem \ref{th:main} by showing that RF-GLS meets the conditions \textbf{C.1} - \textbf{C.3}. 
Proposition \ref{lem:trunc} in the Supplement shows that the truncation error condition \textbf{C.1} is met for any $\zeta_n$ satisfying the scalings of Assumption \ref{as:tail}(a) required for proving the approximation and estimation error conditions. 

We have already shown conditions \textbf{C.2} (Proposition \ref{lem:approx}) and \textbf{C.3} (Proposition \ref{sec:prop-rf-ulln}) for two separate choices of function classes.
The last step of the proof is choosing the function class that satisfies both conditions. 
Let $\calF_n=\calF_n(\Theta)$ be the set of all functions $f : [0,1]^D \to \mathbb{R}$ piece-wise constant on each cell of the partition $\mathcal{P}_n(\Theta)$ created by an RF-GLS tree with data $\calD_n$ and randomization $\Theta$. We have already shown in in Proposition \ref{lem:approx} that $\calF_n$ satisfies \textbf{C.2}. 
To apply Proposition \ref{sec:prop-rf-ulln}, $\calF_n$ also needs to satisfy the moment condition of \textbf{C.4}. 
As 
$T_{\zeta_n} \calF_n$ is only bounded by $\zeta_n$ which goes to $\infty$, clearly $\calF_n$ will not satisfy \textbf{C.4} . 

We carefully carve out a subclass $\tilde \calF_n \subseteq \calF_n$ which is still wide enough to satisfy the approximation error condition \textbf{(C.2)}, while satisfying the additional restriction \textbf{(C.4)}.
For a given partition $\mathcal{P}_n(\Theta)$, we define $\tilde\calF_n$ as follows: 

\begin{equation}\label{eq:class}
\tilde\calF_n = \tilde\calF_n(\Theta)= \{m_n\} \cup \bigg\{\cup_{\mathbf{x}_{\mathcal{B}} \in \mathcal{B} \in  \mathcal{P}_n(\Theta)} \sum_{\mathcal B \in \mathcal{P}_n(\Theta)}m(\mathbf{x}_{\mathcal B})\mathds{I}(\bx \in \mathcal B)\bigg\}  \subseteq \mathcal{F}_n(\Theta).
\end{equation}
    
Since by construction of RF-GLS, $m_n$ is the optimizer over a much larger set $\mathcal{F}_n(\Theta)$, trivially $m_n$ is also the optimizer in $\tilde\calF_n$. The first step of the proof of Proposition \ref{lem:approx} makes it evident why Condition \textbf{C.2} will also hold for this smaller class $\tilde \calF_n$. 
To apply Proposition \ref{sec:prop-rf-ulln} and show Condition \textbf{C.3}, the final piece is to show that the condition \textbf{(C.4)} is satisfied by $\calF_n$. Since apart from $m_n$, $\calF_n$ consists of functions that are bounded by $M_0$, we can have the envelope to be $F_n=|m_n| + M_0$. Hence, for Condition \textbf{(C.4)} to hold, it is enough to show $\lim_{n \to \infty} \frac 1n \sum_i \mathbb E |m_n(X_i)|^{2+\delta} < \infty$ which is an assumption of the Theorem. \qed

\section{Discussion}\label{sec:disc}
We considered non-linear regression function estimation in the spatial mixed model and developed a random forest method (RF-GLS) for estimating the non-linear covariate effect, while still modeling the spatial effect using Gaussian Processes, as is conventional. Retaining the GP framework facilitates parsimonious encoding of structured spatial dependence, and conducting all traditional spatial tasks like kriging (prediction at a new location), and recovery of the latent spatial random effect surface. We show in Section \ref{sec:sim} how these advantages of RF-GLS manifest into superior estimation performance over naive RF that does not use any spatial information, and superior predictive performance over many existing spatial random forest methods. 

Our method RF-GLS, more generally, can be used for functional estimation under many types of dependence. 
RF-GLS uses the same fundamental principle that generalizes OLS to GLS. 
We show how adapting the concept of GLS in random forests synergistically mitigates all the issues encountered in naive application of RF to dependent settings. While simple in principle, RF-GLS algorithm differs inherently from RF, by optimizing globally (across all nodes) for each split, 
to account for dependence across all data points. We show RF is a special case of RF-GLS with an identity working correlation matrix, and is substantially outperformed by RF-GLS for both estimation and prediction under dependence.  

We present a thorough theoretical study establishing consistency of RF-GLS under a general assumption of $\beta$-mixing dependence. In particular, we establish consistency of function estimation by RF-GLS for the spatial non-linear mixed-model using the ubiquitous Mat\'ern GP. We also establish consistency of RF-GLS for functional estimation under auto-regressive time-series errors. 
Finally, as a byproduct of the theory, 
we also establish consistency of RF for dependent settings, which to our knowledge, is the first such result. 

The theoretical results required involved proofs to address the challenges of accommodating dependent error processes and use of a quadratic form loss for node-splitting. In the process, we  developed a number of tools of independent importance. The general result (Theorem \ref{th:gyorfi}) on $\mathbb L_2$ consistency of data-driven partitioning-based GLS estimates under $\beta$-mixing dependence extends the analogous result in \cite{gyorfi2006distribution} Theorem 10.2 which was for i.i.d. settings and OLS estimates. This will be useful for studying other function estimators like local polynomials under dependence. Proposition \ref{prop:dep-square-crossloss} establishes random-norm entropy number concentration bounds for general cross-product function classes, which can find its use in establishing ULLN for any class of functions containing interaction terms. Finally, Proposition \ref{lemma:dependent_data} proposes a general ULLN for $\beta$-mixing processes requiring less restrictive assumptions than existing results.  


For future work, extension to multivariate outcomes is an important direction. The spatial community is increasing shifting towards multivariate analysis, as GIS systems are empowered to collect data on many variables. Multivariate extension of RF-GLS can leverage the rich literature of multivariate cross-covariance functions for GP \citep[see][for a review]{genton2015cross}. 
When working with a very large number of variables $p$, a potential computational roadblock for RF-GLS will be evaluation of the $np \times np$ Cholesky factor. 
Strategies like graphical models may need  to be adopted to enforce sparsity and effectuate computational speedup. 

\bibliographystyle{asa}
\bibliography{ref_revision}

\pagebreak

\section*{Supplementary Materials for ``GLS-style Random Forests for Dependent Data"}
\begin{center}Arkajyoti Saha, Sumanta Basu, Abhirup Datta
\end{center}
\renewcommand\theequation{S\arabic{equation}}
\renewcommand\thesection{S\arabic{section}}
\setcounter{equation}{0}
\setcounter{section}{0}

\section{Additional Simulation Results}\label{sec:suppsim}
In this section, we provide additional details of the simulation studies presented in the main manuscript, as well as present some new illustrations for  performance comparison between RF-GLS and other state-of-the-art methods for estimation and prediction under varied scenarios. 

\subsection{Simulation details}\label{sec:simdetails}
For the experiments in Section \ref{sec:sim}, we simulated data from the spatial non-linear mixed model of (\ref{eq:spnlmm}) where $w(\ell)$ is an exponential GP on a two-dimensional spatial domain $[0,1]^2$. We varied all the covariance parameters:  spatial signal strength $\sigma^2 \in \{1, 5, 10\}$   
and spatial correlation strength $\phi \in \{3/(0.25\sqrt{2}), 3/(0.5\sqrt{2}), 3/(0.75\sqrt{2}) \}$. The three choices of $\phi$ correspond to the spatial correlation decaying to approximately $0.05$ at respectively $1/4$, $1/2$ and $3/4$ of the maximum inter-site distance. The covariates are simulated independently and uniformly from $[0,1]^D$. 
The i.i.d. error process $\eps^*(\ell) \sim N(0,\tau^2)$ where $\tau^2 = \mu \sigs$, $\mu \in \{0.01,0.1,0.25\}$. We perform the simulations for 100 times, for each of the $27$ combinations of the parameter triplet $(\sigma^2, \phi, \tau^2)$. We consider the following choices for the true mean function $m$:

\begin{enumerate}
	\item $m_1(x) = 10\sin(\pi x)$
	\item $m_2(\mathbf x) = (10\sin(\pi x_1 x_2) + 20(x_3 - 0.5)^2 + 10x_4 + 5x_5)/6$  \citep[Friedman function, ][]{friedman1991multivariate}.
\end{enumerate}

For fitting the forest estimators, we fix number of trees $n_{tree} = 100$; minimum cardinality of leaf nodes $t_c = 5$ and
number of features to be considered at each split $M_{try} = \min \{1, \left[\frac{D}{3}\right] \}$ where $D$ is the dimension of the covariate space. 
Along with function estimation, we also focus on prediction performance at new set of locations, which is often the primary goal in spatial analysis. We simulate a total of $ n = 250$ locations and use a $90\% - 10\%$ test-train split. 
The splitting is done by first dividing the spatial domain i.e. $[0,1]^2$ into $10 \times 10$ equal square boxes with 0.1 unit length sides. We randomly choose one box from each row and column (i.e. total $10\%$ of the boxes) and keep the data with spatial locations within those boxes as test/holdout data. This strategy ensures that entire regions are left hold out for testing instead of just random locations which may be proximal to other locations of data points in the training set. 

We use the training data for evaluating the estimation performance of the methods under consideration. This fitted model is also used for prediction performance at test locations.

In order to evaluate the MISE, for $m = m_1$, we generate $n_0=1000$ equally spaced points (at $1/1000$ distance) on $[0,1]$; for $m = m_2$, we randomly generate $n_0=1000$ points with Latin hypercube sampling in $[0,1]^5$. 

\subsection{Performance comparison between RF-GLS and RF-GLS  (Oracle)}\label{sec:Oracle_comparison}
In Figure~\ref{Fig:oracle_comparison} we show that in the practical implementation of RF-GLS, where the covariance parameters are unknown and estimated from classical RF residual performs comparably to RF-GLS (Oracle), where the original parameters are known and used in model fitting. 

\begin{figure}[H]
	\centering
	\begin{subfigure}[t]{0.5\textwidth}
		
		\centering
		\includegraphics[height=3.2in]{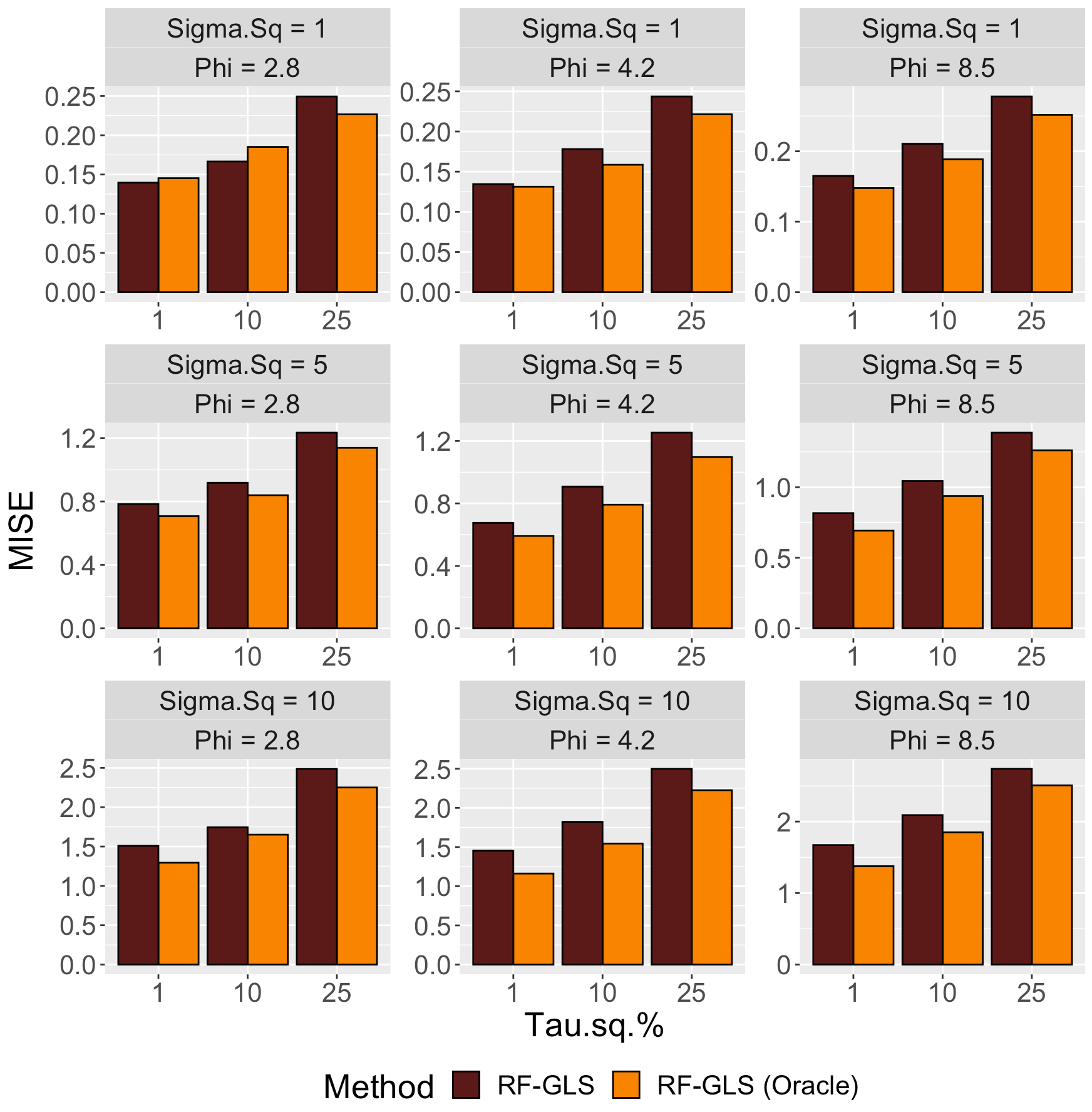}
		\caption{$m= m_1$}
	\end{subfigure}%
	~ 
	\begin{subfigure}[t]{0.5\textwidth}
		
		\centering
		\includegraphics[height=3.2in]{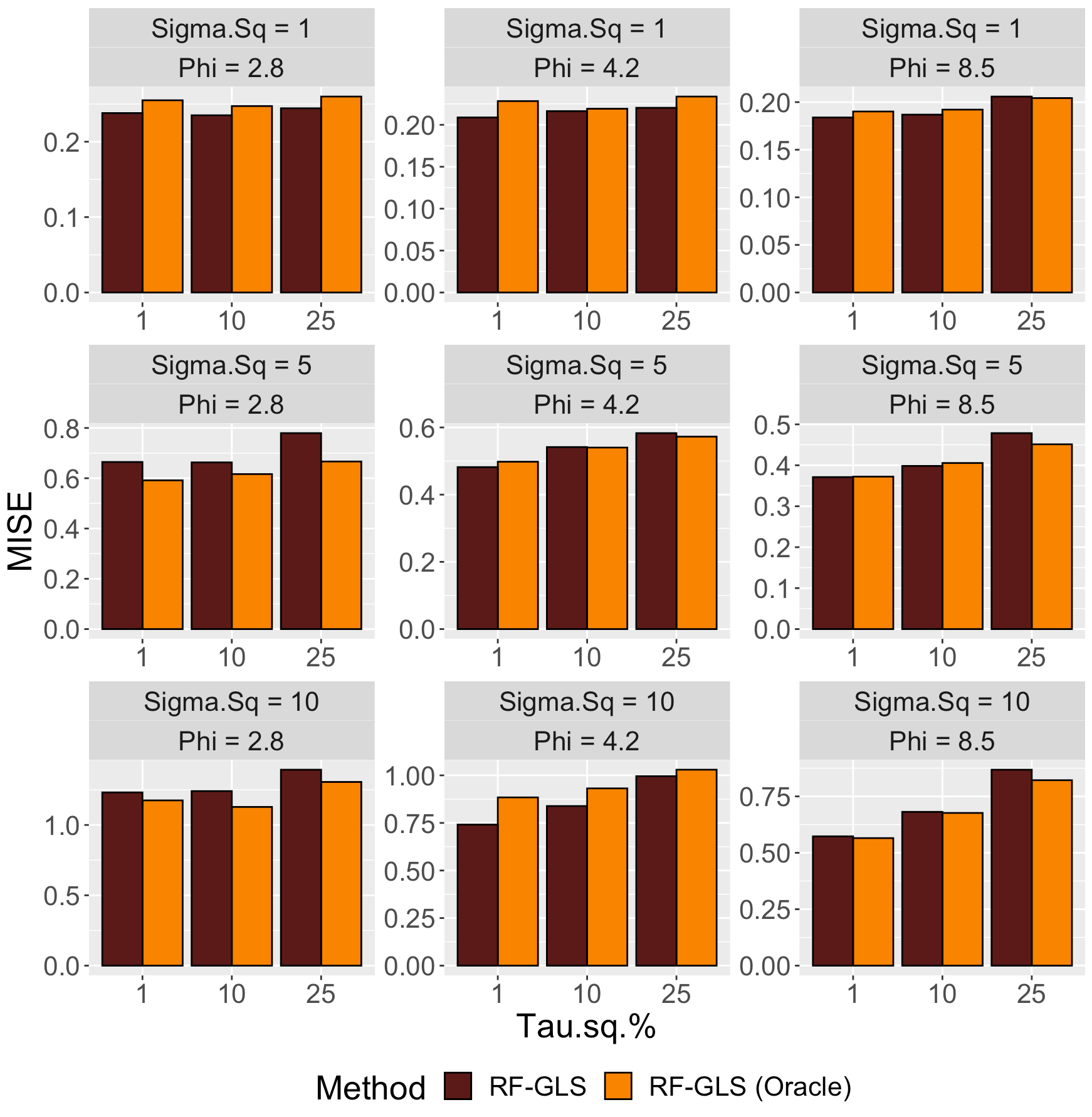}
		\caption{$m= m_2$}
	\end{subfigure}
	\caption{Comparison between RF-GLS and RF-GLS (Oracle) on estimation when mean function is (a) $m = m_1$ and (b) $m = m_2$.}\label{Fig:oracle_comparison}
\end{figure}

\subsection{Performance comparison for $m = m_1$}\label{sec:sin_comparison}
Here, we show the estimation and prediction performance comparison for $m = m_1$. In this scenario, RF-GLS performs better than the competing methods in all the scenarios under consideration. 

\begin{figure}[H]
	\centering
	\begin{subfigure}[t]{0.5\textwidth}
		
		\centering
		\includegraphics[height=3.2in]{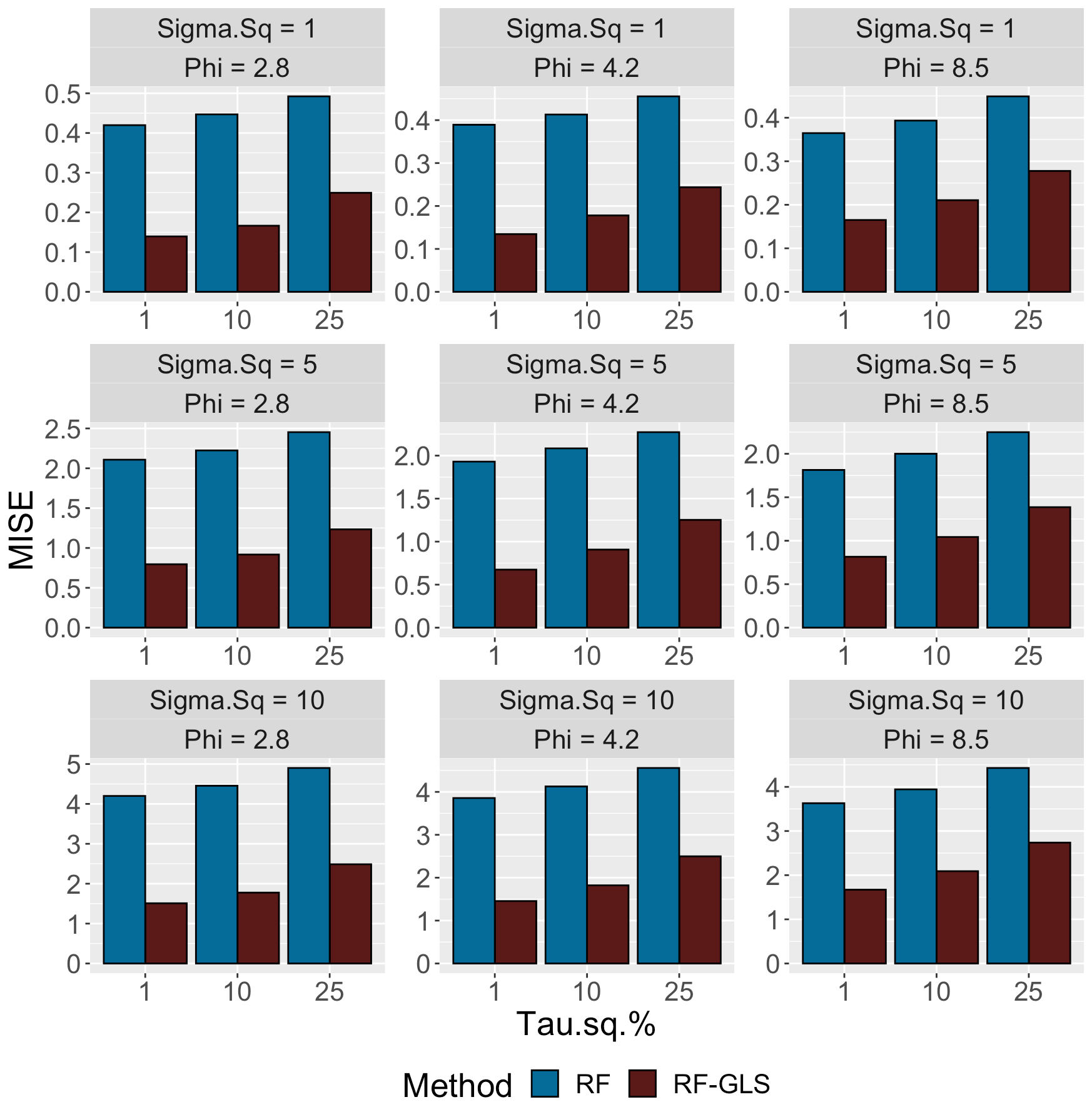}
		\caption{Estimation performance}
	\end{subfigure}%
	~ 
	\begin{subfigure}[t]{0.5\textwidth}
		
		\centering
		\includegraphics[height=3.2in]{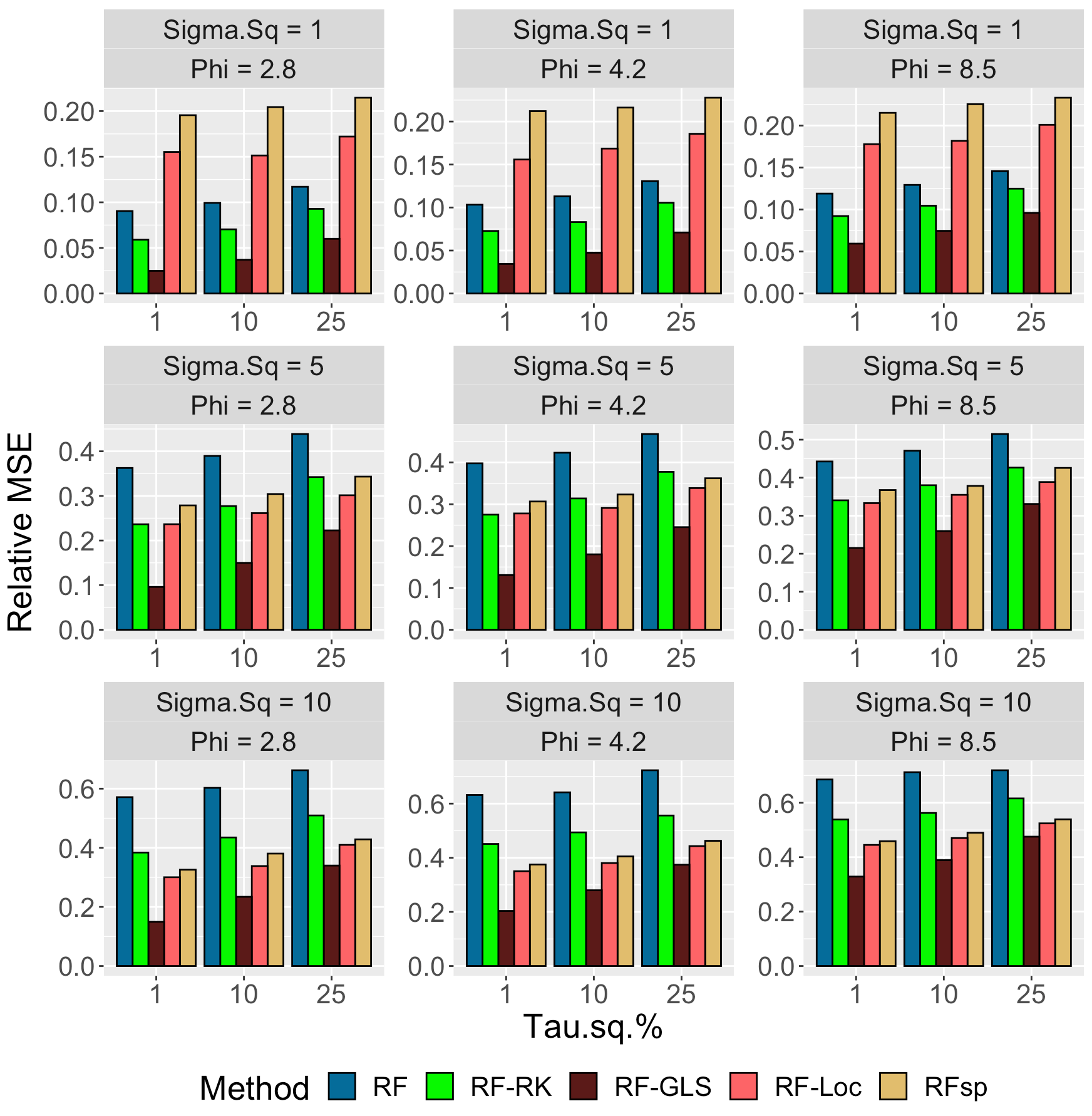}
		\caption{Prediction performance}
	\end{subfigure}
	\caption{Comparison between competing methods on (a) estimation and (b) spatial prediction when the mean function is $m = m_1$.}\label{Fig:sin_comparison}
\end{figure}

\blue{	\subsection{Detailed comparison of RF-GLS to methods adding extra spatial covariates in RF}\label{sec:extra}
	The methods RF-Loc and RFsp belong to the class of alternatives that add extra spatial covariates to RF. \cite{mentch2020getting} showed that for iid errors, adding additional noise covariates can improve prediction performance of RF. For our setting of spatially dependent data, akin to how noise covariates can be added to RF to explain random variation, RFsp adds distance-based covariates to explain spatial variation. 
	Figure \ref{Fig:friedman_performance}(b) of the manuscript  plots the prediction performance of all the 5 candidates for $m = m_2$ (i.e., the Friedman function). We see that when the spatial variation is large ($\sigs=10$), RF-GLS performs comparably to or a little better than RFsp. In this setting, RF-GLS and RFsp are the two best methods. However, when the spatial variance is low ($\sigs=1$), RF-GLS substantially outperforms RFsp. Interestingly, now RF-RK is the second best method also outperforming RFsp.  
	
	To explain these trends, we note that in the iid setting of \cite{mentch2020getting}, the performance of adding random noise covariates to RF was tied to the signal-to-noise ratio SNR which is essentially the ratio of the variation due to the true covariates to the random variation. The gain from adding extra noise covariates was only for low SNR. For high SNR, the trend was reversed and using additional noise covariates (when uncorrelated with the true covariates) did not help.
	The appropriate spatial analogue of this to understand the performance of RFsp would be the ratio of the variation due to the true covariates to the spatial variation, i.e., the covariate-signal-to-spatial-noise ratio $\mbox{SNR} = Var(m(\bX))/Var(w(\ell))$.  
	Figure \ref{Fig:SNR_comparison} plots this SNR for the Friedman function for different choices of $\sigs$ and $\phi$. We see that this SNR decreases with increasing spatial variance $\sigs$. For $\sigs=10$, the SNR is low ($<0.1$) while it is comparatively high ($>0.7$) for $\sigs=1$ for all values of $\phi$. 
	
	RFsp adds $n$ pairwise distance-based covariates to RF to explain the spatial variation. 
	If $D$ denotes the number of true covariates, then RFsp uses a total of $n+D$ covariates in RF and the probability of including a true covariate in the set of $M_{try}$ possible  directions at each split of the regression tree is thus $1 - {n \choose M_{try}}/{n+D \choose M_{try}}$. For large $n$ (and fixed $D$), this inclusion probability goes to $0$ implying that the true covariates are unlikely to be considered for split candidates. Figure \ref{Fig:friedman_variable}, plots the average fraction of decrease in the out-of-bag MSE for RFsp for the true covariates. We see that even when SNR is high ($\sigs=1$), this fraction is only around $25\%$ indicating the low relative importance of the true covariates in RFsp even when they are truly important. For low SNR ($\sigs=10$), this fraction is unsurprisingly even lower but this is less of a concern as in this case the covariates truly explain a small fraction of the variation. 
	
	\begin{figure}[H]
		\centering
		\begin{subfigure}[t]{0.35\textwidth}
			\centering
			\includegraphics[height=2.2in]{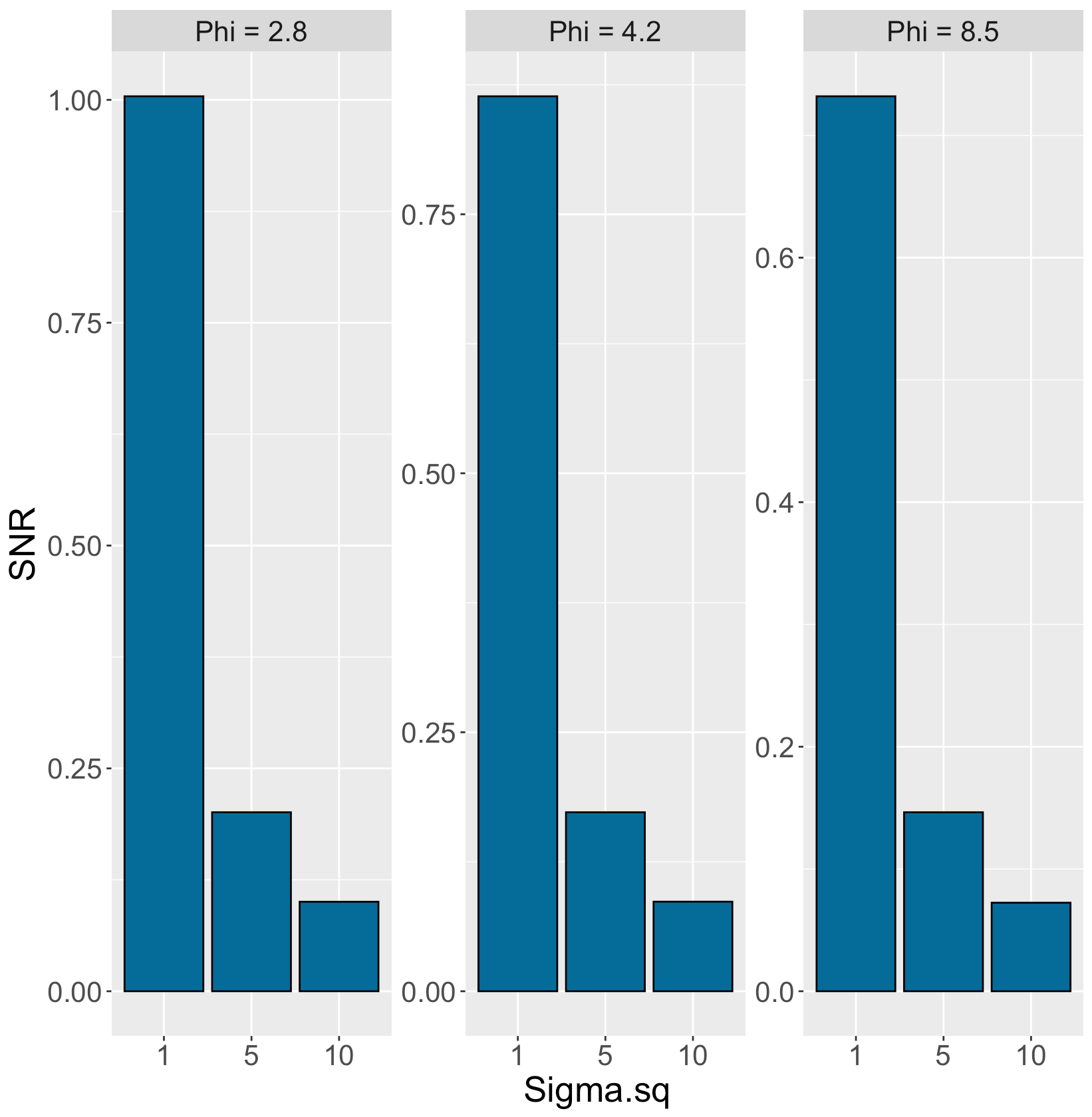}
			\caption{SNR across different $\sigma^2$ and $\phi$ values when $m = m_2$. }\label{Fig:SNR_comparison}
		\end{subfigure}
		\begin{subfigure}[t]{0.5\textwidth}
			\centering
			\includegraphics[height=2.2in]{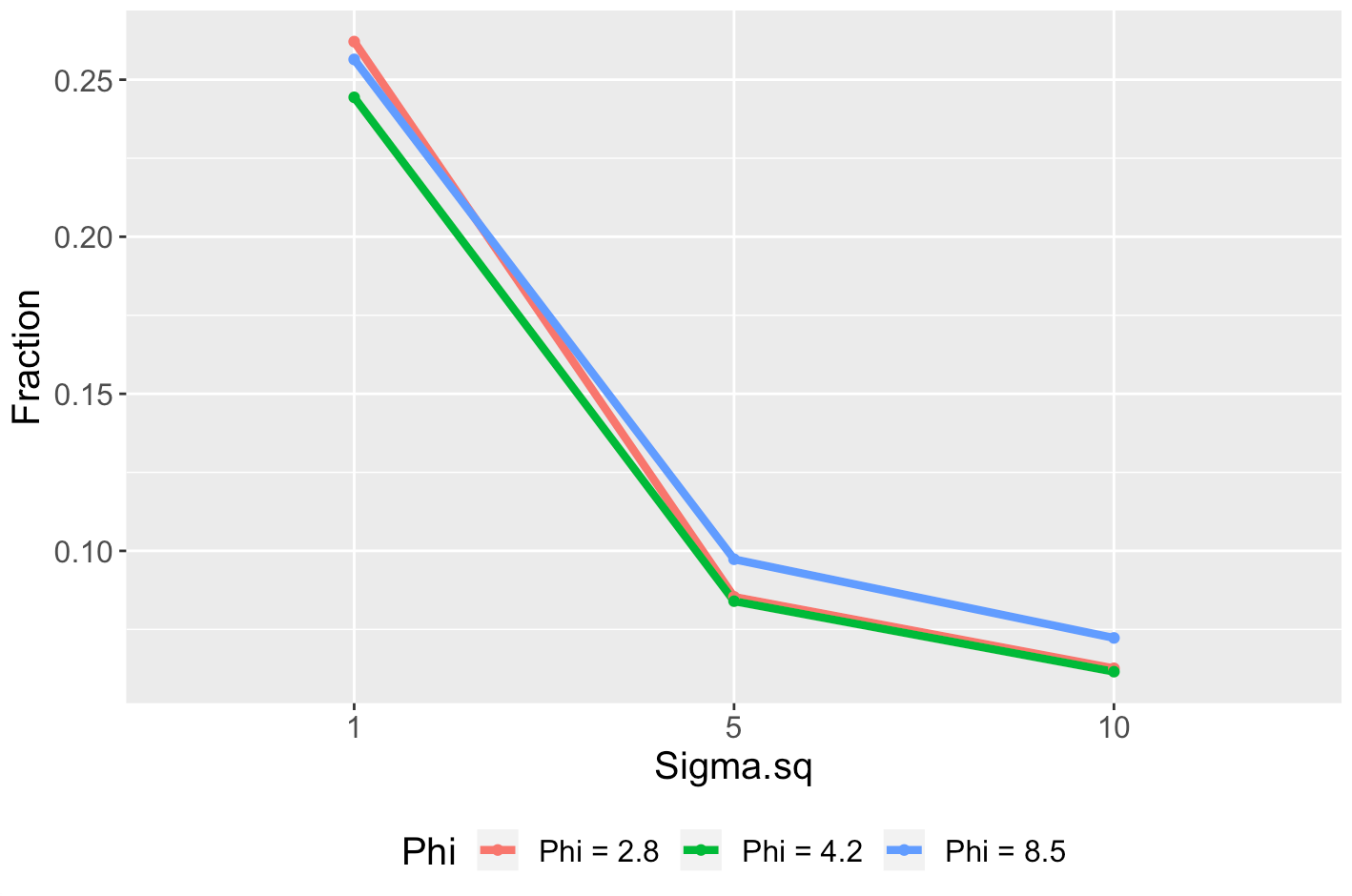}
			\caption{Fraction of decrease in the out-of-bag MSE that can be attributed to $\bX$ for $\tau^2 = 10\% \sigma^2$, when mean $m = m_2$.}\label{Fig:friedman_variable}
		\end{subfigure}
		\caption{Role of SNR in variable importance for RFsp.}
	\end{figure}
	
	Hence for high SNR ($\sigs=1$) where the true covariates explain a large part of the variation in the response, use of $n$ additional spatial covariates is detrimental to prediction performance for RFsp as it dilutes the importance of the true covariates (Figure \ref{Fig:friedman_variable}). In fact for high SNR, RFsp performs even worse than RF-RK which can estimate $m$ reasonably well in this setting despite ignoring the spatial dependence as the covariate signal dominates. For low SNR ($\sigs=10$), as the spatial contribution increases, RFsp (being naturally equipped to capture the spatial correlation in the data) performs comparably to RF-GLS (Figure \ref{Fig:friedman_performance}). RF-RK, ignoring the dominant spatial signal when estimating $m$,  performs worse than both RF-GLS and RFsp. 
	
	\noindent\textbf{RF-GLS: Best of both worlds.}
	This investigation of the predictive performances of the methods, highlights the systematic advantage of RF-GLS which combines together the strengths of RF for modeling the covariate effect and GP for parsimoniously modeling the spatial effect without need for adding many distance-based covariates. The prediction performances are summed up in Figure \ref{Fig:friedman_MSE_ratio}, where we plot the MSEs of RFsp and RF-RK relative to that of RF-GLS as a function of the spatial variance $\sigs$. For high values of $\sigma^2$ (low SNR), performance of RFsp and RF-GLS are comparable, but RF-GLS significantly outperforms RF-RK for higher values of $\sigma^2$. Whereas for lower $\sigma^2$ (high SNR), RF-GLS performs better or comparably to that of RF-RK, but significantly outperforms RFsp. The trends are similar for all the values of $\phi$. 
	RF-RK and RFsp outperforms each other in two opposite ends of the spatial signal strength spectrum. RF-GLS brings together the best of both worlds and produce comparable or better results to both of them across the spectrum, outperforming RFsp and RF-RK when the spatial signal strength is low and high respectively.

}

\blue{	\subsection{Larger sample size}\label{sec:large}
	In this section, we replicate the simulation setup described in Section \ref{sec:sim}, with $n = 1000$. As the exact RF-GLS algorithm can become computationally intensive with increment in $n$, we use Nearest Neigbhor Gaussian Processes (NNGP) for RF-GLS (as discussed in Section \ref{sec:prac}). For sake of computational convenience and stability of the results, we use minimum cardinality of leaf nodes $t_c = 20$ and number of nearest neighbors $= 20$.
	
	\begin{figure}[H]
		\centering
		\begin{subfigure}[t]{0.5\textwidth}
			
			\centering
			\includegraphics[height=3.2in]{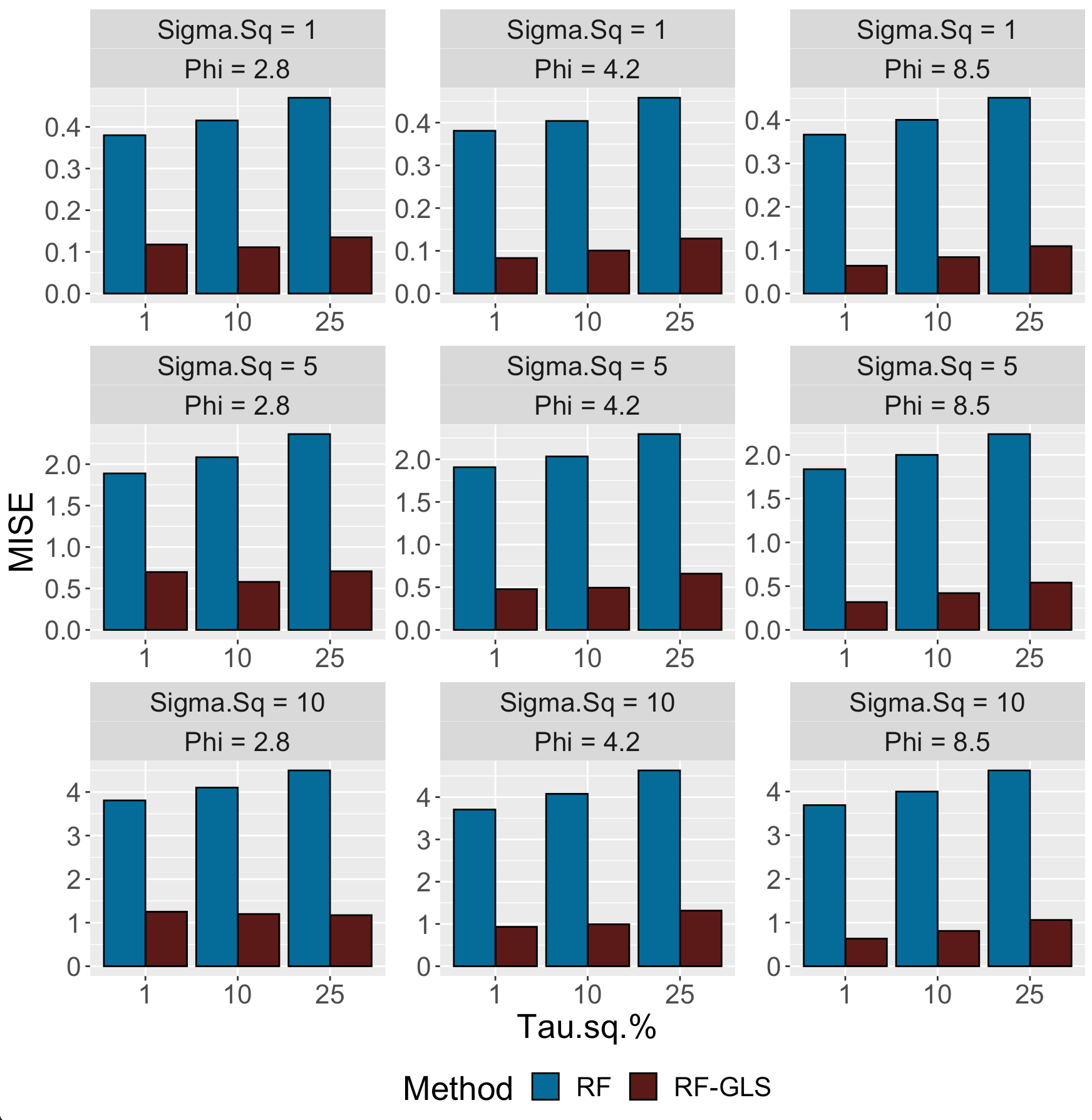}
			\caption{Estimation performance}
		\end{subfigure}%
		~ 
		\begin{subfigure}[t]{0.5\textwidth}
			
			\centering
			\includegraphics[height=3.2in]{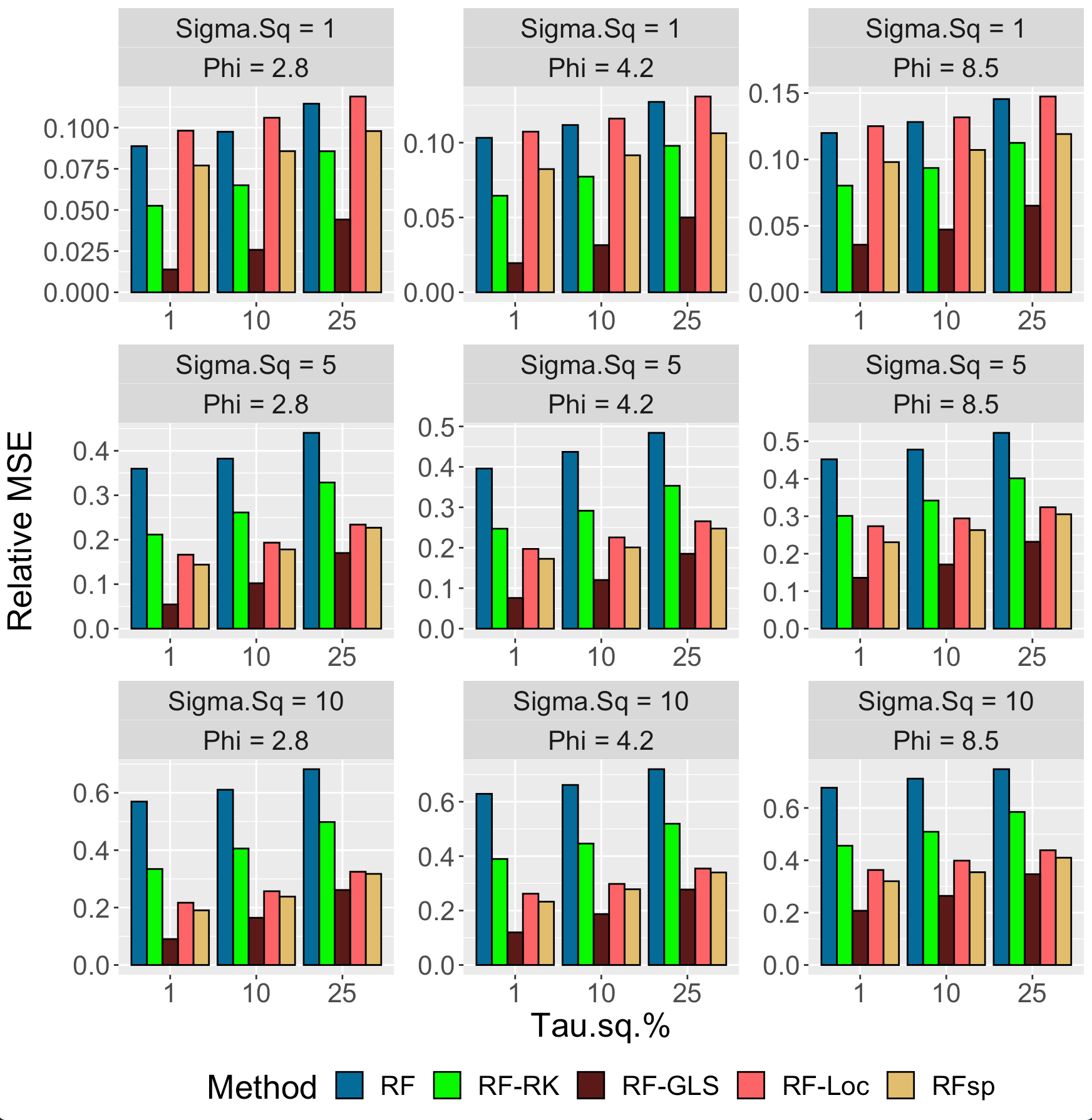}
			\caption{Prediction performance}
		\end{subfigure}
		\caption{Comparison between competing methods on (a) estimation and (b) spatial prediction for $n  = 1000$ when the mean function is $m = m_1$.}\label{Fig:sin_1000}
	\end{figure}

	\begin{figure}[H]
		\centering
		\begin{subfigure}[t]{0.5\textwidth}
			
			\centering
			\includegraphics[height=3.2in]{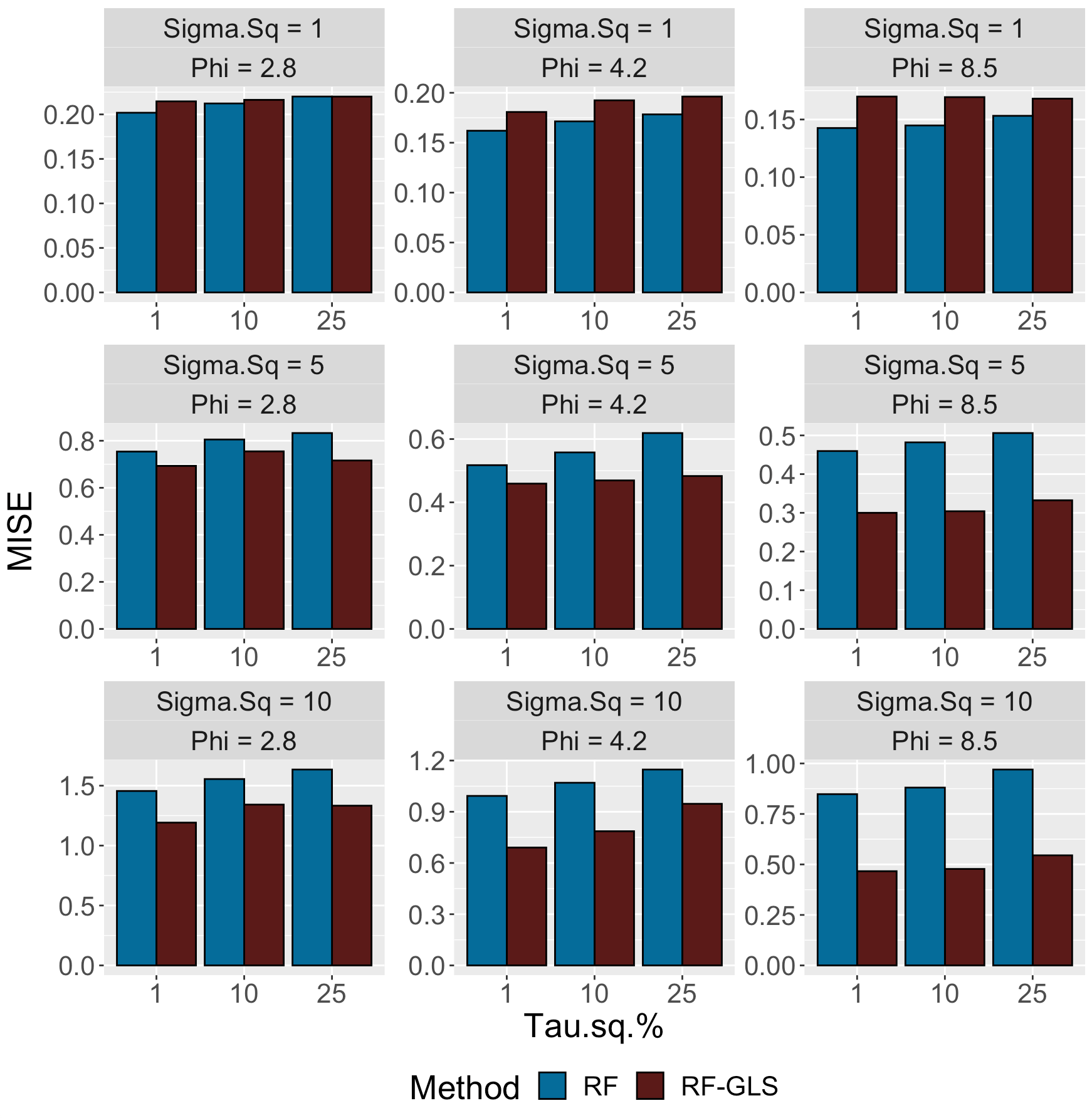}
			\caption{Estimation performance}
		\end{subfigure}%
		~ 
		\begin{subfigure}[t]{0.5\textwidth}
			
			\centering
			\includegraphics[height=3.2in]{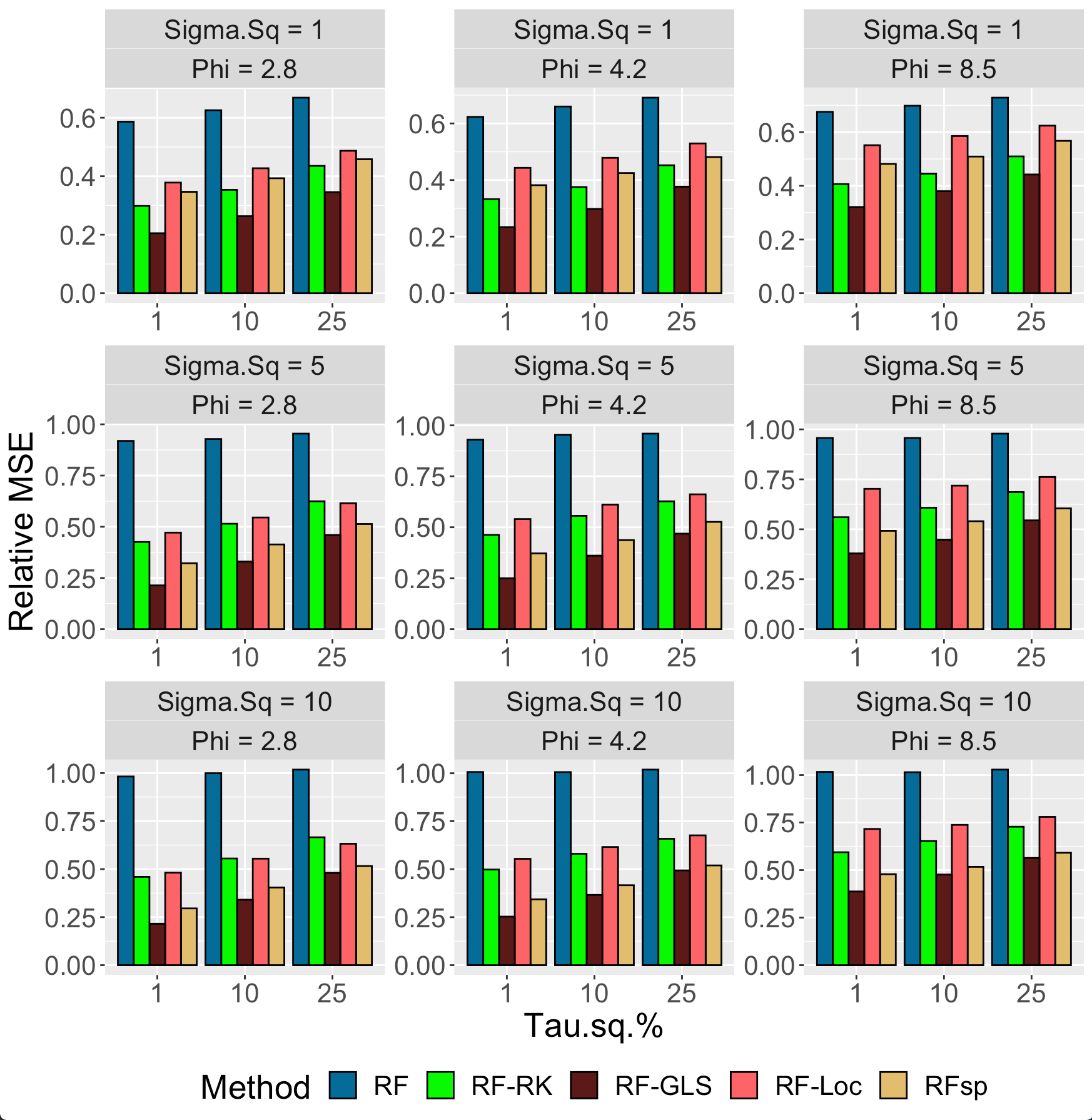}
			\caption{Prediction performance}
		\end{subfigure}
		\caption{Comparison between competing methods on (a) estimation and (b) spatial prediction for $n  = 1000$ when the mean function is $m = m_2$.}\label{Fig:friedman_1000}
	\end{figure}

	RF-GLS completely outperforms RF in estimation for $m = m_1$. For prediction, akin to the trends observed in Figure \ref{Fig:sin_1000}, RF-GLS outperforms its competition when the spatial contribution in the response process is on the lower side. As the spatial contribution in the response process increases, the prediction performance of RFsp becomes comparable with that of RF-GLS. For $m = m_2$ (Figure \ref{Fig:friedman_1000}) the estimation performance of RF and RF-GLS are comparable when the spatial contribution in response process is low and there is not much correlation in the data. As the spatial contribution in response increases, RF-GLS outperforms RF. The prediction results for $m = m_2$ show a similar trend to that of $m = m_1$. Overall, for both estimation and prediction, 
	we observe trends similar to the analysis for $n = 250$ in the main manuscript. 
	
	\subsection{Higher dimensional function}\label{sec:highdim}
	
	We also replicated the simulation setup described for $m = m_2$, with a higher dimensional function $m = m_3$ with $p = 15$ covariates, which is given as follows:
	\begin{small}$$
		\begin{aligned}
		m_3(\bx) &= \bigg(10 \sin(\pi x_1 x_2) + 20(x_3 - 0.5)^2 + 10x_4 + 5x_5 + \frac{3}{(x_6+1)(x_7+1)} + 4 \exp(x_8^2) + 30 x_9^2 * x_{10}\\ 
		&+ 5 (\exp(x_{11}^2) * sin(\pi x_{12}) + \exp(x_{12}^2) * sin(\pi x_{11})) + 10 x_{13}^2 * \cos(\pi x_{14})+ 20 x_{15}^4\bigg)/6
		\end{aligned}
		$$
	\end{small}
	
	In Figure \ref{Fig:MV_comparison}, we observe that in $m = m_3$, the estimation performance of RF-GLS is comparable to that of RF when the spatial signal strength is low and the correlation is weak. As the spatial signal and correlation strength increases, RF-GLS out performs RF in estimation. In prediction, RF-GLS performs favorably to the competing methods in lower spatial signal strength. The  performance of RFsp gradually becomes comparable to that of RF-GLS as the spatial signal increases and the correlation strength decreases.  
	
	\begin{figure}[h!]
		\centering
		\begin{subfigure}[t]{0.5\textwidth}
			
			\centering
			\includegraphics[height=3.2in]{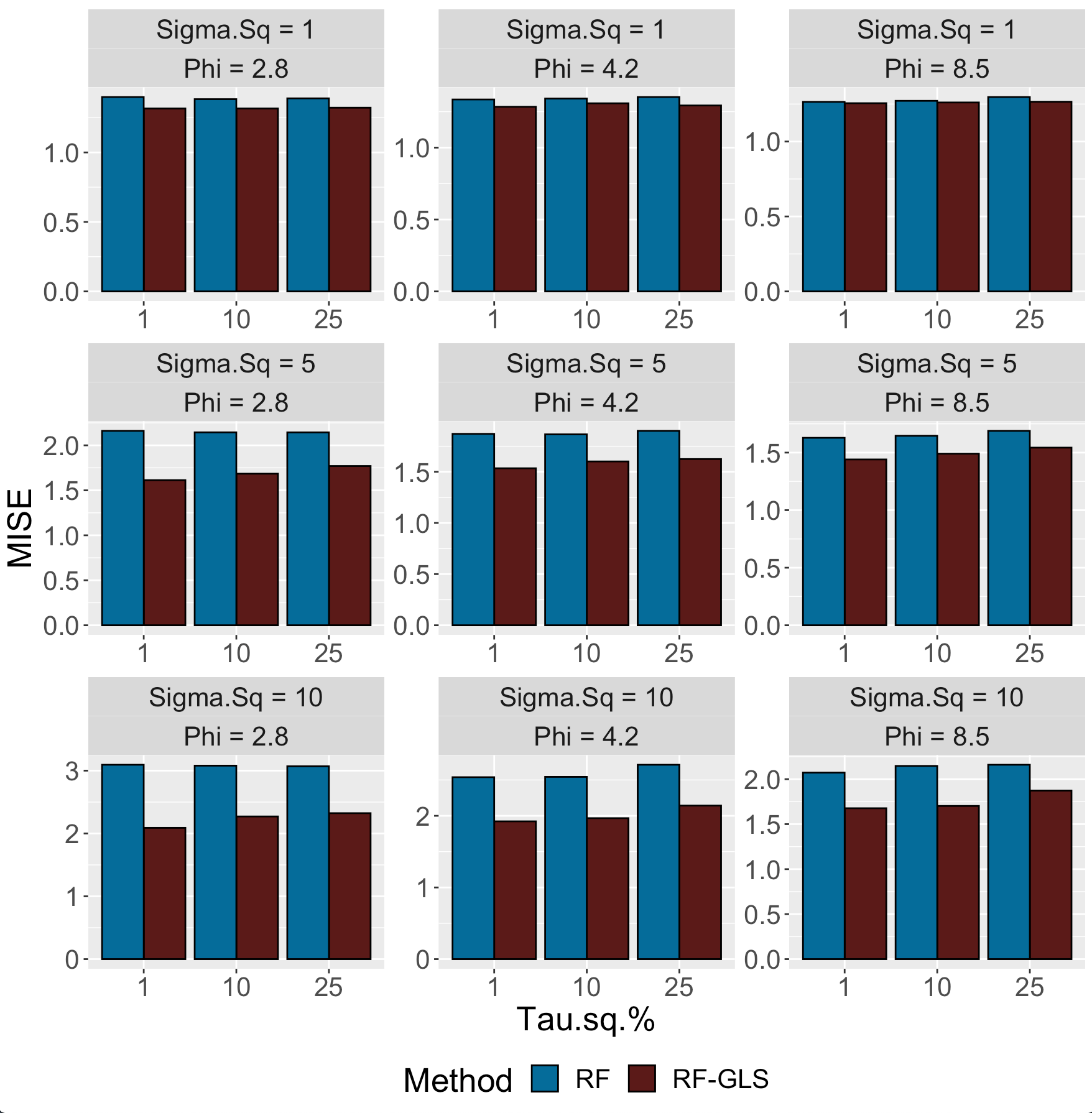}
			\caption{Estimation performance}
		\end{subfigure}%
		~ 
		\begin{subfigure}[t]{0.5\textwidth}
			
			\centering
			\includegraphics[height=3.2in]{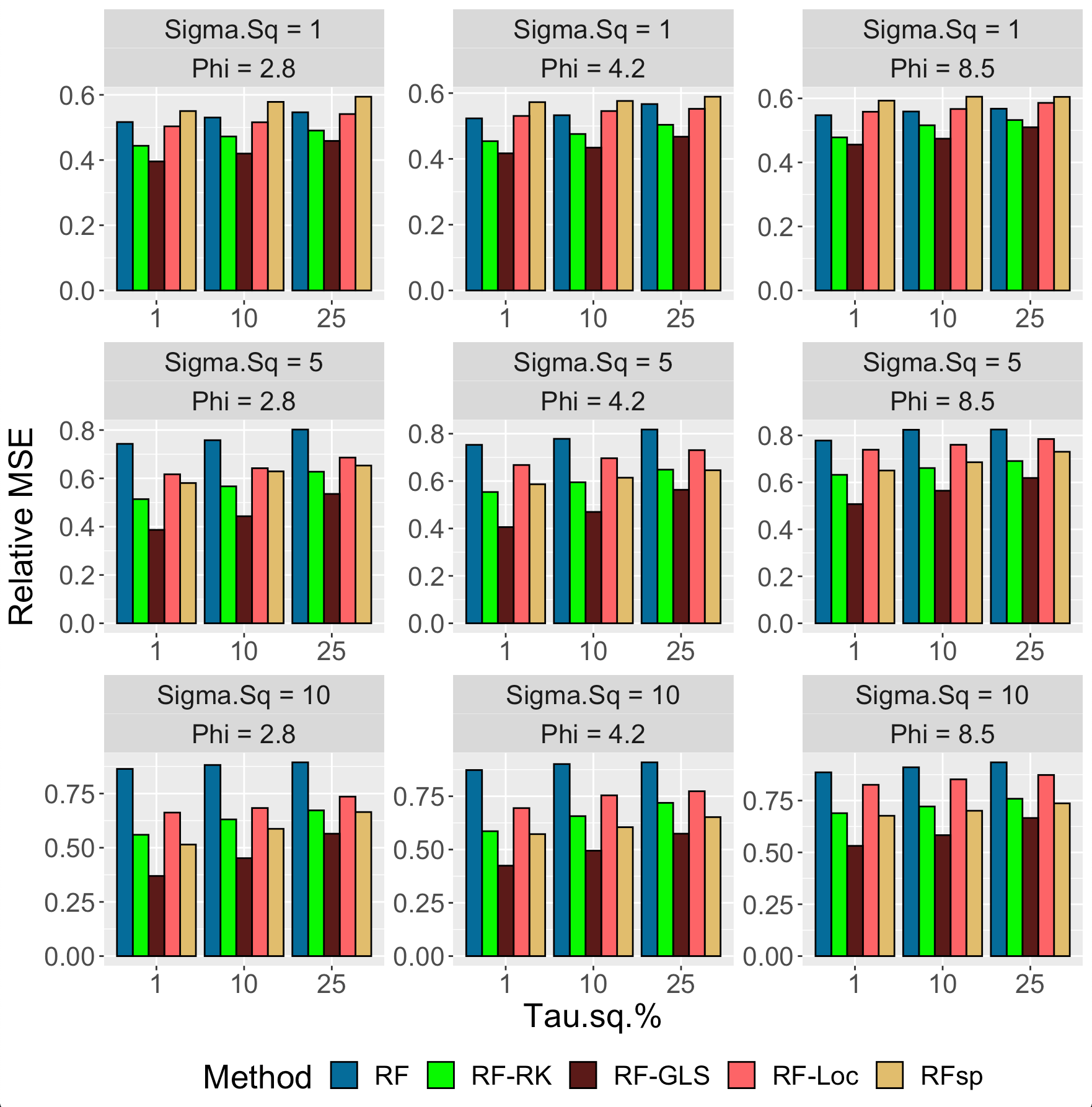}
			\caption{Prediction performance}
		\end{subfigure}
		\caption{Comparison between competing methods on (a) estimation and (b) spatial prediction when the mean function is $m = m_3$.}\label{Fig:MV_comparison}
	\end{figure}
	
	
	\subsection{Model misspecification: Misspecified Mat\'ern covariance}\label{sec:matern_simul}
	In the earlier simulation setups, even though we estimated the correlation parameters, we assumed that the true data was generated using a Gaussian Process and that the correlation structure (form of the covariance function) was correctly specified. 
	We conducted additional studies to test the robustness of RF-GLS under model misspecification. 
	
	We simulate the spatial process $w(\ell)$ from a Gaussian Process with a smoother Mat\'ern covariance with $\nu=3/2$ in (\ref{eq:matern}) and fit RF-GLS and RF-RK assuming a less smooth exponential covariance structure (Mat\'ern with $\nu=1/2$). 
	
	We replicate the simulation described in Section \ref{sec:sim} with 
	all the 27 combinations of $\sigma^2, \phi$ and $\tau^2$ values as in Section \ref{sec:simdetails}. 
	In the implementation, for both RF-GLS and RF-RK, the covariance was modeled through exponential function. The results for $m=m_1, m_2$, and $m_3$ are provided respectively in Figures \ref{Fig:sin_matern}, \ref{Fig:friedman_matern}, and \ref{Fig:MV_matern}. Under this misspecified model, RF-GLS performs comparably to RF for estimation, when the spatial signal strength ($\sigma^2$) is low, but outperforms significantly as spatial signal strength increases. For prediction, performance of RF-GLS and RFsp are comparable for high signal strength, but RF-GLS outperforms in lower signal strength. 
	
	\begin{figure}[H]
		\centering
		\begin{subfigure}[t]{0.5\textwidth}
			
			\centering
			\includegraphics[height=3.2in]{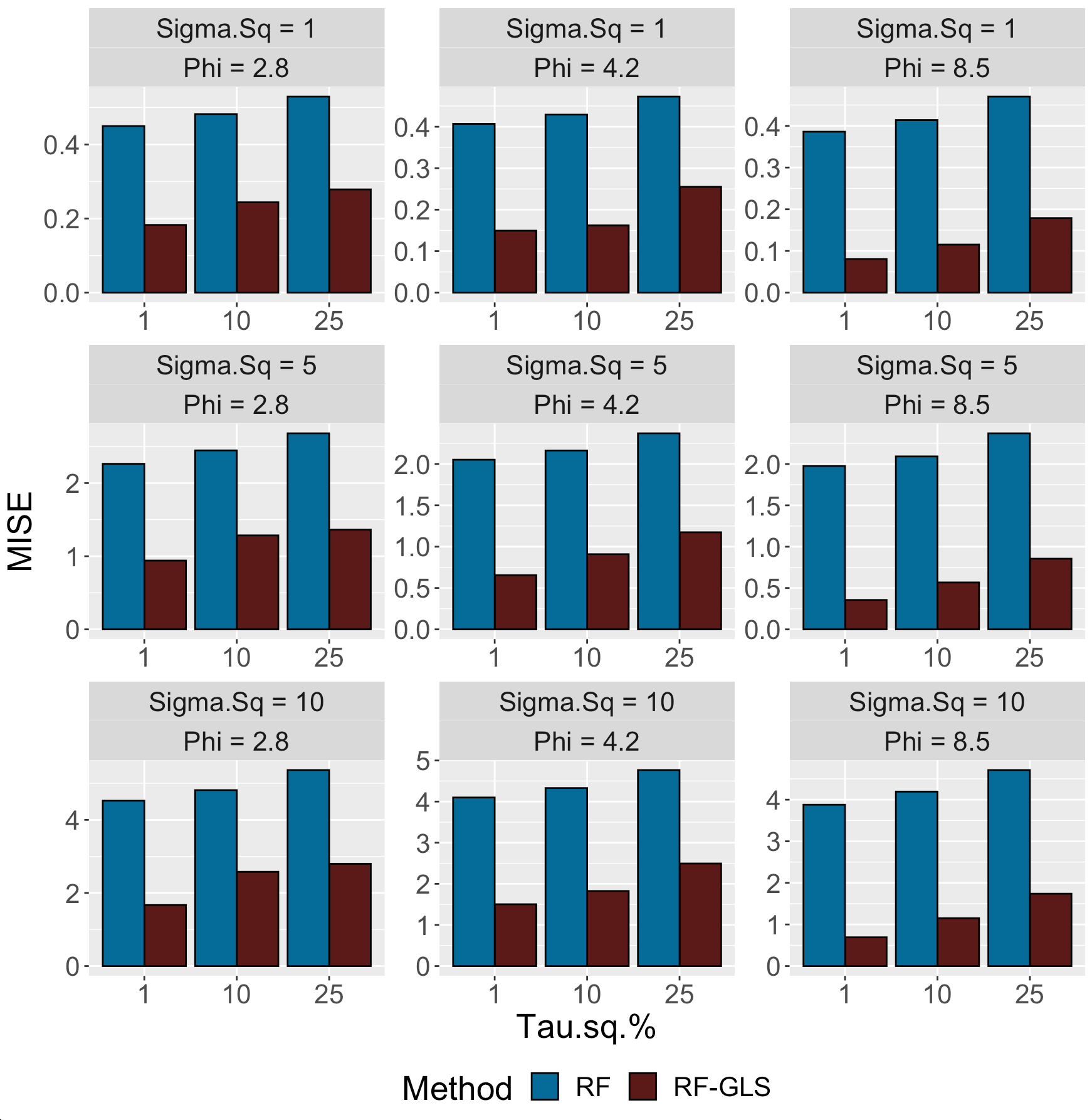}
			\caption{Estimation performance}
		\end{subfigure}%
		~ 
		\begin{subfigure}[t]{0.5\textwidth}
			
			\centering
			\includegraphics[height=3.2in]{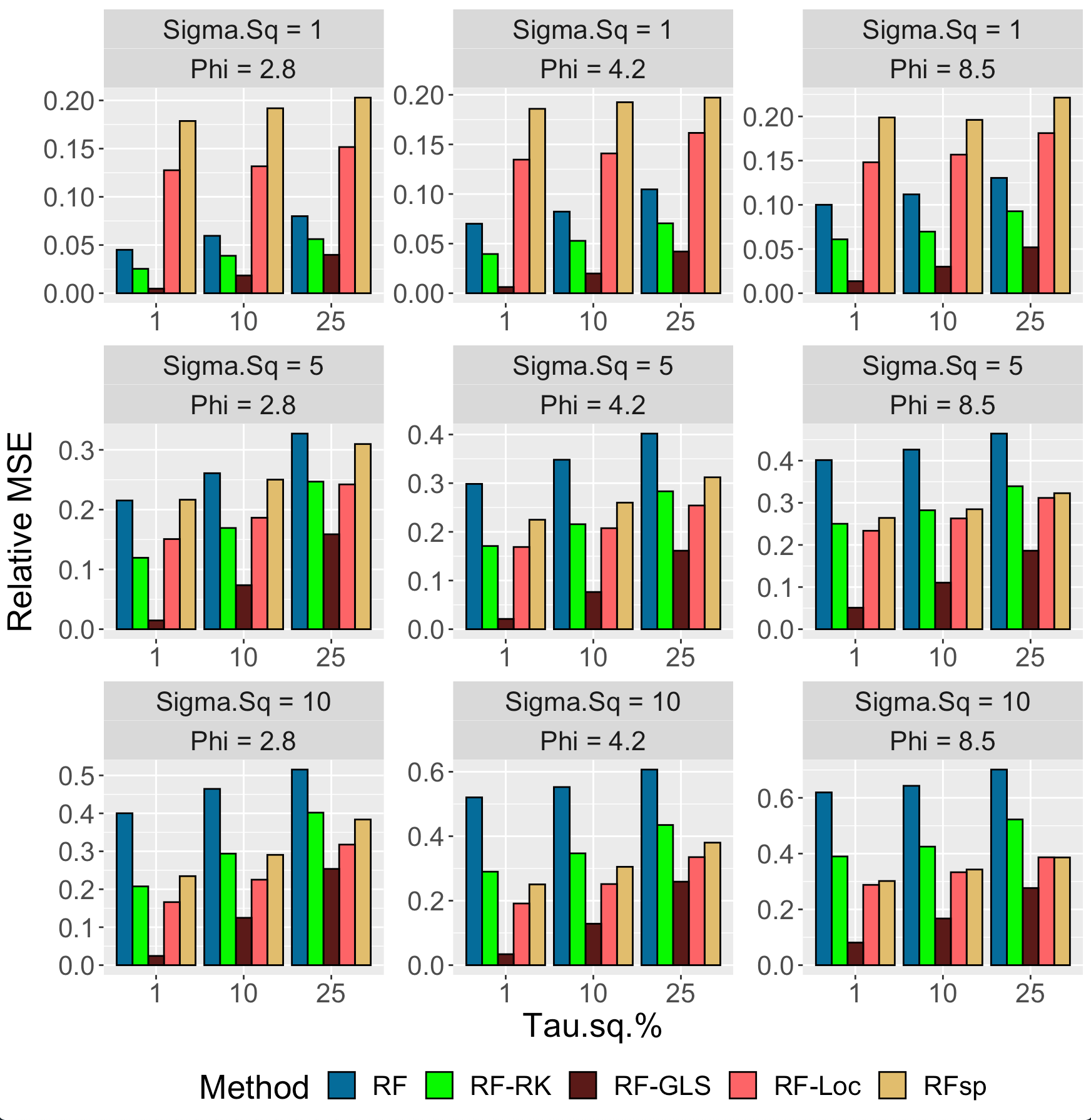}
			\caption{Prediction performance}
		\end{subfigure}
		\caption{Comparison between competing methods on (a) estimation and (b) spatial prediction when the spatial surface $w(\ell)$ is generated from Mat\'ern covariance and the mean function is $m = m_1$.}\label{Fig:sin_matern}
	\end{figure}
	
	\begin{figure}[H]
		\centering
		\begin{subfigure}[t]{0.5\textwidth}
			
			\centering
			\includegraphics[height=3.2in]{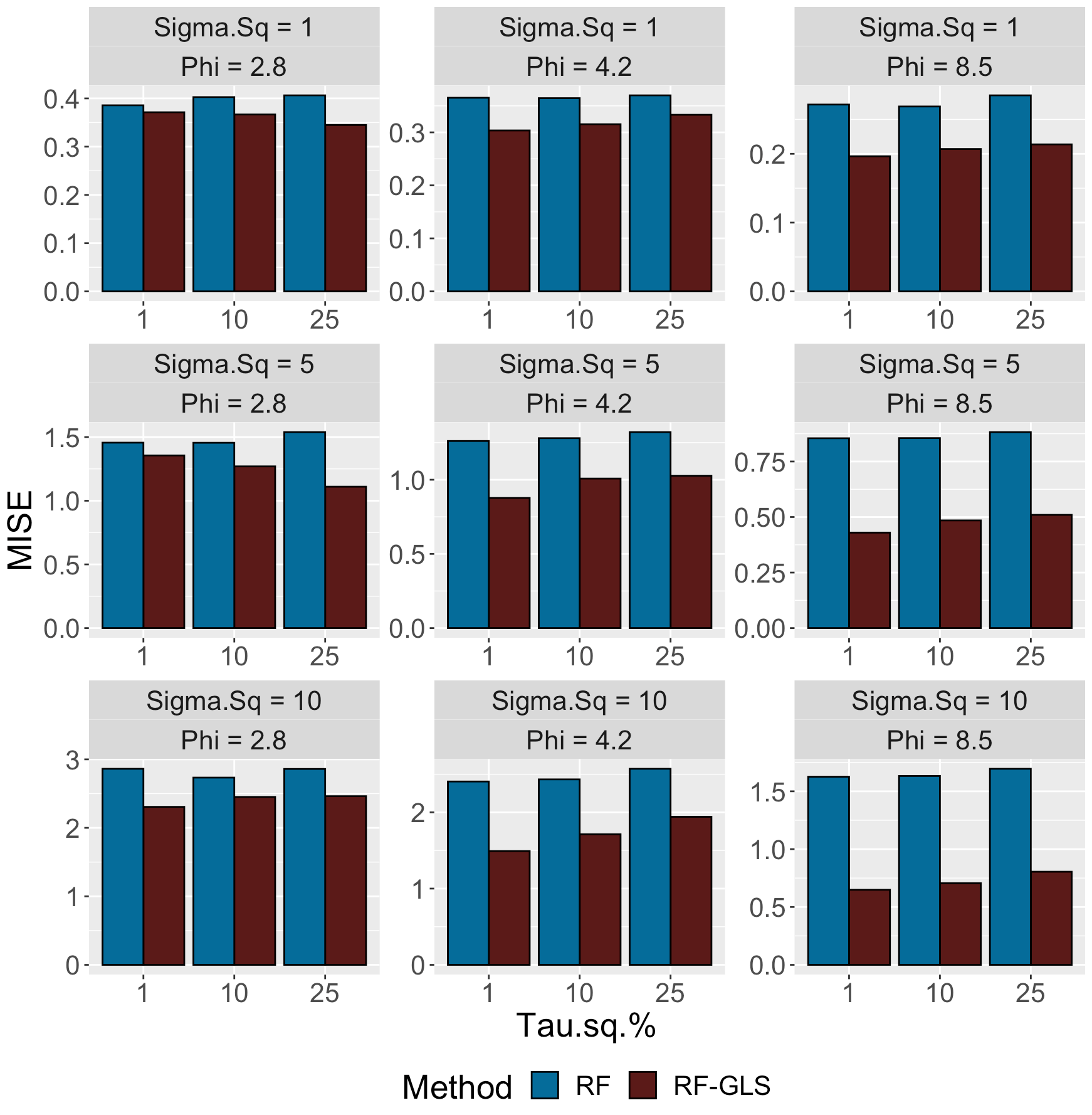}
			\caption{Estimation performance}
		\end{subfigure}%
		~ 
		\begin{subfigure}[t]{0.5\textwidth}
			
			\centering
			\includegraphics[height=3.2in]{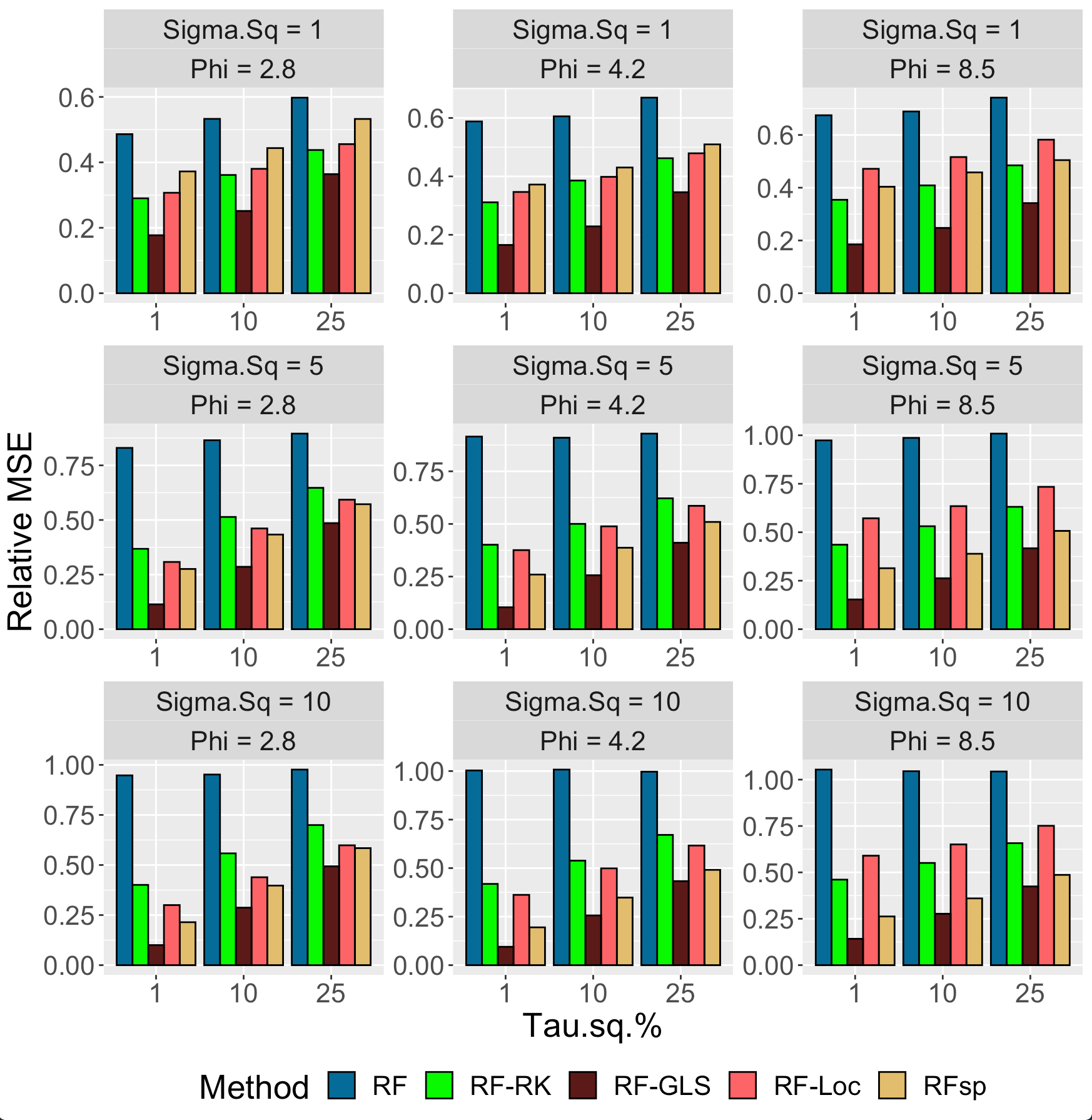}
			\caption{Prediction performance}
		\end{subfigure}
		\caption{Comparison between competing methods on (a) estimation and (b) spatial prediction when the spatial surface $w(\ell)$ is generated from Mat\'ern covariance and the mean function is $m = m_2$.}\label{Fig:friedman_matern}
	\end{figure}
	
	\begin{figure}[H]
		\centering
		\begin{subfigure}[t]{0.5\textwidth}
			
			\centering
			\includegraphics[height=3.2in]{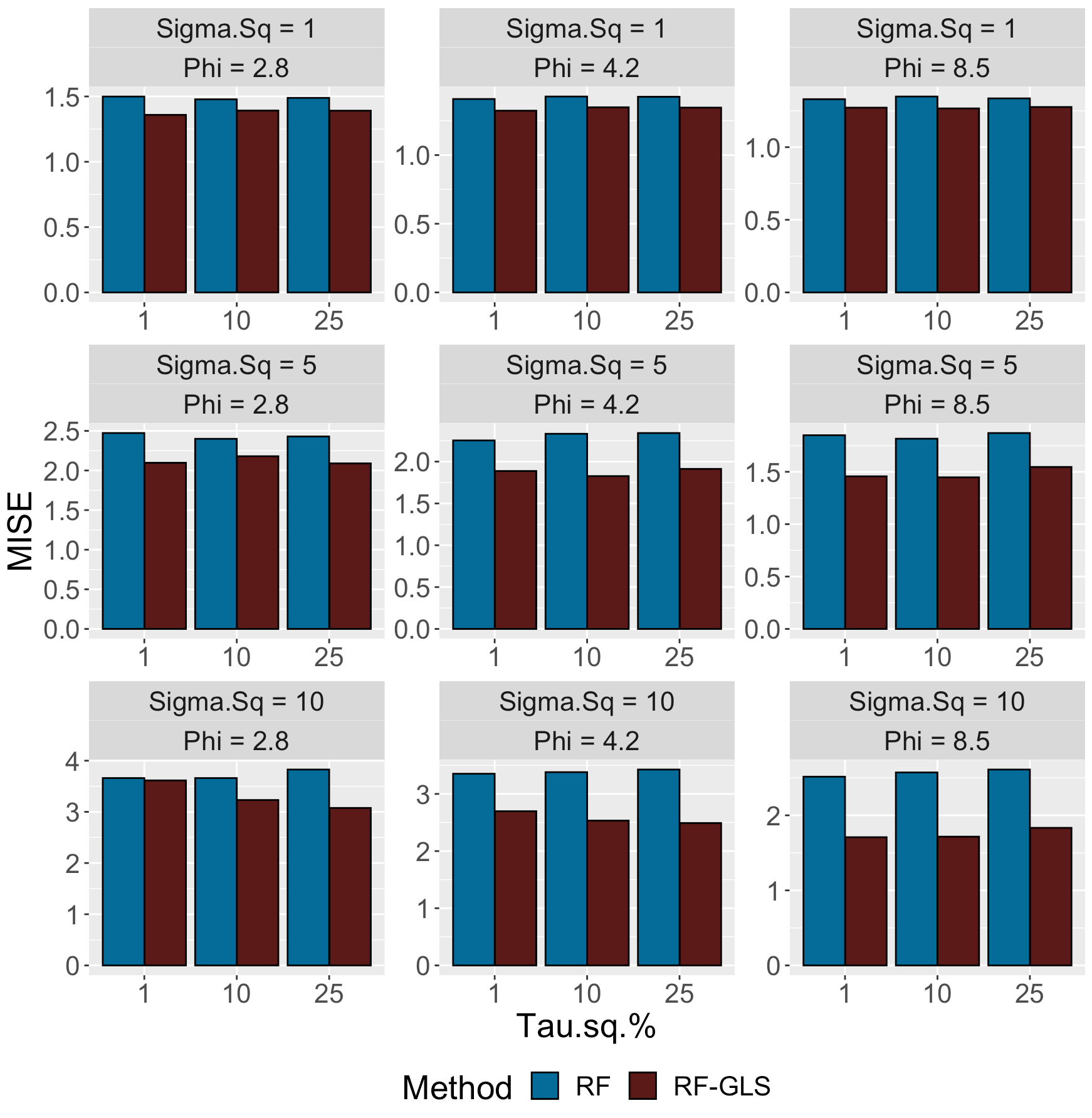}
			\caption{Estimation performance}
		\end{subfigure}%
		~ 
		\begin{subfigure}[t]{0.5\textwidth}
			
			\centering
			\includegraphics[height=3.2in]{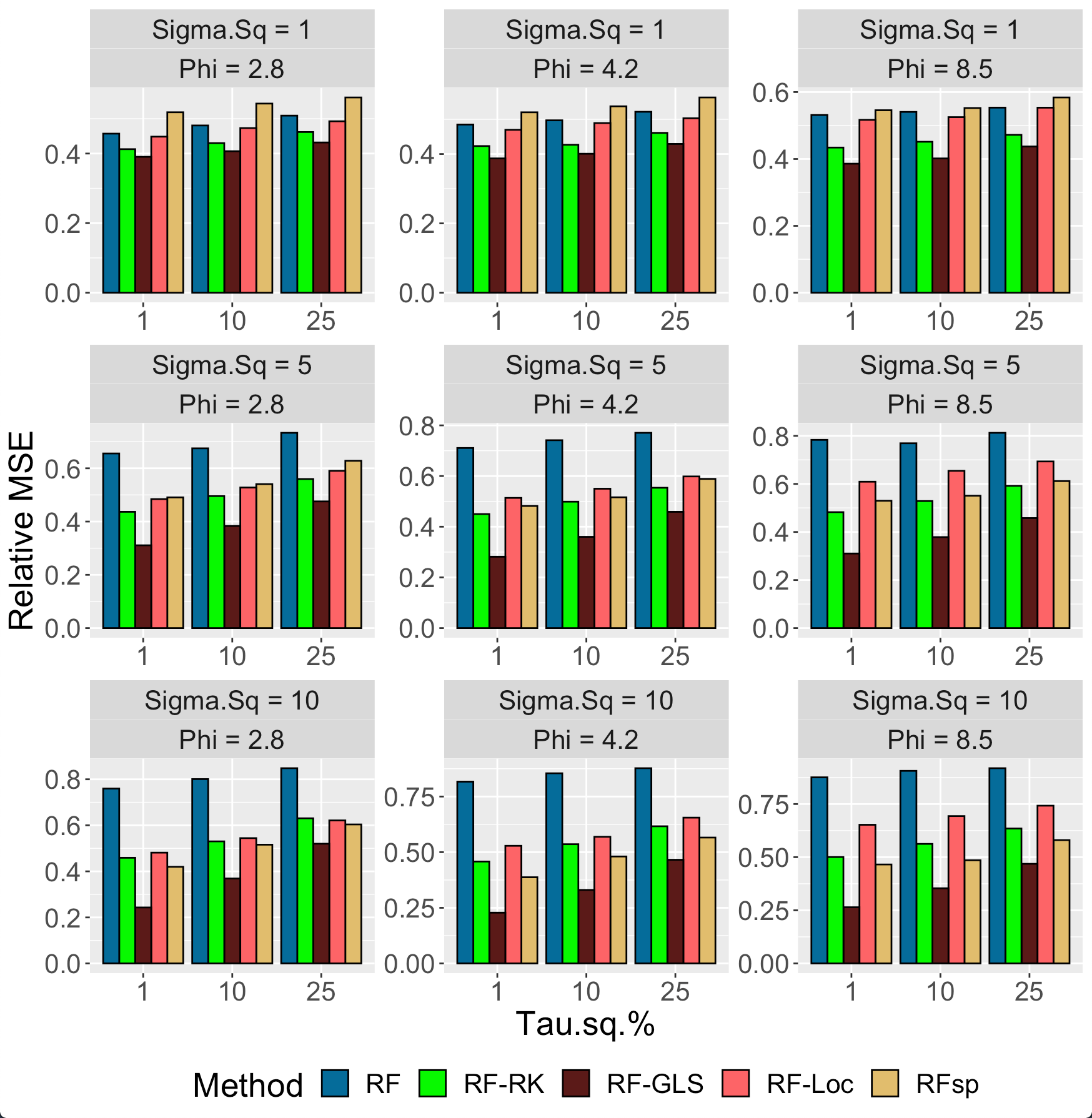}
			\caption{Prediction performance}
		\end{subfigure}
		\caption{Comparison between competing methods on (a) estimation and (b) spatial prediction when the spatial surface $w(\ell)$ is generated from Mat\'ern covariance and the mean function is $m = m_3$.}\label{Fig:MV_matern}
	\end{figure}
	
	
	
	\subsection{Model misspecification: misspecified spatial effect}\label{sec:smooth_simul}
	Next, we focus on the scenario, where the entire spatial effect $w(\ell)$ is misspecified. In particular, we consider the scenario where the spatial effect $w(\ell)$ is not even generated from a Gaussian Process but is a fixed smooth spatial surface. 
	We consider 3 choices of such fixed surfaces each generated as (centered) densities of a bivariate mixture normal distribution with two components (with means $\bm{\mu}_1$ and $\bm{\mu}_2$ representing the locations of the two-peaks in the surface) and with  
	with covariance matrices $\bSigma_1$ and $\bSigma_2$ representing the slopes and dispersion around the peaks. 
	
	In particular, we consider 3 setups with varying locations and variances corresponding to the two Gaussian components. 
	\begin{enumerate}
		\item \textbf{Setup 1:} $\mu_1 = (0.25, 0.5); \:\mu_2 = (0.75, 0.5), \bSigma_1^2 = \bSigma_2^2 = 0.0025\bI $;
		\item \textbf{Setup 2:} $\mu_1 = (0.25, 0.5); \:\mu_2 = (0.75, 0.5), \bSigma_1^2 = 0.01\bI;\:  \bSigma_2^2= 0.0025\bI $;
		\item \textbf{Setup 3:} $\mu_1 = (0.25, 0.25); \mu_2 = (0.6, 0.9), \bSigma_1^2 = \bSigma_2^2 = 0.0025\bI $;
	\end{enumerate}
	We centered each surface and scaled them to have the sample variance to be $\sigma^2$ for choices $\sigs=1,5$ and $10$. During implementation,  for RF-RK and RF-GLS, the covariance was modeled through an exponential Gaussian process, whose parameters were estimated from the RF residuals.
	
	\begin{figure}[h!]
		\centering
		\begin{subfigure}[t]{0.5\textwidth}
			\centering
			\includegraphics[width=2.5in]{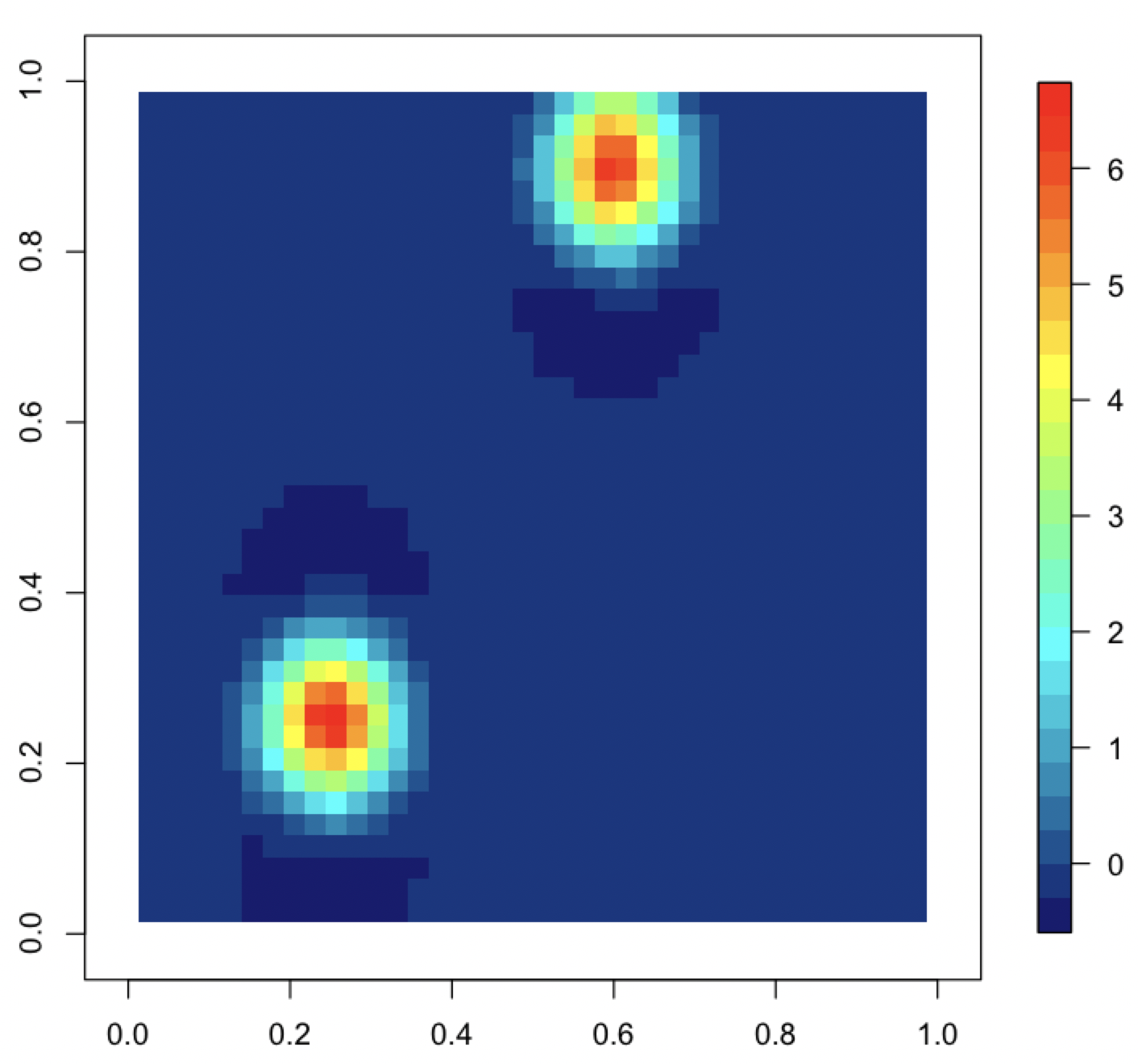}
			\caption{Setup 3}
		\end{subfigure}\\
		\begin{subfigure}[t]{0.4\textwidth}
			\centering
			\includegraphics[width=2.in]{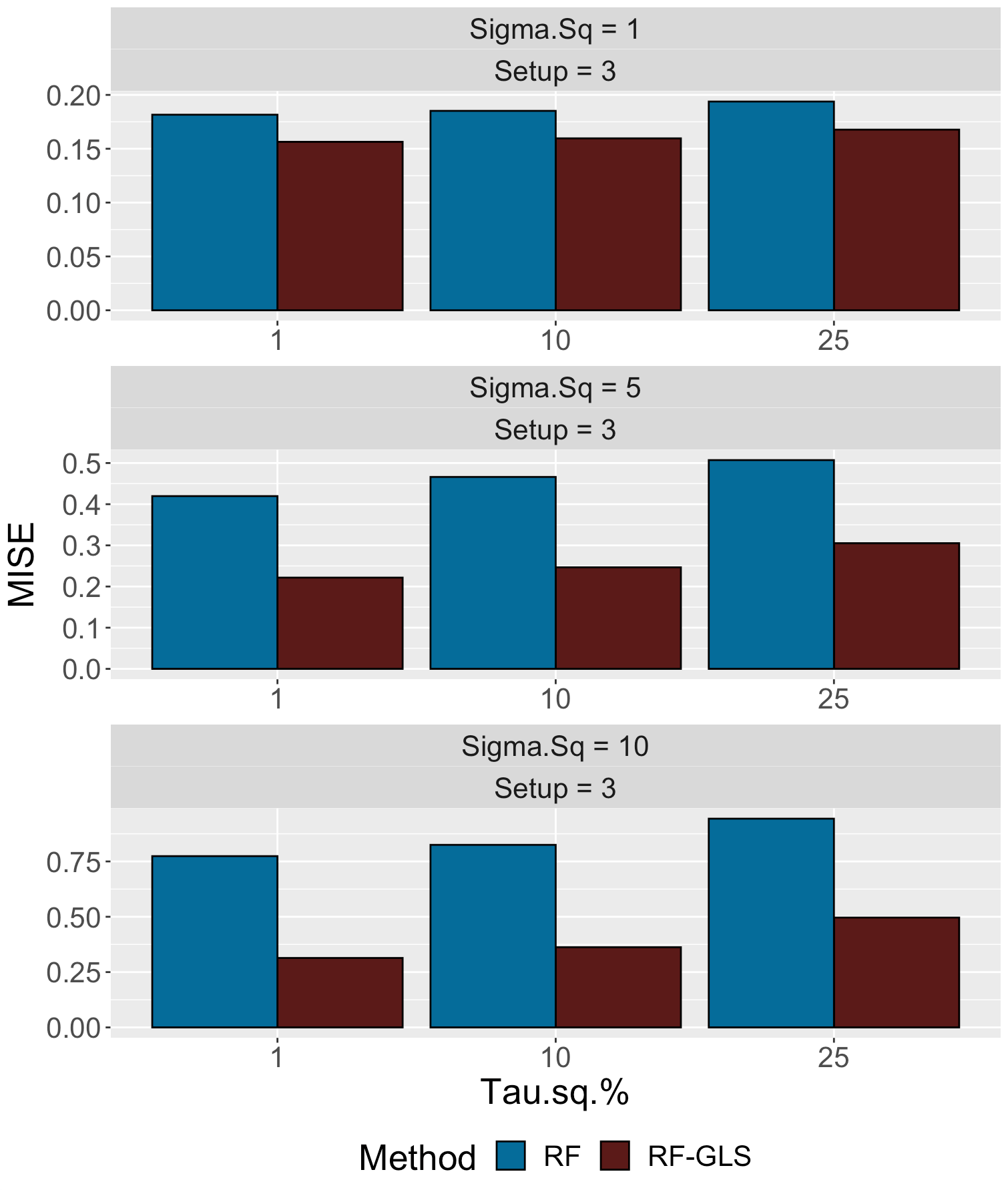}
			\caption{Estimation for Setup 3}
		\end{subfigure}
		\begin{subfigure}[t]{0.4\textwidth}
			\centering
			\includegraphics[width=2.in]{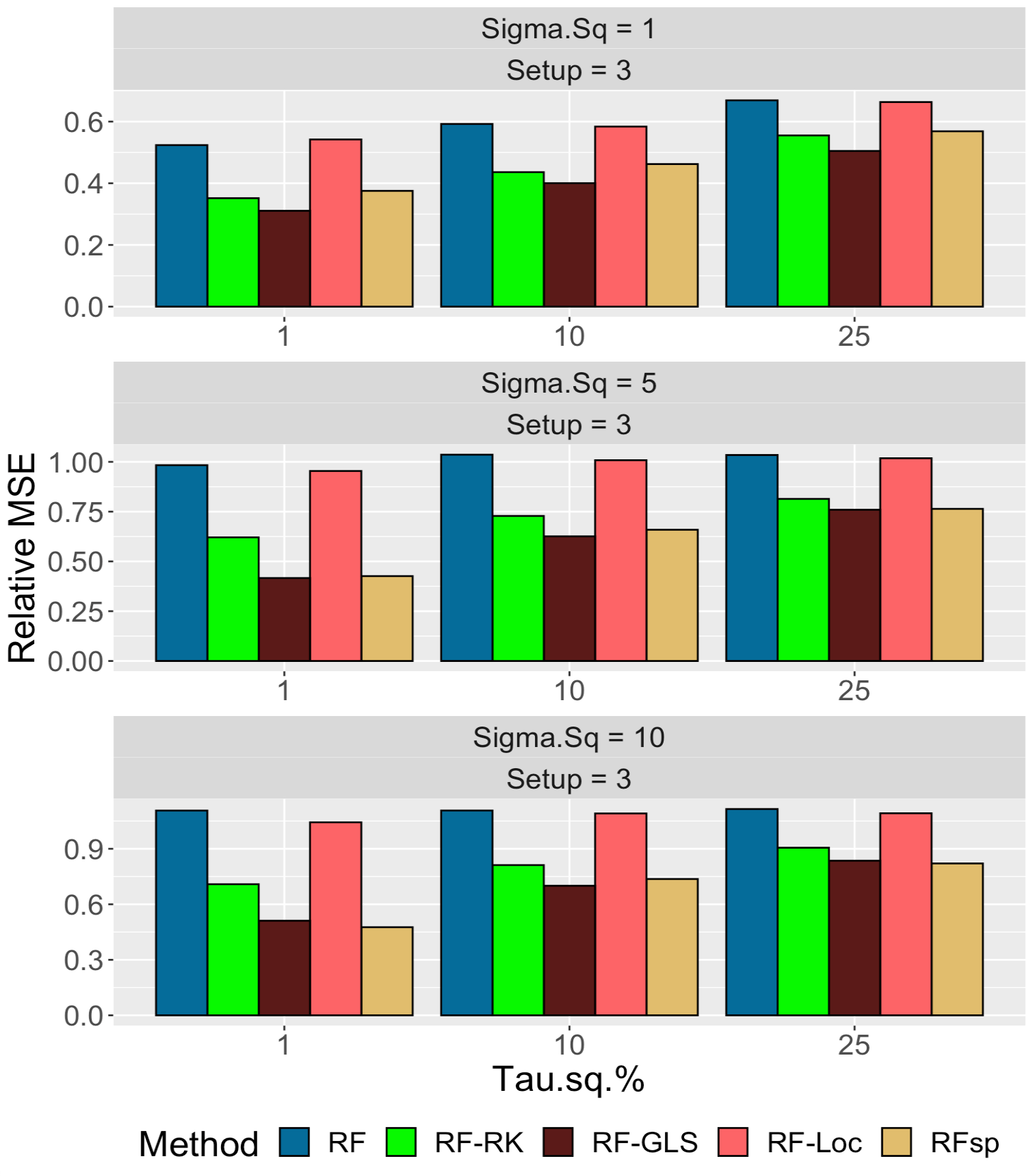}
			\caption{Prediction for Setup 3}
		\end{subfigure}
		
		\caption{Comparison between competing methods on (b) estimation and (c) spatial prediction when the spatial surface $w(\ell)$ is simulated from the smooth function given by (a) and the mean function is $m = m_2$.}\label{Fig:smooth_short_performance}
	\end{figure}
	We present the results for one of the surfaces (Setup 3) first. 
	Figure \ref{Fig:smooth_short_performance} (a) plots the fixed surface $w(\ell)$. During implementation, for RF-RK and RF-GLS we used a working covariance matrix modeled as an exponential Gaussian process. 
	Figures \ref{Fig:smooth_short_performance} (b) and (c) respectively plots the estimation and prediction performance. RF-GLS performs comparably or better to RF in estimation for all values of $\sigma^2$ -- the empirical variance of $w(\ell)$. For prediction, once again RF-GLS and RFsp perform comparably when $\sigma^2$ is high, but RF-GLS performs better in low signal strength. These trends were consistent with those explained in Section \ref{sec:extra}. For the other choices of the surfaces (Figure \ref{Fig:smooth_setup}), the results were similar (Figures \ref{Fig:sin_smooth},\ref{Fig:friedman_smooth}, and \ref{Fig:MV_smooth}). The gains of using RF-GLS being more substantial for certain choices of the covariate effect $m$ and spatial surface $w$. 
	
	
	\begin{figure}[H]
		\centering
		\begin{subfigure}[t]{0.5\textwidth}
			\centering
			\includegraphics[width=3.4in]{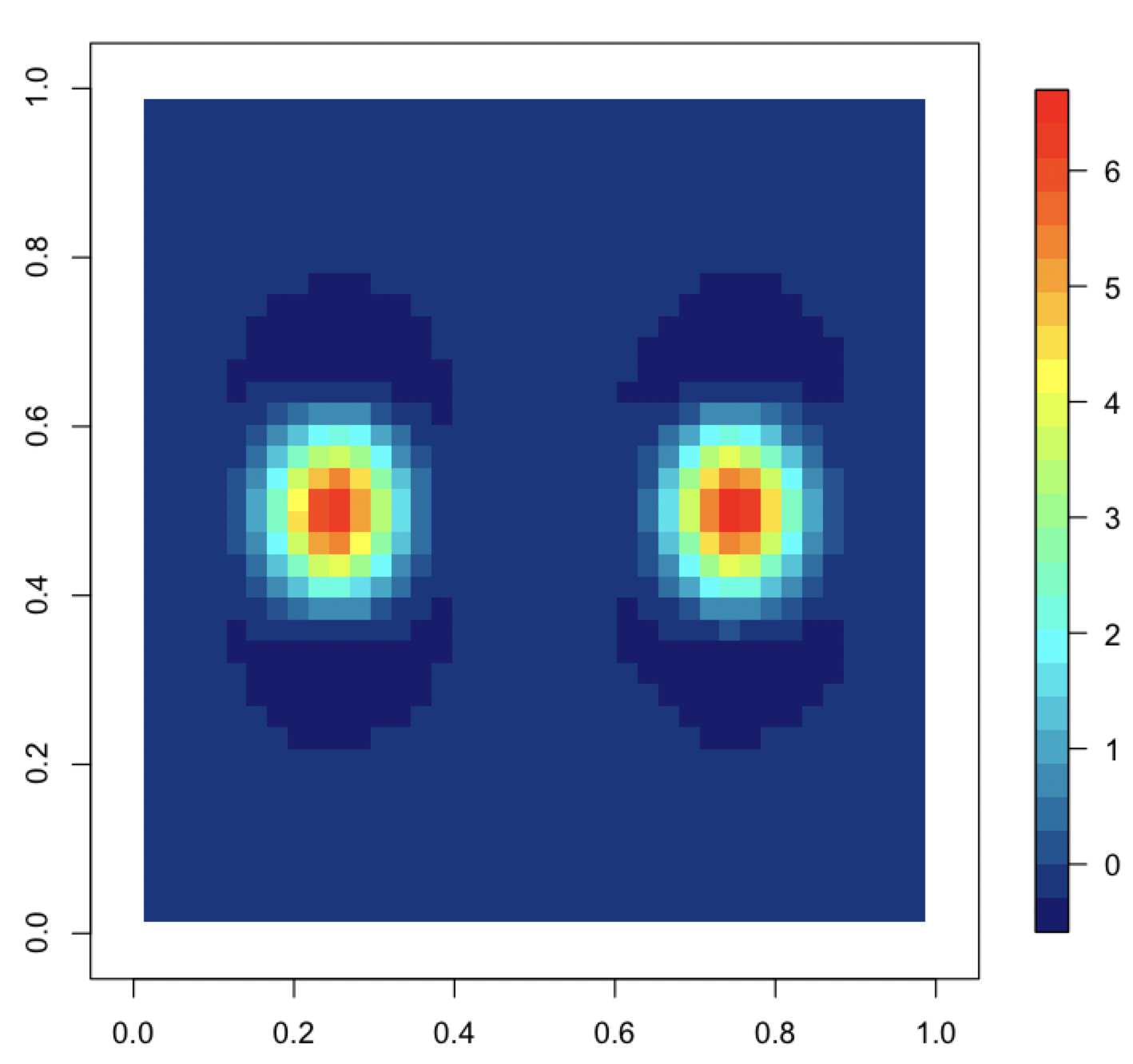}
			\caption{Setup 1}
		\end{subfigure}%
		~ 
		\begin{subfigure}[t]{0.5\textwidth}
			\centering
			\includegraphics[width=3.4in]{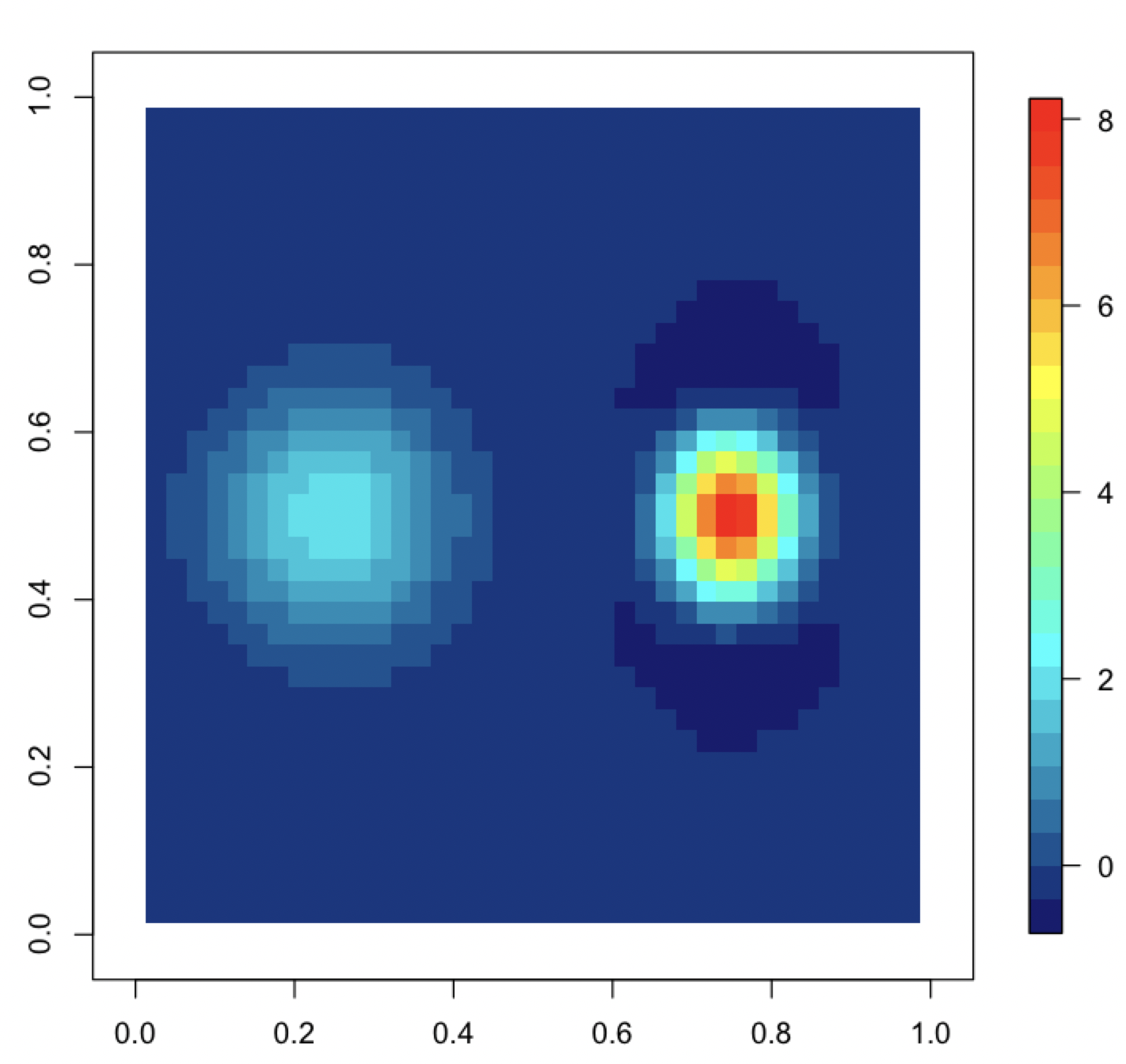}
			\caption{Setup 2}
		\end{subfigure}%
		
		\begin{subfigure}[t]{0.5\textwidth}
			\centering
			\includegraphics[width=3.4in]{Revision_Figure_V2/Surface_3.png}
			\caption{Setup 3}
		\end{subfigure}%
		
		\caption{ (a) Setup 1, (b) Setup 2 and (c) Setup 3 corresponding to the smooth functions with $(\sigma^2 = 1)$. }\label{Fig:smooth_setup}
	\end{figure}
	
	\begin{figure}[H]
		\centering
		\begin{subfigure}[t]{0.5\textwidth}
			
			\centering
			\includegraphics[height=3.2in]{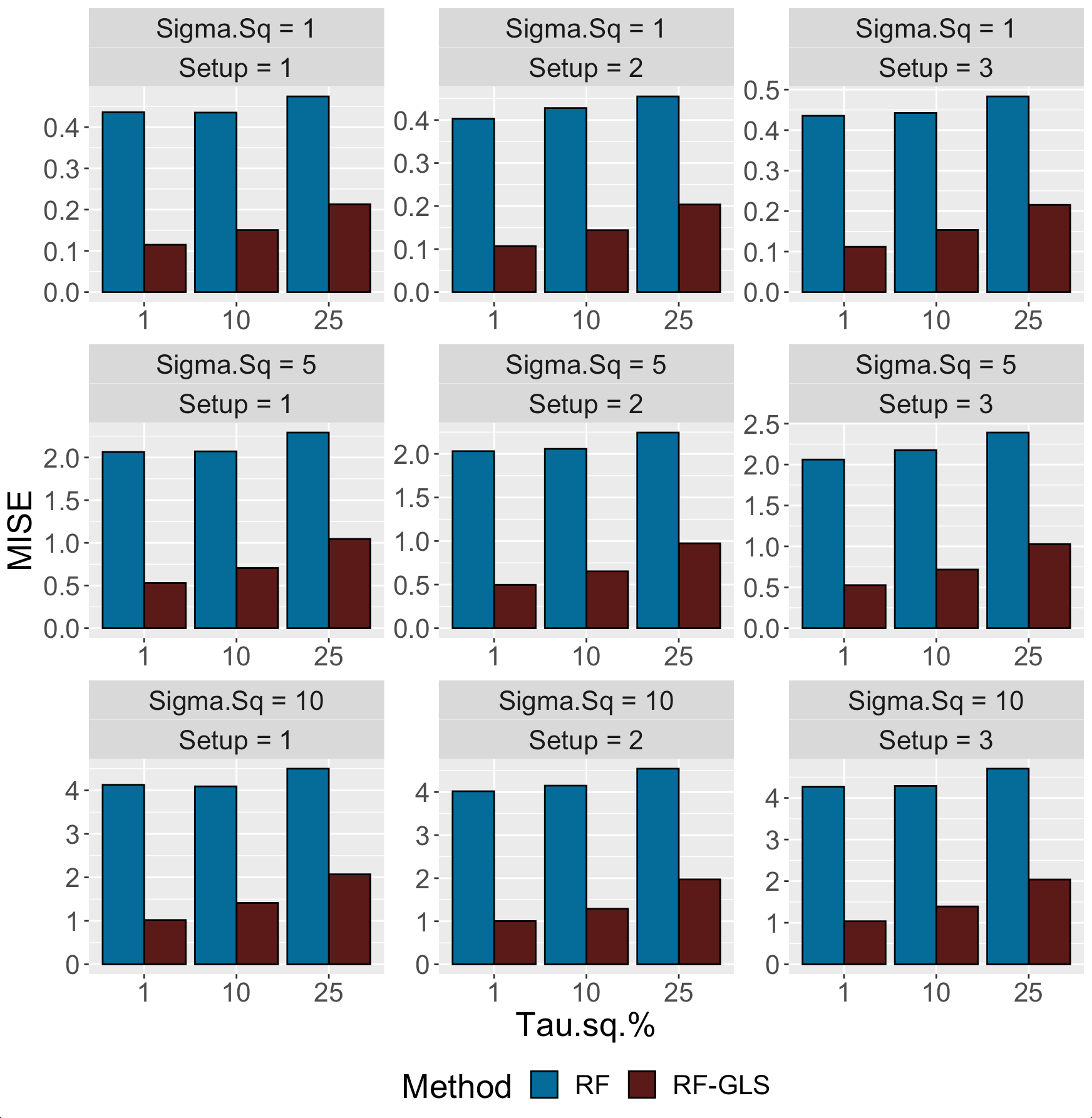}
			\caption{Estimation performance}
		\end{subfigure}%
		~ 
		\begin{subfigure}[t]{0.5\textwidth}
			
			\centering
			\includegraphics[height=3.2in]{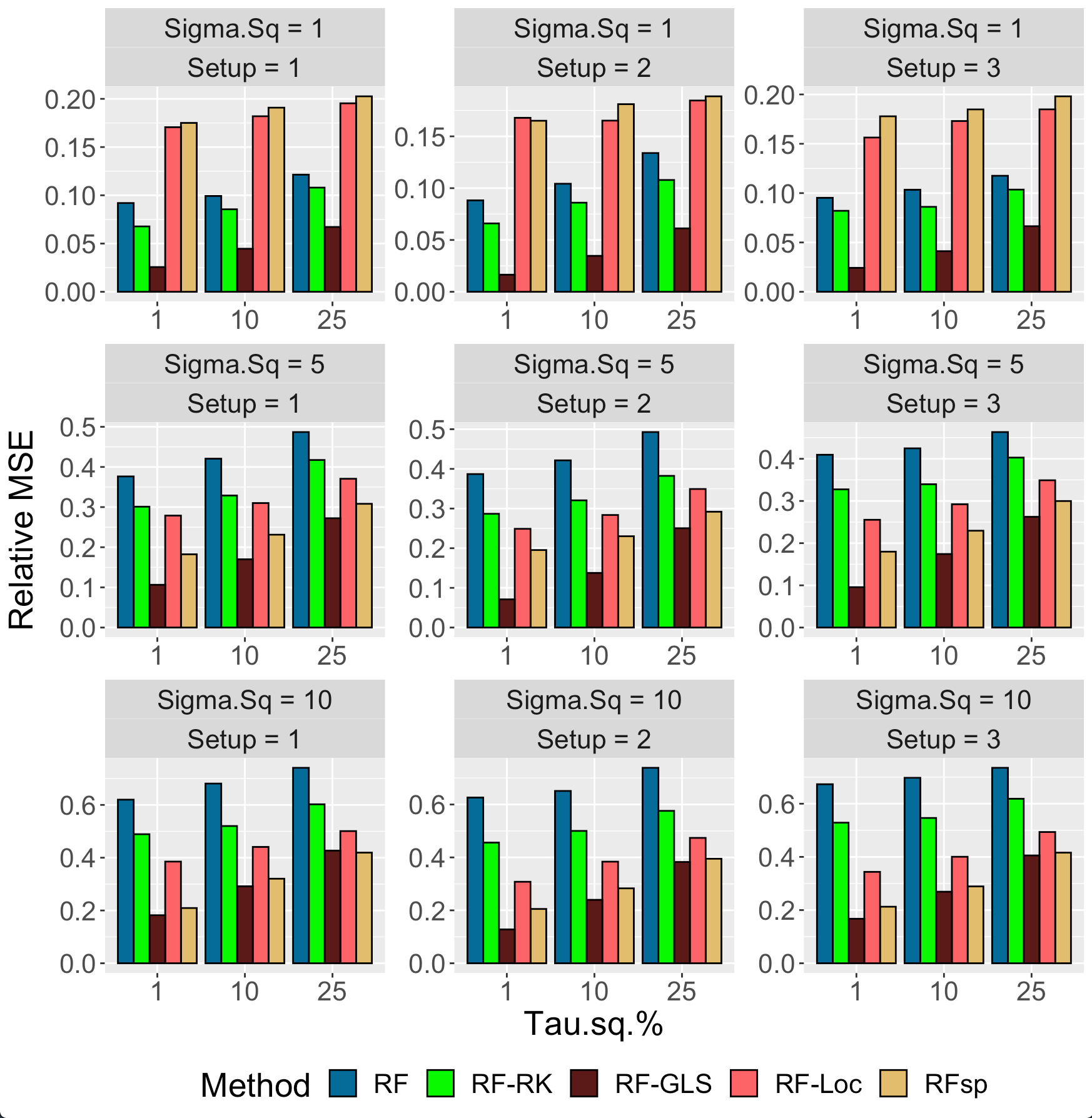}
			\caption{Prediction performance}
		\end{subfigure}
		\caption{Comparison between competing methods on (a) estimation and (b) spatial prediction when the spatial surface $w(\ell)$ is generated as a fixed smooth surface and the mean function is $m = m_1$.}\label{Fig:sin_smooth}
	\end{figure}
	
	\begin{figure}[H]
		\centering
		\begin{subfigure}[t]{0.5\textwidth}
			
			\centering
			\includegraphics[height=3.2in]{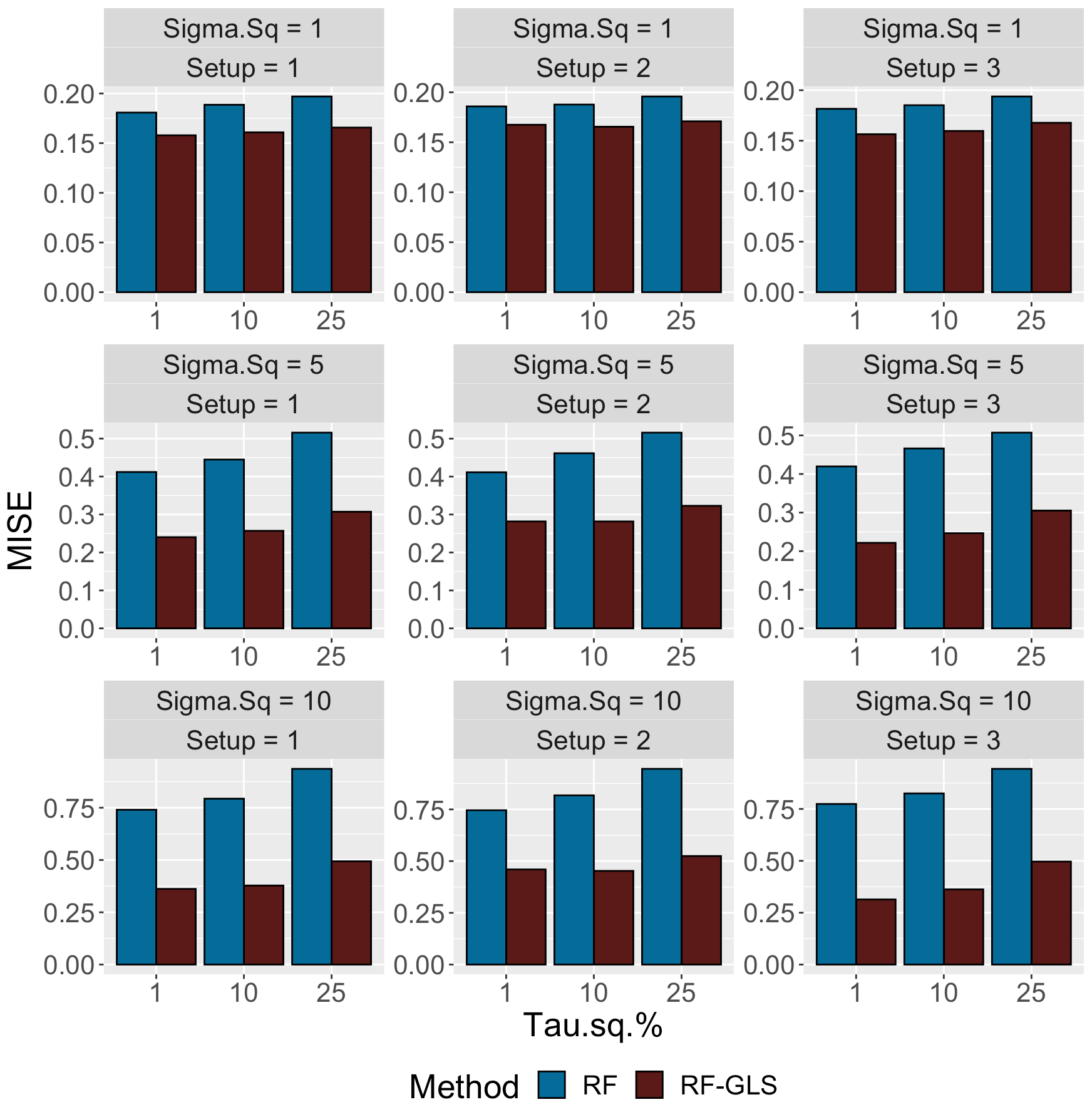}
			\caption{Estimation performance}
		\end{subfigure}%
		~ 
		\begin{subfigure}[t]{0.5\textwidth}
			
			\centering
			\includegraphics[height=3.2in]{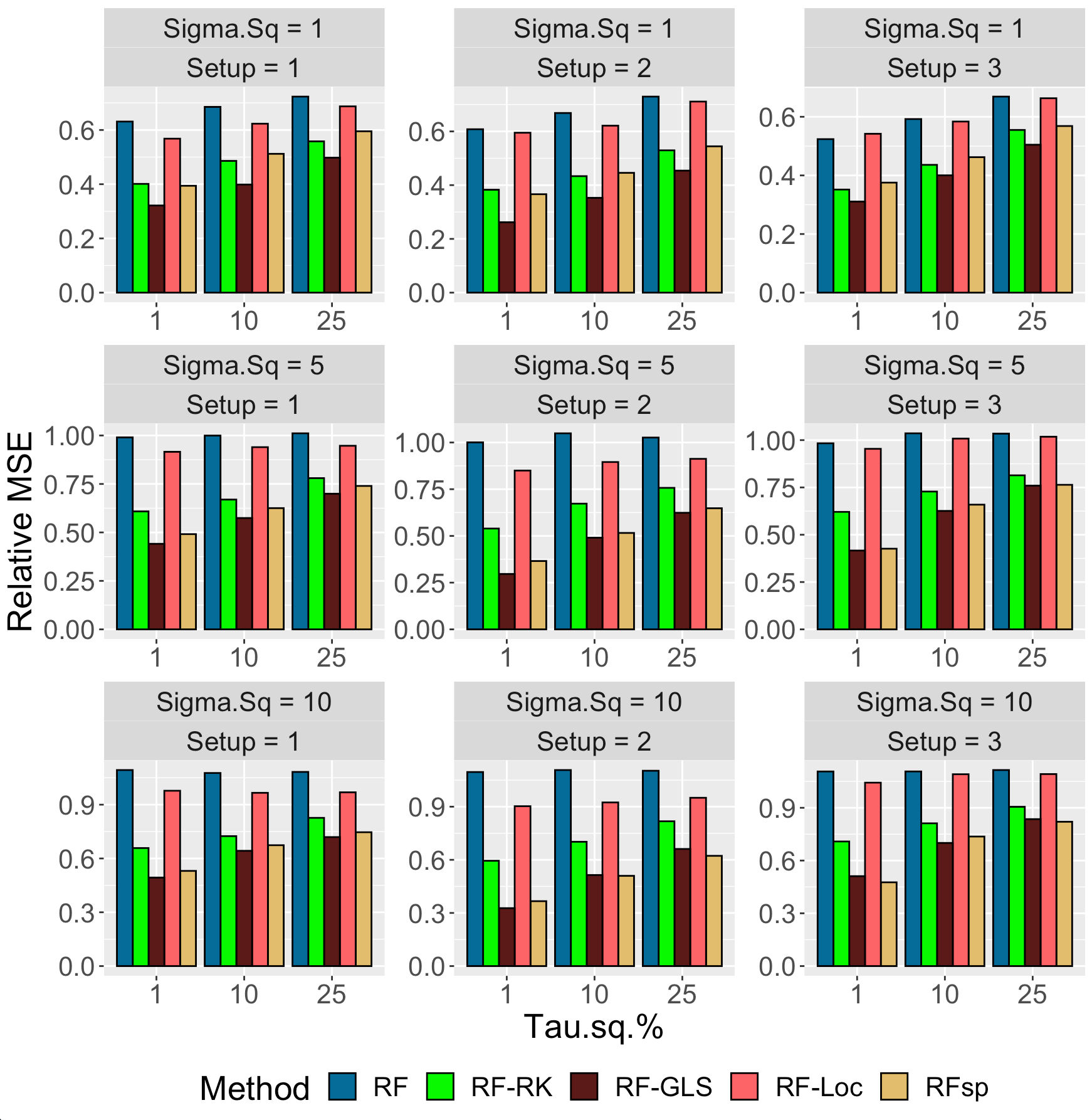}
			\caption{Prediction performance}
		\end{subfigure}
		\caption{Comparison between competing methods on (a) estimation and (b) spatial prediction when the spatial surface $w(\ell)$ is generated as a fixed smooth surface and the mean function is $m = m_2$.}\label{Fig:friedman_smooth}
	\end{figure}
	
	\begin{figure}[H]
		\centering
		\begin{subfigure}[t]{0.5\textwidth}
			
			\centering
			\includegraphics[height=3.2in]{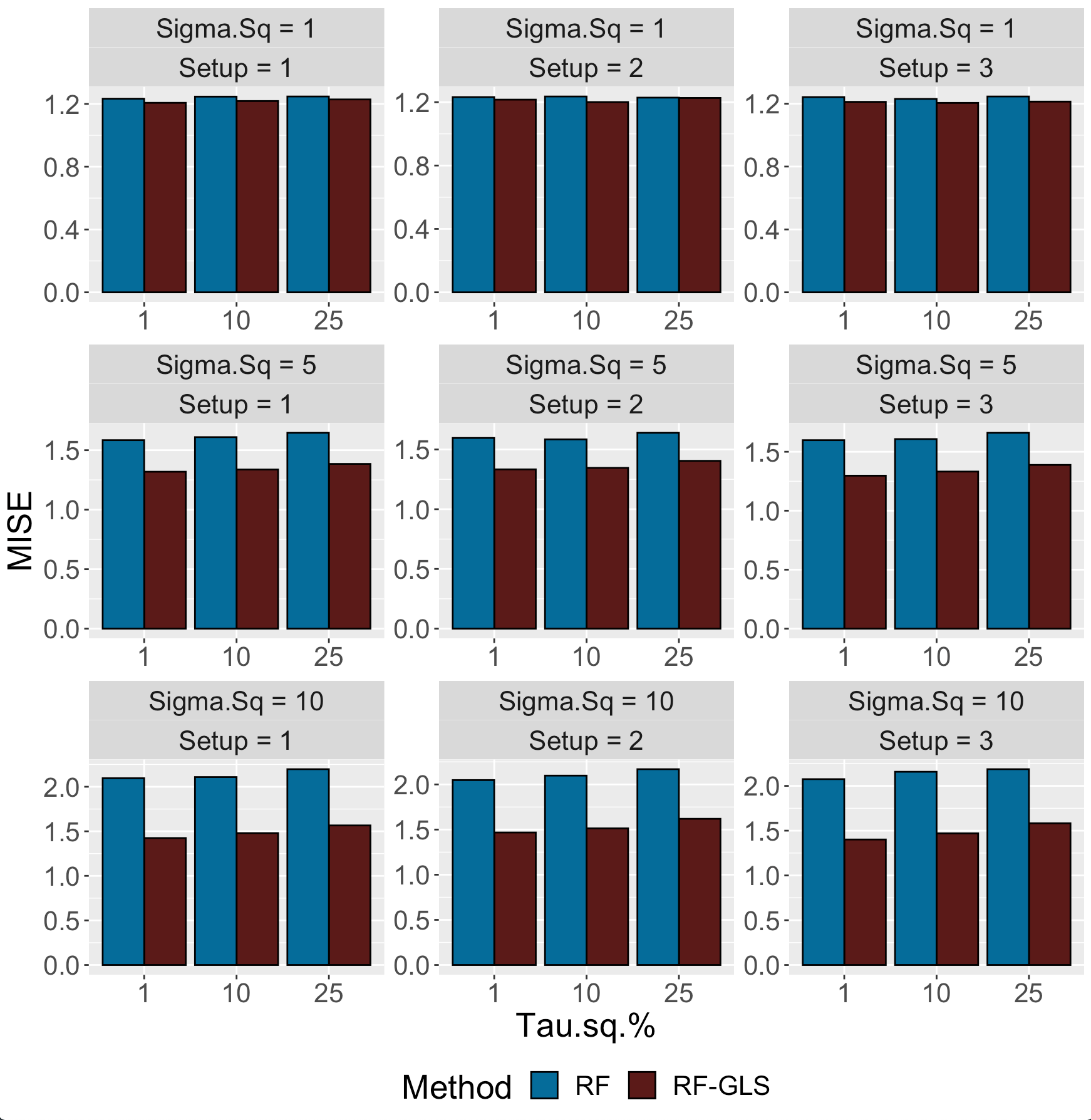}
			\caption{Estimation performance}
		\end{subfigure}%
		~ 
		\begin{subfigure}[t]{0.5\textwidth}
			
			\centering
			\includegraphics[height=3.2in]{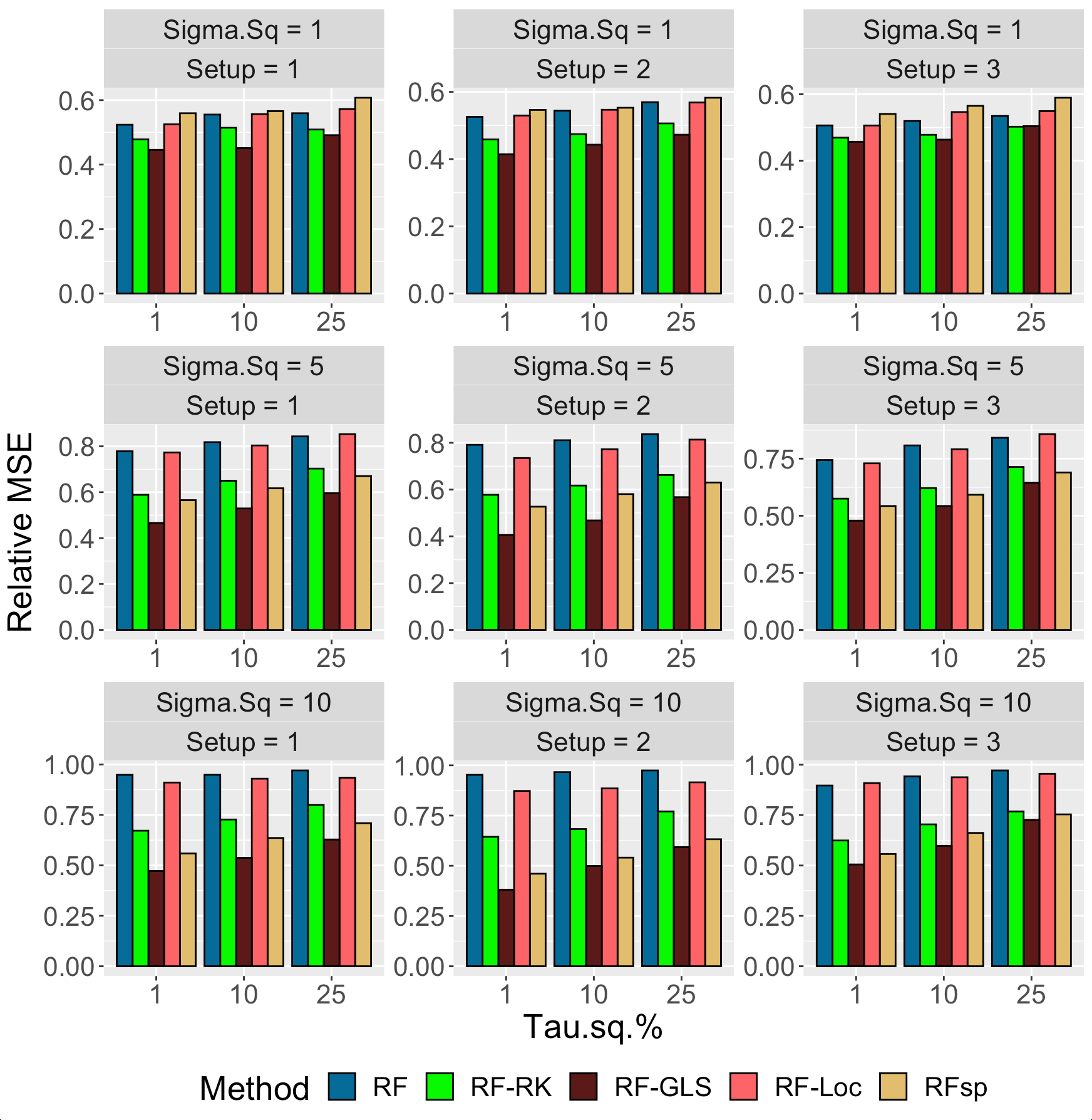}
			\caption{Prediction performance}
		\end{subfigure}
		\caption{Comparison between competing methods on (a) estimation and (b) spatial prediction when the spatial surface $w(\ell)$ is generated as a fixed smooth surface and the mean function is $m = m_3$.}\label{Fig:MV_smooth}
	\end{figure}
}

\newpage
\section{Proofs of main results}\label{sec:proofs}
\subsection{Approximation Error}\label{sec:pf-approx}
\begin{proof}[Proof of Lemma \ref{lem:beta}]
	For any $k$, denote the participating leaf nodes in the partition under consideration as $\mathcal{C}_1, \mathcal{C}_2, \cdots, \mathcal{C}_{g^{(k)}+1}$.
	Using Assumption \ref{as:chol}, and letting $\alpha=\|\brho\|^2$, we have from (\ref{eq:qf})
	\begin{equation}\label{eq:gram}
	\frac{1}{n}\left(\mathbf{Z}^\top\bQ \mathbf{Z} \right)_{l,l'} 
	= \frac{1}{n} \mathbf{Z}_{.,l}^\top\bQ\mathbf{Z}_{.,l'} = \frac{1}{n}\left[\alpha\sum_{{i}}\mathbf{Z}_{i,l}\mathbf{Z}_{i,l'} + \sum_{j \neq j' = 0}^q \rho_j\rho_{j'} \sum_i\mathbf{Z}_{i-j,l} \mathbf{Z}_{i-j',l'} + O_b(1) \right]
	\end{equation}
	Here the last term is $O_b(1)$ since $\mathbf{Z}_{i,l}$'s are independent and identically distributed for $i=1,\ldots,n$, and this term is a sum of $O(q^2)$ terms of the form $\mathbf{Z}_{i,l}\mathbf{Z}_{i',l'}$. Using strong law of large numbers, 
	\begin{equation*}
	\label{eq:AS_M1}
	\lim_{n \to \infty}\frac{1}{n} \sum_i\mathbf{Z}_{i,l}\mathbf{Z}_{i,l'} \overset{a.s.}{=} \mathbb{E}\left(\mathbf{Z}_{1,l}\mathbf{Z}_{1,l'} \right) = \mathbb{P}(X \in \mathcal{C}_l)\mathds{I}(l=l') = Vol(\mathcal{C}_l)\mathds{I}(l=l').
	\end{equation*}
	The last equality follows from the fact that $X$ is uniformly distributed over $[0,1]^D (\supseteq \mathcal{C}_l)$.
	
	Next note that the interaction terms   
	$t_i=\mathbf{Z}_{i-j,l} \mathbf{Z}_{i-j',l}$ are identically distributed but are not independent. However, as $0 \leq j,j' \leq q$, $t_i$ is independent of $t_{i'}$ for $i - i' > q$. Hence, $\{t_i\}$ is an {\em m-dependent process}, with maximum lag $q$ \citep{hoeffding1948central}. Following the hierarchy of mixing conditions \citep[p. 112,][]{bradley2005basic}, $m$-dependence $\Rightarrow \psi$-mixing $\Rightarrow \psi^*$-mixing $\Rightarrow$ information regularity $\Rightarrow$ absolute regularity.  
	Hence, $\{t_i\}$ is a stationary $\beta$-mixing process. Using Theorem 1 in \cite{nobel1993note} with the class $\mathcal{F}$ equal to the identity function, and the fact that  $|\mathbf{Z}_{i-j,l} \mathbf{Z}_{i-j',l'}| \leqslant 1$, we have the following for $j \neq j'$:
	\begin{equation*}
	\label{eq:AS_M2}
	\lim_{n \to \infty}\frac{1}{n} \sum_i\mathbf{Z}_{i-j,l} \mathbf{Z}_{i-j',l'} \overset{a.s.}{=} \mathbb{E}(\mathbf{Z}_{i-j,l} \mathbf{Z}_{i-j',l'}) \overset{ind.}{=} \mathbb{E}(\mathbf{Z}_{i-j,l})\mathbb{E}(\mathbf{Z}_{i-j',l'}) = Vol(\mathcal{C}_l)Vol(\mathcal{C}_l').
	\end{equation*}

	Letting $n \rightarrow \infty$ in both sides of  (\ref{eq:gram}), we have
	\begin{equation}\label{eq:gramlim}
	\lim_{n \to \infty}\frac{1}{n}\left(\mathbf{Z}^\top\bQ \mathbf{Z}\right)_{l,l'} \overset{a.s.}{=} \alpha Vol(\mathcal{C}_{{l}})\mathds{I}(l=l') + \sum_{j \neq j' = 0}^q \rho_j\rho_{j'} Vol(\mathcal{C}_{{l}})Vol(\mathcal{C}_{{l}'}).
	\end{equation}
	Since $\frac{1}{n}\mathbf{Z}^\top\bQ \mathbf{Z}$ is finite dimensional (i.e. $(g^{(k)}+1) \times (g^{(k)}+1)$, does not depend on $n$), we also have
	$$
	\lim_{n \to \infty}\frac{1}{n}\mathbf{Z}^\top\bQ \mathbf{Z} \overset{a.s.}{=} 
	\alpha \times diag(\mathbf{c}) + \left(\sum_{j \neq j' = 0}^q \rho_j\rho_{j'} \right)\mathbf{c}\mathbf{c}^\top
	$$
	where $\mathbf{c} = (c_1, c_2, \cdots, c_{g^{(k)}+1}), c_l = Vol(\mathcal{C}_{{l}})$ and $ \mathbf{1}^\top\mathbf{c} = 1. $

	Defining $\mathbf{c}^{-1} := (Vol(\mathcal{C}_{{1}})^{-1}, Vol(\mathcal{C}_{{2}})^{-1}, \cdots, Vol(\mathcal{C}_{{g^{(k)}+1}})^{-1})$ and using Sherman–Morrison–Woodbury identity  we have, 
	$$
	\begin{aligned}
	\left(\lim_{n \to \infty}\frac{1}{n}\mathbf{Z}^\top\bQ \mathbf{Z}\right)^{-1} &= \alpha ^{-1}diag(\mathbf{c}^{-1}) - \frac{\alpha ^{-2} diag(\mathbf{c}^{-1})\left(\sum_{j \neq j' = 0}^q \rho_j\rho_{j'}\right)\mathbf{c}\mathbf{c}^\top diag(\mathbf{c}^{-1}) }{1+\alpha ^{-1}\left(\sum_{j \neq j' = 0}^q \rho_j\rho_{j'}\right)\mathbf{c}^\top diag(\mathbf{c}^{-1}) \mathbf{c}}\\
	&=  \alpha ^{-1} diag(\mathbf{c}^{-1}) - \frac{\alpha ^{-1} \left(\sum_{j \neq j' = 0}^q \rho_j\rho_{j'}\right)\mathbf{1}\mathbf{1}^\top }{\alpha  + \left(\sum_{j \neq j' = 0}^q \rho_j\rho_{j'}\right)\mathbf{1}^\top \mathbf{c}}\\
	&= \alpha ^{-1} \left[diag(\mathbf{c}^{-1}) -\frac{\left(\sum_{j \neq j' = 0}^q \rho_j\rho_{j'}\right)}{\alpha + \sum_{j \neq j' = 0}^q \rho_j\rho_{j'}}\mathbf{1}\mathbf{1}^\top \right].
	\end{aligned}
	$$
	Next we focus on $\frac{1}{n}\mathbf{Z}^\top\bQ\mathbf{Y}$. Proceeding similarly, we have
	\begin{equation}\label{eq:cor}
	\begin{aligned}
	\frac{1}{n}(\mathbf{Z}^\top\bQ\mathbf{Y})_l &= \frac{1}{n}\left[\alpha\sum_i\mathbf{Z}_{i,l}Y_{i} + \sum_{j \neq j' = 0}^q \rho_j\rho_{j'} \sum_i\mathbf{Z}_{i-j,l} Y_{i-j'} + O_b(1) \right]\\
	&= \alpha \frac{\sum_i\mathbf{Z}_{i,l}(m(X_i) + \eps_i)}{n} + \sum_{j \neq j' = 0}^q \rho_j\rho_{j'} \frac{\sum_i\mathbf{Z}_{i-j,l} (m(X_{i-j'}) + \eps_{i-j'})}{n}+ O_b(1/n)\\
	&= \alpha \frac{\sum_i\mathbf{Z}_{i,l}m(X_i)}{n} + \sum_{j \neq j' = 0}^q \rho_j\rho_{j'} \frac{\sum_i\mathbf{Z}_{i-j,l} m(X_{i-j'} )}{n}\\
	&\quad + \alpha \frac{\sum_i\mathbf{Z}_{i,l}\eps_i}{n} + \sum_{j \neq j' = 0}^q \rho_j\rho_{j'} \frac{\sum_i\mathbf{Z}_{i-j,l} \eps_{i-j'} }{n}+ O_b(1/n)
	\end{aligned}
	\end{equation}
	Under Assumption \ref{as:mix},  Lemma \ref{lem:strlaw} with $B_i = \mathbf{Z}_{i,l}$ implies
	\begin{equation*}
	\label{eq:AS_P2_1}
	\lim_{n \to \infty}\frac{1}{n}\sum_i\mathbf{Z}_{i,l}\eps_i \overset{a.s.}{=} 0 \mbox{ and } 
	\lim_{n \to \infty}\frac{\sum_i\mathbf{Z}_{i-j,l} \eps_{i-j'} }{n}\overset{a.s.}{=} 0.
	\end{equation*}
	Since $\mathbf{Z}_{i,l}m(X_i)$'s are i.i.d copies of each other, using strong law of large numbers, we have
	$$
	\begin{aligned}
	&\lim_{n \to \infty}\frac{1}{n}\sum_i\mathbf{Z}_{i,l}m(X_i)\\ &\overset{a.s.}{=} \mathbb{E}\left(\mathbf{Z}_{1,l}m(X_1) \right)\\ 
	&= \mathbb{E}\left(\mathbf{Z}_{1,l}m(X_1) \right) + \mathbb{E}\left(\mathbf{Z}_{1,l}\eps_1 \right);\: \left[X \independent \eps, \mathbb{E}\left(\mathbf{Z}_{1,l}\eps_1 \right) = \mathbb{E}\left(\mathbf{Z}_{1,l}\right)\mathbb{E}\left(\eps_1 \right)=0 \right]\\
	&= \mathbb{E}\left(\mathbf{Z}_{1,l}(m(X_1) + \eps_1)) \right)\\
	&= \mathbb{E}\left(\mathbf{Z}_{1,l}Y_1 \right)\\
	&= \mathbb{E}\left(Y | X \in \mathcal{C}_l\right)Vol(\mathcal{C}_l).
	\end{aligned}
	$$
	Similarly, for $j \neq j'$, using {\em m-dependence} 
	we can show
	\begin{equation*}
	\label{eq:AS_P2_5}
	\lim_{n \to \infty}\frac{1}{n}\sum_i\mathbf{Z}_{i-j,l} m(X_{i-j'} ) \overset{a.s.}{=} \mathbb{E}\left(Y\right)Vol(\mathcal{C}_l).
	\end{equation*}
	Applying all the limits to (\ref{eq:cor}), we have 
	\begin{equation}\label{eq:r}
	\begin{aligned}
	\hat{r}_l := \lim_{n \to \infty} \frac{1}{n}(\mathbf{Z}^\top\bQ\mathbf{Y})_l &\overset{a.s.}{=} \alpha \mathbb{E}(Y |X \in \mathcal{C}_l)Vol(\mathcal{C}_l)+Vol(\mathcal{C}_l) \mathbb{E}(Y)\sum_{j \neq j' = 0}^q \rho_j\rho_{j'} \\
	&= \alpha \mathbb{E}(Y\mathds{I}(X \in \mathcal{C}_l))+Vol(\mathcal{C}_l) \mathbb{E}(Y)\sum_{j \neq j' = 0}^q \rho_j\rho_{j'}. 
	\end{aligned}
	\end{equation}
	Finally, defining $\hat{\mathbf{r}} := \left(\hat{r}_1, \cdots, \hat{r}_{g^{(k)}+1}\right)$, and noting that the dimension of  $\frac{1}{n}\mathbf{Z}^\top\bQ \mathbf{Z}$ does not grow with $n$, we have 
	$$
	\begin{aligned}
	\lim_{n \to \infty} \bm{\hat\beta} &= \lim_{n \to \infty} \left(\frac{1}{n}\mathbf{Z}^\top\bQ\mathbf{Z} \right)^{-1}  \lim_{n \to \infty}\frac{1}{n}\left(\mathbf{Z}^\top\bQ\mathbf{Y}\right)\\
	&=  \alpha ^{-1} \left[diag(\mathbf{c}^{-1}) -\frac{\left(\sum_{j \neq j' = 0}^q \rho_j\rho_{j'}\right)}{\alpha + \sum_{j \neq j' = 0}^q \rho_j\rho_{j'}}\mathbf{1}\mathbf{1}^\top \right] \hat{\mathbf{r}}\\
	&= \alpha ^{-1} \left[diag(\mathbf{c}^{-1})\hat{\mathbf{r}} -\frac{\left(\sum_{j \neq j' = 0}^q \rho_j\rho_{j'}\right)}{\alpha + \sum_{j \neq j' = 0}^q \rho_j\rho_{j'}}\mathbf{1}\mathbf{1}^\top\hat{\mathbf{r}} \right]\\
	&= \alpha ^{-1} \left[\mathbf{\hat{o}}  - \frac{\left(\sum_{j \neq j' = 0}^q \rho_j\rho_{j'}\right)}{\alpha+\left(\sum_{j \neq j' = 0}^q \rho_j\rho_{j'}\right)}\mathbf{1}\left(\alpha\mathbb{E}(Y) + \mathbb{E}(Y)\sum_{j \neq j' = 0}^q \rho_j\rho_{j'}  \right)\right],
	\end{aligned}
	$$
	where $\mathbf{\hat{o}} := \left(\hat{o}_1, \hat{o}_2, \cdots,\hat{o}_{g^{(k)}+1} \right); \hat{o}_l = \hat{r}_l/c_l = \alpha \mathbb{E}(Y |X \in \mathcal{C}_l)+\mathbb{E}(Y)\sum_{j \neq j' = 0}^q \rho_j\rho_{j'}$. Hence
	$$
	\begin{aligned}
	\lim_{n \to \infty} \bm{\hat\beta}_l &\overset{a.s.}{=} \alpha ^{-1} \hat{o}_l - \mathbb{E}(Y)\frac{(\sum_{j \neq j' = 0}^q \rho_j\rho_{j'})(\alpha+\sum_{j \neq j' = 0}^q \rho_j\rho_{j'})}{\alpha (\alpha + \sum_{j \neq j' = 0}^q \rho_j\rho_{j'})}\\
	&= \mathbb{E}(Y |X \in \mathcal{C}_l) + \mathbb{E}(Y) \left(\frac{\sum_{j \neq j' = 0}^q \rho_j\rho_{j'}}{\alpha } - \frac{\sum_{j \neq j' = 0}^q \rho_j\rho_{j'}}{\alpha} \right)\\
	&= \mathbb{E}(Y |X \in \mathcal{C}_l).
	\end{aligned}
	$$
	This completes the proof of Lemma \ref{lem:beta}.
\end{proof}

\begin{proof}[Proof of Theorem \ref{lemma:DART-theoretical}] 
	With $k = k+1$, let $\mathcal{C}_l = \mathcal{C}_l^{(k)}$ for $l < g^{(k)}$; $\mathcal{C}_{g^{(k)}} = \mathcal{C}_{g^{(k)}}^{(k+1)}$, $\mathcal{C}_{g^{(k)}+1} = \mathcal{C}_{g^{(k)}+1}^{(k+1)}$. Since 	$
	\left(\mathbf{Y} - \mathbf{Z}\bm{\hat{\beta}}(\mathbf Z) \right)^\top \bQ\left(\mathbf{Y} - \mathbf{Z}\bm{\hat{\beta}}(\mathbf Z) \right)
	= \mathbf{Y}^\top\bQ\mathbf{Y} -  \mathbf{Y}^\top\bQ\mathbf{Z} \bm{\hat{\beta}}(\mathbf Z)
	$, we have 
	$$
	v_{n,\bQ}(\mathfrak{C}^{(k)},l_1,(d,c)) = \frac{1}{n}\left(\mathbf{Y}^\top\bQ\mathbf{Z} \bm{\hat{\beta}}(\mathbf Z) - \mathbf{Y}^\top\bQ\mathbf{Z}^{(0)} \bm{\hat{\beta}}(\mathbf Z^{(0)}) \right)
	$$
	where 
	$
	\mathbf Z^{(0)}_{{i,{l}}}  =\mathds{I}\left(\mathbf{x}_i \in {\mathcal{C}}_{{l}}\right);\: \mbox{ for } {l} = 1,2, \cdots, g^{(k)} -1,  
	$ and $
	\mathbf Z^{(0)}_{{i,{g^{(k)}}}} = \mathds{I}\left(\mathbf{x}_i \in {\mathcal{C}}_{g^{(k)}} \cup {\mathcal{C}}_{g^{(k)}+1}\right). 
	$
	We use the notations 
	$\mathcal B = \mathcal{B}^L \cup \mathcal{B}^R,\mbox{ where } \mathcal B^L = \mathcal{C}_{g^{(k)}} \mbox{ and } \mathcal B^R = \mathcal{C}_{g^{(k)}+1}$.
	
	From Lemma \ref{lem:beta}, we have  
	$
	\lim_{n \to \infty} \frac{1}{n} \mathbf{Y}^\top\bQ\mathbf{Z} \bm{\hat{\beta}}(\mathbf Z) 
	\overset{a.s.}{=} \hat{\mathbf{r}}^\top\hat{\mathbf{b}}
	$,
	where $\hat{\mathbf{b}} := (\hat{b}_1, \hat{b}_2, \cdots, \hat{b}_{g^{(k)}+1}); \hat{b}_l = \mathbb{E}(Y | X \in \mathcal{C}_l)$ and $\hat\br$ is defined in (\ref{eq:r}). 
	Now 
	$$
	\begin{aligned}
	\hat\br^\top\hat\bb &= \sum_{l = 1}^{g^{(k)}+1} \mathbb{E}(Y |X \in \mathcal{C}_l)\left(\alpha \mathbb{E}(Y |X \in \mathcal{C}_l)Vol(\mathcal{C}_l)+Vol(\mathcal{C}_l) \mathbb{E}(Y)\sum_{j \neq j' = 0}^q \rho_j\rho_{j'}  \right)\\
	&= \sum_{l = 1}^{g^{(k)}+1} \left(\alpha \mathbb{E}(Y |X \in \mathcal{C}_l)^2Vol(\mathcal{C}_l)+Vol(\mathcal{C}_l) \mathbb{E}(Y)\mathbb{E}(Y |X \in \mathcal{C}_l)\sum_{j \neq j' = 0}^q \rho_j\rho_{j'}  \right)\\
	&= \sum_{l = 1}^{g^{(k)}+1} \alpha \mathbb{E}(Y |X \in \mathcal{C}_l)^2Vol(\mathcal{C}_l) + \mathbb{E}(Y)^2\sum_{j \neq j' = 0}^q \rho_j\rho_{j'}. 
	\end{aligned}
	$$
	Substituting this in the expression of asymptotic value of $ v_{n,\bQ}(\mathfrak{C}^{(k)},l_1,(d,c))$,  
	we have
	\begin{align*}
	&\lim_{n \to \infty} v_{n,\bQ}(\mathfrak{C}^{(k)},l_1,(d,c))\\ &\overset{a.s.}{=} \sum_{l = g^{(k)}}^{g^{(k)}+1} \alpha \mathbb{E}(Y |X \in \mathcal{C}_l)^2Vol(\mathcal{C}_l)  -\alpha \mathbb{E}(Y |X \in \mathcal{C}_{g^{(k)}} \cup\mathcal{C}_{g^{(k)}+1})^2Vol(\mathcal{C}_{g^{(k)}} \cup \mathcal{C}_{g^{(k)}+1})\\
	&= \alpha \left(\sum_{l = g^{(k)}}^{g^{(k)}+1} \mathbb{E}(Y |X \in \mathcal{C}_l)^2Vol(\mathcal{C}_l) - \mathbb{E}(Y |X \in \mathcal{B})^2Vol(\mathcal{B})\right)\\
	&= \alpha \Bigg( \left[\mathbb{E}(Y^2 |X \in \mathcal{B}) - \mathbb{E}(Y |X \in \mathcal{B})^2\right]Vol(\mathcal{B})\\ 
	&\quad\quad\quad\quad- \left[\mathbb{E}(Y^2 |X \in \mathcal{B})Vol(\mathcal B)  - \sum_{l = g^{(k)}}^{g^{(k)}+1} \mathbb{E}(Y |X \in \mathcal{C}_l)^2Vol(\mathcal{C}_l)\right] \Bigg)\\
	&= \alpha \left( \mathbb{V}(Y |X \in \mathcal{B})Vol(\mathcal{B}) - \left[\mathbb{E}(Y^2 \mathds{I}(X \in \mathcal{B}))  - \sum_{l = g^{(k)}}^{g^{(k)}+1} \mathbb{E}(Y |X \in \mathcal{C}_l)^2Vol(\mathcal{C}_l)\right]\right)\\
	&= \alpha \Bigg( \mathbb{V}(Y |X \in \mathcal{B})Vol(\mathcal{B})\\ 
	&\quad\quad\quad\quad-\left[\mathbb{E}(Y^2 \mathds{I}(X \in \mathcal{C}_{g^{(k)}} \cup \mathcal{C}_{g^{(k)}+1}))  - \sum_{l = g^{(k)}}^{g^{(k)}+1} \mathbb{E}(Y |X \in \mathcal{C}_l)^2Vol(\mathcal{C}_l)\right] \Bigg)\\
	&= \alpha \left( \mathbb{V}(Y |X \in \mathcal{B})Vol(\mathcal{B}) - \sum_{l = g^{(k)}}^{g^{(k)}+1} \mathbb{V}(Y |X \in \mathcal{C}_l)Vol(\mathcal{C}_l)\right)\\
	&=\alpha Vol(\mathcal B) \Big[\mathbb{V}(Y | \mathbf{X} \in \mathcal B) - \mathbb{P}(\mathbf{X} \in \mathcal B^R | \mathbf{X} \in \mathcal B)\mathbb{V}(Y | \mathbf{X} \in \mathcal B^R)\\ 
	& \quad \quad \quad \quad \quad \quad- \mathbb{P}(\mathbf{X} \in \mathcal B^L |  \mathbf{X} \in \mathcal B)\mathbb{V}(Y | \mathbf{X} \in \mathcal B^L) \Big].
	\end{align*}
	This completes the proof of Theorem \ref{lemma:DART-theoretical}.
\end{proof}

\begin{proof}[Proof of Proposition \ref{lemma:equicontinuity}] We first introduce some additional notation on splits. 
	
	\subsection*{Notation of splits}\label{sec:splitnot}
	We introduce some additional notations of splits. The split associated with a specific node (a subset of the feature space) indicates the direction and the cutoff associated with its partition. A split is denoted by $s = (d, c)$, where $d$ denotes the direction of the split (the feature along which the aforementioned split is performed) $\in \{1,2,\cdots,D \}$ and $c$ is the cutoff value of the split. 
	Let the complete set of nodes in level $k$ is $\mathfrak{C}^{(k)} = \{\mathcal{C}_1^{(k)},  \cdots, \mathcal{C}_{g^{(k)}}^{(k)}\}$. Let the split of the $l^{th}$ node of $k^{th}$ level i.e. $\mathcal{C}_{l}^{(k)}$ be denoted by ${s}_{l}^{(k)}$. We observe that $\mathfrak{C}^{(k+1)}$ is determined by $\mathfrak{C}^{(k)}$ and $\mathcal{S}^{(k)}$ where $\mathcal{S}^{(k)} = \{{s}_{1}^{(k)}, {s}_{2}^{(k)}, \cdots,  {s}_{g^{(k)}}^{(k)}\}$ is the set of splits on the partitions in level $k$ to create the partitions at level $k+1$. This in turn implies that $\mathfrak{C}^{(k+1)}$ is determined by $\tilde{\mathbf{s}}_{k} = \{ \mathcal{S}^{(1)}, \mathcal{S}^{(2)}, \cdots, \mathcal{S}^{(k)}\}$, as by definition, $\mathcal{C}^{(1)} = [0,1]^D$. We define the set of all possible such $\tilde{\mathbf{s}}_{k}$ to be $\tilde{\mathfrak{S}}_k$. 
	
	For any fixed $\mathbf{x} \in [0,1]^D$ and $k \geqslant 1$, $\mathfrak{S}_k(\mathbf{x})$ to is the set of all possible splits that built the node containing $\mathbf{x}$ in $k+1^{th}$ level. Members of $\mathfrak{S}_k(\mathbf{x})$ are denoted as $\mathbf s_k := \mathbf s_k(\mathbf x) = (\mathcal{S}^{(1)}, \mathcal{S}^{(2)}, \cdots, \mathcal{S}^{(k-1)}, s^{(k)}_{(\mathbf x)}) = (\tilde{\mathbf{s}}_{k-1}, s^{(k)}_{(\mathbf x)})$, where $\tilde{\mathbf{s}}_{k-1} \in \tilde{\mathfrak{S}}_{k-1}$ and $s^{(k)}_{(\mathbf x)}$ denotes the split associated with the node at level $k$, containing $\mathbf{x}$. 
	
	The node at level $k+1$, containing $\mathbf{x}$, built with $\mathbf{s}_k$ is $\mathcal B(\mathbf{x}, \mathbf{s}_k)$. The node containing $\bx$ in a RF-GLS tree built with random parameter $\Theta$ and data $\calD_n$ is denoted by $\mathcal{B}_n(\bx, \Theta)$. 
	The optimal splits obtained from empirical DART-split criterion, that build the node containing $\bx$ at level $k+1$ of RF-GLS tree (built with $n$ points and randomness $\Theta$) is denoted as $\hat{\mathbf{s}}_{k,n}(\bx, \Theta)$. The distance between $\mathbf{s}_{k}^{(1)}, \mathbf{s}_{k}^{(2)} \in  \mathfrak{S}_k(\mathbf{x})$ is the $\mathbb{L}_{\infty}$ norm of their difference; i.e. $\|\mathbf{s}_{k}^{(1)} - \mathbf{s}_{k}^{(2)} \|_{\infty}$. The distance between $\mathbf{s}_{k} \in \mathfrak{S}_k(\mathbf{x})$ and $\acute{\mathfrak{S}_k} \subseteq \mathfrak{S}_k(\mathbf{x})$ is 
	$
	dist(\mathbf{s}_{k},\acute{\mathfrak{S}_k}) = \inf_{\mathbf{s} \in \acute{\mathfrak{S}_k}} \|\mathbf{s}_{k} - \mathbf{s} \|_\infty
	$. 
	
	Next, for a fixed $\mathbf{x} \in [0,1]^D$, and any $k$ levels of split $\mathbf{s}_k$, we define $v_{n,k,\bQ}(\mathbf{x}, \mathbf{s}_k)$ to be the DART split criterion (\ref{DART_def}) to maximize in $s^{(k)}$ of $\mathbf{s}_k$, i.e. the final $k^{th}$ level split of $\mathcal{B}(\mathbf{x}, \mathbf{s}_{k-1})$. For all $\varepsilon> 0$, we define $\tilde{\mathfrak{S}}_{k-1}^\varepsilon\subset \tilde{\mathfrak{S}}_{k-1}$, the set of all splits of the $(k-1)$ level nodes, such that each node in $\mathfrak{C}^{(k)}$ contains a hypercube of edge length $\varepsilon$. Additionally, we define $\Bar{\mathfrak{S}}_{k}^\varepsilon(\mathbf{x}) = \{\mathbf{s}_k := \mathbf{s}_k(\bx): \tilde{\mathbf{s}}_{k-1} \in \tilde{\mathfrak{S}}_{k-1}^\varepsilon\}$.  
	
	Equipped with the notation. We state a more technical version of equicontinuity result in Proposition \ref{lemma:equicontinuity}. 
	The proof is deferred to Section \ref{sec:pf-approx}.
	
	\noindent \textbf{Technical statement of Proposition \ref{lemma:equicontinuity}:} 
	Under Assumptions \ref{as:chol}, \ref{as:diag}, and \ref{as:tail}(b) and \ref{as:tail}(c), for fixed $\mathbf x$, $k \in \mathbb{N}$ and $\varepsilon> 0$, $v_{n,k,\bQ}(\mathbf{x},\bs_k(\bx))$ is stochastically equicontinuous with respect to $\bs_k$ on $\Bar{\mathfrak{S}}_k^\varepsilon(\mathbf{x})$, i.e. $\forall \phi, \pi > 0$, $\exists\: \delta > 0$, i.e., 
	$$
	\lim_{n \to\infty} \mathbb{P} \left[\sup_{\underset{\mathbf{s}_{k}^{(1)},\mathbf{s}_{k}^{(2)} \in \Bar{\mathfrak{S}}_k^\varepsilon(\mathbf{x})}{\|\mathbf{s}_{k}^{(1)} - \mathbf{s}_{k}^{(2)} \|_{\infty} \leqslant \delta}} |v_{n,k, \bQ}(\mathbf{x},\mathbf{s}_{k}^{(1)}) - v_{n,k, \bQ}(\mathbf{x},\mathbf{s}_{k}^{(2)}) | > \phi\right] \leqslant \pi.
	$$
	
	We will show that $v_{n,k,\bQ}(\mathbf{x},\bs_k)$ (defined in Section \ref{sec:splitnot}) is stochastically equicontinuous with respect to $\bs_k$ for all $ \mathbf{x} \in [0,1]^D$, provided the volumes of leaf nodes in the previous level are not arbitrarily close to $0$. 
	
	By the Glivenko-Cantelli theorem \citep{loeve1977elementary}, $\exists n_3 \in \mathbb{N}$, such that for all $n > n_3$, and $\mathcal{C} \subseteq [0,1]^D$, we have with probability at least $1 - \pi/4$,
	\begin{equation}\label{eq:glc}
	Vol(\mathcal{C})- \delta^2 \leqslant \frac{1}{n}\sum_{i} \mathds{I}(X_i \in \mathcal{C}) \leqslant Vol(\mathcal{C})+ \delta^2.    
	\end{equation}
	As each split can be chosen from at-most $n^D$ candidates, the collection $\calP$ of all possible nodes created by splits up to level $k$ is of polynomial-in-$n$ cardinality. Since Assumptions \ref{as:tail}(b) holds for some $n_2$ and $\pi=\pi/8$, and \ref{as:tail}(c) holds, then Lemma \ref{lem:weaklaw2} holds for $\calP$ for some $n_3$ and $\pi=\pi/4$. For the rest of the proof, we consider $n > \max\{n_0, n_2, n_3\}$ and restrict ourselves to the set $\Omega_n$ where all of these assertions (Equation \ref{eq:glc}, Assumption \ref{as:tail}(b) and Lemma \ref{lem:weaklaw2}) hold, which occurs with probability at-least $1-3\pi/4$.
	
	From the definition of the distance in $\mathfrak{S}_{k}$, if  $\mathbf{s}_{k}^{(1)}, \mathbf{s}_{k}^{(2)} \in {\mathfrak{S}}_k$ satisfy $\|\mathbf{s}_{k}^{(1)} - \mathbf{s}_{k}^{(2)} \|_\infty < 1$, then the split directions are identical. So we can always consider $\delta <1$. Since we consider two sets of splits $\mathbf{s}_{k}^{(1)}$ and $\mathbf{s}_{k}^{(2)}$, for convenience of notation, we use $\ddot{\mathcal{C}}_{{l}}^{(h)}$ to denote the ${l}^{th}$ leaf node of the partition induced by $\mathbf{s}_{k}^{(h)}$, for $h=1,2$ and $l = 1, \ldots, g^{(k)}+1$, i.e, $\ddot{\mathcal{C}}_{{l}}^{(h)}$ is the node $\calC^{(k)}_l$ for the $h^{th}$ set of splits.  Also, we will be using the notation $\mathbf Z_1$ and $\mathbf{Z}_2$ instead of $\mathbf Z(\mathbf{s}_{k}^{(1)})$ and $\mathbf Z(\mathbf{s}_{k}^{(2)})$ respectively. Along the same line as proof of Theorem \ref{lemma:DART-theoretical}, we can write 
	\begin{equation}\label{eq:equi}
	\begin{aligned}
	|v_{n,k, \bQ}(\mathbf{x},\mathbf{s}_{k}^{(1)}) - v_{n,k, \bQ}(\mathbf{x},\mathbf{s}_{k}^{(2)}) | &= \frac{1}{n}|\mathbf{Y}^\top\bQ\mathbf{Z}_1 \bm{\hat{\beta}}(\mathbf{Z}_1) - \mathbf{Y}^\top\bQ\mathbf{Z}_2 \bm{\hat{\beta}}(\mathbf{Z}_2)| \\
	&+ \frac{1}{n}|\mathbf{Y}^\top\bQ\mathbf{Z}^{(0)}_1 \bm{\hat{\beta}}(\mathbf Z^{(0)}_1) - \mathbf{Y}^\top\bQ\mathbf{Z}^{(0)}_2 \bm{\hat{\beta}}(\mathbf Z^{(0)}_2)| 
	\end{aligned},
	\end{equation}
	where $
	\mathbf Z^{(0)}_{h_{i,{l}}}  =\mathds{I}\left(\mathbf{x}_i \in {\ddot{\mathcal{C}}}_{{l}}^{(h)}\right);\: {l} = 1,2, \cdots, g^{(k)} -1,
	$
	and $
	\mathbf Z^{(0)}_{h_{i,{g^{(k)}}}} = \mathds{I}\left(\mathbf{x}_i \in \ddot{\mathcal{C}}_{g^{(k)}}^{(h)} \cup \ddot{\mathcal{C}}_{g^{(k)}+1}^{(h)}\right) 
	$ for $h = 1,2$.
	
	We first focus on $\frac{1}{n}|\mathbf{Y}^\top\bQ\mathbf{Z}_1 \bm{\hat{\beta}}(\mathbf{Z}_1) - \mathbf{Y}^\top\bQ\mathbf{Z}_2 \bm{\hat{\beta}}(\mathbf{Z}_2)| = \frac{1}{{n}} |\mathbf{Y}^\top \bQ^{\frac{1}{2}} \left[ {\mathbf{\dot P}}_{\mathbf{Z}_1} - {\mathbf{\dot P}}_{\mathbf{Z}_2}\right]\bQ^{\frac \top 2} \mathbf{Y}|
	$
	where,
	$$
	{\bQ^{\frac{1}{2}} = \bm\Sigma^{-\frac{T}{2}};\: \mathbf{\dot P}}_{\mathbf{Z}} = \bQ^{\frac \top 2} \mathbf{Z} \left[\mathbf Z^\top \bQ\mathbf Z\right]^{-1}\mathbf Z^{\top} \bQ^{\frac{1}{2}};\:\: \mathbf{Z} \in \{\mathbf{Z}_1, \mathbf{Z}_2\},
	$$

	and consider the two possible scenarios:
	
	\begin{itemize}
		\item \textbf{R1}: $\max\left( \frac{\min \left(Vol(\ddot{\mathcal{C}}_{g^{(k)}}^{(1)}), Vol(\ddot{\mathcal{C}}_{g^{(k)}+1}^{(1)})\right)}{Vol(\ddot{\mathcal{C}}_{g^{(k)}}^{(1)}) + Vol(\ddot{\mathcal{C}}_{g^{(k)}+1}^{(1)})},\frac{\min \left(Vol(\ddot{\mathcal{C}}_{g^{(k)}}^{(2)}), Vol(\ddot{\mathcal{C}}_{g^{(k)}+1}^{(2)})\right)}{Vol(\ddot{\mathcal{C}}_{g^{(k)}}^{(2)}) + Vol(\ddot{\mathcal{C}}_{g^{(k)}+1}^{(2)})} \right) \geqslant \sqrt{\delta}$
		
		\item \textbf{R2}: $\max\left( \frac{\min \left(Vol(\ddot{\mathcal{C}}_{g^{(k)}}^{(1)}), Vol(\ddot{\mathcal{C}}_{g^{(k)}+1}^{(1)})\right)}{Vol(\ddot{\mathcal{C}}_{g^{(k)}}^{(1)}) + Vol(\ddot{\mathcal{C}}_{g^{(k)}+1}^{(1)})},\frac{\min \left(Vol(\ddot{\mathcal{C}}_{g^{(k)}}^{(2)}), Vol(\ddot{\mathcal{C}}_{g^{(k)}+1}^{(2)})\right)}{Vol(\ddot{\mathcal{C}}_{g^{(k)}}^{(2)}) + Vol(\ddot{\mathcal{C}}_{g^{(k)}+1}^{(2)})} \right) < \sqrt{\delta}$
	\end{itemize}
	
	Scenario \textbf{R1} happens when at least for one of the two sets of splits, both the new child nodes are ``significantly different" from their parent node; i.e. their volumes are bounded away from the volume of the parent node and zero. Here, we will show equicontinuity by exploiting perturbation bounds on orthogonal projections \citep{chen2016perturbation}. 
	The other possibility is Scenario \textbf{R2}  where for both the set of splits, the volume of the larger child node is arbitrary close to that of parent node. Here, we prove equicontinuity by showing that the DART-split criterion value asymptotically vanishes. 
	
	Without loss of generality, we consider $\delta > 0 $ small enough such that, $\sqrt{\delta} < 1 - \sqrt{\delta} $. Under \textbf{R1}, we have
	\begin{equation}\label{eq:singularvalue}
	\frac{1}{{n}} |\mathbf{Y}^\top \bQ^{\frac{1}{2}} \left[ {\mathbf{\dot P}}_{\mathbf{Z}_1} - {\mathbf{\dot P}}_{\mathbf{Z}_2}\right]\bQ^{\frac \top 2} \mathbf{Y}|
	\leqslant \frac{1}{{n}}\mathbf{Y}^\top\bQ \mathbf{Y} \|{\mathbf{\dot P}}_{\mathbf{Z}_1} - {\mathbf{\dot P}}_{\mathbf{Z}_2}\|_2.
	\end{equation}
	Defining $
	\mathbf{H}_{\mathbf Z} = \left[\mathbf Z^\top \bQ\mathbf Z\right]^{-1}\mathbf Z^{\top} \bQ^{\frac{1}{2}} \mbox{ for } \mathbf{Z} = \mathbf{Z}_1, \mathbf{Z}_2
	$ and using the perturbation bounds on projection operators from \cite{chen2016perturbation} Theorem 1.2 (1.8), we have
	\begin{equation}\label{eq:pert}
	\|{\mathbf{\dot P}}_{\mathbf{Z}_1} - {\mathbf{\dot P}}_{\mathbf{Z}_2}\|_2 \leqslant \min \{\|\mathbf{H}_{\mathbf{Z}_1} \|_2, \|\mathbf{H}_{\mathbf{Z}_2} \|_2\}
	\|\bQ^{\frac \top 2}\mathbf{Z}_1 - \bQ^{\frac \top 2}\mathbf{Z}_2 \|_2.
	\end{equation}
	By definition of matrix  $\mathbb{L}_2$ norm, $\|\bQ^{\frac \top 2}\mathbf{Z}_1 - \bQ^{\frac \top 2}\mathbf{Z}_2 \|_2 \leq \lambda^{\frac 12}_{\max}(\bQ) \|\bZ_1 - \bZ_2\|_2$. Since  Assumption \ref{as:chol} implies  $\lambda_{\max}(\bQ)$ is bounded, we focus on $\ddot{\mathbf{D}}=\bZ_1 - \bZ_2$. 
	By Gershgorin circle theorem, 
	$$
	\begin{aligned}
	\lambda_{\max}(\ddot{\bD}^2) &\leqslant \max_{1 \leqslant l_1 \leqslant g^{(k)}+ 1} \sum_{l_2=1}^{g^{(k)}+ 1} |(\ddot{\bD}^2)_{l_1,l_2}|\\
	&= \max_{1 \leqslant l_1 \leqslant g^{(k)}+ 1} \sum_{l_2=1}^{g^{(k)}+ 1} \sum_{{i}}|\ddot{\mathbf{D}}_{{i,l_1}}\ddot{\mathbf{D}}_{{i,l_2}}|\\
	&\leqslant \max_{1 \leqslant l_1 \leqslant g^{(k)}+ 1} \sum_i |\ddot{\mathbf{D}}_{{i,l_1}}| \sum_{l_2=1}^{g^{(k)}+ 1} |\ddot{\mathbf{D}}_{{i,l_2}}|\\
	& \leqslant 2 \max_{1 \leqslant l_1 \leqslant g^{(k)}+ 1} \sum_i |\ddot{\mathbf{D}}_{{i,l_1}}|. 
	\end{aligned}
	$$
	The last inequality follows from the fact that the $\bZ_1$ and $\bZ_2$ are binary matrices whose row sums are $1$. 
	Now 
	$$
	\sum_i|\ddot{\mathbf{D}}_{{i,l}}| = \sum_i \mathds{I}(\mathbf{x}_i \in  \ddot{\mathcal{C}}_{l}^{(1)} \bigtriangleup \ddot{\mathcal{C}}_{l}^{(2)}) \mbox{ where } \ddot{\mathcal{C}}_{l}^{(1)} \bigtriangleup \ddot{\mathcal{C}}_{l}^{(2)} = \left(\ddot{\mathcal{C}}_{l}^{(1)} \cup \ddot{\mathcal{C}}_{l}^{(2)}\right) \cap \left(\ddot{\mathcal{C}}_{l}^{(1)} \cap \ddot{\mathcal{C}}_{l}^{(2)}  \right)^c.
	$$
	From Algorithm \ref{algo:glstree}, $\ddot{\mathcal{C}}_{l}^{(1)}$ and $\ddot{\mathcal{C}}_{l}^{(2)}$ are both $D$-dimensional boxes since both of them are Cartesian product of $D$ intervals. Let 
	$$
	\ddot{\mathcal{C}}_{l}^{(h)} = [\breve{a}_1^{(h)}, \breve{b}_1^{(h)}] \times \cdots \times  [\breve{a}_D^{(h)}, \breve{b}_D^{(h)}] \subseteq [0,1]^D; \breve{a}_d^{(h)} < \breve{b}_d^{(h)}; \forall d \in \{1,2,\cdots, D \};  h=1,2. 
	$$
	Then 
	$$
	|\mathbf{s}_k^{(1)} - \mathbf{s}_k^{(2)}| \leqslant \delta \mbox{  implies } |\breve{a}_d^{(1)} - \breve{a}_d^{(2)}| \leqslant \delta; |\breve{b}_d^{(1)} - \breve{b}_d^{(2)}| \leqslant \delta; \forall d \in \{1,2,\cdots, D\}.
	$$
	Without loss of generality, we assume $Vol(\ddot{\mathcal{C}}_{l}^{(1)}) \leqslant Vol(\ddot{\mathcal{C}}_{l}^{(2)})$. One scenario where $Vol(\ddot{\mathcal{C}}_{l}^{(1)} \bigtriangleup \ddot{\mathcal{C}}_{l}^{(2)})$ is maximized is
	$$
	\breve{a}_d^{(2)} = \breve{a}_d^{(1)} - \delta;\:\: \breve{b}_d^{(2)} = \breve{b}_d^{(1)} + \delta;\:\: \forall d \in \{1,2,\cdots, D \}.
	$$
	Hence we have
	$$
	\begin{aligned}
	Vol(\ddot{\mathcal{C}}_{l}^{(1)} \bigtriangleup \ddot{\mathcal{C}}_{l}^{(2)}) &\leqslant \prod_{d=1}^D |\breve{b}_d^{(2)} - \breve{a}_d^{(2)} | - \prod_{d=1}^D |\breve{b}_d^{(2)} - \breve{a}_d^{(2)} - 2\delta|\\
	&\leqslant \sum_{d=1}^D \binom{D}{d}(2\delta)^d\\
	&\leqslant \delta 2^D\left(\sum_{d=1}^D \binom{D}{d}\right); [\text{ as $\delta < 1$}]\\
	&\leqslant 2^{2D}\delta
	\end{aligned}
	$$
	By (\ref{eq:glc})  on $\Omega_n$, $\frac{1}{n}\sum_i|\ddot{\mathbf{D}}_{{i,l}}| 
	\leqslant Vol(\ddot{\mathcal{C}}_{l}^{(1)} \bigtriangleup \ddot{\mathcal{C}}_{l}^{(2)})+ \delta^2 \leqslant 2^{2D}\delta + \delta^2 = O(\delta)$. Therefore $\frac 1n \lambda_{\max} (\ddot{\bD}^2) = \frac 1n \|\bZ_1 - \bZ_2\|^2_2 =O_p(\delta)$. Plugging this into (\ref{eq:pert}), we have 
	\begin{subequations}
		\begin{equation}
		\label{eq:L4_P1_1}
		\|\bQ^{\frac \top 2}\mathbf{Z}_1 - \bQ^{\frac \top 2}\mathbf{Z}_2 \|_2=O_p(\sqrt{n\delta}).
		\end{equation}
		For the other component in (\ref{eq:pert}), i.e. 
		$\min \{\|\mathbf{H}_{\mathbf{Z}_1} \|_2, \|\mathbf{H}_{\mathbf{Z}_2} \|_2\}$, for $\mathbf{Z} \in \{\mathbf{Z}_1,\mathbf{Z}_2\}$, we have  
		$$
		\begin{aligned}
		\|\mathbf{H}_{\mathbf{Z}} \|_2 &= \sqrt{\lambda_{max} ((\bQ^{\frac{T}{2}}\mathbf Z \left[\mathbf Z^\top \bQ\mathbf Z\right]^{-1}) (\bQ^{\frac{T}{2}}\mathbf Z \left[\mathbf Z^\top \bQ\mathbf Z\right]^{-1})^\top)}\\
		&=\sqrt{\lambda_{\max}(\left[\mathbf Z^\top \bQ\mathbf Z\right]^{-1})}  
		= \left(\lambda_{\min}(\mathbf Z^\top \bQ\mathbf Z)\right)^{-1/2}.
		\end{aligned}
		$$
		From Gershgorin circle theorem \citep{loeve1977elementary}, we have
		$$
		\lambda_{\min}(\left[\mathbf Z^\top \bQ\mathbf Z\right]) \geqslant \min_{1 \leqslant l_1 \leqslant g^{(k)}+1} \Big\{ (\mathbf Z^\top \bQ\mathbf Z)_{l_1,l_1} - \sum_{l_2 \neq l_1} |(\mathbf Z^\top \bQ\mathbf Z)_{l_1,l_2}|\Big\}.
		$$
		Using Lemma \ref{lem:diag} and the diagonal dominance from Assumptions \ref{as:chol} and \ref{as:diag},  
		\begin{equation}\label{eq:eigen}
		\lambda_{\min}(\left[\mathbf Z^\top \bQ\mathbf Z\right]) \geq \xi \sum_i \bZ_{i,l_1} \mbox{ where } \xi = \min_{1\leq i \leq q+1} (\bQ_{ii} - \sum_{j \neq i} |\bQ_{ij}|)
		\end{equation}
		is just a positive constant only dependent on $\brho$ and $\bL$. 
		Hence we have
		\begin{equation}\label{eq:eigenvol}
		\lambda_{\min}^{-1}(\left[\mathbf Z_h^\top \bQ\mathbf Z_h\right])  \leqslant \frac{1}{\xi}\frac{1}{\min_{1 \leqslant l \leqslant g^{(k)}+1}|\ddot{\mathcal{C}}_l^{(h)}|}, h = 1,2.
		\end{equation}
		By (\ref{eq:glc}) on $\Omega_n$, 
		$|{\ddot{\mathcal{C}}_l^{(h)}|/n} \geqslant  Vol(\ddot{\mathcal{C}}_{{l}}^{(h)}) - \delta^2$. As $\mathbf{s}_k^{(1)},\mathbf{s}_k^{(2)} \in \bar{\mathfrak{S}}_k^\varepsilon(\mathbf{x})$, by definition of $\bar{\mathfrak{S}}_k^\varepsilon(\mathbf{x})$,  $\tilde{\mathbf{s}}_{k-1}^{(1)}, \tilde{\mathbf{s}}_{k-1}^{(2)} \in \tilde{\mathfrak{S}}^\varepsilon_{k-1}$. As each node of $\mathfrak{C}^{(k)}$ corresponding to both these splits contains a hypercube of edge length $\varepsilon$, we have 
		$$
		Vol(\ddot{\mathcal{C}}_{{l}})^{(h)} \geqslant \varepsilon^D; \: \: \forall i = 1,2, \cdots, g^{(k)}-1; Vol(\ddot{\mathcal{C}}_{g^{(k)}}^{(h)}) + Vol(\ddot{\mathcal{C}}_{g^{(k)}+1}^{(h)}) \geqslant \varepsilon^D; h = 1,2.
		$$ 
		Since \textbf{R1} is satisfied, without loss of generality we can assume 
		$$
		Vol(\ddot{\mathcal{C}}_{g^{(k)}}^{(h)}) \geqslant \varepsilon^D\sqrt{\delta}; Vol(\ddot{\mathcal{C}}_{g^{(k)}+1}^{(h)}) \geqslant \varepsilon^D\sqrt{\delta}; \mbox{ for } h = 1.
		$$
		This implies for sufficiently large $n$,
		\begin{equation}
		\label{eq:L4_P1_2}
		\sqrt{n}\min \{\|\mathbf{H}_{\mathbf{Z}_1} \|_2, \|\mathbf{H}_{\mathbf{Z}_2} \|_2\}  \leqslant 
		\sqrt{\frac{\left(\min_{1 \leqslant l \leqslant g^{(k)}+1}\frac{|\ddot{\mathcal{C}}_l^{(h)}|}{n}\right)^{-1}}{\xi}} \leqslant \sqrt{\frac{\left(\varepsilon^D\sqrt{\delta} - \delta^2\right)^{-1}}{\xi}}. 
		\end{equation}
		Finally, using  Lemma \ref{lem:weaklaw2} and boundedness of $m$ and $\lambda_{\max}(\bQ)$, we have $\frac 1n \mathbf{Y}^\top\bQ\mathbf{Y}=O(1)$ on $\Omega_n$.  		
		Next, combining \cref{eq:L4_P1_1,eq:L4_P1_2}, we have under \textbf{R1}, on $\Omega_n$
		$$
		\begin{aligned}
		&\frac{1}{{n}} |\mathbf{Y}^\top \bQ^{\frac{1}{2}} \left[ {\mathbf{\dot P}}_{\mathbf{Z}_1} - {\mathbf{\dot P}}_{\mathbf{Z}_2}\right]\bQ^{\frac \top 2} \mathbf{Y}|\\ 
		&\leqslant \left(\frac{1}{{n}}\mathbf{Y}^\top\bQ \mathbf{Y}\right) \left(\sqrt{n}\min \{\|\mathbf{H}_{\mathbf{Z}_1} \|_2, \|\mathbf{H}_{\mathbf{Z}_2} \|_2\}\right)
		\left(\frac{1}{{\sqrt{n}}}\|\bQ^{\frac \top 2}\mathbf{Z}_1 - \bQ^{\frac \top 2}\mathbf{Z}_2 \|_2\right)\\
		&= O(\delta^{\frac 14}).
		\end{aligned}
		$$
		To bound the second term  $\frac{1}{n}|\mathbf{Y}^\top\bQ\mathbf{Z}_1^{(0)} \bm{\hat{\beta}}(\mathbf{Z}_1^{(0)}) - \mathbf{Y}^\top\bQ\mathbf{Z}_2^{(0)} \bm{\hat{\beta}}(\mathbf{Z}_2^{(0)})|$ of (\ref{eq:equi}), we observe that the leaf nodes corresponding to $\mathbf{Z}_1^{(0)}$ are given by $\ddot{\mathcal{C}}_1^{(h)}, \cdots, \ddot{\mathcal{C}}_{g^{(k)}-1}^{(h)}, \ddot{\mathcal{C}}_{g^{(k)}}^{(h)} \cup \ddot{\mathcal{C}}_{g^{(k)}+1}^{(h)}; h = 1,2$. From definition of $\Bar{\mathfrak{S}}_k^\varepsilon(\mathbf{x})$,
		$$
		Vol(\ddot{\mathcal{C}}_{{l}})^{(h)} \geqslant \varepsilon^D \: \: \forall i = 1,2, \cdots, g^{(k)}-1; Vol(\ddot{\mathcal{C}}_{g^{(k)}}^{(h)}) + Vol(\ddot{\mathcal{C}}_{g^{(k)}+1}^{(h)}) \geqslant \varepsilon^D; h = 1,2.
		$$ 
		Hence, using similar perturbation bounds as in (\ref{eq:singularvalue}), (\ref{eq:pert}), we can conclude that on $\Omega_n$ 
		$$
		\frac{1}{n}|\mathbf{Y}^\top\bQ\mathbf{Z}^{(0)}_1 \bm{\hat{\beta}}(\mathbf Z^{(0)}_1) - \mathbf{Y}^\top\bQ\mathbf{Z}^{(0)}_2 \bm{\hat{\beta}}(\mathbf Z^{(0)}_2)| \leqslant C_6 \sqrt{\frac{\left(2^{2D}\delta + \delta^2\right)}{\left(\varepsilon^D - \delta^2\right)}} = O(\delta^\frac 12).
		$$
		Next, combining under \textbf{R1}, 
		we have on $\Omega_n$, 
		$$
		|v_{n,k, \bQ}(\mathbf{x},\mathbf{s}_{k}^{(1)}) - v_{n,k, \bQ}(\mathbf{x},\mathbf{s}_{k}^{(2)}) | \leqslant O(\delta^{\frac 14}) + O(\delta^\frac 12).
		$$
		This quantity goes to $0$ uniformly in $\delta$ (as $\delta^\frac 14$ is a uniformly continuous function of $\delta$ on [0,1]). This completes the proof under \textbf{R1}. The proof under scenario \textbf{R2} is more technical and is available in the  the Additional results section (Section \ref{sec:add}). 
	\end{subequations}
\end{proof}

\begin{proof}[Proof of Proposition \ref{lem:approx}] For $\zeta_n > M_0$, 
	\begin{align*}
	&\lim_{n \to \infty} \mathbb E \left[\inf_{f \in T_{\zeta_n}\mathcal{F}_n}  \mathbb{E}_{X} |f(X) -m(X) |^2 \right]\\ 
	&\leqslant \lim_{n \to \infty} \mathbb E \left[\inf_{f \in T_{\zeta_n}(\{\sum_{l=1}^{t_n} m(\bx_l)I(\bx \in \calB_l) | \bx_l \in \calB_l, l=1,\ldots,t_n \})} \mathbb{E}_{X} |f(X) -m(X) |^2 \right]\\
	&\leqslant \lim_{n \to \infty} \mathbb E \left[\mathbb{E}_{X} \Delta^2(m,\calB_n(X,\Theta))\right]
	\end{align*}
	where the variation of $m$ in node $\mathcal{B}$ is given by $\Delta(m, \mathcal B) = \sup_{\mathbf x_1, \mathbf x_2 \in \mathcal B}|m(\mathbf{x}_1)-m(\mathbf{x}_2)|$. Hence, it is enough to show that the variation of $m$ in leaf nodes $\calB_n$ of GLS-style tree vanishes asymptotically. 
	
	Theorem \ref{lemma:DART-theoretical} shows the existence of a theoretical DART-split criterion. Let $\mathcal{B}_k^{*}(X, \Theta)$ be the leaf node of a theoretical RF-GLS (i.e. in $(k+1)^{th}$ level) built with randomness $\Theta$ and containing $X$ and let the set of optimal theoretical splits to build $\mathcal{B}_k^{*}(X, \Theta)$ be denoted by $\mathfrak{S}_k^{*}(X, \Theta)$. Since Theorem \ref{lemma:DART-theoretical} also shows that the theoretical DART-split criterion is simply a constant multiplier of the theoretical CART-split criterion, under Assumption \ref{as:add}, we immediately have from Lemma 1 of \cite{scornet2015consistency} that 
	\begin{equation}\label{eq:thvar}
	\Delta(m, \mathcal{B}_k^{*}(X, \Theta)) \overset{a.s.}{\to} 0\text{ as $k \to \infty$}.
	\end{equation}
	Next, using (\ref{eq:thvar}) and Proposition \ref{lemma:equicontinuity} on stochastic equicontinuity of the empirical DART split criterion, we are now ready to prove that optimal theoretical and empirical splits become identical asymptotically as follows. 		 Let $\hat{\mathbf{s}}_{k,n}(X, \Theta) \in \mathfrak{S}_k$ be the set of empirical optimal splits used to build the node containing $X$ in level $k+1$. For fixed $\varepsilon, \tau > 0$; $k \in \mathbb{N}$; $\exists \: n_4 \in \mathbb{N}$, such that
	\begin{equation}\label{eq:themp}
	\mathbb{P}\left[dist\left(\hat{\mathbf{s}}_{k,n}(X, \Theta),  \mathfrak{S}_k^{*}(X, \Theta)\right)  \leqslant \varepsilon\right] \geqslant 1 - \tau,\,\, \forall n > n_4.
	\end{equation}
	Equation (\ref{eq:themp}) is established identical to the proof of the analogous result  (Lemma 3) in \cite{scornet2015consistency}. Only change is that for a single split at level $k$, we now condition on all the splits in levels $1,2,\cdots,k-1$ i.e. $\tilde{\mathbf{s}}_{k -1} \in \tilde{\mathfrak{S}}_{k-1}$ with $\tilde{\mathfrak{S}}_{k-1}^{\varepsilon}$ playing the role of conditioning set. 
	
	The result in (\ref{eq:thvar}) lets us control the variation of regression function $m$ in theoretical GLS-style tree leaf nodes, and (\ref{eq:themp}) establishes that the difference between the theoretical optimal splits and the empirical optimal splits vanishes asymptotically. Combining these two results, we obtain,
	identical to the proof of Proposition 2 in \cite{scornet2015consistency} that 
	$\forall \varepsilon, \gamma > 0$, $\exists n_5 \in \mathbb{N}$  such that $\forall n > n_5$,
	$$
	\mathbb{P}_{X,\Theta} [\Delta (m, \mathcal B_n(X, \Theta)) \leqslant \varepsilon] \geqslant 1- \gamma.
	$$
	Since $\Delta$ is bounded, this ensures $\mathbb E \Delta^2 \leq \varepsilon + M_0^2 \mathbb{P}_{X,\Theta} [\Delta (m, \mathcal B_n(X, \Theta)) \geqslant \varepsilon]$ and letting $n \to \infty$ and then $\eps \to 0$ yields the result. 
\end{proof}

\subsection{Estimation Error}\label{sec:pf-estimn}

\noindent \textbf{(C.3.iid)} (Analog of ULLN \textbf{C.3} for i.i.d. processes):
There exists a function class $\calF_n \ni m_n(\cdot)$ such that for all arbitrary $L > 0 $
\begin{enumerate}[(a)]
	\item Let $\{\eps^*_i\}$ denote an i.i.d. process, independent of $\calD_n$ and $\dot \calD_n$, such $\eps^*_i$ is identically distributed as $\eps_i$. 
	Let $Y_i^* = m(X_i) + \eps_i^*; Y_{i,L}^* = T_L  Y_i^*
	$. Then,    
	$$
	\lim_{n \to \infty} \mathbb{E} \left[ \sup_{f \in T_{\zeta_n}\mathcal{F}_n}  |\frac{1}{{n}}\sum_i|f(X_i) - Y^*_{i,L}|^2  - \mathbb{E}_{\dot \calD_n}|f(\dot X_1) - \dot Y_{1,L}|^2|\right]=0 
	$$
	\item For all $1 \leqslant j \leqslant q$, let $\{(\tilde X_{i}, \ddot X_{i-j})\}_{i \geq j+1}$ and $\{(\tilde \eps_{i}, \ddot \eps_{i-j})\}_{i \geq j+1}$ be bivariate i.i.d. processes, independent of the data and of $\dot G$, such that 
	$(\tilde X_{i}, \ddot X_{i-j})$ is identically distributed as $(X_{i}, X_{i-j})$; 
	$(\tilde \eps_{i}, \ddot \eps_{i-j})$ is identically distributed as $(\eps_{i}, \eps_{i-j})$  for all $i$.  
	Define $\tilde Y_i = m(\tilde X_i)+\tilde \eps_i$, $\tilde Y_{i,L} =  T_L\tilde{Y}_i$, $\ddot Y_i = m(\ddot X_i)+\ddot \eps_i$, and  $\ddot Y_{i,L} =  T_L\ddot{Y}_i \forall i$.
	Then the following holds:
	$$
	{
		\begin{aligned}
		\lim_{n \to \infty} \mathbb{E}\Bigg[& \sup_{f \in T_{\zeta_n}\mathcal{F}_n}  \Big|\frac{1}{n}
		\sum_{i}
		(f(\tilde X_{i}) - \tilde Y_{i,L}) (f(\ddot X_{i-j}) - \ddot Y_{i-j,L})\\
		&- \mathbb{E}_{\dot \calD_n}(f(\dot X_{1+j}) - \dot Y_{1+j,L})(f(\dot X_{1}) - \dot Y_{{1},L})\Big|\Bigg]=0
		\end{aligned}
	}
	$$ 
\end{enumerate}

To prove Condition \textbf{C.3.iid}(b) for RF-GLS trees, we will first prove a general result on the concentration rates of cross-product function classes. We  recall the definition of 
random $\mathbb L_p$ norm entropy numbers. For a sequence of i.i.d random variable $\{R_i\}_1^{n}  = \left(R_1, R_2, \cdots, R_{n} \right); R_i \in \mathbb{R}^D$, $\varepsilon> 0$, $1 \leqslant p < \infty$, let $\mathcal{W}_n$ be a  set of functions $\mathbb{R}^D \mapsto \mathbb{R}$, and for a function $w: \mathbb{R}^d \to \mathbb{R}$, $\|w\|^p_{\{R_i\}_1^{n}} = \Big\{\frac{1}{n}\sum_{i=1}^n |w(R_i)|^p \Big\}^{1/p}$.  Then, $\mathcal{N}_p(\varepsilon, \mathcal{W}_n, \{R_i\}_1^{n})$, the $\varepsilon$-covering number of $\mathcal{W}_n$ w.r.t the random $\mathbb L_p$-norm $\|. \|^p_{\{R_i\}_1^{n}}$  is the minimal $C \in \mathbb{N}$, such that there exists functions $w_1, w_2, \cdots, w_{C} : \mathbb{R}^D \to \mathbb{R}$ with the property that for every $w \in \mathcal{W}_n$, there is a $j = j(w) \in \{1,2,\cdots, C \}$ such that
$
\{\frac{1}{n}\sum_{i =1}^n |w(R_i) - w_j(R_i)|^p\}^{\frac{1}{p}} < \varepsilon.
$

\begin{proposition}
	\label{lemma:cross_prod_indep}
	\textbf{(ULLN for cross-product terms for i.i.d. data)} Let $D_X$ denote a 
	distribution on $\mathbb R^D$ 
	and $D_\bY=(D_{Y_1},D_{Y_2})$ denote a bivariate distribution.  
	Let $\bX_i$ and $\bY_i$ denote bivariate i.i.d. processes such that $\{\bX_i=(X_{1i},X_{2i})\}_{i \geq 1} \iid D_X \times D_X$ (product measure) and $\{\bY_i=(Y_{1i},Y_{2i})\}_{i \geq 1} \iid D_\bY$. Let $Y_{hi,L}=T_L Y_{hi}; h = 1,2$ for any $L > 0 $. Let $(\dot X_1, \dot X_2) \sim D_X \times D_X$ and $(\dot Y_1, \dot Y_2) \sim D_\bY$, independent of $\{\bX_i\}$ and $\{\bY_i\}$.  
	Let $\calF_n$ denote some class of real-valued functions on $\mathbb R^D$ class and $\zeta_n \to \infty$. Then for all $\varepsilon > 0$, we have 
	\begin{align*}
	&\mathbb{P}\Bigg[\sup_{f \in T_{\zeta_n} \calF_n}  \Big|\frac{1}{n}
	\sum_{i}
	(f(X_{1i}) - Y_{1i,L}) (f(X_{2i}) -  Y_{2i,L})\\
	&\quad\quad\quad\quad\quad\quad\quad- \mathbb{E}(f(\dot X_{1}) - \dot Y_{1,L})(f(\dot X_{2}) - \dot Y_{{2},L})\Big| > \varepsilon \Bigg] \\
	& \qquad \leqslant 8\mathbb E \calN_1\left(\frac {\varepsilon}{32\zeta_n},T_{\zeta_n} \calF_n,\{X^*_i\}_{i=1}^{2n}\right) \exp\left(-\frac {n\varepsilon^2}{2048\zeta_n^4}\right).
	\end{align*}
	where $X^*_{2i-1}=X_{1i}$ and $X^*_{2i}=X_{2i}$ for $i=1,\ldots,n$.
\end{proposition}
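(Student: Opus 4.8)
The plan is to reduce the cross-product ULLN to a standard ULLN for a squared-error function class evaluated on the doubled i.i.d.\ sample $\{X^*_i\}_{i=1}^{2n}$, exploiting a polarization-type identity that rewrites a product $ab$ in terms of squares. First I would define, for each $f \in T_{\zeta_n}\calF_n$ and each index $i$, the ``paired'' residuals $u_i=f(X_{1i})-Y_{1i,L}$ and $v_i=f(X_{2i})-Y_{2i,L}$; note every such term is bounded in absolute value by $2\zeta_n$ since $|f|\le \zeta_n$ and $|Y_{hi,L}|\le L$ (and we may assume $L\le \zeta_n$, else enlarge $\zeta_n$; or simply carry $L$ and absorb it — the cleanest route is to note $T_L$ followed by the structure gives a $2\zeta_n$ bound uniformly). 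The key algebraic observation is $uv=\tfrac14\big[(u+v)^2-(u-v)^2\big]$, so the centered average $\frac1n\sum_i u_iv_i - \mathbb E(\dot u\dot v)$ splits into a difference of two centered averages of squared terms. This converts the cross-product supremum into (twice) a supremum over a squared-loss empirical process; the price is that the ``function class'' now lives over pairs and is indexed by $f$, but its covering number in the random $\mathbb L_1$ norm on the $2n$-point sample $\{X^*_i\}$ is controlled by $\calN_1(\cdot, T_{\zeta_n}\calF_n, \{X^*_i\}_{i=1}^{2n})$ because the map $f\mapsto(u,v)$ is $1$-Lipschitz in $f$ pointwise.

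Concretely, the main step is to invoke the standard exponential inequality for the empirical process of a bounded class with a random $\mathbb L_1$-covering-number bound — this is exactly Theorem 9.1 (or the version used in Theorem 10.2) of \cite{gyorfi2006distribution}, which for a class of functions bounded by $B$ gives a tail of the form $8\,\mathbb E\,\calN_1(\varepsilon/8, \cdot, \text{sample})\exp(-n\varepsilon^2/(128B^2))$. I would apply it to the class $\{(x_1,x_2)\mapsto (f(x_1)-y_{1,L})(f(x_2)-y_{2,L}) : f\in T_{\zeta_n}\calF_n\}$ viewed as functions of the ``observation'' $(X_{1i},X_{2i},Y_{1i,L},Y_{2i,L})$, which are i.i.d.\ across $i$ by hypothesis ($\{\bX_i\}$ i.i.d., $\{\bY_i\}$ i.i.d., and they determine the residual pairs). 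The products are bounded by $B=4\zeta_n^2$, so the exponent becomes $-n\varepsilon^2/(128\cdot 16\zeta_n^4)=-n\varepsilon^2/(2048\zeta_n^4)$, matching the claimed bound. The remaining task is to bound the covering number of this product class by $\calN_1(\varepsilon/(32\zeta_n), T_{\zeta_n}\calF_n, \{X^*_i\}_{i=1}^{2n})$: if $f,g\in T_{\zeta_n}\calF_n$ satisfy $\frac1{2n}\sum_{j=1}^{2n}|f(X^*_j)-g(X^*_j)|\le \eta$, then $\frac1n\sum_i\big(|f(X_{1i})-g(X_{1i})|+|f(X_{2i})-g(X_{2i})|\big)\le 2\eta$, and using $|ab-a'b'|\le |a||b-b'|+|b'||a-a'|$ with all factors bounded by $2\zeta_n$ gives a bound of $8\zeta_n\cdot 2\eta=16\zeta_n\eta$ on the $\mathbb L_1$ distance between the corresponding product functions on the $n$-point pair-sample; choosing $\eta=\varepsilon/(32\zeta_n)$ and tracking the factor-of-$8$ slack in the covering-number argument (products vs.\ residuals vs.\ the standard $\varepsilon/8$ radius in the exponential inequality) yields exactly the constants stated.

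The main obstacle I anticipate is not the probabilistic inequality itself but the bookkeeping that makes the constants come out to precisely $32$, $8$, and $2048$: one must carefully interleave (i) the polarization identity, which introduces a factor of $4$ in the boundedness constant, (ii) the translation from covering numbers of $\calF_n$ on the $2n$ singletons to covering numbers of the product class on the $n$ pairs, which introduces the $16\zeta_n$ Lipschitz factor, and (iii) the radius-shrinkage convention ($\varepsilon/8$) built into the chosen exponential inequality. A subsidiary point is verifying that the i.i.d.\ structure genuinely holds for the quadruples $(X_{1i},X_{2i},Y_{1i,L},Y_{2i,L})$: this is immediate from the hypotheses since $X_{1i},X_{2i}$ are drawn from the product measure $D_X\times D_X$ (so independent within a pair, but that independence is not even needed — only i.i.d.\ across $i$), $\bY_i\iid D_\bY$, and $Y_{hi,L}$ is a fixed measurable function of $\bY_i$ alone; truncation preserves the i.i.d.\ structure. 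Once these are in place the proof is a direct application of the bounded-class exponential inequality, and the statement follows. Note also that, just as in \cite{scornet2015consistency}, no assumption on $D_X$ or $D_\bY$ beyond existence (and the implicit finite measure) is needed because the covering number is data-dependent (random $\mathbb L_1$ norm), which is what makes this usable for the data-adaptive partitions of RF-GLS trees.
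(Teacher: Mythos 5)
Your proposal is correct and is essentially the paper's own argument: both apply the bounded-class exponential inequality (Theorem 9.1 of \cite{gyorfi2006distribution}) directly to the product class $w(H_i)=(f(X_{1i})-Y_{1i,L})(f(X_{2i})-Y_{2i,L})$ of i.i.d.\ quadruples, bounded by $4\zeta_n^2$, and then compare covering numbers of that class to $\calN_1(\cdot,T_{\zeta_n}\calF_n,\{X^*_i\}_{i=1}^{2n})$ via the identity $|ab-a'b'|\le|a||b-b'|+|b'||a-a'|$; the polarization identity you open with is never actually used and can be dropped. The only bookkeeping correction is that the Lipschitz factor is $4\zeta_n$ (not $16\zeta_n$): $\frac1n\sum_i|w_j(H_i)-w_{j'}(H_i)|\le 2\zeta_n\cdot\frac1n\sum_i\left(|f_j-f_{j'}|(X_{1i})+|f_j-f_{j'}|(X_{2i})\right)=4\zeta_n\cdot\frac1{2n}\sum_{j=1}^{2n}|f_j-f_{j'}|(X^*_j)$, so radius $\varepsilon/(32\zeta_n)$ on $\calF_n$ yields exactly the $\varepsilon/8$ radius required by the exponential inequality.
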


Note that the class $\calF_n$ is dependent on the data $ \{\bX_i | 1 \leq i \leq n\}$ and hence generally we cannot say $\mathbb E f(X_{1i}) = \mathbb E f(\dot X_1)$ for the new sample $\dot X_1$. So, the cross-term is not a sample covariance and cannot be bounded by direct application of Cauchy-Schwartz inequality on the corresponding ULLN for the squared terms. 

\begin{proof}[Proof of Proposition \ref{lemma:cross_prod_indep}]  
	For convenience, we denote $H_i = (X_{1i},X_{2i},\eps_{1i},\eps_{2i})$ and $\dot{{H}} =(\dot X_{1},\dot X_{2},\dot \eps_{1},\dot \eps_{2})$, and $\calW_n=\{w \given w(x_1,x_2,y_1,y_2) = (f(x_1) - y_1)(f(x_2) - y_2), f \in T_{\zeta_n}\calF_n \}$. For large enough $n$, $\zeta_n > L$, and $|w| \leq 4\zeta^2_n$ for all $w \in \calW_n$. By Theorem 9.1 of \cite{gyorfi2006distribution}, 
	$$\mathbb P \Big[\sup_{w \in \calW_n} \Big| \frac 1n \sum_i w(H_i) - \mathbb E w(\dot H) \Big| > \varepsilon \Big] \leq 8 \calN_1\left(\frac {\varepsilon}{8},\calH_n,\{W_i\}_{i=1}^n\right) \exp\left(-\frac {n\varepsilon^2}{128 (4\zeta_n^2)^2}\right).$$

	Let $w_j({H_i})= (f_j(X_{1i}) - Y_{1i,L}) (f_j(X_{2i}) - Y_{2i,L})$, for some $f_j \in T_{\zeta_n}{\mathcal{F}}_n$. Then for large enough $n$, $\zeta_n > L$ and we have,
	\begin{align*}
	&\frac{1}n\sum_{i} |w_j(H_i) - w_{j'}(H_i)|\\
	&=\frac{1}n\sum_{i} |(f_j(X_{1i}) - Y_{1i,L}) (f_j(X_{2i}) - Y_{2i,L})\\ 
	&\quad - (f_{j'}(X_{1i}) - Y_{1i,L}) (f_{j'}(X_{2i}) - Y_{2i,L})|\\
	& = \frac{1}n \sum_{i} \Bigg|\left[f_j(X_{1i})f_j(X_{2i}) - f_{j'}(X_{1i})f_{j'}(X_{2i}) \right]\Bigg|\\
	&+ \frac{1}n \sum_{i}\Bigg|\left[Y_{1i,L} (f_{j'}(X_{2i}) - f_j(X_{2i})) +  Y_{2i,L} (f_{j'}(X_{1i})-f_j(X_{1i})) \right]\Bigg| \\
	& = \frac{1}n \sum_{i} \Bigg|\left[f_j(X_{1i})(f_j(X_{2i}) - f_{j'}(X_{2i})) + f_{j'}(X_{2i})(f_j(X_{1i}) - f_{j'}(X_{1i}) \right]\Bigg|\\
	&+ \frac{1}n \sum_{i}\Bigg|\left[Y_{1i,L} (f_{j'}(X_{2i}) - f_j(X_{2i})) +  Y_{2i,L} (f_{j'}(X_{1i})-f_j(X_{1i})) \right]\Bigg| \\
	& \leqslant \frac{1}n \sum_{i} \Bigg|(|f_j(X_{1i})|+|Y_{1i,L}|)\Big|f_j(X_{2i}) - f_{j'}(X_{2i})\Big|\Bigg|\\ 
	& + \frac{1}n \sum_{i}  \Bigg| (|f_{j'}(X_{2i})|+|Y_{2i,L}|)\Big|f_j(X_{1i}) - f_{j'}(X_{1i})\Big| \Bigg|\\
	& \leqslant 2\zeta_n \frac{1}{n}\sum_{i} |f_j(X_{2i}) - f_{j'}(X_{2i})|+ 2\zeta_n \frac{1}{n}\sum_{i} |f_j(X_{1i}) - f_{j'}(X_{1i})|\\
	& \leqslant 4\zeta_n \frac{1}{2n}\sum_{i=1}^{2n} |f_j(X^*_i) - f_{j'}(X^*_i)|.
	\end{align*}
	Hence
	$$
	\mathcal{N}_1\left(\frac{\varepsilon}{8}, \mathcal{W}_n, \{H_i\}_{i=1}^n \right) \leqslant
	\mathcal{N}_1\left(\frac{\varepsilon}{32\zeta_n}, T_{\zeta_n} \mathcal{F}_n, \{X^*_i\}_{i=1}^{2n} \right).
	$$
\end{proof}

\begin{proof}[Proof of Proposition \ref{lemma:dependent_data}] We consider the following two cases:\\
	\textbf{Case 1:} Functions in $\mathcal{G}_n$ are uniformly bounded by $C_6 \in \mathbb{R}, C_6<\infty$. 
	
	In this scenario, due to the fact that the constant function $C_6$ works as a bounded envelope for $\mathcal{G}_n, \forall n$, the result of \cite{nobel1993note} ensures almost sure convergence for $\{U_i\}$ even under the assumption of $\mathbb{L}_1$ convergence of the independent counterpart $\{U_i\}$ \footnote{The statement of Theorem 1 of \cite{nobel1993note} assumes almost strong (almost sure) ULLN for the i.i.d. process, but the proof only requires weak (convergence in probability) ULLN. As we assume an $\mathbb L_1$ ULLN for the i.i.d. process, by Markov inequality weak ULLN holds here as well, and the same result is obtained.} 
	Since $\calG_n$ is bounded, almost sure convergence is enough to ensure convergence in $\mathbb L_1$. This completes the proof for this case. \\
	\textbf{Case 2:} No uniformly bounded envelope. For every $g \in \mathcal{G}_n$, we define $g_1 = g(\mathds{I}(G_n \leqslant C_6))$ and $g_2 = g(\mathds{I}(G_n > C_6))$. So we have
	$$
	\begin{aligned}
	\sup_{g \in \mathcal{G}_n}\Big|\frac{1}{{n}}\sum_i g(U_i^{(C)})\Big| &=  \sup_{g_1,\: g \in \mathcal{G}_n}\Big|\frac{1}{{n}}\sum_i (g_1(U_i)- \mathbb{E}g_1(U_i))\Big|\\ &+ \sup_{g_2,\: g \in \mathcal{G}_n}\Big|\frac{1}{{n}}\sum_i (g_2(U_i) - \mathbb{E}g_2(U_i))\Big|.
	\end{aligned}
	$$
	The first term converges to $0$ in $\mathbb{L}_1$, using Case 1. 
	For the second term, we have
	$$
	\begin{aligned}
	\mathbb{E}\sup_{g_2,\: g \in \mathcal{G}_n}\Big|\frac{1}{{n}}\sum_i (g_2(U_i) - \mathbb{E}g_2(U_i))\Big| &\leqslant \frac{2}{{n}} \sum_i \mathbb{E}|G_n(U_i)|\mathds{I}(|G_n(U_i)|> C_6 ).\\
	\end{aligned}
	$$
	From (\ref{eq:ui}), this goes to zero as $C_6 \to \infty$. 
\end{proof}

\begin{proposition}	\textbf{(ULLN for squared and cross-product classes for dependent data)} \label{prop:dep-square-crossloss}
	If $\calF_n$ denote a class of functions 
	satisfying
	\textbf{(C3.iid)} for i.i.d. processes, and Condition \textbf{C.4}, then $\calF_n$ satisfies Condition \textbf{(C3)} for a $\beta$-mixing process $\{\eps_i\}$ under Assumption \ref{as:chol}.
\end{proposition}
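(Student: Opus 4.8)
\textbf{Proof proposal for Proposition \ref{prop:dep-square-crossloss}.}

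The plan is to expand the quadratic form $\frac 1n (f(\bX)-\bY)^\top\bQ(f(\bX)-\bY)$ using the banded structure of $\bQ$ granted by Assumption \ref{as:chol}, exactly as in the identity \eqref{eq:qf}, and to show that the supremum over $f \in T_{\zeta_n}\calF_n$ of the difference between this empirical quadratic form and its (independent-copy) expectation decomposes into finitely many pieces, each of which is controlled by an i.i.d.\ ULLN transferred to the $\beta$-mixing setting via Proposition \ref{lemma:dependent_data}. Writing $r_i = f(X_i) - Y_i$, Assumption \ref{as:chol} gives
\begin{equation*}
\frac 1n (f(\bX)-\bY)^\top\bQ(f(\bX)-\bY) = \frac\alpha n \sum_i r_i^2 + \sum_{j\neq j'=0}^q \rho_j\rho_{j'}\,\frac 1n \sum_i r_{i-j}r_{i-j'} + \frac 1n\sum_{i\in\tilde{\mathcal A}_1}\sum_{i'\in\tilde{\mathcal A}_2}\tilde\gamma_{i,i'}r_ir_{i'},
\end{equation*}
where the last term is a sum of at most $4q^2$ bounded terms (each $r_i^2 \le (\zeta_n + M_0)^2$ after truncation, and divided by $n$), hence is $o(1)$ uniformly and contributes nothing to the limit. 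So it suffices to handle the diagonal term $\frac 1n\sum_i r_i^2$ and, for each lag $j = 1,\ldots,q$, the cross-product term $\frac 1n\sum_i r_i r_{i-j}$.

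For the diagonal term, I would apply Proposition \ref{lemma:dependent_data} with $U_i = (X_i,\eps_i)$ and $\calG_n = \{(x,e)\mapsto (f(x)-T_L(m(x)+e))^2 : f\in T_{\zeta_n}\calF_n\}$ (after first reducing $Y_i$ to its truncation $Y_{i,L}$, which is standard and done in \cite{scornet2015consistency}): the i.i.d.\ ULLN hypothesis of Proposition \ref{lemma:dependent_data} is exactly Condition \textbf{(C3.iid)}(a), and the uniform-mean-integrability hypothesis \eqref{eq:ui} follows from the moment bound \textbf{C.4} because $|g| \le 2F_n^2 + 2(L+M_0)^2 \lesssim F_n^2$ and a $(2+\delta)$-th moment bound on $F_n$ forces $\frac 1n\sum_i \mathbb E|F_n(X_i)|^2\mathds 1(|F_n(X_i)|^2 > C) \to 0$ as $C\to\infty$ uniformly in $n$ (via $\mathbb E[V\mathds 1(V>C)] \le C^{-\delta/2}\mathbb E[V^{1+\delta/2}]$ with $V = F_n^2$). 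For the $j$-th cross-product term, I would apply Proposition \ref{lemma:dependent_data} with $U_i = (X_i,\eps_i,X_{i-j},\eps_{i-j})$ and $\calG_n$ the class of functions $(x,e,x',e')\mapsto (f(x)-T_L(m(x)+e))(f(x')-T_L(m(x')+e'))$; the associated independent copies $U_i^*$ are precisely the lag-$j$ bivariate i.i.d.\ sequences $(\tilde X_i,\tilde\eps_i,\ddot X_{i-j},\ddot\eps_{i-j})$ of Condition \textbf{(C3.iid)}(b), so the i.i.d.\ ULLN hypothesis is \textbf{(C3.iid)}(b) itself, and uniform mean integrability again follows from \textbf{C.4} since the envelope is $\lesssim F_n^2$ by Cauchy--Schwarz pointwise. (Note that the i.i.d.\ ULLN \textbf{(C3.iid)}(b) is established for RF-GLS trees in turn via Proposition \ref{lemma:cross_prod_indep}, but here I only need to cite \textbf{(C3.iid)} as a hypothesis.) The one subtlety: Proposition \ref{lemma:dependent_data} requires $\{U_i\}$ itself to be $\beta$-mixing; for the lag-$j$ construction $U_i = (X_i,\eps_i,X_{i-j},\eps_{i-j})$ this holds because a finite-window function of a $\beta$-mixing process (here the $X_i$ are i.i.d., hence trivially $\beta$-mixing, and jointly with the $\beta$-mixing $\eps_i$ the pair process is $\beta$-mixing, and $\beta$-mixing is preserved under such blocking with only a fixed shift $j$ in the $\sigma$-algebra indices).

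Finally I would reassemble: since there are only finitely many ($1 + 4q^2$ relevant, plus the $O(1/n)$ remainder) terms, each with coefficient $\alpha$ or $\rho_j\rho_{j'}$ bounded (Assumption \ref{as:chol}), the triangle inequality gives
\begin{equation*}
\mathbb E\Big[\sup_{f\in T_{\zeta_n}\calF_n}\Big|\tfrac 1n(f(\bX)-\bY)^\top\bQ(f(\bX)-\bY) - \mathbb E\tfrac 1n(f(\dot\bX)-\dot\bY)^\top\bQ(f(\dot\bX)-\dot\bY)\Big|\Big] \to 0,
\end{equation*}
which after undoing the truncation $Y_{i,L}\to Y_i$ (controlled by Assumption \ref{as:tail} on the tails, as in \cite{gyorfi2006distribution,scornet2015consistency}) is exactly Condition \textbf{(C3)}. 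The main obstacle I anticipate is the bookkeeping needed to pass the truncation of $Y_i$ through the cross-product terms --- one must verify that replacing $Y_i$ by $T_L Y_i$ in each bilinear term changes the quadratic form uniformly negligibly, which uses boundedness of $\lambda_{\max}(\bQ)$ together with the tail bound Assumption \ref{as:tail}(b)--(c); and secondarily, confirming rigorously that the lag-shifted pair process inherits $\beta$-mixing with no decay-rate degradation, which is where the precise definition of absolute regularity and its stability under measurable blocking maps must be invoked carefully.
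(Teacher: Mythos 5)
Your proposal matches the paper's proof essentially step for step: the same expansion of the quadratic form via Assumption \ref{as:chol} into a diagonal term, $q$ lagged cross-product terms, and an asymptotically negligible boundary remainder; the same application of Proposition \ref{lemma:dependent_data} to the blocked processes $(X_i,\eps_i)$ and $(X_i,X_{i-j},\eps_i,\eps_{i-j})$ (whose $\beta$-mixing the paper likewise verifies via the sub-additivity property (\ref{eq:betamix}) and the lag-shift bound $\beta_{(\eps_i,\eps_{i-j})}(a) \leq \beta_{\eps_i}(a-j)$); and the same envelope $G_n \lesssim L^2 + F_n^2$ combined with the $(2+\delta)$-moment bound of Condition \textbf{C.4} to verify the uniform mean integrability condition (\ref{eq:ui}). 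The only organizational difference is that the removal of the $L$-truncation of $Y_i$, which you flag as the main remaining obstacle, is deferred in the paper to the proof of Theorem \ref{th:gyorfi} (where the $Y \to Y_L$ discrepancy terms are bounded and $L \to \infty$ is taken at the end) rather than being handled inside this proposition.
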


Condition \textbf{(C.4)} on uniformly bounded $(2+\delta)^{th}$ moment for the class $\calF_n$ will be sufficient to ensure the moment-bound of (\ref{eq:ui}) for squared and cross-product function classes allowing use of the ULLN in Proposition \ref{lemma:dependent_data} to prove Proposition \ref{prop:dep-square-crossloss}. 

\begin{proof}[Proof of Proposition \ref{prop:dep-square-crossloss}]
	To establish the ULLN in condition \textbf{(C.3)} for the dependent error process, it is enough to establish the following two conditions:
	\begin{enumerate}[(a)]
		\item     
		$$
		\lim_{n \to \infty} \mathbb{E} \left[ \sup_{f \in T_{\zeta_n}\mathcal{F}_n}  |\frac{1}{{n}}\sum_i|f(X_i) - Y_{i,L}|^2  - \mathbb{E}|f(\dot X_1) - \dot Y_{1,L}|^2|\right]=0.
		$$
		\item For all $ 1 \leq j \leq q$, 
		$$
		{
			\begin{aligned}
			\lim_{n \to \infty} \mathbb{E}\Bigg[& \sup_{f \in T_{\zeta_n}\mathcal{F}_n}  \Big|\frac{1}{n}
			\sum_{i}
			(f( X_{i}) - Y_{i,L}) (f( X_{i-j}) - Y_{i-j,L})\\
			&- \mathbb{E}(f(\dot X_{1+j}) - \dot Y_{1+j,L})(f(\dot X_{1}) - \dot Y_{{1},L})\Big|\Bigg]=0.
			\end{aligned}
		}
		$$ 
	\end{enumerate}
	Since $\{X_i\}$ is an i.i.d. process independent of the $\beta$-mixing process $\{\eps_i\}$, using the property (\ref{eq:betamix}) of $\beta$-mixing coefficients presented in Lemma \ref{lem:strlaw}, we have $U_i=(X_i,\eps_i)$ is also a $\beta$-mixing process. Let $\calG_n$ be the class of functions $g: \mathbb R^{D+1} \to \mathbb R$ such that $g(U_i) = (Y_{i,L} - f(X_i))^2$ for some $f \in T_{\zeta_n}\calF_n$. Clearly, under condition \textbf{(C.4)}, $G_n = 2L^2 + F_n^2$ is an envelope for $\calG_n$. Choosing $C>2L^2+1$ we have
	\begin{align*}
	\frac 1n \sum_i \mathbb E|G_n (Y_i,X_i)|\mathds{I}(|G_n(Y_i,X_i)|>C)
	&= \frac 1n \sum_i \mathbb E 2(L^2 + F_n^2(X_i))\mathds{I} (|F_n(X_i)|> \sqrt{C-2L^2})\\
	&\leq \frac {2(L^2+1)}{{(C-2L^2)}^{\frac \delta 2}} \frac 1n \sum_i \mathbb E |F_nX_i)|^{2+\delta} 
	\end{align*} 
	Here, the inequality uses the fact that for any random variable $X$, $\mathbb E|X|\mathds{I} (|X|>C) \leq \mathbb E |X|^{1+\delta}/C^\delta$. Clearly, the quantity above goes to $0$ by Condition \textbf{(C.4)} by first taking $n \to \infty$ and then $C \to \infty$. Hence (\ref{eq:ui}) is satisfied by the class $\calG_n$, and ULLN \textbf{(C.3)}(a) is established using Proposition \ref{lemma:dependent_data}. 
	
	Next, define   $H_i=(X_i,X_{i-j},\eps_i,\eps_{i-j})$. Since  $X_i$'s are i.i.d., $(X_i,X_{i-j})$ is a bivariate $m$-dependent process with lag at most $j$. Also, since  $\{\eps_i\}$ is a $\beta$-mixing process, so is $(\eps_i,\eps_{i-j})$ with mixing coefficient at lag $a$ given by  $\beta_{(\eps_i,\eps_{i-j})}(a) \leq \beta_{\eps_i}(a-j)$. Once again, since  $\{(X_i,X_{i-j})\} \independent \{(\eps_i,\eps_{i-j})\}$ and both are $\beta$-mixing, using (\ref{eq:betamix}) 
	established in the proof of Lemma \ref{lem:strlaw}, $\{H_i\}$ is a $\beta$-mixing process. Define $\calG_n^{(j)}$ to be the class of functions $g^{(j)}$ of the form $g^{(j)}(H_i)=(Y_{i,L}-f(X_i))(Y_{i-j,L}-f(X_{i-j}))$ for $f \in T_{\zeta_n}\calF_n$. Like $\calG_n$, $\calG_n^{(j)}$ admits an envelope $G_n^{(j)} \leq 2L^2 + F_n^2$ which satisfies the mean uniform integrability condition  (\ref{eq:ui}). 
	Hence, using condition \textbf{(C.3.iid)}(b) and \textbf{(C.4)}, and applying Proposition \ref{lemma:dependent_data}, we have \textbf{(C.3)}(b). 
\end{proof}

\begin{proof}[Proof of Proposition \ref{sec:prop-rf-ulln}]
	For a GLS-style regression tree $m_n(.,\Theta)$ in RF-GLS, built with data $\mathcal{D}_n$ and randomness $\Theta$, let the partition obtained from $\calD_n$ and $\bTheta$ be denoted by $\mathcal{P}_{GLS,n}(\Theta)=\mathcal{P}_n(\Theta)$. Then $\calF_n=\calF_n(\Theta)$ is the set of all functions $f : [0,1]^D \to \mathbb{R}$ piece-wise constant on each cell of the partition $\mathcal{P}_n(\Theta)$. 
	
	We will show that $T_{\zeta_n}\calF_n$ satisfies \textbf{(C.3.iid)}(a) and (b) which would imply that the smaller class $T_{\zeta_n}\tilde \calF_n$ also satisfy these. As the class $\tilde F_n$ satisfies \textbf{C.4}, the result is then proved by Proposition \ref{prop:dep-square-crossloss}. 
	
	Condition \textbf{(C.3.iid)}(a) is proved in \cite{scornet2015consistency} (page 1731) (and more generally in Theorem 13.1 of \cite{gyorfi2006distribution}) for the class $T_{\zeta_n} \dot \calF_n$, the set of all functions $f : [0,1]^D \to \mathbb{R}$ piece-wise constant on each cell of the partition $\mathcal{P}_{OLS,n}(\Theta)$ generated by a RF-tree. The result only relies on the number of RF trees $t_n$, the number of samples $n$ and the number of features $D$, and hence the proof holds also for $T_{\zeta_n} F_n$ for RF-GLS using same number of trees. 
	
	To prove the cross-product loss estimation error in Condition \textbf{(C.3.iid)}(b), we let $X_{1i}=\tilde X_i$, $X_{2i}=\ddot X_{i-j}$, $(Y_{1i},Y_{2i})=(m(X_{1i}),m(X_{2i}))+(\tilde \eps_{i},\ddot \eps_{i-j})$ and use Proposition \ref{lemma:cross_prod_indep}. The  bounding tail probability for the cross-product term in that proposition is almost identical to that for the squared error term used in \cite{scornet2015consistency}. Only difference is that that the entropy number $\mathcal{N}_1\left(\frac{\varepsilon}{32\zeta_n}, T_{\zeta_n} \calF_n, \{X^*_i\}_{i=1}^{2n} \right)$ is based on the empirical measure on the samples $\{X^*_i\}$ of size $2n$ as opposed to $n$ samples for the square term.  
	However, by Theorem 9.4 of \cite{gyorfi2006distribution}, the bound on the entropy number $\mathcal{N}_1\left(\frac{\varepsilon}{32\zeta_n}, T_{\zeta_n} \mathcal{F}_n, \nu \right)$ is free of the choice of the measure $\nu$ and only depends on the choice of the class $\calF_n$.
	Hence, like the squared error ULLN, Condition \textbf{(C.3.iid)}(b) holds as long as the scaling of $t_n$ and tail moments in Assumption \ref{as:tail}(a) are satisfied with $\zeta_n$.  
\end{proof}

\subsection{Consistency of data-driven-partitioning-based GLS estimates under dependent errors}\label{sec:pf-l2}
\begin{proof}[Proof of Theorem \ref{th:gyorfi}] We first show that to prove $\mathbb L_2$ consistency of $m_n$, it is enough to show that $\mathbb E \Big[ \mathbb E_{\dot G} [\brho^\top(m_n(\dot X^{(q+1)}) - m(\dot X^{(q+1)}))]^2 \Big] \to 0$, where $\dot X^{(q+1)} = (\dot X_{q+1}, \dot X_{q}, \cdots, \dot X_{1})$, $\dot \eps^{(q+1)}= (\dot \eps_{q+1}, \dot \eps_{q}, \cdots, \dot \eps_{1})$, 
	$\dot G = (\dot X^{(q+1)}, \dot \eps^{(q+1)})$ 
	and for any function $f: \mathbb{R}^d \to \mathbb{R}$, $f(\dot X^{(q+1)})=(f(\dot X_{q+1}),\ldots,f(\dot X_1))^\top$. 
	Note that from Assumption \ref{as:chol}, for any $q < i \leqslant n-q$, $\bQ_{ii} = \alpha$, $\bQ_{ij} = 0$ for $|j-i|>q$ and $\bQ_{ij}=\sum_{j'=|j-i|}^q \rho_{j'}\rho_{j'-|i-j|}$ for $|i-j| \leq q$. Hence, by Assumption \ref{as:diag}, 
	\begin{equation}\label{eq:diag2}
	\alpha - 2 \sum_{j=1}^q |\sum_{j'=j}^q \rho_{j'}\rho_{j'-j}| > 0.
	\end{equation}
	Since $\dot X_i$'s are i.i.d., using (\ref{eq:diag2}) and Jensen's inequality,
	\begin{equation}\label{eq:l2alt}
	\begin{aligned}
	\mathbb{E}_{X} \left(\brho^\top f(\dot X^{(q+1)})\right)^2 =& \alpha \mathbb E f(\dot X_1)^2 + 2 (\mathbb E f(\dot X_1))^2 \sum_{j=1}^q \sum_{j'=j}^q \rho_{j'}\rho_{j'-j}\\
	\geqslant & \mathbb E f(\dot X_1)^2 (\alpha - 2 \sum_{j=1}^q | \sum_{j'=j}^q \rho_{j'}\rho_{j'-j}|)
	\end{aligned} 
	\end{equation}
	Choosing $f=m_n - m$ proves the result showing that it is enough to work with  
	$\mathbb E \Big[ \mathbb E [\brho^\top(m_n(\dot X^{(q+1)}) - m(\dot X^{(q+1)}))]^2 \Big]$.

	The next part rest of the proof showing consistency of the truncated estimator $T_{\zeta_n}m_n$ emulates the technique from \cite{gyorfi2006distribution} (Theorem 10.2). Throughout, careful adjustments need to be made to account for use of the quadratic form $\bQ$. The result is summarized in Lemma \ref{lem:gyorfiqf}. 
	
	\begin{lemma}\label{lem:gyorfiqf}
		Under the conditions of Theorem \ref{th:gyorfi}, we have
		\begin{align*}
		\lim_{n \to \infty}\mathbb{E}\left[\mathbb{E}_{X}  \left[\brho^\top\left(T_{\zeta_n}m_n(\dot X^{(q+1)}, \Theta) - m(\dot X^{(q+1)})\right)\right]^2\right]=0.
		\end{align*}
	\end{lemma}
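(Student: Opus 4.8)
The plan is to follow the scheme of Theorem~10.2 of \cite{gyorfi2006distribution}, carrying the quadratic form $\bQ$ through every step. Write $A_n(f) = \tfrac1n(f(\bX)-\bY)^\top\bQ(f(\bX)-\bY)$ for the empirical GLS loss and, conditionally on $\calD_n$ and $\Theta$, let $\bar A_n(f) = \mathbb{E}_{\dot\calD_n}\tfrac1n(f(\dot\bX)-\dot\bY)^\top\bQ(f(\dot\bX)-\dot\bY)$ be its independent-copy counterpart. The first step is to establish the identity
$$
\mathbb{E}_X\bigl[\brho^\top(T_{\zeta_n}m_n - m)(\dot X^{(q+1)})\bigr]^2 = \bar A_n(T_{\zeta_n}m_n) - \bar A_n(m) + r_n, \qquad |r_n| = O(\zeta_n^2/n).
$$
This follows by expanding $\bar A_n$ through the banded representation \eqref{eq:qf} of $\bQ$: all cross terms pairing $f-m$ with $\dot\beps$ vanish because, conditionally on $\calD_n$, the copy satisfies $\dot\bX\independent\dot\beps$ and $\mathbb{E}\dot\beps=\bzero$; the pure-noise quadratic form $\tfrac1n\mathbb{E}\dot\beps^\top\bQ\dot\beps$ does not involve $f$ and cancels in the difference (indeed $\bar A_n(m)=\tfrac1n\mathbb{E}\dot\beps^\top\bQ\dot\beps$ exactly); and the interior part of the remaining quadratic form in $g=f-m$ reproduces $\mathbb{E}_X[\brho^\top g(\dot X^{(q+1)})]^2$ by the computation behind \eqref{eq:l2alt}, the only discrepancy being the $O(q)$ boundary rows of $\bQ$, each contributing at most $O(\zeta_n^2/n)$ since $\|T_{\zeta_n}m_n\|_\infty\le\zeta_n$ and $\zeta_n^2/n\to0$.

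Next I would run the standard five-term decomposition. Let $f^*_n\in T_{\zeta_n}\calF_n$ be a near-minimizer of the approximation error $\mathbb{E}_X|f-m|^2$ from \textbf{(C.2)}. Then
\begin{align*}
\bar A_n(T_{\zeta_n}m_n) - \bar A_n(m) \le{}& \bigl[\bar A_n(T_{\zeta_n}m_n) - A_n(T_{\zeta_n}m_n)\bigr] + \bigl[A_n(T_{\zeta_n}m_n) - A_n(m_n)\bigr]\\
&+ \bigl[A_n(m_n) - A_n(f^*_n)\bigr] + \bigl[A_n(f^*_n) - \bar A_n(f^*_n)\bigr] + \bigl[\bar A_n(f^*_n) - \bar A_n(m)\bigr],
\end{align*}
and I would bound each bracket in expectation. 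The first and fourth are each at most $\sup_{f\in T_{\zeta_n}\calF_n}|A_n(f)-\bar A_n(f)|$, whose expectation vanishes by the estimation-error condition \textbf{(C.3)}. The third is $\le 0$ because $m_n$ minimizes $A_n$ over $\calF_n$; this uses $f^*_n\in\calF_n$, which holds in the RF-GLS application since the approximant $\sum_{\mathcal B}m(\bx_{\mathcal B})\mathds{I}(\cdot\in\mathcal B)$ from \eqref{eq:class} is bounded by $M_0<\zeta_n$ (and in general can be arranged by comparing $m_n$ with a cheap element of $\calF_n$ such as the zero function, together with a $\bY$-truncation refinement if $\calF_n$ contains no such element). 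The fifth equals $\mathbb{E}_X[\brho^\top(f^*_n-m)(\dot X^{(q+1)})]^2+O(\zeta_n^2/n)$, and since the $\brho$-weighted squared error is at most $\bigl(\alpha + 2\sum_{j=1}^q|\sum_{j'=j}^q\rho_{j'}\rho_{j'-j}|\bigr)\mathbb{E}_X|f^*_n-m|^2$ by Jensen's inequality (the upper companion of \eqref{eq:l2alt}), it vanishes by \textbf{(C.2)}.

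The remaining bracket $A_n(T_{\zeta_n}m_n)-A_n(m_n)$ is the delicate one; this is where \textbf{(C.1)} and the regularity of $\bQ$ come in. Writing $w=T_{\zeta_n}m_n(\bX)-m_n(\bX)$ and $v=m_n(\bX)-\bY$ and expanding the quadratic difference,
$$
|A_n(T_{\zeta_n}m_n)-A_n(m_n)| = \tfrac1n|w^\top\bQ w + 2w^\top\bQ v| \le \lambda_{\max}(\bQ)\Bigl(\tfrac1n\|w\|^2 + \tfrac2n\|w\|\,\|v\|\Bigr),
$$
with $\lambda_{\max}(\bQ)$ bounded by Assumption~\ref{as:chol}. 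Here $\tfrac1n\|w\|^2\le\max_i(m_n(X_i)-T_{\zeta_n}m_n(X_i))^2$, whose expectation $\to0$ by \textbf{(C.1)}; and optimality of $m_n$ together with diagonal dominance gives $\tfrac1n\|v\|^2\le\lambda_{\min}(\bQ)^{-1}A_n(m_n)$, which is bounded in expectation because $A_n(m_n)$ is no larger than $A_n$ evaluated at a fixed bounded element of $\calF_n$, $\lambda_{\min}(\bQ)$ is bounded below by the positive constant of Assumption~\ref{as:diag}, and $\mathbb{E}\tfrac1n\|\bY\|^2\le 2M_0^2+2\mathbb{E}\eps_1^2<\infty$ by Assumption~\ref{as:mix}. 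A Cauchy--Schwarz step under the expectation then yields $\mathbb{E}|A_n(T_{\zeta_n}m_n)-A_n(m_n)|\to0$.

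Assembling the five brackets gives $\mathbb{E}[\bar A_n(T_{\zeta_n}m_n)-\bar A_n(m)]\to0$, which by the identity of the first step is exactly the assertion of Lemma~\ref{lem:gyorfiqf}; no further response truncation is needed here, as \textbf{(C.3)} already controls the empirical process with untruncated $\bY$ (the fixed-level $L$ truncation of \textbf{(C.3.iid)} is internal to verifying \textbf{(C.3)} for specific classes). I expect the main obstacle to be the bookkeeping in the first step: verifying the exact cancellation of the pure-noise quadratic form, the vanishing of every cross term in $\bar A_n$, and — most importantly — that the boundary residual is genuinely $O(\zeta_n^2/n)$ rather than $O(1/n)$, since the truncated functions are bounded only by the diverging $\zeta_n$, so one must lean on $\zeta_n^2/n\to0$ throughout; a secondary nuisance is ensuring the approximant $f^*_n$ sits in the untruncated class $\calF_n$ over which $m_n$ is optimal.
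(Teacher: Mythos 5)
Your proposal is correct in outline and reaches the same destination as the paper, but by a genuinely different decomposition, and it has one gap worth flagging.

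The paper's proof first cancels the cross term to get $\mathbb{E}_{\dot G}[\brho^\top(\tilde m_n-m)]^2=\mathbb{E}_{\dot G}[\brho^\top(\tilde m_n(\dot X)-\dot Y)]^2-\mathbb{E}_{\dot G}[\brho^\top(m(\dot X)-\dot Y)]^2$, writes this as $A(A+2c^{1/2})$ with $A$ a difference of \emph{square roots} of GLS losses, and then splits $A_1$ into ten brackets chained by the triangle inequality for these norm-like quantities (with the sixth bracket nonpositive by optimality of $m_n$). You instead decompose the difference of the quadratic losses themselves into five brackets. This is a legitimate and arguably cleaner route: you avoid the square-root bookkeeping, at the price of having to control the cross term $w^\top\bQ v$ in your bracket $A_n(T_{\zeta_n}m_n)-A_n(m_n)$ by Cauchy--Schwarz plus a bound on $\mathbb{E}\tfrac1n\|v\|^2$ via $\lambda_{\min}(\bQ)\ge\xi$ and optimality of $m_n$ against a bounded element of $\calF_n$ --- a step the paper sidesteps because $|\|a\|-\|b\||\le\|a-b\|$ bounds its third bracket directly by the truncation error. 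Your first-step identity, with the $O(\zeta_n^2/n)$ boundary residual from the first $q$ rows of $\bSig^{-1/2}$, corresponds to the paper's fifth and seventh brackets and is handled the same way.

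The gap is your dismissal of the response truncation. The paper's statement of \textbf{(C.3)} is prefaced by ``for all arbitrary $L>0$'' and is only ever \emph{verified} (via \textbf{(C.3.iid)} and Proposition \ref{prop:dep-square-crossloss}) with the $L$-truncated responses $Y_{i,L}$, $\dot Y_{i,L}$; an untruncated ULLN over $T_{\zeta_n}\calF_n$ is not available from the covering-number machinery, since the empirical process must be built from bounded summands. Consequently the paper's proof inserts four additional brackets passing between $\bY$ and $\bY_L$ (its terms $b_1,b_4,b_8,b_{10}$), carries the fixed $L$ through the entire argument, and only at the very end sends $L\to\infty$ using $\mathbb{E}\eps_1^2\mathds{I}(|\eps_1|>L-M_0)\to0$ from the $(2+\delta)$-moment assumption together with a $\beta$-mixing law of large numbers for $\tfrac1n\sum_i\eps_i^2\mathds{I}(|\eps_i|>L)$. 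Your five-bracket scheme needs the analogous $Y\leftrightarrow Y_L$ comparisons in brackets one, four, and five, and the final double limit $n\to\infty$ then $L\to\infty$; the fix is routine but not optional, since without it brackets one and four invoke a version of \textbf{(C.3)} that the rest of the paper does not supply.
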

	
	\begin{proof}[Proof of Lemma \ref{lem:gyorfiqf}] Let $\tilde m_n = T_{\zeta_n}m_n$. 
		$$
		\begin{aligned}
		&\mathbb{E}_{\dot G} \left[\brho^\top \left(\tilde m_n(\dot X^{(q+1)}) - \dot Y^{(q+1)} \right) \right]^2\\ 
		&= \mathbb{E}_{\dot G} \left[\brho^\top \left(\tilde m_n(\dot X^{(q+1)}) -  m(\dot X^{(q+1)}) \right) + \brho^\top \left( m(\dot X^{(q+1)}) - \dot Y^{(q+1)} \right)  \right]^2\\
		& = \mathbb{E}_{\dot G} \left[\brho^\top \left(\tilde m_n(\dot X^{(q+1)}) -  m(\dot X^{(q+1)}) \right)\right]^2 + \mathbb{E}_{\dot G} \left[\brho^\top \left( m(\dot X^{(q+1)}) - \dot Y^{(q+1)} \right) \right]^2\\ 
		&+ 2\mathbb{E}_{\dot G} \left[\brho^\top \left(\tilde m_n(\dot X^{(q+1)}) -  m(\dot X^{(q+1)}) \right)\brho^\top \left( m(\dot X^{(q+1)}) - \dot Y^{(q+1)} \right)  \right]
		\end{aligned}
		$$
		Now,
		
		$$
		\begin{aligned}
		&\mathbb{E}_{\dot G}  \left[\brho^\top \left(\tilde m_n(\dot X^{(q+1)}) -  m(\dot X^{(q+1)}) \right)\brho^\top \left( m(\dot X^{(q+1)}) - \dot Y^{(q+1)} \right)  \right]\\ 
		&= \mathbb{E}_{\dot X^{(q+1)}} \left[  \brho^\top \left(\tilde m_n(\dot X^{(q+1)}) -  m(\dot X^{(q+1)}) \right) \brho^\top \mathbb{E}_{\dot Y^{(q+1)}} \left[\left( m(\dot X^{(q+1)}) - \dot Y^{(q+1)} \right)\right]  \right]\\
		& = 0
		\end{aligned}
		$$
		Hence, we have,
		
		$$
		\begin{aligned}
		&\mathbb{E}_{\dot G} \left[\brho^\top \left({\tilde m}_n(\dot X^{(q+1)}) - m(\dot X^{(q+1)}) \right) \right]^2 \\
		&= \mathbb{E}_{\dot G} \left[\brho^\top \left(\tilde m_n(\dot X^{(q+1)}) - \dot Y^{(q+1)} \right) \right]^2 - \mathbb{E}_{\dot G} \left[\brho^\top \left(m(\dot X^{(q+1)}) -\dot Y^{(q+1)} \right) \right]^2\\
		& =  A \left(A + 2\left( \mathbb{E}_{\dot G} \left[\brho^\top \left(m(\dot X^{(q+1)} -\dot Y^{(q+1)} \right) \right]^2\right)^{\frac{1}{2}}\right)
		\end{aligned}
		$$
		where 
		$$
		A := \left(\mathbb{E} \left[\brho^\top \left(\tilde m_n(\dot X^{(q+1)}) - \dot Y^{(q+1)} \right) \right]^2 \right)^{\frac{1}{2}} - \left(\mathbb{E} \left[\brho^\top \left(m(\dot X^{(q+1)}) -\dot Y^{(q+1)} \right) \right]^2\right)^{\frac{1}{2}}  
		$$
		As the term $\mathbb{E}_{\dot G} \left[\brho^\top \left(m(\dot X^{(q+1)} -\dot Y^{(q+1)} \right) \right]^2$ is non-random and $O(1)$ ($\brho$ being of fixed-dimension), using Cauchy-Schwartz inequality, it is enough to show $\mathbb E A^2 \to 0$. Applying $(a+b)^2 \leq 2(a^2 + b^2)$, we have
		$$
		\begin{aligned}
		\mathbb EA^2 &\leqslant  2\mathbb{E}\Bigg[ \left(\mathbb{E}_{\dot G} \left[\brho^\top \left(\tilde m_n(\dot X^{(q+1)}) - \dot Y^{(q+1)} \right) \right]^2\right)^{\frac{1}{2}}\\ 
		&\quad\quad\quad\quad-  \inf_{f \in T_{\zeta_n}\mathcal{F}_n} \left(\mathbb{E}_{\dot G} \left[\brho^\top\left(f(\dot X^{(q+1)}) - \dot Y^{(q+1)} \right) \right]^2\right)^{\frac{1}{2}}\Bigg]^2 \\
		&+2\mathbb{E}\Bigg[ \inf_{f \in T_{\zeta_n}\mathcal{F}_n} \left(\mathbb{E}_{\dot G} \left[\brho^\top\left(f(\dot X^{(q+1)}) - \dot Y^{(q+1)} \right) \right]^2\right)^{\frac{1}{2}}\\ 
		&\quad\quad\quad\quad- \left(\mathbb{E}_{\dot G} \left[\brho^\top \left(m(\dot X^{(q+1)}) - \dot Y^{(q+1)} \right) \right]^2\right)^{\frac{1}{2}}\Bigg]^2
		\end{aligned}
		$$
		
		Using triangular inequality 
		for the second quantity, we have 
		$$
		\begin{aligned}
		&\mathbb{E}\Bigg[ \inf_{f \in T_{\zeta_n}\mathcal{F}_n} \left(\mathbb{E}_{\dot G} \left[\brho^\top \left(f(\dot X^{(q+1)}) - \dot Y^{(q+1)} \right) \right]^2\right)^{\frac{1}{2}}\\ 
		&\quad\quad\quad- \left(\mathbb{E}_{\dot G} \left[\brho^\top \left(m(\dot X^{(q+1)}) - \dot Y^{(q+1)} \right) \right]^2\right)^{\frac{1}{2}}\Bigg]^2\\
		& \leqslant \mathbb{E} \inf_{f \in T_{\zeta_n}\mathcal{F}_n} \left(\mathbb{E}_{\dot G} \left[\brho^\top \left(f(\dot X^{(q+1)}) - m(\dot X^{(q+1)}) \right) \right]^2\right) \\
		& \leqslant   \left(\alpha  +\sum_{j \neq j'}|\rho_{j}\rho_{j'}| \right) \mathbb{E} \left[\inf_{f \in T_{\zeta_n}\mathcal{F}_n}  \mathbb{E}_{\dot X_1} |f(\dot X_1) -m(\dot X_1) |^2 \right]
		\end{aligned}
		$$
		This vanishes asymptotically, due to the approximation error condition \textbf{(C.2)}. Hence, we focus on the other term $\mathbb EA_1^2$ in the expression of $\mathbb E A^2$, where 
		$$
		\begin{aligned}
		A_1 &:= \left(\mathbb{E}_{\dot G} \left[\brho^\top \left(\tilde m_n(\dot X^{(q+1)}) - \dot Y^{(q+1)} \right) \right]^2\right)^{\frac{1}{2}}\\
		&\quad\quad\quad -  \inf_{f \in T_{\zeta_n}\mathcal{F}_n} \left(\mathbb{E}_{\dot G} \left[\brho^\top \left(f(\dot X^{(q+1)}) - \dot Y^{(q+1)} \right) \right]^2\right)^{\frac{1}{2}}.
		\end{aligned}
		$$
		$A1$ can be decomposed and bounded as follows by sum of ten terms:
		\begin{align*}
		A_1 &\leqslant \sup_{f \in T_{\zeta_n}\mathcal{F}_n} \Bigg\{ \left(\mathbb{E}_{\dot G} \left[\brho^\top \left(\tilde m_n(\dot X^{(q+1)}) - \dot Y^{(q+1)} \right) \right]^2\right)^{\frac{1}{2}}\\
		&\quad\quad\quad\quad\quad\quad\quad\quad\quad\quad -\left(\mathbb{E}_{\dot G} \left[\brho^\top \left(\tilde m_n(\dot X^{(q+1)}) - \dot Y_L^{(q+1)} \right) \right]^2\right)^{\frac{1}{2}}\\
		&+ \left(\mathbb{E}_{\dot G} \left[\brho^\top \left(\tilde m_n(\dot X^{(q+1)}) - \dot Y_L^{(q+1)} \right) \right]^2\right)^{\frac{1}{2}} - \left(\frac{1}{{n}} \sum_i \left[\brho^\top (\tilde m_n(\mathbf X^{(i)})- \mathbf{Y}_L^{(i)})\right]^2 \right)^{\frac{1}{2}}\\
		&+\left(\frac{1}{{n}} \sum_i \left[\brho^\top (\tilde m_n(\mathbf X^{(i)})- \mathbf{Y}_L^{(i)})\right]^2 \right)^{\frac{1}{2}} - \left(\frac{1}{{n}} \sum_i \left[\brho^\top ( m_n(\mathbf X^{(i)})- \mathbf{Y}_L^{(i)})\right]^2 \right)^{\frac{1}{2}}\\
		&+\left(\frac{1}{{n}} \sum_i \left[\brho^\top ( m_n(\mathbf X^{(i)})- \mathbf{Y}_L^{(i)})\right]^2 \right)^{\frac{1}{2}} - \left(\frac{1}{{n}} \sum_i \left[\brho^\top ( m_n(\mathbf X^{(i)})- \mathbf{Y}^{(i)})\right]^2 \right)^{\frac{1}{2}}\\
		&+\left(\frac{1}{{n}} \sum_i \left[\brho^\top ( m_n(\mathbf X^{(i)})- \mathbf{Y}^{(i)})\right]^2 \right)^{\frac{1}{2}} - \left( \frac{1}{{n}}\left( m_n(\mathbf X)- \mathbf{Y}\right)^\top\bQ\left( m_n(\mathbf X)- \mathbf{Y}\right)\right)^{\frac{1}{2}}\\
		&+\left( \frac{1}{{n}}\left( m_n(\mathbf X)- \mathbf{Y}\right)^\top\bQ\left( m_n(\mathbf X)- \mathbf{Y}\right)\right)^{\frac{1}{2}} - \left( \frac{1}{{n}}\left(f(\mathbf X)- \mathbf{Y}\right)^\top\bQ\left(f(\mathbf X)- \mathbf{Y}\right)\right)^{\frac{1}{2}}\\
		&+ \left( \frac{1}{{n}}\left(f(\mathbf X)- \mathbf{Y}\right)^\top\bQ\left(f(\mathbf X)- \mathbf{Y}\right)\right)^{\frac{1}{2}} -\left(\frac{1}{{n}} \sum_i \left[\brho^\top ( f(\mathbf X^{(i)})- \mathbf{Y}^{(i)})\right]^2 \right)^{\frac{1}{2}}\\
		&+\left(\frac{1}{{n}} \sum_i \left[\brho^\top ( f(\mathbf X^{(i)})- \mathbf{Y}^{(i)})\right]^2 \right)^{\frac{1}{2}} - \left(\frac{1}{{n}} \sum_i \left[\brho^\top ( f(\mathbf X^{(i)})- \mathbf{Y}_L^{(i)})\right]^2 \right)^{\frac{1}{2}}\\
		&+\left(\frac{1}{{n}} \sum_i \left[\brho^\top ( f(\mathbf X^{(i)})- \mathbf{Y}_L^{(i)})\right]^2 \right)^{\frac{1}{2}}- \left(\mathbb{E}_{\dot G} \left[\brho^\top \left(f(\dot X^{(q+1)}) - \dot Y_L^{(q+1)} \right) \right]^2\right)^{\frac{1}{2}}\\
		&+\left(\mathbb{E}_{\dot G} \left[\brho^\top \left(f(\dot X^{(q+1)}) - \dot Y_L^{(q+1)} \right) \right]^2\right)^{\frac{1}{2}} - \left(\mathbb{E}_{\dot G} \left[\brho^\top \left(f(\dot X^{(q+1)}) - \dot Y^{(q+1)} \right) \right]^2\right)^{\frac{1}{2}}\Bigg\}
		\end{align*}
		
		Here $\bX^{(i)} = (X_i, X_{i-1}, \ldots, X_{i-q})^\top $. Let the $10$ terms in the above inequality be denoted by $b_1, \ldots, b_{10}$. The $6^{th}$ term is negative by definition (Equation \ref{eqn:m_n}).  Hence, $A_1 \leq \sum_{t \in \{1,\ldots,10\} \setminus \{6\}} b_t$. 
		
		On the other hand, 
		\begin{align*}
		A_1 &\geqslant \left(\mathbb{E}_{\dot G} \left[\brho^\top \left(m(\dot X^{(q+1)}) - \dot Y^{(q+1)} \right) \right]^2\right)^{\frac{1}{2}}\\ 
		&\quad\quad\quad\quad\quad\quad-  \inf_{f \in T_{\zeta_n}\mathcal{F}_n} \left(\mathbb{E}_{\dot G} \left[\brho^\top \left(f(\dot X^{(q+1)}) - \dot Y^{(q+1)} \right) \right]^2\right)^{\frac{1}{2}}\\
		&\geqslant  - \inf_{f \in T_{\zeta_n}\mathcal{F}_n} \left(\mathbb{E}_{\dot G} \left[\brho^\top \left(f(\dot X^{(q+1)}) - m(\dot X^{(q+1)}) \right) \right]^2\right)^{\frac{1}{2}}
		\end{align*}
		
		Denoting the right-hand side of the above equation by $a$, and using \\$(\sum_{t \in \{1,\ldots,10\} \setminus \{6\}} b_t)^2 \leqslant 9\sum_{t \in \{1,\ldots,10\} \setminus \{6\}} b_t^2$, we have $A_1^2 \leq  a^2 + 9\sum_{t \in \{1,\ldots,10\} \setminus \{6\}} b_t^2$. \\Hence, to show $\mathbb{E}(A_1^2)$ vanishes it is enough to show the terms $\mathbb E(a^2)$ and $\mathbb E(b_t^2)$ vanishes. 
		
		The term $\mathbb E(a^2)$ directly goes to $0$ using the approximation error condition \textbf{(C.2)}. 
		
		Using triangular inequality, the $1^{st}$ and $10^{th}$ $\mathbb E (b^2_t)$ terms are bounded above by 
		
		$$(q+1)\alpha\left(\mathbb{E}_{\dot G}\left[ \dot Y-\dot Y_L\right]^2\right)$$
		
		Similarly, the $4^{th}$ and $8^{th}$ term is bounded above by the following:
		
		$$(q+1)\alpha\mathbb{E}\left(\frac 1n \sum_i \left[Y_i- Y_{i,L}\right]^2\right).$$
		
		The $3^{rd}$ term is bounded above by the following which vanishes by Condition \textbf{(C.1)}.
		$$
		(q+1)\alpha \mathbb{E}\left( \frac{1}{{n}}\sum_{{i}} (m_n(X_i) - \tilde m_n(X_i))^2.\right) \leq (q+1)\alpha \mathbb{E}\max_i  \left(m_n(X_i) - \tilde m_n(X_i)\right)^2.
		$$

		The $5^{th}$ and $7^{th}$ $\mathbb E(b_t^2)$ terms only consists of the $q^2$ residual terms arising from the first $q$ rows of the Cholesky factorization in Assumption \ref{as:chol}. Hence, they are bounded by:
		\begin{align*}
		&\left(\sum_{1 \leq i,j \leq q} |(\bL^\top\bL)_{ij}| \right)\frac 1n \mathbb E \max_{1 \leqslant i \leq q} \sup_{f \in \{m_n\} \cup T_{\zeta_n}\calF_n} (f(X_i) - Y_i)^2\\
		\leq 4&\left(\sum_{1 \leq i,j \leq q} |(\bL^\top\bL)_{ij}| \right) \frac 1n \mathbb E \left(\zeta_n^2 + \|m\|_\infty^2 + \max_{1 \leqslant i \leq q} \left[m_n(X_i)^2\mathds{I}(|m_n(X_i) \geqslant \zeta_n) + \eps_i^2\right]\right)
		\end{align*} 
		
		Using $\zeta_n^2/n \to 0$, boundedness of $m$, and 
		Condition \textbf{(C.1)}, this goes to zero. 
		
		The $2^{nd}$ and $9^{th}$ term are bounded  by the following:
		\begin{align*}
		&\mathbb{E} \Bigg[ \sup_{f \in T_{\zeta_n}\mathcal{F}_n}  \Bigg|\frac{1}{{n}}\sum_i \left[\brho^\top \left(f(X^{(i)})-Y^{(i)}\right)\right]^2  - \mathbb{E}_{\dot G}\left[\brho^\top \left(f(\dot X^{(q+1)}) - \dot Y_L^{(q+1)} \right) \right]^2 \Bigg|\\
		&\leqslant  \mathbb{E} \Bigg[ \sup_{f \in T_{\zeta_n}\mathcal{F}_n}  \Bigg| \alpha \left(\frac{1}{{n}}\sum_i(f(X_i) - Y_{i,L})^2  - \mathbb{E}_{\dot G}(f(\dot X_1) - \dot Y_{1,L})^2 \right)  \\
		&+ 2 \sum_{j=1}^q \sum_{j'\neq j}^q \rho_{j'}\rho_{j'-j} \Bigg( \frac{1}{n}
		\sum_{i}
		(f(X_{i}) - Y_{i,L}) (f(X_{i-j}) - Y_{i-j,L})\\
		&- \mathbb{E}_{\dot G}(f(\dot X_{i}) - \dot Y_{i,L})(f(\dot X_{i-j}) - \dot Y_{{i-j},L})\Bigg) \Bigg|\Bigg]
		\end{align*}
		Direct application of Assumption \ref{as:chol} (Equation \ref{eq:qf}) and the ULLN \textbf{C.3} sends this to zero. 
		
		Combining all of this, as their are $9$ $b_t$'s, we have
		\begin{align*}
		\lim_{n \to \infty} \mathbb EA_1^2 \leq 18(q+1)\alpha \left(\mathbb{E}_{\dot G}\left[ \dot Y-\dot Y_L\right]^2\right) + 18(q+1)\alpha \lim_{n \to \infty} \mathbb{E}\left(\frac 1n \sum_i \left[Y_i- Y_{i,L}\right]^2\right).
		\end{align*}
		The first term above goes to $0$ as $L \to \infty$. As $\eps_i$ is $\beta$-mixing, it can be proved similar to Lemma \ref{lem:strlaw} part 2, that $\frac 1n \sum_i \mathbb E(\eps_i^2 I(|\eps_i| > L)) \to \mathbb E(\eps_1^2 I(|\eps_1| > L))$ a.s. Hence, for $L>M_0 +1$, that the limit in the second term is bounded by $(M_0^2+1) \mathbb E \eps_1^2 \mathds{I}(|\eps_1| > L - M_0)$ which also goes to 0 as $L \to \infty$ due to the finite $(2+\delta)^{th}$ moment from Assumption \ref{as:mix}. 
	\end{proof}
	
	Returning to the proof of Theorem \ref{th:gyorfi}, Lemma \ref{lem:gyorfiqf} and (\ref{eq:l2alt}) implies that 
	$
	\lim_{n \to \infty}\mathbb{E}[\mathbb{E}_{\dot X_1} (T_{\zeta_n} m_n(\dot X_1) - m(\dot X_1))^2)=0
	$.
	The RF-GLS estimate $m_n(\bx)$ for any $\bx$ can only take one of the possible leaf node values. Hence $|m_n(x)| \leq \max_i |m_n(X_i)|$. 
	\begin{align*}
	\lim_{n \to \infty}\mathbb{E}\left[\mathbb{E}_{\dot X_1} (m_n(\dot X_1) - m(\dot X_1))^2)\right] &\leq \lim_{n \to \infty}\mathbb{E}\left[\mathbb{E}_{\dot X_1} (T_{\zeta_n} m_n(\dot X_1) - m(\dot X_1))^2)\right] \\
	& \quad + \lim_{n \to \infty} \mathbb E \max_i m^2_n(X_i)\mathds{I}(|m_n(X_i)| \geqslant \zeta_n).
	\end{align*}
	The last term is zero by Assumption \textbf{(C.1)},  completing the $\mathbb L_2$ consistency result for the tree estimates $m_n(\cdot, \Theta)$.  To get the result for the average estimate $\bar m_n=\mathbb E_\Theta m_n$, 
	by Jensen's inequality and Fubini's theorem, we have 
	\begin{align*}
	\mathbb E_{\calD_n} \left[\mathbb{E}_{\dot X_1} ( \mathbb{E}_{\Theta} (m_n(\dot X_1, \Theta)) - m( \dot X_1)^2)\right] &\leqslant \mathbb{E}_{\calD_n}\left[\mathbb{E}_{\dot X_1} \mathbb E_\Theta ( m_n(\dot X_1,\Theta) - m(\dot X_1))^2)\right]\\
	&= \mathbb{E}_{\calD_n,\Theta}\left[\mathbb{E}_{\dot X_1} ( m_n( X_1,\Theta) - m(\dot X_1))^2)\right] \to 0
	\end{align*}
\end{proof}

\subsection{Choice of Truncation Threshold}\label{sec:pf-trunc}
\begin{proposition}\label{lem:trunc}
	Under Assumptions \ref{as:chol}, \ref{as:diag} and \ref{as:tail}(a), the GLS tree estimator $m_n$ satisfies the truncation threshold condition \textbf{(C.1)}.
\end{proposition}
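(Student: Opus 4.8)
The plan is to reduce condition \textbf{(C.1)} to the error-tail condition of Assumption \ref{as:tail}(a) via an intermediate bound showing that the leaf-node values of a GLS-style tree cannot be much larger than $\max_j|\eps_j|$, uniformly over the tree and — crucially — uniformly over the leaf cardinalities. First I would control $\max_i|m_n(X_i)|$. Fix the realized leaf partition $\calC_1,\dots,\calC_g$ with membership matrix $\bZ$; since $\bQ$ is positive definite (a symmetric diagonally dominant matrix with positive diagonal, by Assumptions \ref{as:chol}--\ref{as:diag}) and the nonempty columns of $\bZ$ are orthogonal and nonzero, $\bZ^\top\bQ\bZ$ is invertible on those columns and $m_n(X_i)=\hat\beta_l$ for $X_i\in\calC_l$, with $\hat\bbeta=(\bZ^\top\bQ\bZ)^{-1}\bZ^\top\bQ\bY$. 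The naive bound via $\|(\bZ^\top\bQ\bZ)^{-1}\|_2$ is too lossy (it produces a spurious factor $n$), so instead I would solve the normal equations using diagonal dominance. By Lemma \ref{lem:diag} together with Assumptions \ref{as:chol}--\ref{as:diag} (exactly the estimate behind (\ref{eq:eigen})), $\bZ^\top\bQ\bZ$ is row-wise diagonally dominant with row-$l$ gap $\delta_l:=(\bZ^\top\bQ\bZ)_{ll}-\sum_{l'\ne l}|(\bZ^\top\bQ\bZ)_{ll'}|\ge \xi|\calC_l|$. On the other hand $\bQ$ is banded with fixed entries (Assumption \ref{as:chol}), so its row sums are bounded by a constant $C_Q<\infty$, whence $|(\bQ\bY)_i|\le C_Q\max_j|Y_j|$ and $|(\bZ^\top\bQ\bY)_l|=\big|\sum_{X_i\in\calC_l}(\bQ\bY)_i\big|\le C_Q|\calC_l|\max_j|Y_j|$. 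Choosing the coordinate $l^\ast$ attaining $\|\hat\bbeta\|_\infty$ in the $l^\ast$-th normal equation gives $\delta_{l^\ast}\|\hat\bbeta\|_\infty\le|(\bZ^\top\bQ\bY)_{l^\ast}|$; the common factor $|\calC_{l^\ast}|$ cancels, yielding
$$\max_i|m_n(X_i)|=\|\hat\bbeta\|_\infty\le \frac{C_Q}{\xi}\max_j|Y_j|\le C_1\big(M_0+\max_j|\eps_j|\big),\qquad C_1:=C_Q/\xi,$$
a constant $C_1$ free of $n$ and of $\Theta$ (using the standing bound $\|m\|_\infty\le M_0$).

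Given this, I would finish as in the truncation step of \cite{gyorfi2006distribution}. Since $m_n(X_i)-T_{\zeta_n}m_n(X_i)$ is zero unless $|m_n(X_i)|>\zeta_n$ and then has modulus at most $|m_n(X_i)|$,
$$\max_i\big[m_n(X_i)-T_{\zeta_n}m_n(X_i)\big]^2\le C_1^2\big(M_0+\max_j|\eps_j|\big)^2\,\mathds{I}\big(C_1(M_0+\max_j|\eps_j|)>\zeta_n\big),$$
and for $n$ large enough that $\zeta_n>2C_1M_0$ the indicator forces $\max_j\eps_j^2>\zeta_n^2/(4C_1^2)$. I would take $\zeta_n=2C_1\zeta_n^0$, where $\{\zeta_n^0\}$ is the sequence furnished by Assumption \ref{as:tail}(a); this still satisfies $\zeta_n\to\infty$ and $t_n(\log n)\zeta_n^4/n\to0$ (hence $\zeta_n^2/n\to0$), so it is an admissible truncation sequence that can also be used in the approximation and estimation steps, while $\zeta_n^2/(4C_1^2)=(\zeta_n^0)^2$. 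Taking expectations and using $(a+b)^2\le 2a^2+2b^2$,
$$\mathbb{E}\max_i\big[m_n(X_i)-T_{\zeta_n}m_n(X_i)\big]^2\le 2C_1^2M_0^2\,\mathbb{P}\big(\max_j\eps_j^2>(\zeta_n^0)^2\big)+2C_1^2\,\mathbb{E}\big[\max_j\eps_j^2\,\mathds{I}(\max_j\eps_j^2>(\zeta_n^0)^2)\big].$$
The second term tends to $0$ by Assumption \ref{as:tail}(a), and the first does too since $\mathbb{P}(\max_j\eps_j^2>(\zeta_n^0)^2)\le(\zeta_n^0)^{-2}\,\mathbb{E}[\max_j\eps_j^2\,\mathds{I}(\max_j\eps_j^2>(\zeta_n^0)^2)]\to0$; this establishes \textbf{(C.1)}.

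The main obstacle is the first step: controlling $\max_i|m_n(X_i)|$ for a GLS tree whose node representatives are only implicitly defined and couple all nodes through $\bQ$, without the bound degrading on small (e.g.\ singleton) leaves. For ordinary RF this is immediate because the representatives are node means; here it hinges on the cancellation of $|\calC_l|$ between the diagonal-dominance gap $\delta_l\gtrsim|\calC_l|$ and the right-hand side $|(\bZ^\top\bQ\bY)_l|\lesssim|\calC_l|$, which is precisely where the bandedness plus diagonal dominance of $\bQ$ (Assumptions \ref{as:chol}--\ref{as:diag}) enter. The only remaining wrinkle — that translating $\{|m_n|>\zeta_n\}$ into an event on the errors yields threshold $\zeta_n/(2C_1)$ rather than $\zeta_n$ — is handled by the harmless constant rescaling $\zeta_n=2C_1\zeta_n^0$, which leaves all scaling conditions unchanged.
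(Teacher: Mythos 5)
Your proposal is correct and follows essentially the same route as the paper's proof: both isolate the normal equation at the coordinate attaining $\|\hat\bbeta\|_\infty$, lower-bound the diagonal-dominance gap of $\bZ^\top\bQ\bZ$ by $\xi|\calC_l|$ via Lemma \ref{lem:diag}, upper-bound $|(\bZ^\top\bQ\bY)_l|$ by a constant times $|\calC_l|\max_j|Y_j|$ using the banded structure of $\bQ$, cancel the $|\calC_l|$ factors to get $\|\hat\bbeta\|_\infty \le C(\|m\|_\infty + \max_j|\eps_j|)$, and then conclude with the standard truncation argument under a harmless constant rescaling of $\zeta_n$ justified by Assumption \ref{as:tail}(a).
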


\begin{proof}[Proof of Proposition \ref{lem:trunc}]
	For any $n$, let the values corresponding to the $t_n$ leaf nodes be denoted by $\mathbf r = \left( \mathbf{Z}^\top\bQ\mathbf{Z}\right)^{-1}\mathbf{Z}^\top\bQ\mathbf{Y} $, where $i^{th}$ row of $\mathbf{Z}_{n \times t_n}$ denotes the membership of $i^{th}$ data in any of the $t_n$ leaf nodes. 
	Let us define
	$$\mathbf{B} := \left( \mathbf{Z}^\top\bQ\mathbf{Z}\right); \: \: \mathbf{u}:= \mathbf{Z}^\top\bQ\mathbf{Y};   \mbox{ and } l_n:= \argmaxA_{l \in \{1,\cdots, t_n \}} |r_l|; \text{ i.e., } \| \mathbf r\|_{\infty} = |\mathbf r_{l_n}|.$$
	Then 
	\begin{align*}
	\mathbf B_{l_n,l_n} \mathbf r_{l_n} =  \mathbf u_{l_n} - \sum_{l \neq l_n} \mathbf B_{l_n,l}\mathbf r_{l} &\implies \mathbf B_{l_n,l_n} |\mathbf r_{l_n}| \leqslant | \mathbf u_{l_n}| + \sum_{l \neq l_n} |\mathbf B_{l_n,l}||\mathbf r_{l}|\\
	&\implies \| \mathbf r\|_{\infty} = |\mathbf r_{l_n}| \leqslant \frac{|u_{l_n}|}{\mathbf B_{l_n,l_n} - \sum_{l \neq l_n} |\mathbf B_{l_n,l}|},
	\end{align*}
	since Lemma \ref{lem:diag} and the  Assumptions \ref{as:chol} and \ref{as:diag} implies
	$$
	\mathbf B_{l_n,l_n} - \sum_{l \neq l_n} |\mathbf B_{l_n,l}| \geqslant \xi |\mathcal{C}_{l_n}| \mbox{ where } \xi = \min_{i=1,\ldots,q+1} (\bQ_{ii} - \sum_{j \neq i} |\bQ_{ij}|).
	$$
	Using (\ref{eq:qf}),
	$$
	\begin{aligned}
	|u_{l_n}| &= |\mathbf{Z}_{,l_n}^\top \bQ \mathbf{Y}|\\ 
	&=  \alpha\sum_i\mathbf{Z}_{i,l_n}\mathbf{Y}_{i} + \sum_{j \neq j' = 0}^q \rho_j\rho_{j'} \sum_i\mathbf{Z}_{i-j,l_n}\mathbf{Y}_{i-j'} +\sum_{i \in \tilde{\mathcal{A}}_1}\sum_{i' \in \tilde{\mathcal{A}}_2} \tilde\gamma_{i,i'}\mathbf{Z}_{i,l_n}\mathbf{Y}_{i'}\\
	&\leqslant \max_{i} |y_i|\left[\alpha\sum_i\mathbf{Z}_{i,l_n} + \left[ \sum_{j \neq j' = 0}^q |\rho_j\rho_{j'}| \sum_i\mathbf{Z}_{i-j,l_n} +\sum_{i \in \tilde{\mathcal{A}}_1}\sum_{i' \in \tilde{\mathcal{A}}_2} |\tilde\gamma_{i,i'}|\mathbf{Z}_{i,l_n}\right] \right]\\
	&\leqslant \max_{i} |y_i| \left(\alpha + \sum_{j \neq j' = 0}^q |\rho_j\rho_{j'}| +\sum_{i \in \tilde{\mathcal{A}}_1}\sum_{i' \in \tilde{\mathcal{A}}_2} |\tilde\gamma_{i,i'}| \right)|\mathcal{C}_{l_n}|.\:\:\: 
	\end{aligned}
	$$
	Hence
	\begin{equation}\label{eq:bound}
	\begin{aligned}
	\| \mathbf r\|_{\infty} &\leqslant \frac{\left(\alpha + \sum_{j \neq j' = 0}^q |\rho_j\rho_{j'}| +\sum_{i \in \tilde{\mathcal{A}}_1}\sum_{i' \in \tilde{\mathcal{A}}_2} |\tilde\gamma_{i,i'}| \right)}{\xi}\max_{i} |y_i|\\
	&\leqslant 
	C \left[\| m\|_\infty + \max_{i} |\eps_i| \right], 
	\end{aligned}
	\end{equation}
	and 
	$$
	\begin{aligned}
	\max_i \left[m_n(X_i) - T_{\zeta_n}m_n(X_i) \right]^2 & \leqslant \left[\| \mathbf r\|_{\infty}^2 \mathds{I}\left(\| \mathbf r\|_{\infty} \geqslant \zeta_n \right)  \right]\\
	& \leqslant \left[C^2\left[\| m\|_\infty^2 + \max_{i} |\eps_i|^2 \right] \mathds{I}\left(  C \left[\| m\|_\infty + \max_{i} |\eps_i| \right]  \geqslant \zeta_n \right)  \right]\\
	& \leqslant \left[C^2\left[\| m\|_\infty^2 + \max_{i} |\eps_i|^2 \right] \mathds{I}\left(  \max_{i} |\eps_i| \geqslant \tilde{C}\zeta_n \right)  \right], \\
	\end{aligned}
	$$
	where as $\zeta_n \to \infty$ we can choose a constant $\tilde C$ such that $\tilde C \zeta_n \leq \zeta_n/C - \|m\|_\infty$ for large $n$. Choosing $\zeta_n$ to be $\zeta_n/\bar C$ from Assumption \ref{as:tail}(a), Condition \textbf{(C.1)} is satisfied.
\end{proof}

\subsection{Proof of corollaries}\label{sec:pf-cor}

\begin{proof}[Proof of Corollary \ref{cor:rfgls}] 
	To apply Theorem \ref{th:main} to prove this corollary, we only need to show that moment condition $\lim_{n \to \infty} \frac 1n \sum_i \mathbb E |m_n(X_i)|^{2+\delta} < \infty$ is satisfied.
	
	For bounded errors, direct application of (\ref{eq:bound}) implies $|m_n(x)|$ is uniformly bounded and hence will satisfy the $(2+\delta)^{th}$ moment condition. Hence, part (a) is proved. 
	
	For part (b), let $D=\min_i \bQ_{ii}$ and $O=\max_{i,j} \sum_i \sum_{j \neq i} |\bQ_{ij}|$. By the condition for part 2,  $D > \sqrt 2 O$. As we defined earlier, let
	
	$$\mathbf{B} = \left( \mathbf{Z}^\top\bQ\mathbf{Z}\right); \: \: \mathbf{Z}^\top\bQ\mathbf{Y} = \mathbf{u}; \implies  \mathbf B_{l_1,l_1} \mathbf r_{l_1} =  \mathbf u_{l_1} - \sum_{l_2 \neq l_1} \mathbf B_{l_1,l_2}\mathbf r_{l_1}$$
	With these notations,
	
	$$
	\sum_{{i}} |m_n(\mathbf{X}_i)|^{2+\dot{\delta}}  = \sum_{l_1 = 1}^{t_n} |\mathcal{C}_{l_1}| \mathbf |r_{l_1}|^{2+\dot\delta}
	$$
	
	Hence, $\mathbf B_{l_1,l_1} \mathbf r_{l_1} =  \mathbf u_{l_1} - \sum_{l_2 \neq l_1} \mathbf B_{l_1,l_2}\mathbf r_{l_1}$ implies the following:
	\begin{align*}
	\br_{l_1} \left(\sum_i \bZ_{il_1}\bQ_{ii} + \sum_i \sum_{j \neq i} \bZ_{il_1}\bZ_{jl_1} \bQ_{ij} \right) &= \bu_{l_1} - \sum_{l \neq l_1} \br_l \left(\sum_i \sum_{j \neq i} \bZ_{il_1}\bZ_{jl} \bQ_{ij} \right) \\
	\implies \br_{l_1} \sum_i \bZ_{il_1}\bQ_{ii} &= \bu_{l_1} - \sum_{l} \br_l \left(\sum_i \sum_{j \neq i} \bZ_{il_1}\bZ_{jl} \bQ_{ij} \right) \\
	\implies |\br_{l_1}| \sum_i \bZ_{il_1}\bQ_{ii} &\leqslant |\bu_{l_1}| + \sum_{l} w_l^{(l_1)} |\br_l| \\
	\implies D |\calC_{l_1}| |\br_{l_1}| &\leqslant |\bu_{l_1}| + \sum_{l} w_l^{(l_1)} |\br_l| 
	\end{align*}
	where 
	$$ w_l^{(l_1)} = \sum_i \sum_{j \neq i} \bZ_{il_1}\bZ_{jl} |\bQ_{ij}|
	$$
	satisfies 
	$ \sum_l w_l^{(l_1)} = \sum_i \sum_{j \neq i} \bZ_{il_1} |\bQ_{ij}| \leq O |\calC_{l_1}|
	$ and similarly, $ \sum_{l_1} w_l^{(l_1)} \leq O |\calC_{l}|$.
	Using Jensen's inequality twice we have 
	\begin{align*}
	|\br_{l_1}|^{2+\delta} (D |\calC_{l_1}|)^{2+\delta} &\leqslant 2^{1+\delta}\left[|\bu_{l_1}|^{2+\delta} + \left(\sum_{l} w_l^{(l_1)}\right)^ {1+\delta}\sum_{l} w_l^{(l_1)} |\br_l|^{2+\delta} \right]\\
	\implies \sum_{l_1=1}^{t_n} |\calC_{l_1}| |\br_{l_1}|^{2+\delta} &\leq \frac{2^{1+\delta}}{D^{2+\delta}}\sum_{l_1=1}^{t_n}\left[\frac{|\bu_{l_1}|^{2+\delta}}{|\calC_{l_1}|^{1+\delta}} +O^{1+\delta}\sum_{l} w_l^{(l_1)} |\br_l|^{2+\delta} \right]\\
	\implies \sum_{l_1=1}^{t_n} |\calC_{l_1}| |\br_{l_1}|^{2+\delta} &\leq \frac{2^{1+\delta}}{D^{2+\delta}}\sum_{l_1=1}^{t_n} \frac{|\bu_{l_1}|^{2+\delta}}{|\calC_{l_1}|^{1+\delta}} + \frac{(2O)^{1+\delta}}{D^{2+\delta}}\sum_{l}|\br_l|^{2+\delta}  \sum_{l_1=1}^{t_n}  w_l^{(l_1)} \\
	\implies \sum_{l_1=1}^{t_n} |\calC_{l_1}| |\br_{l_1}|^{2+\delta} &\leq \frac{2^{1+\delta}}{D^{2+\delta}}\sum_{l_1=1}^{t_n} \frac{|\bu_{l_1}|^{2+\delta}}{|\calC_{l_1}|^{1+\delta}} + \frac{2^{1+\delta}O^{2+\delta}}{D^{2+\delta}}\sum_{l}|\br_l|^{2+\delta} |\calC_l|
	\end{align*}
	Bring over the second term from the right hand side to the left, we have
	\begin{align*}
	\left(1 -\frac{2^{1+\delta}O^{2+\delta}}{D^{2+\delta}} \right) \frac 1n \sum_{l_1=1}^{t_n} |\calC_{l_1}| |\br_{l_1}|^{2+\delta} &\leq \frac 1n \frac{2^{1+\delta}}{D^{2+\delta}}\sum_{l_1=1}^{t_n} \frac{|\bu_{l_1}|^{2+\delta}}{|\calC_{l_1}|^{1+\delta}} 
	\end{align*}
	As $D > \sqrt 2 O$, the term $1 -\frac{2^{1+\delta}O^{2+\delta}}{D^{2+\delta}}$ is positive and bounded away from 0 for small enough $\delta$. Hence, we only need to show the right hand side has finite expectation.  
	\begin{align*}
	\frac 1n \sum_{l=1}^{t_n} |\calC_l| \left(\frac{|\bu_{l}|}{|\calC_{l}|} \right)^{2+\delta} \leq& \frac 1n \sum_{l=1}^{t_n} |\calC_l| \left(\frac 1{|\calC_l|}\sum_i \sum_{j \neq i}\bZ_{il} |\bQ_{ij}| |Y_j|\right)^{2+\delta}\\
	=& \frac 1n \sum_{l=1}^{t_n} |\calC_l| \left(\frac 1{|\calC_l|}\sum_{i \in \calC_l} \sum_{j=-q}^q |\bQ_{i,i+j}|Y_{i+j}|\right)^{2+\delta} \mbox{  [Assumption \ref{as:chol}]}\\
	\leq& \frac 1n \sum_{l=1}^{t_n} |\calC_l| \frac 1{|\calC_l|} \sum_{i \in \calC_l} \left(\sum_{j=-q}^q |\bQ_{i,i+j}|||Y_{i+j}|\right)^{2+\delta} \mbox{  [Jensen's Inequality]} \\
	=& \frac 1n \sum_{i=1}^n \left(\sum_{j=-q}^q |\bQ_{i,i+j}|||Y_{i+j}|\right)^{2+\delta} \\
	\leq& \frac 1n K_1 \sum_{i=1}^n \sum_{j=-q}^q |Y_{i+j}|^{2+\delta}. \mbox{  [Jensen's Inequality]}
	\end{align*}
	The last inequality also uses the fact that $\bQ$ has only $O(q^2)$ unique entries whose maximum is hence bounded by some $K_1$. As $\eps_i$'s have finite $(2+\delta)^{th}$ moment, the expectation of this term is finite. 
\end{proof}

\begin{proof}[Proof of Corollary \ref{cor:rf}] RF is RF-GLS with $\bQ=\bI$. Hence, the condition $\min_i \bQ_{ii} > \sqrt 2 \max_i \sum_{j \neq I} |\bQ_{ij}|$ is trivially satisfied. So, the proof follows from Corollary \ref{cor:rfgls}.
\end{proof}

\subsection{Examples}\label{sec:pf-examples}
\begin{proof}[Proof of Proposition \ref{th:ar}]
	We will directly apply Corollary \ref{cor:rfgls} to prove the result and hence only need to prove that all assumptions are satisfied.
	
	Assumption \ref{as:mix} is satisfied as AR processes have been shown to be $\beta$-mixing \citep{mokkadem1988mixing}, and sub-Gaussianity of the errors ensures all moments are finite \citep[Lemma 5.5,][]{vershynin2010introduction}. 
	Next we verify Assumptions \ref{as:chol} and \ref{as:diag} on the working covariance matrix $\bSigma$. We can write $\bSigma = Cov(\tilde \beps)$ where $\tilde \beps = (\tilde \eps_1, \ldots, \tilde \eps_n)^\top$ generated from an $AR(q)$ process with coefficients $\tilde a_i$'s. We note from (\ref{eq:ar}) that we can write $\bA\tilde \beps = \tilde \boeta$ where $\tilde \boeta = (\tilde \eta_{q+1}, \ldots, \tilde \eta_n)^\top$ is the white noise process used to generate the $\tilde \eps_i$'s. 
	\begin{equation*}
	\bA_{(n-q) \times n} = \left(\begin{array}{cccccccccc}
	-\tilde a_q & -\tilde a_{q-1}& \ldots & -
	\tilde a_1 & 1 & 0 & 0 & \ldots & 0 & 0\\
	0 & -\tilde a_q & -\tilde a_{q-1}& \ldots & -
	\tilde a_1 & 1 & 0 & \ldots & 0 & 0\\
	\cdots & \cdots & \cdots & \cdots & \cdots & \cdots & \cdots & \cdots & \cdots  & \cdots\\
	0 & 0 & \ldots & \ldots & 
	\ldots & 0 & -\tilde a_q & \ldots & -\tilde a_1 & 1\\
	\end{array} \right), 
	\end{equation*}
	Let $\tilde \beps_{1:q}=(\tilde \eps_1, \ldots, \tilde \eps_q)^\top$ and let $\bL_{q \times q}$ be the lower triangular Cholesky factor of the inverse of $\bM=Cov(\tilde \beps_{1:q})$  i.e. $\bL^\top\bL = \bM^{-1}$. Then $\mathbb V(\bL\tilde \beps_{1:q})=\bI_{q \times q}$ and $Cov(\bL\tilde \beps_{1:q}, \bA\tilde \beps)= Cov(\bL\tilde \beps_{1:q}, \tilde \boeta) = \bO$ as $\tilde \boeta$ is independent of $\beps_{1:q}$. Defining $\sigs=\mathbb V(\tilde{\eta}_i)$, 
	\begin{equation}\label{eq:chol}
	\bB=\left(\begin{array}{cc}
	\bL_{q \times q} \quad & \bO_{q \times n-q} \\
	\multicolumn{2}{c}{\frac 1\sigma \bA_{n-q \times n}}
	\end{array}\right),
	\end{equation}
	we have $Cov(\bB\tilde \beps)=\bI$. Hence, $\bB$  is the Cholesky factor $\bSigma^{-1/2}$ making it clear that $\bSigma^{-1/2}$ satisfies (\ref{as:chol}) with $\brho =\frac 1\sigma(-\tilde a_q, \ldots, -
	\tilde a_1, 1)^\top$. 
	
	To check Assumption \ref{as:diag}, we first consider $q=1$. Then $\bQ$ is simply the autoregressive covariance matrix with parameter $\rho$, i.e., $\bQ_{11}=\bQ_{nn}=1$, $\bQ_{ii}=1+\rho^2$ for $2 \leq i \leq n-1$, $\bQ_{i,i+1}=\bQ_{i,i-1}=-\rho$, $\bQ_{ij}=0$ for $|i-j| \geq 2$. 
	Hence Assumption \ref{as:diag} is always satisfied as $1+\rho^2 > 2|\rho|$, i.e.,  for any $|\rho| < 1$. For unbounded errors, additionally $\bQ$ needs to satisfy 
	\begin{equation}\label{eq:hyperdiag}
	\min_i \bQ_{ii} > \sqrt 2 \max_i \sum_{j \neq i} |\bQ_{ij}|.
	\end{equation} 
	This reduces to $1 > 2\sqrt 2 |\rho|$, i.e., $|\rho| < 1/2\sqrt 2$. 
	For $q \geq 2$, by the statement of Proposition \ref{th:ar} part 2, Assumption \ref{as:diag} or condition (\ref{eq:hyperdiag}) (for unbounded errors) is directly satisfied. 
	
	Finally, we need to verify the tail bounds of Assumption \ref{as:tail}. Proof of part (a) is same as that in \cite{scornet2015consistency} (p. 1733) with $\zeta_n=O(\log n)^2$ as maximum of $n$ sub-Gaussian and correlated random variables satisfy the same tail bound \citep[Theorem 1.14,][]{mitsubg}. 
	
	The same bound can be used to prove part (b). As $\eps_i$'s are identically distributed being a stationary process, once again using Theorem 1.14 of \cite{mitsubg} , we have
	\begin{align*}
	\mathbb P(\max_{i \in \calI_n} |\eps_i| > C_\pi \sqrt{\log |\calI_n|}) &\leqslant
	|\calI_n|^{\left(1 - C^2_\pi/(2\sigs_\eps) \right)}
	\end{align*}
	where $\sigs_\eps$ denote the sub-Gaussian parameter of $\eps_i$'s. For any choice of $C_\pi$ such that $C^2_\pi > 2\sigs_\eps$, this goes to zero as $n \to \infty$, proving part (b).
	
	For part (c), we make the observation that if $\bSigma_0$ denote the true autoregressive covariance matrix of the errors $\beps$, then following the argument above, we can write $\beps = \bSigma_0^{1/2}\boeta$ where $\boeta=(\eta_1,\ldots,\eta_n)$ is the collection of i.i.d. sub-Gaussian random variables. If $\ba$ denotes the $n \times 1$ binary vector corresponding to the selection $\calI_n$, we have 
	\begin{align*}
	\mathbb P(\frac 1{|\calI_n|} |\sum_{i \in \calI_n} \eps_i| > \delta) =&  \mathbb P(\frac 1{|\calI_n|} |\ba^\top\bSig_0^{1/2}\boeta| > \delta)\\
	\leq& C\exp(-c|\calI_n|^2/\ba^\top\bSig_0\ba)\\
	\leq&  C\exp(-c|\calI_n|/\lambda_{\max}(\bSig_0)).
	\end{align*}
	The first inequality follows from \cite{vershynin2010introduction} Proposition 5.10. Hence it is enough to show that $\lambda_{\max}(\bSig_0)$ is bounded.  As $\eps_i$'s are generated from a stable $AR(q_0)$ process, the roots of the characteristic polynomial lie outside zero and the spectral density is bounded from above \citep[Eqn. 2.6,][]{basu2015regularized} which in turn bounds the spectral norm of $\bSigma_0=Cov(\beps)$ \citep[Prop. 2.3,][]{basu2015regularized}, proving the first part of Assumption \ref{as:tail}(c).

	For the second part, as $\eta_i$ are sub-Gaussian, $\eta^2_i$ are sub-exponential and $\|\beps\|_2^2 \leq \lambda_{\max}(\bSigma_0)\|\boeta\|_2^2$. We have 
	$$\begin{aligned}
	\mathbb P(\frac 1{n} \|\beps\|_2^2 > 1.1\lambda_{\max}(\bSigma_0)\mathbb E\eta_1^2)  &\leq \mathbb P(\frac 1{n} \|\boeta \|_2^2 > 1.1\mathbb E\eta_1^2 ) \\ &= \mathbb P(\frac 1{n} \sum_i \eta_i^2 - \mathbb E\eta_1^2 > .1\mathbb E\eta_1^2 )\\ &\leq  C\exp(-cn).
	\end{aligned}$$ The last inequality is from  \cite{vershynin2010introduction} Corollary 5.17.  
\end{proof}

\begin{proof}[Proof of Proposition \ref{th:mat}] 
	We first prove the result without the nugget process $\eps^*(\ell)$. Once we prove that the Mat\'ern GP is $\beta$-mixing, the extension to include the nugget is trivial as sum of a $\beta$-mixing and an i.i.d process by (as proved from \ref{eq:betamix}). Let $\eps(\ell)$ denote a Mat\'ern process with $\nu=q_0-1/2$ on the real line, $q_0$ being a positive integer. From \cite{hartikainen2010kalman}, $\eps(\ell)$ admits a state-space representation:
	\begin{align*}
	a_q \frac {\partial^{q_0}\eps(\ell)}{\partial \ell^{q_0}}  + a_{q_0-1}\frac {\partial^{q_0-1}\eps(\ell)}{\partial \ell^{q_0-1}} + \ldots  a_{1}\frac {\partial\eps(\ell)}{\partial \ell} + \ldots  a_{0}\eps(\ell) = b_0 z(\ell)
	\end{align*}
	where $z(\ell)$ is a white-noise process with covariance $\mathds{I}(\ell-\ell'=0)$. Processes satisfying such $q_0$-order stochastic differential equations are continuous-domain AR$(q_0)$ processes \citep[Eqn. B.2,][]{rasmussen2003gaussian} which when sampled on a discrete integer lattice become an ARMA$(q_0,q_0')$ process $(q_0' < q_0)$ \citep[Theorem 2.7.1,][]{ihara1993information}. As ARMA processes are $\beta$-mixing \citep{mokkadem1988mixing}, the process $\eps(\ell_i)$ sampled on the integer lattice is proved to be $\beta$-mixing. Since they are also Gaussian, they have a bounded $(2+\delta)^{th}$ moment. Hence, Assumption \ref{as:mix} is satisfied. 
	
	Assumption (\ref{as:chol}) is directly satisfied by a NNGP working covariance matrix $\bSigma$ as explained in Section \ref{sec:gp}. Since we are considering Gaussian (unbounded) errors, to apply Theorem \ref{th:main}, the working precision matrix $\bQ$ needs to satisfy (\ref{eq:hyperdiag}) which will also ensure Assumption \ref{as:diag} holds. Let $d_i=\bQ_{ii}$ and $o_i=\sum_{j \neq i} |\bQ_{ij}|$. For an NNGP using $q$ nearest neighbors, there are only $q+1$ unique combinations of $(o_i,d_i)$. For each combination, as $o_i \to 1$ and $d_i \to 0$ as $\phi \to \infty$, there exists $\phi_i$ such that $o_i > max_{i=1}^{q+1} \; d_i$, choosing $\phi > K = \max_{i=1}^{q+1} \phi_i$, (\ref{eq:hyperdiag}) holds. 
	
	For the tail-bounds in Assumption \ref{as:tail}, we note that the proofs for showing Assumptions \ref{as:tail}(a) and (b) hold remain identical to those proofs in Proposition \ref{th:ar} as they only require result on maximal inequalities of sub-Gaussian variables. For part(c), the proof will once again emulate that from Proposition \ref{th:ar} and we only need to show that the spectral norm of $\bSigma_0=Cov(\beps,\beps)$ is bounded. Let  
	$f_\mathbb Z(\omega)$ the spectral density of a Mat\'ern process when sampled on the integer lattice $\mathbb Z$. Then,
	$f_\mathbb Z(\omega) = \sum_{k=\infty}^\infty C(k)\exp(ik\omega)$. Hence, $\sup_\omega |f_\mathbb Z(\omega)| \leq \sum_{k=\infty}^\infty C(k)$. Using \cite{abramowitz1948handbook} (Eqn. 9.7.2) for large $k$, $C(k)$ is equivalent to $k^{q_0 - 1}\exp(-k)$. So the above series is summable and $\sup_\omega |f_\mathbb Z(\omega)| < \infty$. This is sufficient to uniformly bound the spectral norm $\|\bSigma_0\|_2$ \citep[Prop. 2.3,][]{basu2015regularized}. 
\end{proof}

\section{Additional Proofs}\label{sec:add}
\begin{proof}[Proof of Proposition \ref{lemma:equicontinuity} under Scenario \textbf{R2}]
	
	In Section \ref{sec:proofs}, we proved the lemma under \textbf{R1}. The equicontinuity of the term  $\frac{1}{n}|\mathbf{Y}^\top\bQ\mathbf{Z}^{(0)}_1 \bm{\hat{\beta}}(\mathbf Z^{(0)}_1) - \mathbf{Y}^\top\bQ\mathbf{Z}^{(0)}_2 \bm{\hat{\beta}}(\mathbf Z^{(0)}_2)|$ established in that proof also holds under \textbf{R2} since the term does not involve the children nodes. 
	So, it is enough to prove equicontinuity of 
	$\frac{1}{n}|\mathbf{Y}^\top\bQ\mathbf{Z}_1 \bm{\hat{\beta}}(\mathbf Z_1) - \mathbf{Y}^\top\bQ\mathbf{Z}_2 \bm{\hat{\beta}}(\mathbf Z_2)|$ under  conditions of the proposition and \textbf{R2}. 
	
	For $h = 1,2$, $\forall i \in  \{1, 2, \cdots, n\}$, define matrices $\tilde\bZ_h^{(0)}$ such that
	$\mathbf{\tilde{Z}}^{(0)}_{h_{i,{l}}}  =\mathds{I}\left(\mathbf{x}_i \in \ddot{\mathcal{C}}_{{l}}^{(h)}\right);\: {l} = 1,2, \cdots, g^{(k)} -1
	$, and 
	
	$$
	\mathbf{\tilde{Z}}^{(0)}_{h_{i,{l}}} = \mathds{I}\left(\frac{Vol(\ddot{\mathcal{C}}_{{l}}^{(h)})}{Vol(\ddot{\mathcal{C}}_{g^{(k)}}^{(h)}) + Vol(\ddot{\mathcal{C}}_{g^{(k)}+1}^{(h)})} \geqslant \sqrt{\delta}\right) \mathds{I}\left(\mathbf{x}_i \in \{\ddot{\mathcal{C}}_{g^{(k)}}^{(h)} \cup \ddot{\mathcal{C}}_{g^{(k)}+1}^{(h)}\}\right);\: {l} = g^{(k)}, g^{(k)} +1. 
	$$
	Basically, $\mathbf{\tilde{Z}}^{(0)}_{h}$ adds a column of zeros to $\mathbf{{Z}}^{(0)}_{h}$ to match the dimensions of $\bZ_{h}$ and rearranges the columns so that the column of zeros aligns with the column in  $\bZ_{h}$ corresponding to the child node with few members (which is posited under \textbf{R2}). 
	
	\begin{align*} 
	&\frac{1}{n}|\mathbf{Y}^\top\bQ\mathbf{Z}_1 \bm{\hat{\beta}}(\mathbf Z_1) - \mathbf{Y}^\top\bQ\mathbf{Z}_2 \bm{\hat{\beta}}(\mathbf Z_2)|\\ &= |\frac{1}{{n}}\mathbf{Y}^\top \bQ\mathbf Z_1\left[\mathbf Z_1^\top \bQ\mathbf Z_1\right]^{-1}\mathbf Z_1^{\top} \bQ\mathbf y -\frac{1}{{n}}\mathbf{Y}^\top \bQ\mathbf Z_2\left[\mathbf Z_2^\top \bQ\mathbf Z_2\right]^{-1}\mathbf Z_2^{\top} \bQ\mathbf y |\\
	& \leqslant J_1 + J_2 + J_3 + J_4 + J_5
	\end{align*} 
	
	where,
	
	$$
	J_1 =\frac{1}{{n}}|\mathbf{Y}^\top \bQ\mathbf Z_1\left(\left[\mathbf Z_1^\top \bQ\mathbf Z_1\right]^{-1} - \left[\mathbf{\tilde{Z}}_1^{(0)^\top} \bQ\mathbf{\tilde{Z}}_1^{(0)}\right]^{+}\right)\mathbf Z_1^{\top} \bQ\mathbf y|,
	$$
	$$
	J_2  =\frac{1}{{n}}|\mathbf{Y}^\top \bQ\mathbf Z_1 \left[\mathbf{\tilde{Z}}_1^{(0)^\top} \bQ\mathbf{\tilde{Z}}_1^{(0)}\right]^{+}\mathbf Z_1^{\top} \bQ\mathbf y - \mathbf{Y}^\top \bQ\mathbf{\tilde{Z}}_1^{(0)} \left[\mathbf{\tilde{Z}}_1^{(0)^\top} \bQ\mathbf{\tilde{Z}}_1^{(0)}\right]^{+}\mathbf{\tilde{Z}}_1^{(0)^\top} \bQ\mathbf y |,
	$$
	$$
	J_3 =\frac{1}{{n}}|\mathbf{Y}^\top \bQ\mathbf Z_2\left(\left[\mathbf Z_2^\top \bQ\mathbf Z_2\right]^{-1} - \left[\mathbf{\tilde{Z}}_2^{(0)^\top} \bQ\mathbf{\tilde{Z}}_2^{(0)}\right]^{+}\right)\mathbf Z_2^{\top} \bQ\mathbf y|,
	$$
	$$
	J_4  = \frac{1}{{n}}|\mathbf{Y}^\top \bQ\mathbf Z_2 \left[\mathbf{\tilde{Z}}_2^{(0)^\top} \bQ\mathbf{\tilde{Z}}_2^{(0)}\right]^{+}\mathbf Z_2^{\top} \bQ\mathbf y - \mathbf{Y}^\top \bQ\mathbf{\tilde{Z}}_2^{(0)} \left[\mathbf{\tilde{Z}}_2^{(0)^\top} \bQ\mathbf{\tilde{Z}}_2^{(0)}\right]^{+}\mathbf{\tilde{Z}}_2^{(0)^\top} \bQ\mathbf y |,
	$$
	$$
	J_5 = \frac{1}{n}| \mathbf{Y}^\top \bQ\mathbf{\tilde{Z}}_1^{(0)} \left[\mathbf{\tilde{Z}}_1^{(0)^\top} \bQ\mathbf{\tilde{Z}}_1^{(0)}\right]^{+}\mathbf{\tilde{Z}}_1^{(0)^\top} \bQ\mathbf y - \mathbf{Y}^\top \bQ\mathbf{\tilde{Z}}_2^{(0)} \left[\mathbf{\tilde{Z}}_2^{(0)^\top} \bQ\mathbf{\tilde{Z}}_2^{(0)}\right]^{+}\mathbf{\tilde{Z}}_2^{(0)^\top} \bQ\mathbf y |.
	$$

	First we focus on $J_5$. The terms are of the form $$\mathbf{Y}^\top \bQ\mathbf{\tilde{Z}}_h^{(0)} \left[\mathbf{\tilde{Z}}_h^{(0)^\top} \bQ\mathbf{\tilde{Z}}_h^{(0)}\right]^{+}\mathbf{\tilde{Z}}_h^{(0)^\top} \bQ\mathbf y = \mathbf{Y}^\top \bQ^{\frac 12}\dot\bP_{\bQ^{\frac \top 2}\mathbf{\tilde{Z}}_h^{(0)}} \bQ^{\frac \top 2}\mathbf y$$ where $\dot\bP_\bX$ denotes the projection operator for a matrix $\bX$. By construction of $\mathbf{\tilde{Z}}_h^{(0)}$, column space of $\bQ^{\frac \top 2}\mathbf{\tilde{Z}}_h^{(0)}$ is same as that of $\bQ^{\frac \top 2}\mathbf{{Z}}_h^{(0)}$. Hence, $\bP_{\bQ^{\frac \top 2}\mathbf{\tilde{Z}}_h^{(0)}}=\bP_{\bQ^{\frac \top 2}\mathbf{{Z}}_h^{(0)}}$, and, 
	$$\mathbf{Y}^\top \bQ\mathbf{\tilde{Z}}_h^{(0)} \left[\mathbf{\tilde{Z}}_h^{(0)^\top} \bQ\mathbf{\tilde{Z}}_h^{(0)}\right]^{+}\mathbf{\tilde{Z}}_h^{(0)^\top} \bQ\mathbf y = \mathbf{Y}^\top \bQ\mathbf{{Z}}_h^{(0)} \left[\mathbf{{Z}}_h^{(0)^\top} \bQ\mathbf{{Z}}_h^{(0)}\right]^{+}\mathbf{{Z}}_h^{(0)^\top} \bQ\mathbf y.$$
	Thus, $J_5$ simply becomes $\frac{1}{n}|\mathbf{Y}^\top\bQ\mathbf{Z}^{(0)}_1 \bm{\hat{\beta}}(\mathbf Z^{(0)}_1) - \mathbf{Y}^\top\bQ\mathbf{Z}^{(0)}_2 \bm{\hat{\beta}}(\mathbf Z^{(0)}_2)|$ which has been shown to be small earlier in the proof of this Lemma in Section \ref{sec:proofs}, irrespective of \textbf{R1} or \textbf{R2}. 
	
	To simplify $J_2$, let for $h=1,2$, $\mathbf{\tilde{Z}}^L_{h}$ and $\mathbf{\tilde{Z}}^R_{h}$ respectively denote the columns of $\mathbf{{Z}_h}$ corresponding to the bigger and smaller child nodes. As the zero column of $\mathbf{\tilde{Z}}_h^{(0)}$ aligns with the column $\mathbf{\tilde{Z}}^R_{h}$ of $\mathbf{{Z}_h}$, we have 
	
	\begin{equation}\label{eq:align}
	\mathbf{{Z}}_h \left[ \tilde{\mathbf{Z}}_h^{(0)^\top} \bQ \tilde{\mathbf{Z}}_h^{(0)}\right]^{+} \mathbf{{Z}}_h^{\top} = \mathbf{\tilde{Z}}_h \left[ \mathbf{Z}_h^{(0)^\top} \bQ \mathbf{Z}_h^{(0)}\right]^{-1} \mathbf{\tilde{Z}}_h^{\top} \mbox{ where } \mathbf{\tilde{Z}}_h = \left({\mathbf{{Z}}_h}_{\cdot,1:g^{(k)}-1}\;\; \mathbf{\tilde{Z}}^L_{h}\right).
	\end{equation}
	
	Writing, 
	$
	\mathbf{Z}^{(0)}_1 = \mathbf{\tilde{Z}}_1 + \begin{bmatrix} \mathbf{0}_{{n} \times g^{(k)}-1} & \mathbf{\tilde{Z}}^R_{1} \end{bmatrix} $, we have
	$$
	\begin{aligned}
	J_2 &=\frac 1n \Big|\mathbf{Y}^\top \bQ \mathbf{Z}_1^{(0)} \left[ \mathbf{Z}_1^{(0)^\top} \bQ \mathbf{Z}_1^{(0)}\right]^{-1} \mathbf{Z}_1^{(0)^\top} \bQ\mathbf y - \mathbf{Y}^\top \bQ \mathbf{\tilde{Z}}_1 \left[ \mathbf{Z}_1^{(0)^\top} \bQ \mathbf{Z}_1^{(0)}\right]^{-1} \mathbf{\tilde{Z}}_1^{\top} \bQ\mathbf y\Big|\\ 
	&\leq \frac 1n 
	\Big|\begin{bmatrix} \mathbf{0}_{{1} \times g^{(k)}-1}& \mathbf{Y}^\top \bQ\mathbf{\tilde{Z}}^R_{1} \end{bmatrix} \left[ \mathbf{Z}_1^{(0)^\top} \bQ \mathbf{Z}_1^{(0)}\right]^{-1}\begin{bmatrix} \mathbf{0}_{{1} \times g^{(k)}-1}& \mathbf{Y}^\top \bQ\mathbf{\tilde{Z}}^R_{1} \end{bmatrix}^\top\Big|\\
	&\qquad + \frac 2n \Big|\begin{bmatrix} \mathbf{0}_{{1} \times g^{(k)}-1}& \mathbf{Y}^\top \bQ\mathbf{\tilde{Z}}^R_{1} \end{bmatrix} \left[ \mathbf{Z}_1^{(0)^\top} \bQ \mathbf{Z}_1^{(0)}\right]^{-1}(\mathbf{Y}^\top \bQ\mathbf{\tilde{Z}}_1)^\top\Big| \\
	\end{aligned}
	$$
	Denote these two terms on the right hand side respectively as $J_{21}$ and $J_{22}$. For $J_{21}$, we have on $\Omega_n$ (set with probability at least $1-3\pi/4$),
	$$
	\begin{aligned}
	J_{21} &=\frac{1}{{n}}| \begin{bmatrix} \mathbf{0}_{{1} \times g^{(k)}-1}& \mathbf{Y}^\top \bQ\mathbf{\tilde{Z}}^R_{1} \end{bmatrix} \left[ \mathbf{Z}_1^{(0)^\top} \bQ \mathbf{Z}_1^{(0)}\right]^{-1}\begin{bmatrix} \mathbf{0}_{{1} \times g^{(k)}-1}& \mathbf{Y}^\top \bQ\mathbf{\tilde{Z}}^R_{1} \end{bmatrix}^\top|\\
	& \leqslant \frac 1n \lambda_{\max} \left[ \mathbf{Z}_1^{(0)^\top} \bQ \mathbf{Z}_1^{(0)}\right]^{-1} \| \left(\mathbf{0}_{{1} \times g^{(k)}-1}, \mathbf{Y}^\top \bQ\mathbf{\tilde{Z}}^R_{1}\right)  \|^2\\
	& = \left(\frac{\mathbf{Y}^\top \bQ\mathbf{\tilde{Z}}^R_{1} }{{n}}\right)^2\frac{{n}}{|\lambda_{min} (\mathbf{Z}_1^{(0)^\top} \bQ \mathbf{Z}_1^{(0)})|} \\
	& {\leqslant \left(\frac{\mathbf{Y}^\top \bQ\mathbf{\tilde{Z}}^R_{1} }{{n}}\right)^2 \frac{1}{\xi} \left[\left(\displaystyle{\min\limits_{l\in {1,2,\cdots,g^{(k)} - 1}} }|\ddot{\mathcal{C}}_l^{(1)}|, |\ddot{\mathcal{C}}_{g^{(k)}}^{(1)}|+|\ddot{\mathcal{C}}_{g^{(k)}+1}^{(1)}|\right) \right]^{-1}n}\\
	& \leqslant \left(\frac{\mathbf{Y}^\top \bQ\mathbf{\tilde{Z}}^R_{1} }{{n}}\right)^2 \frac{1}{\xi}{\frac{1}{\varepsilon^D - \delta^2}}; [\text{ for sufficiently large $n$}]
	\end{aligned}
	$$
	
	{Here the last two inequalities are obtained similar to the derivation of (\ref{eq:L4_P1_2})}, exploiting the lower bound of the eigenvalue in (\ref{eq:eigen}) and the fact that $Vol(\ddot{\mathcal{C}}_l^{(1)}) \geqslant \varepsilon^D, \forall l = 1, 2, \cdots , g^{(k)} - 1$, and $Vol(\ddot{\mathcal{C}}_{g^{(k)}}^{(1)} \cup \ddot{\mathcal{C}}_{g^{(k)}+1}^{(1)}) \geqslant \varepsilon^D$, in conjunction with the Glivenko-Cantelli result of (\ref{eq:glc}) . 
	We now focus on $\mathbf{Y}^\top \bQ\mathbf{\tilde{Z}}^R_{h}$. Let $l_0 \in \{g^{(k)},g^{(k)}+1\}$ denote the column of $\bZ_1$ corresponding to the smaller child node. Using (\ref{eq:qf}), we have
	
	$$
	\begin{aligned}
	\frac{1}{n}\mathbf{Y}^\top \bQ\mathbf{\tilde{Z}}^R_{1} &= \alpha \frac{\sum_i\mathbf{Z}_{1_{i,l_0}}m(X_i)}{n} + \sum_{j \neq j' = 0}^q \rho_j\rho_{j'} \frac{\sum_i\mathbf{Z}_{1_{i-j,l_0}} m(X_{i-j'} )}{n}\\
	&+ \alpha \frac{\sum_i\mathbf{Z}_{_1{i,l_0}}\eps_i}{n} + \sum_{j \neq j' = 0}^q \rho_j\rho_{j'} \frac{\sum_i\mathbf{Z}_{1_{i-j,l_0}} \eps_{i-j'} }{n}\\
	&+\frac{1}{n}\sum_{i \in \tilde{\mathcal{A}}_1}\sum_{i' \in \tilde{\mathcal{A}}_2} \tilde\gamma_{i,i'}\mathbf{Z}_{1_{h_1,l_0}}m(X_{h_2}) + \frac{1}{n}\sum_{i \in \tilde{\mathcal{A}}_1}\sum_{i' \in \tilde{\mathcal{A}}_2} \tilde\gamma_{i,i'}\mathbf{Z}_{1_{h_1,l_0}}\eps_{h_2}
	\end{aligned}
	$$

	On $\Omega_n$, using the tail bounds from Assumptions \ref{as:tail}(b) and (c) and Lemma \ref{lem:weaklaw2},   
	and the fact that $|m(\mathbf{x})| \leqslant M_0; \mathbf{x} \in [0,1]^D$, we have for large enough $n$, 
	$$
	\begin{aligned}
	\frac{1}{n}|\mathbf{Y}^\top \bQ\mathbf{\tilde{Z}}^R_{1}| &\leqslant \frac{1}{n}M_0 \left[\alpha \sum_i\mathbf{Z}_{i,l_0} +\left[\sum_{j \neq j' = 0}^q |\rho_j\rho_{j'}| \sum_i\mathbf{Z}_{i-j,l_0} + \sum_{i \in \tilde{\mathcal{A}}_1}\sum_{i' \in \tilde{\mathcal{A}}_2} |\tilde\gamma_{i,i'}|\mathbf{Z}_{h_1,l_0}\right]\right]\\
	& +\frac{\alpha}{n}\Big| \sum_i\mathbf{Z}_{i,l_0}\eps_i\Big| + \sum_{j \neq j' = 0}^q \frac{|\rho_j\rho_{j'}|}{n} |\sum_i\mathbf{Z}_{i-j,l_0} \eps_{i-j'}| + \frac{C_{\pi} \sqrt{\ln n}}{n}\sum_{i \in \tilde{\mathcal{A}}_1}\sum_{i' \in \tilde{\mathcal{A}}_2} |\tilde\gamma_{i,i'}|\mathbf{Z}_{h_1,l_0}\\
	& \leqslant (\alpha +\sum_{j \neq j' = 0}^q |\rho_j\rho_{j'}|) \left( M_0 \frac{\sum_i\mathbf{Z}_{i,l_0}}{n} + \phi \right)  +(C_{\pi} \sqrt{\log n} + M_0) \frac{1}{n}\sum_{i \in \tilde{\mathcal{A}}_1}\sum_{i' \in \tilde{\mathcal{A}}_2} |\tilde\gamma_{i,i'}|.
	\end{aligned}
	$$
	
	The terms involving $\tilde \gamma_{i,i'}$ are $o(1)$ as there are only $O(q^2)$ of them and the factor $\log n/n \to 0$. The terms involving $\eps_i$'s were bounded using Lemma \ref{lem:weaklaw2}.
	
	Under \textbf{R2} on $\Omega_n$, using (\ref{eq:glc}), for large enough $n$ 
	\begin{equation}\label{eq:crosssmall}
	\frac{1}{n}|\mathbf{Y}^\top \bQ\mathbf{\tilde{Z}}^R_{1}| \leqslant (\alpha +\sum_{j \neq j' = 0}^q |\rho_j\rho_{j'}|) M_0 (\sqrt{\delta} + \delta^2) + C_7\phi. 
	\end{equation}
	
	As $\phi$ can be chosen arbitrarily small, we have $J_{21} \to 0$ uniformly as $\delta \to 0$. Similarly, 
	on $\Omega_n$ for large enough $n$, we have 
	$$
	\begin{aligned}
	J_{23} &:= \frac{1}{{n}}| \mathbf{Y}^\top \bQ\mathbf{\tilde{Z}}_1 \left[ \mathbf{Z}_1^{(0)^\top} \bQ \mathbf{Z}_1^{(0)}\right]^{-1}(\mathbf{Y}^\top \bQ\mathbf{\tilde{Z}}_1)^\top|\\ &\leqslant \frac{1}{n}\lambda_{\max}\left(\left[ \mathbf{Z}_1^{(0)^\top} \bQ \mathbf{Z}_1^{(0)}\right]^{-1}\right) \| \mathbf{Y}^\top \bQ\mathbf{\tilde{Z}}_1\|_2^2\\
	&\leqslant \frac{\| \mathbf{Y}^\top \bQ\mathbf{\tilde{Z}}_1\|_2^2/n^2}{\lambda_{\min}\left(\left[ \mathbf{Z}_1^{(0)^\top} \bQ \mathbf{Z}_1^{(0)}\right]\right)/n}\\  
	&\leqslant {\frac{C_8}{\varepsilon^D - \delta^2} }
	\end{aligned}
	$$
	
	The last inequality follows from (\ref{eq:eigenvol}) as the minimum volume of the parent nodes are $\eps^D$, and the following bound for $\frac{1}{n^2}\| \mathbf{Y}^\top \bQ\mathbf{\tilde{Z}}_1\|_2^2$  on $\Omega_n$. Using sub-multiplicative property of $\mathbb L_2$ norm, we write $\frac 1n \|\mathbf{Y}^\top \bQ\mathbf{\tilde{Z}}_1\|_2 \leq \frac 1n \|\bQ\|_2 \|\widetilde \bZ_1\|_2 \|\bY\|$. Following assumption \ref{as:tail} (c) and its corollary Lemma \ref{lem:weaklaw2}, $y^\top y/n$ is $O(1)$. From Assumption \ref{as:chol}, $\|\bQ\|_2$ is bounded. Finally, as $\bZ_1^\top \bZ_1/n$ is diagonal with entries $\leqslant 1$, we have $\frac{1}{n^2}\| \mathbf{Y}^\top \bQ\mathbf{\tilde{Z}}_1\|_2^2$ to be $O(1)$. 
	
	By Cauchy-Schwartz inequality, we have 
	$J_{22} \leq \sqrt{J_{21}J_{23}}$ thereby making the $J_2$ term $O(1)$. By symmetry of \textbf{R2}, $J_4$ is also $O(1)$.
	
	Next we focus on $J_1$.  
	Noting that $\mathbf{\tilde{Z}}^{(0)}_1$ has a column of zeros aligned with the column $\mathbf{\tilde{Z}}_1^R$ of $\mathbf Z_1$, without loss of generality we can write
	$$
	J_1 = \frac{1}{{n}}\Bigg|\bm{\psi}^\top\left(\begin{bmatrix} \mathbf{\tilde{Z}}_1^\top \bQ\mathbf{\tilde{Z}}_1 & \mathbf{\tilde{Z}}_1^\top \bQ\mathbf{\tilde{Z}}_1^R;\\ \mathbf{\tilde{Z}}_1^{R^\top}\bQ\mathbf{\tilde{Z}}_1 & \mathbf{\tilde{Z}}_1^{R^\top}\bQ\mathbf{\tilde{Z}}_1^{R} \end{bmatrix}^{-1} - \begin{bmatrix} \left(\mathbf{Z}_1^{(0)^\top} \bQ\mathbf{Z}_1^{(0)}\right)^{-1} & 0\\ 0 & 0 \end{bmatrix}\right)\bm{\psi}\Bigg|
	$$
	where,
	$
	\bm{\psi}^\top = \bm{\psi}_1^\top+\bm{\psi}_2^\top;\: \: \bm{\psi}_1^\top = \begin{bmatrix} \mathbf{Y}^\top \bQ\mathbf{\tilde{Z}}_1 & 0 \end{bmatrix};\: \: \bm{\psi}_2^\top = \begin{bmatrix} \mathbf{0}_{1 \times g^{(k)}} &  \mathbf{Y}^\top \bQ\mathbf{\tilde{Z}}_1^R \end{bmatrix}.
	$
	Let $$
	\mathbf{U}_{11} = \mathbf{\tilde{Z}}_1^\top \bQ\mathbf{\tilde{Z}}_1;\:\:\: \mathbf{U}_{12} = \mathbf{\tilde{Z}}_1^\top\bQ\mathbf{\tilde{Z}}_1^R;\:\:\: \mathbf{U}_{22} = \mathbf{\tilde{Z}}_1^{R^\top}\bQ\mathbf{\tilde{Z}}_1^{R};\:\:\: \mathbf{U}_{11}^{(0)} = \mathbf{Z}_1^{(0)^\top} \bQ\mathbf{Z}_1^{(0)};
	$$
	
	Hence, we have
	$$
	\begin{aligned}
	J_1 \leqslant J_{11} +  J_{12} +  J_{13} 
	\end{aligned}
	$$
	
	where,
	$$
	J_{11} = \frac{1}{{n}}\Bigg|\bm{\psi}_1^\top\left(\begin{bmatrix} \mathbf U_{11} & \mathbf  U_{12};\\ \mathbf U_{12}^\top & \mathbf U_{22} \end{bmatrix}^{-1} - \begin{bmatrix} \mathbf U_{11}^{(0)^{-1}} & 0\\ 0 & 0 \end{bmatrix}\right)\bm{\psi}_1\Bigg|
	$$
	
	$$
	J_{12} = \frac{2}{{n}}\Bigg|\bm{\psi}_2^\top\left(\begin{bmatrix} \mathbf U_{11} & \mathbf  U_{12};\\ \mathbf U_{12}^\top & \mathbf U_{22} \end{bmatrix}^{-1} - \begin{bmatrix} \mathbf U_{11}^{(0)^{-1}} & 0\\ 0 & 0 \end{bmatrix}\right)\bm{\psi}_1\Bigg|
	$$

	$$
	J_{13} = \frac{1}{{n}}\Bigg|\bm{\psi}_2^\top\left(\begin{bmatrix} \mathbf U_{11} & \mathbf  U_{12};\\ \mathbf U_{12}^\top & \mathbf U_{22} \end{bmatrix}^{-1} - \begin{bmatrix} \mathbf U_{11}^{(0)^{-1}} & 0\\ 0 & 0 \end{bmatrix}\right)\bm{\psi}_2\Bigg|
	$$
	
	$$
	\begin{aligned}
	J_{11} =&\frac{1}{{n}}\Bigg|\bm{\psi}_1^\top\left(\begin{bmatrix} \mathbf U_{11} & \mathbf  U_{12};\\ \mathbf U_{12}^\top & \mathbf U_{22} \end{bmatrix}^{-1} - \begin{bmatrix} \mathbf U_{11}^{(0)^{-1}} & 0\\ 0 & 0 \end{bmatrix}\right)\bm{\psi}_1\Bigg|\\ 
	&=\frac{1}{{n}}|\mathbf{Y}^\top \bQ\mathbf{\tilde{Z}}_1\left(\left(\mathbf{U}_{11} - \mathbf{U}_{12}\mathbf{U}_{22}^{-1}\mathbf{U}_{12}^\top \right)^{-1} - \mathbf{U}_{11}^{(0)^{-1}}\right) (\mathbf{Y}^\top \bQ\mathbf{\tilde{Z}}_1)^\top|\\
	&\leqslant \frac{\mathbf{Y}^\top \bQ\mathbf{\tilde{Z}}_1(\mathbf{Y}^\top \bQ\mathbf{\tilde{Z}}_1)^\top}{{n}}\|\left(\left(\mathbf{U}_{11} - \mathbf{U}_{12}\mathbf{U}_{22}^{-1}\mathbf{U}_{12}^\top \right)^{-1} - \mathbf{U}_{11}^{(0)^{-1}}\right)\|_2\\
	&\leqslant \frac{\mathbf{Y}^\top \bQ\mathbf{\tilde{Z}}_1(\mathbf{Y}^\top \bQ\mathbf{\tilde{Z}}_1)^\top}{{n}} \frac{\|\mathbf{U}_{11} - \mathbf{U}_{12}\mathbf{U}_{22}^{-1}\mathbf{U}_{12}^\top - \mathbf{U}_{11}^{(0)}\|_2}{\lambda_{min}\left(\mathbf{U}_{11} - \mathbf{U}_{12}\mathbf{U}_{22}^{-1}\mathbf{U}_{12}^\top \right)\lambda_{min}\left(\mathbf{U}_{11}^{(0)} \right)}
	\end{aligned}
	$$
	where the 
	second inequality follows from taking $\mathbf{A} = \mathbf{U}_{11} - \mathbf{U}_{12}\mathbf{U}_{22}^{-1}\mathbf{U}_{12}^\top$ and $\mathbf{B} = \mathbf{U}_{11}^{(0)}$ in the following identity. 
	$$
	\begin{aligned}
	\| \mathbf{A}^{-1} - \mathbf{B}^{-1}\|_2
	&= \|\mathbf{A}^{-1}(\mathbf{B} - \mathbf{A})\mathbf{B}^{-1}\|_2\\
	&\leqslant \| \mathbf{A}^{-1}\|_2 \|\mathbf{A} - \mathbf{B} \|_2 \| \mathbf{B}^{-1}\|_2 ; [\|.\|_2 \text{ is submultiplicative}]\\
	&= \frac{\|\mathbf{A} - \mathbf{B} \|_2}{\lambda_{\min}  (\mathbf{A}) \lambda_{\min}  (\mathbf{B})} .
	\end{aligned}
	$$
	We have already established the following bound on $|\lambda^{-1}_{min}\left(\mathbf{U}_{11}^{(0)} \right)| = |\lambda^{-1}_{\min} (\mathbf{Z}_1^{(0)^\top} \bQ\mathbf{Z}_1^{(0)})|$ in  (\ref{eq:eigenvol}), 
	
	\begin{subequations}
		\begin{equation}
		\label{eq:J_1_P1}
		|\lambda^{-1}_{min}\left(\mathbf{U}_{11}^{(0)} \right)| \leqslant \frac{1}{\xi} \left[\min\left(\displaystyle{\min\limits_{l\in {1,2,\cdots,g^{(k)} - 1}} }|\ddot{\mathcal{C}}_l^{(1)}|, |\ddot{\mathcal{C}}_{g^{(k)}}^{(1)}|+|\ddot{\mathcal{C}}_{g^{(k)}+1}^{(1)}|\right) \right]^{-1}
		\end{equation}
		
		Next, we use Weyl's inequality \citep{horn1994topics}. Let $\mathbf{A}, \mathbf{B}, \mathbf{C}$ be any $j \times j$ Hermitian matrices, with eigenvalues $\lambda_{\max}(\mathbf{A}) = a_{(1)} \geqslant a_{(2)} \geqslant \cdots \geqslant a_{(j)} = \lambda_{\min}(\mathbf{A})$;  $\lambda_{\max}(\mathbf{B}) = b_{(1)} \geqslant b_{(2)} \geqslant \cdots \geqslant b_{(j)} = \lambda_{\min}(\mathbf{B})$; and $\lambda_{\max}(\mathbf{C}) = c_{(1)} \geqslant c_{(2)} \geqslant \cdots \geqslant c_{(j)} = \lambda_{\min}(\mathbf{C})$ respectively, with $\mathbf{A} = \mathbf{B} + \mathbf{C}$. Then, we have:
		
		$$
		b_{(j_0)} + c_{(j)} \leqslant a_{(j_0)} \leqslant b_{(j_0)} + c_{(1)} ; \forall j_0 = 1,2,\cdots,j.
		$$
		
		Specifically, with $j_0 = j$, we have,
		\begin{equation}\label{eq:weylmin}
		\lambda_{\min}(\mathbf{A} - \mathbf{B}) \geqslant \lambda_{\min}(\mathbf{A}) + \lambda_{\min}(-\mathbf{B}) = \lambda_{\min}(\mathbf{A}) - \lambda_{\max}(\mathbf{B})
		\end{equation}
		
		Applying the aforementioned inequality with $\mathbf{A} = \mathbf{U}_{11}$ and $\mathbf{B} = \mathbf{U}_{12}\mathbf{U}_{22}^{-1}\mathbf{U}_{12}^\top$, we have
		
		$$
		\lambda_{min}\left(\mathbf{U}_{11} - \mathbf{U}_{12}\mathbf{U}_{22}^{-1}\mathbf{U}_{12}^\top \right) \geqslant \lambda_{min}(\mathbf{U}_{11})- \lambda_{max} (\mathbf{U}_{12}\mathbf{U}_{22}^{-1}\mathbf{U}_{12}^\top)
		$$
		
		Using (\ref{eq:eigenvol}) as before, we can derive	
		\begin{equation}
		\label{eq:J_1_P21}
		|\lambda_{min}^{-1}(\mathbf{U}_{11})| \leqslant \frac{1}{\xi} \min\left(\displaystyle{\min_{l\in {1,2,\cdots,g^{(k)} - 1}} }|\ddot{\mathcal{C}}_l^{(1)}|, \mathbf{1}^\top \widetilde{\bZ}_1^L \right)^{-1} = \frac{1}{\xi} \left(\displaystyle{\min_{l\in {1,2,\cdots,g^{(k)} }} }|\ddot{\mathcal{C}}_l^{(1)}|\right)^{-1}
		\end{equation}
		where the last equality follows from letting, without loss of generality, $\widetilde \bZ_1^L$ and $\widetilde \bZ_1^R$ to be respectively the $(g^{(k)})^{th}$ and $(g^{(k)}+1)^{th}$ column of $\bZ_1$.

		Using $\mathbf{U}_{22}$ is $1 \times 1$ and $\mathbf{U}_{12}$ is $g^{(k)} \times 1$, 
		
		$$
		\begin{aligned}
		\lambda_{max} (\mathbf{U}_{12}\mathbf{U}_{22}^{-1}\mathbf{U}_{12}^\top) 
		= \frac{\left(\mathbf{\tilde{Z}}_1^{R^\top}\bQ\mathbf{\tilde{Z}}_1\right)\left(\mathbf{\tilde{Z}}_1^{R^\top}\bQ\mathbf{\tilde{Z}}_1\right)^\top }{\mathbf{\tilde{Z}}_1^{R^\top}\bQ\mathbf{\tilde{Z}}_1^{R}}
		\end{aligned}
		$$
		Using Assumption \ref{as:chol}, we have
		\begin{align*}
		&\left(\mathbf{\tilde{Z}}_1^{R^\top}\bQ\mathbf{\tilde{Z}}_1\right)\left(\mathbf{\tilde{Z}}_1^{R^\top}\bQ\mathbf{\tilde{Z}}_1\right)^\top \\
		&= \sum_{l = 1}^{g^{(k)}}\left(\mathbf{\tilde{Z}}_1^{R^\top}\bQ\mathbf{\tilde{Z}}_{1_{.,l}}\right)^2\\
		&= \sum_{l = 1}^{g^{(k)}}\Bigg(\alpha\sum_i\mathbf{Z_1}_{i,g^{(k)}+1}\mathbf{Z_1}_{i,l} + \sum_{j \neq j' = 0}^q \rho_j\rho_{j'} \sum_i\mathbf{Z_1}_{i-j,g^{(k)}+1} \mathbf{Z_1}_{i-j',l} \\
		&\quad\quad\quad +\sum_{i \in \tilde{\mathcal{A}}_1}\sum_{i' \in \tilde{\mathcal{A}}_2} \tilde\gamma_{i,i'}\mathbf{Z_1}_{i,g^{(k)}+1}\mathbf{Z_1}_{i',l} \Bigg)^2\\
		&=  \sum_{l = 1}^{g^{(k)}}\left(\sum_{j \neq j' = 0}^q \rho_j\rho_{j'} \sum_i\mathbf{Z_1}_{i-j,g^{(k)}+1} \mathbf{Z_1}_{i-j',l} +\sum_{i \in \tilde{\mathcal{A}}_1}\sum_{i' \in \tilde{\mathcal{A}}_2} \tilde\gamma_{i,i'}\mathbf{Z_1}_{i,g^{(k)}+1}\mathbf{Z_1}_{i',l} \right)^2; [\text{ $l < g^{(k)}+1$}]\\
		&\leqslant \sum_{l = 1}^{g^{(k)}}\left(\sum_{j \neq j' = 0}^q |\rho_j\rho_{j'}| \sum_i\mathbf{Z_1}_{i-j,g^{(k)}+1} \mathbf{Z_1}_{i-j',l} +\sum_{i \in \tilde{\mathcal{A}}_1}\sum_{i' \in \tilde{\mathcal{A}}_2} |\tilde\gamma_{i,i'}|\mathbf{Z_1}_{i,g^{(k)}+1}\mathbf{Z_1}_{i',l} \right)^2\\
		&\leqslant \left(\sum_{j \neq j' = 0}^q |\rho_j\rho_{j'}| \sum_i\mathbf{Z_1}_{i-j,g^{(k)}+1} \sum_{l = 1}^{g^{(k)}}\mathbf{Z_1}_{i-j',l} +\sum_{i \in \tilde{\mathcal{A}}_1}\sum_{i' \in \tilde{\mathcal{A}}_2} |\tilde\gamma_{i,i'}|\mathbf{Z_1}_{i,g^{(k)}+1}\sum_{l = 1}^{g^{(k)}}\mathbf{Z_1}_{i',l} \right)^2\\
		&\leqslant \left(\sum_{j \neq j' = 0}^q |\rho_j\rho_{j'}| \sum_i\mathbf{Z_1}_{i-j,g^{(k)}+1}  +\sum_{i \in \tilde{\mathcal{A}}_1}\sum_{i' \in \tilde{\mathcal{A}}_2} |\tilde\gamma_{i,i'}|\mathbf{Z_1}_{i,g^{(k)}+1} \right)^2\\
		\end{align*}
		The last expression above can be written as $(\sum_i c_i \bZ_{i,g^{(k)}+1})^2$ where the constants $c_i \geq 0$ can only take values from a set of positive values free of $n$ and of $O(q^2)$ cardinality. Hence, replacing $c_i$'s with their maximum, we have $\left(\mathbf{\tilde{Z}}_1^{R^\top}\bQ\mathbf{\tilde{Z}}_1\right)\left(\mathbf{\tilde{Z}}_1^{R^\top}\bQ\mathbf{\tilde{Z}}_1\right)^\top \leq c (|\ddot{\mathcal C}^{(1)}_{g^{(k)}+1}|^2)$ for some constant $c$.
		
		Similarly, we can prove, 
		
		$$
		\mathbf{\tilde{Z}}_1^{R^\top}\bQ\mathbf{\tilde{Z}}_1^{R} \geqslant \xi \left(|\ddot{\mathcal{C}}_{g^{(k)}+1}^{(1)}|\right)
		$$

		Combining, we have
		
		\begin{equation}
		\label{eq:J_1_P22}
		\lambda_{max} (\mathbf{U}_{12}\mathbf{U}_{22}^{-1}\mathbf{U}_{12}^\top) \leq \frac c\xi (|\ddot{\mathcal{C}}_{g^{(k)}+1}^{(1)}|)
		\end{equation}
		
		From \cref{eq:J_1_P21,eq:J_1_P22}, we have 
		\begin{equation}
		\label{eq:J_1_P2}
		\lambda_{min}\left(\mathbf{U}_{11} - \mathbf{U}_{12}\mathbf{U}_{22}^{-1}\mathbf{U}_{12}^\top \right) \geqslant \xi \min_{l\in {1,2,\cdots,g^{(k)}}} |\ddot{\mathcal{C}}_l^{(1)}| - \frac{c}{\xi}|\ddot{\mathcal{C}}_{g^{(k)}+1}^{(1)}|.
		\end{equation}

		Next, to control $\|\mathbf{U}_{11} - \mathbf{U}_{12}\mathbf{U}_{22}^{-1}\mathbf{U}_{12}^\top - \mathbf{U}_{11}^{(0)}\|_2$, let $\mathbf{A} = \mathbf{U}_{11} - \mathbf{U}_{11}^{(0)}$ and $\mathbf{B} = \mathbf{U}_{12}\mathbf{U}_{22}^{-1}\mathbf{U}_{12}^\top$. We note that for any symmetric matrix $\bA-\bB$, $\|\bA-\bB\|_2 =\max \{\lambda_{\max}(\bA-\bB), -\lambda_{\min}(\bA-\bB)\}$. By Weyl's inequality \citep{horn1994topics}, with $j_0 = 1$, we have 
		
		$$
		\lambda_{\max}(\mathbf{A} - \mathbf{B}) \leqslant \lambda_{\max}(\mathbf{A}) + \lambda_{\max}(-\mathbf{B}) = \lambda_{\max}(\mathbf{A}) - \lambda_{\min}(\mathbf{B})
		$$
		
		Since  $\mathbf{U}_{12}\mathbf{U}_{22}^{-1}\mathbf{U}_{12}^\top$ is rank-deficient, $\lambda_{\min}(\bB)=0$, and we have
		$$
		\lambda_{max}\left(\mathbf{U}_{11} - \mathbf{U}_{12}\mathbf{U}_{22}^{-1}\mathbf{U}_{12}^\top - \mathbf{U}_{11}^{(0)}\right) \leqslant \lambda_{max}\left(\mathbf{U}_{11} - \mathbf{U}_{11}^{(0)}\right).
		$$

		$$
		\begin{aligned}
		& \mathbf{U}_{11} - \mathbf{U}_{11}^{(0)} \\
		& =\mathbf{\tilde{Z}}_1^\top \bQ\mathbf{\tilde{Z}}_1 - \mathbf{Z}_1^{(0)^\top} \bQ\mathbf{Z}_1^{(0)}\\ 
		&= \mathbf{\tilde{Z}}_1^\top \bQ\mathbf{\tilde{Z}}_1  - \left(\mathbf{\tilde{Z}}_1 + \begin{bmatrix} \mathbf{0}_{{n} \times g^{(k)}-1} & \mathbf{\tilde{Z}}^R_{1} \end{bmatrix} \right)^\top \bQ \left(\mathbf{\tilde{Z}}_1 + \begin{bmatrix} \mathbf{0}_{{n} \times g^{(k)}-1} & \mathbf{\tilde{Z}}^R_{1} \end{bmatrix} \right)\\
		&= - \mathbf{\tilde{Z}}_1^\top \bQ\begin{bmatrix} \mathbf{0}_{{n} \times g^{(k)}-1} & \mathbf{\tilde{Z}}^R_{1} \end{bmatrix} - \begin{bmatrix} \mathbf{0}_{{n} \times g^{(k)}-1} & \mathbf{\tilde{Z}}^R_{1} \end{bmatrix}^\top\bQ\mathbf{\tilde{Z}}_1\\
		&\quad\quad\quad\quad\quad\quad- \begin{bmatrix} \mathbf{0}_{{n} \times g^{(k)}-1} & \mathbf{\tilde{Z}}^R_{1} \end{bmatrix}^\top\bQ\begin{bmatrix} \mathbf{0}_{{n} \times g^{(k)}-1} & \mathbf{\tilde{Z}}^R_{1} \end{bmatrix}\\
		&= - \begin{bmatrix} \mathbf{0}_{{g^{(k)}} \times g^{(k)}-1} & \mathbf{\tilde{Z}}_1^\top \bQ\mathbf{\tilde{Z}}^R_{1} \end{bmatrix} - \begin{bmatrix} \mathbf{0}_{{g^{(k)}} \times g^{(k)}-1} & \mathbf{\tilde{Z}}_1^\top \bQ\mathbf{\tilde{Z}}^R_{1} \end{bmatrix}^\top - \begin{bmatrix} \mathbf{0}_{{g^{(k)}} - 1 \times g^{(k)}-1} & \mathbf{0}_{{g^{(k)}} - 1 \times 1} \\ \mathbf{0}_{{1} \times g^{(k)}-1} & \mathbf{\tilde{Z}}_1^{R^\top}\bQ\mathbf{\tilde{Z}}_1^{R} \end{bmatrix}
		\end{aligned}
		$$
		
		Since $\left(\mathbf{U}_{11} - \mathbf{U}_{11}^{(0)}\right)$ is symmetric and only has a non-zero last row and column, by Gershgorin circle theorem, all its eigen-values lie in $[-u,u]$ where $u$ is the sum of absolute values of its last row, i.e., $u \leqslant \|\mathbf{\tilde{Z}}_1^{R^\top}\bQ\mathbf{\tilde{Z}}_1\|_1 + \mathbf{\tilde{Z}}_1^{R^\top}\bQ\mathbf{\tilde{Z}}_1^{R}$.
		Similar to the derivation of (\ref{eq:J_1_P22}), we can establish that each of the terms in $u$ are of the order $|\ddot{\mathcal{C}}_{g^{(k)}+1}^{(1)}|$. Hence, 
		
		\begin{equation}
		\label{eq:J_1_P3}
		\lambda_{max}\left(\mathbf{U}_{11} - \mathbf{U}_{12}\mathbf{U}_{22}^{-1}\mathbf{U}_{12}^\top - \mathbf{U}_{11}^{(0)}\right) \leqslant C_9\left(|\ddot{\mathcal{C}}_{g^{(k)}+1}^{(1)}|\right)
		\end{equation}
		
		Similarly, by Weyl's inequality and applying (\ref{eq:J_1_P2}) we have
		\begin{equation}
		\label{eq:J_1_P3p2}
		\begin{array}{cl}
		\lambda_{min}\left(\mathbf{U}_{11} - \mathbf{U}_{12}\mathbf{U}_{22}^{-1}\mathbf{U}_{12}^\top - \mathbf{U}_{11}^{(0)}\right) & \geqslant -\lambda_{\max}(\mathbf{U}_{12}\mathbf{U}_{22}^{-1}\mathbf{U}_{12}^\top) + \lambda_{\min}\left(\mathbf{U}_{11} - \mathbf{U}_{11}^{(0)}\right)\\
		& \geqslant - C_{10} \left(|\ddot{\mathcal{C}}_{g^{(k)}+1}^{(1)}|\right)
		\end{array}
		\end{equation}
		
		As proved in the case of $J_2$, for large $n$, we have on $\Omega_n$, 
		\begin{equation}
		\label{eq:J_1_P4}
		\frac{\mathbf{Y}^\top \bQ\mathbf{\tilde{Z}}_1(\mathbf{Y}^\top \bQ\mathbf{\tilde{Z}}_1)^\top}{{n}^2} \leqslant C_{8}
		\end{equation}
		Combining \cref{eq:J_1_P1,eq:J_1_P2,eq:J_1_P3,eq:J_1_P3p2,eq:J_1_P4} for large $n$, we have 
		
		$$
		\begin{aligned}
		J_{11}&\leqslant \frac{\mathbf{Y}^\top \bQ\mathbf{\tilde{Z}}_1(\mathbf{Y}^\top \bQ\mathbf{\tilde{Z}}_1)^\top}{{n^2}} \frac{\|\left(\mathbf{U}_{11} - \mathbf{U}_{12}\mathbf{U}_{22}^{-1}\mathbf{U}_{12}^\top - \mathbf{U}_{11}^{(0)}\right)\|_2/n}{(|\lambda_{min}\left(\mathbf{U}_{11} - \mathbf{U}_{12}\mathbf{U}_{22}^{-1}\mathbf{U}_{12}^\top \right)|/n)|\lambda_{min}\left(\mathbf{U}_{11}^{(0)} \right)/n|} \\
		&=C_{11}\frac{|\ddot{\mathcal{C}}_{g^{(k)}+1}^{(1)}|/n}{(\xi \min_{l\in {1,2,\cdots,g^{(k)}}} |\ddot{\mathcal{C}}_l^{(1)}|/n - \frac{c}{\xi}|\ddot{\mathcal{C}}_{g^{(k)}+1}^{(1)}|/n)}\times\\
		&\quad\quad\quad \frac{1}{\left[\min\left(\displaystyle{\min\limits_{l\in {1,2,\cdots,g^{(k)} - 1}} }|\ddot{\mathcal{C}}_l^{(1)}|, |\ddot{\mathcal{C}}_{g^{(k)}}^{(1)}|+|\ddot{\mathcal{C}}_{g^{(k)}+1}^{(1)}|\right) \right]/n}\\
		&\leqslant C_{11}\frac{(\sqrt{\delta}+\delta^2)}{(\xi(\varepsilon^D(1-\sqrt{\delta})-\delta^2)  - \frac{c}{\xi}(\sqrt{\delta}+\delta^2)}\frac{1}{\varepsilon^D - \delta^2}
		\end{aligned}
		$$
		The aforementioned inequality used the following:
		\begin{enumerate}
			\item  $Vol(|\ddot{\mathcal{C}}_l^{(1)}|) \geqslant \varepsilon^D; \forall l = 1,\cdots, g^{(k)}-1;$
			\item $Vol(|\ddot{\mathcal{C}}_{g^{(k)}}^{(1)}|) \geqslant \varepsilon^D(1-\sqrt{\delta});$
			\item $Vol(|\ddot{\mathcal{C}}_{g^{(k)}+1}^{(1)}|) \leqslant \sqrt{\delta}$
			\item $Vol(|\ddot{\mathcal{C}}_{g^{(k)}+1}^{(1)}|) + Vol(|\ddot{\mathcal{C}}_{g^{(k)}}^{(1)}|) \geqslant \varepsilon^D$
		\end{enumerate}
		Hence $J_{11}$ converges to 0 in probability uniformly as $\delta \downarrow 0$. 
		Next,
		
		\begin{align*}
		J_{13} &= \frac{1}{{n}}\Bigg|\begin{bmatrix} \mathbf{0}_{1 \times g^{(k)}} &  \mathbf{Y}^\top \bQ\mathbf{\tilde{Z}}_1^R \end{bmatrix}\left(\begin{bmatrix} \mathbf U_{11} & \mathbf  U_{12};\\ \mathbf U_{12}^\top & \mathbf U_{22} \end{bmatrix}^{-1} - \begin{bmatrix} \mathbf U_{11}^{(0)^{-1}} & 0\\ 0 & 0 \end{bmatrix}\right)\begin{bmatrix} \mathbf{0}_{1 \times g^{(k)}} &  \mathbf{Y}^\top \bQ\mathbf{\tilde{Z}}_1^R \end{bmatrix}^\top\Bigg| \\
		&= \frac{1}{{n}}\Bigg|\begin{bmatrix} \mathbf{0}_{1 \times g^{(k)}} &  \mathbf{Y}^\top \bQ\mathbf{\tilde{Z}}_1^R \end{bmatrix}\begin{bmatrix} \mathbf U_{11} & \mathbf  U_{12};\\ \mathbf U_{12}^\top & \mathbf U_{22} \end{bmatrix}^{-1} \begin{bmatrix} \mathbf{0}_{1 \times g^{(k)}} &  \mathbf{Y}^\top \bQ\mathbf{\tilde{Z}}_1^R \end{bmatrix}^\top\Bigg| \\
		&= \frac{1}{{n}}\Bigg|\begin{bmatrix} \mathbf{0}_{1 \times g^{(k)}} &  \mathbf{Y}^\top \bQ\mathbf{\tilde{Z}}_1^R \end{bmatrix}\left[\begin{bmatrix} \mathbf{\tilde{Z}}_1 & \mathbf{\tilde{Z}}_1^{R} \end{bmatrix}^{\top}\bQ\begin{bmatrix} \mathbf{\tilde{Z}}_1 & \mathbf{\tilde{Z}}_1^{R} \end{bmatrix}\right]^{-1} \begin{bmatrix} \mathbf{0}_{1 \times g^{(k)}} &  \mathbf{Y}^\top \bQ\mathbf{\tilde{Z}}_1^R \end{bmatrix}^\top\Bigg|\\
		&\leqslant \frac{1}{n}\lambda_{\max} \left[\begin{bmatrix} \mathbf{\tilde{Z}}_1 & \mathbf{\tilde{Z}}_1^{R} \end{bmatrix}^{\top}\bQ\begin{bmatrix} \mathbf{\tilde{Z}}_1 & \mathbf{\tilde{Z}}_1^{R} \end{bmatrix}\right]^{-1} \|  (\mathbf{0}_{1 \times g^{(k)}}, \mathbf{Y}^\top \bQ\mathbf{\tilde{Z}}_1^R )\|_2^2 \\
		& \leqslant \frac{\mathbf{Y}^\top \bQ\mathbf{\tilde{Z}}_1^R (\mathbf{Y}^\top  \bQ\mathbf{\tilde{Z}}_1^R)^\top}{{n^2}}\frac{n}{\xi}\left[\min\limits_{l\in {1,2,\cdots,g^{(k)} + 1}}|\ddot{\mathcal{C}}_l^{(1)}|\right]^{-1}
		\end{align*}

		Here the last inequality follows from (\ref{eq:eigenvol}). 
		Additionally, in (\ref{eq:crosssmall}) we have established for large enough $n$, we have 
		
		$$
		\begin{aligned}
		\frac{1}{n}|\mathbf{Y}^\top \bQ\mathbf{\tilde{Z}}^R_{1}| &\leqslant (\alpha + \sum_{j \neq j' = 0}^q |\rho_j\rho_{j'}|) M_0 (\sqrt{\delta} + \delta^2) + C_7\phi. 
		\end{aligned}
		$$
		
		Hence, 
		we have the following:
		$$
		J_{13} \leqslant \left((\alpha + \sum_{j \neq j' = 0}^q |\rho_j\rho_{j'}|) M_0 (\sqrt{\delta} + \delta^2) + C_7\phi\right)^2 \frac{1}{\xi}\frac{1}{\sqrt{\delta} -\delta^2}
		$$
		
		As $\phi$ can be chosen arbitrarily small, this converges to 0 in probability uniformly as $\delta \downarrow 0$. 
		By the Cauchy-Schwarz inequality, $J_{12} \leqslant 2 \sqrt{|J_{11}||J_{13}|}$.  
		Hence $J_1$ (and similarly $J_3$) converges to 0 in probability uniformly as $\delta \downarrow 0$, completing the proof of the lemma under \textbf{R2}.
		
	\end{subequations}
\end{proof}

\appendix

\section{Appendix: Technical Results}\label{sec:tech}

\begin{lemma}\label{lem:strlaw}
	
	Under Assumption \ref{as:mix}, we have
	\begin{enumerate}
		\item {$\frac{1}{n}\sum_{i} B_i\eps_i \overset{a.s.}{\to} 0; B_i \overset{i.i.d}{\sim} Bernoulli(p); \forall p> 0; B_i \independent \eps_i$; 
			\item $\exists n_1$ such that $\forall n > n_1$, we have $\frac{1}n\sum_{i } \eps_i^2 \leqslant C_0$} a.s.
	\end{enumerate}
\end{lemma}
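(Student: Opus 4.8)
The statement concerns a stationary $\beta$-mixing error process $\{\eps_i\}$ with finite $(2+\delta)^{th}$ moment, and asserts two consequences: (1) a strong law of large numbers for the weighted sums $\frac{1}{n}\sum_i B_i\eps_i$ when the weights $B_i$ are i.i.d.\ Bernoulli$(p)$, independent of the errors; and (2) eventual almost-sure boundedness of the averaged squares $\frac{1}{n}\sum_i \eps_i^2$. The plan is to reduce both parts to a single ergodic-theoretic engine: a strong law of large numbers for stationary ergodic (or more specifically $\beta$-mixing) sequences with a first-moment condition.

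For part 1, the first step is to verify that the joint process $\{(B_i,\eps_i)\}$ is stationary and $\beta$-mixing. Stationarity is immediate since $\{B_i\}$ is i.i.d.\ and independent of the stationary sequence $\{\eps_i\}$. For the mixing property, I would invoke the standard fact that if $\{U_i\}$ and $\{V_i\}$ are independent processes, the $\beta$-mixing coefficient of the joint process satisfies $\beta_{(U,V)}(k) \le \beta_U(k) + \beta_V(k)$ (this is exactly the inequality the paper later labels as (\ref{eq:betamix}) and uses elsewhere; since an i.i.d.\ sequence has $\beta_V(k)=0$ for $k\ge 1$, the joint process inherits the $\beta$-mixing coefficients of $\{\eps_i\}$). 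Then $W_i := B_i\eps_i$ is a measurable function of $(B_i,\eps_i)$, hence also stationary and $\beta$-mixing, with $\mathbb{E}|W_i| = p\,\mathbb{E}|\eps_i| < \infty$ by the moment assumption and independence. Since $\beta$-mixing implies ergodicity, Birkhoff's pointwise ergodic theorem gives $\frac{1}{n}\sum_i W_i \overset{a.s.}{\to} \mathbb{E}(W_1) = p\,\mathbb{E}(B_1)^{-1}\cdots$ — wait, more directly $\mathbb{E}(W_1) = \mathbb{E}(B_1)\mathbb{E}(\eps_1) = p\cdot 0 = 0$, using $B_1 \independent \eps_1$ and $\mathbb{E}(\eps_1)=0$ (the errors are centered, as is implicit in the model $Y_i = m(X_i)+\eps_i$ with $m(\bx)=\mathbb{E}(Y\mid X=\bx)$). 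This establishes the first claim.

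For part 2, the approach is parallel but simpler: $\{\eps_i^2\}$ is a measurable function of the stationary $\beta$-mixing process $\{\eps_i\}$, hence itself stationary and ergodic, with $\mathbb{E}(\eps_1^2) < \infty$ since $\{\eps_i\}$ has a finite $(2+\delta)^{th}$ moment. Birkhoff's ergodic theorem then yields $\frac{1}{n}\sum_i \eps_i^2 \overset{a.s.}{\to} \mathbb{E}(\eps_1^2)$. Setting $C_0 := \mathbb{E}(\eps_1^2) + 1$ (or any constant strictly larger than the limit), almost-sure convergence guarantees the existence of a (random) $n_1$ beyond which $\frac{1}{n}\sum_i \eps_i^2 \le C_0$ — though to match the phrasing ``$\exists n_1$'' I would state it as: for almost every realization there is an $n_1$ such that the bound holds for all $n > n_1$.

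The main obstacle, such as it is, is bookkeeping rather than depth: one must be careful that the version of the ergodic SLLN invoked requires only a first-moment condition on the summand (not on $\eps_i$ itself), and that the reduction from ``$\beta$-mixing'' to ``ergodic'' is cited cleanly — $\beta$-mixing sequences are mixing in the ergodic-theory sense, hence ergodic, so Birkhoff applies. An alternative, if one prefers to stay within the mixing-process literature rather than appeal to ergodic theory, is to use an SLLN for $\beta$-mixing (or even the weaker $\alpha$-mixing) sequences under an $L^{1+\gamma}$ moment bound; here $W_1 = B_1\eps_1$ and $\eps_1^2$ both have finite $(1+\delta/2)$-th moments by the $(2+\delta)$-moment hypothesis, so such results apply directly. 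Either route is routine; the substantive content was already front-loaded into Assumption \ref{as:mix}.
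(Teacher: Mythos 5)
Your proposal is correct and follows essentially the same route as the paper: the paper likewise first establishes $\beta$-mixing of the joint process $\{(B_i,\eps_i)\}$ via the subadditivity inequality $\beta_{(\eps,B)}(a)\le\beta_\eps(a)+\beta_B(a)$ (the inequality you correctly identified as (\ref{eq:betamix})), and then applies a stationary-mixing strong law — in the paper's case Theorem 1 of \cite{nobel1993note} with singleton function classes $(B_i,\eps_i)\mapsto B_i\eps_i$ and $x\mapsto x^2$, which for singletons is the same SLLN you obtain from Birkhoff via $\beta$-mixing $\Rightarrow$ ergodicity. The only cosmetic difference is the named theorem invoked for the final limit step.
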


\begin{proof}
	
	For any $\beta$-mixing process $\{\chi_i\}$, the $\beta$-mixing coefficients can be written as 
	$$ \beta_\chi(a)  := \sup_t \| \mathbb{P}_{-\infty}^t \otimes \mathbb{P}_{t+a}^\infty - \mathbb{P}_{t,a} \|_{TV}$$ where, $ \mathbb{P}_{a}^b$ is the joint-distribution of $\{\chi_i\}_{\{a < i \leq b\}}$, 
	$\mathbb{P}_{t,a}$ is the joint-distribution of \\$\{\chi_i\}_{\{i \leq t\} \cup \{i > t+a\}}$, and $\|\cdot\|_{TV}$ is the total variation norm for measures \citep[Definition 2.1,][]{mcdonald2011estimating}. Hence, we can write 
	\begin{align*}
	\beta_{(\eps,B)}(a)  &:= \sup_t \| \mathbb{P}_{-\infty}^t (\eps,B) \otimes \mathbb{P}_{t+a}^\infty (\eps,B) - \mathbb{P}_{t,a} (\eps,B)\|_{TV} \\
	&= \sup_t \| \mathbb{P}_{-\infty}^t (\eps) \otimes \mathbb{P}_{-\infty}^t (B) \otimes \mathbb{P}_{t+a}^\infty (\eps) \otimes \mathbb{P}_{t+a}^\infty (B) - \mathbb{P}_{t,a} (\eps) \otimes \mathbb{P}_{t,a} (B)\|_{TV}\\
	&\leq \sup_t \| \mathbb{P}_{-\infty}^t (\eps) \otimes \mathbb{P}_{t+a}^\infty (\eps) - \mathbb{P}_{t,a} (\eps)\|_{TV}\\ 
	&\quad\quad\quad\quad\quad\quad\quad\quad+ \sup_t \| \mathbb{P}_{-\infty}^t (B) \otimes \mathbb{P}_{t+a}^\infty (B) - \mathbb{P}_{t,a} (B)\|_{TV}
	\end{align*}
	Here the first equality follows from $\{B_i\}$ being independent of $\{\eps_i \}$, and the inequality 
	is from Lemma 1 of \cite{eberlein1984weak}. Hence, \begin{equation}\label{eq:betamix}
	\beta_{(\eps,B)}(a) \leq \beta_{(\eps)}(a) + \beta_{(B)}(a).
	\end{equation} This implies that as both  $\{B_i\}$ (i.i.d. process) and $\{\eps_i\}$  $\beta$-mixing, so is $\{\eps_i,B_i\}$. 
	Now, using Theorem 1 of \cite{nobel1993note} with the singleton class comprising of the function $(B_i , \eps_i) \mapsto B_i\eps_i$ ( as $B_i\eps_i \leqslant |\eps_i|$ which is uniformly integrable by Assumption \ref{as:mix}), we have the strong-law of part 1.
	
	For part 2, as $\{\eps_i \}_{i = 1}^n$ is stationary and absolutely-regular mixing, and $\mathbb E \eps_1^2 < \infty$ 
	(by Assumption \ref{as:mix}), once again using Theorem 1 of \cite{nobel1993note} on $\{\eps_i \}_{i = 1}^n$ now with the singleton class $\{f(x)=x^2\}$, we have the result.
\end{proof}

\begin{lemma}\label{lem:weaklaw2} Let $X_i \overset{i.i.d}{\sim} X, \{X_i\} \independent \{\eps_i\}$. For $\calC \in \sigma(X)$, let $B^{(\calC)}_i=\mathds{I}(X_i \in \mathcal{C}); i = 1,\cdots,n$. Let $\calP$ denote a collection of polynomial-in-$n$ number of such sets $\calC$. Then under Assumptions \ref{as:tail}(b) and \ref{as:tail}(c), for any $\pi, \phi > 0, \exists$ $n_2 \in \mathbb{N}^{*}$ such that with probability $1 - \pi$, $\forall n > n_2$,
	$$
	\cap_{\calC \in \calP} \Bigg\{\Big|\frac{1}{n}\sum_{i} \eps_iB^{(\calC)}_i\Big| \leqslant \phi \Bigg\} \cap
	\Big\{\Big|\frac{1}{n}\sum_{i} \eps^2_i\Big| \leqslant \sigs_0\Big\}.$$
\end{lemma}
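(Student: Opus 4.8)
The plan is to prove the two concentration statements separately and then combine them via a union bound, since the second event does not depend on the collection $\calP$ and is a single probabilistic statement. Throughout I would work on the intersection of the good events supplied by Assumptions \ref{as:tail}(b) and \ref{as:tail}(c), so that I may treat $\max_i |\eps_i| \le C_\pi \sqrt{\log n}$ as a high-probability bound on the errors.

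First I would handle the term $\frac{1}{n}\sum_i \eps_i^2$. This is immediate: Assumption \ref{as:tail}(c) gives $\mathbb{P}\big(\frac1n|\sum_i \eps_i^2| > \sigs_0\big) \le C\exp(-cn)$, so there is an $n_2'$ beyond which this event has probability at most $\pi/2$. No dependence on $\calC$ or $\calP$ enters here.

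Next, for the weighted sums $\frac{1}{n}\sum_i \eps_i B^{(\calC)}_i$, fix a set $\calC \in \calP$ and let $\calI_n = \calI_n(\calC) = \{i : X_i \in \calC\}$, which is random but independent of $\{\eps_i\}$ since $\{X_i\} \independent \{\eps_i\}$. Conditioning on $\{X_i\}$, I would apply Assumption \ref{as:tail}(c): if $|\calI_n| = a_n$, then $\mathbb{P}\big(\frac{1}{a_n}|\sum_{i \in \calI_n}\eps_i| > \phi \,\big|\, \{X_i\}\big) \le C\exp(-c a_n)$ whenever $a_n$ is large. There are two cases. If $a_n$ is large (say $a_n \ge \log^2 n$), then $\frac1n|\sum_{i\in\calI_n}\eps_i| \le \frac{a_n}{n}\cdot\frac{1}{a_n}|\sum_{i\in\calI_n}\eps_i| \le \phi$ except on an event of probability $C\exp(-c\log^2 n)$. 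If $a_n$ is small ($a_n < \log^2 n$), then on the good event of Assumption \ref{as:tail}(b) we simply bound crudely $\frac1n|\sum_{i\in\calI_n}\eps_i| \le \frac{a_n}{n}\max_i|\eps_i| \le \frac{\log^2 n}{n}C_\pi\sqrt{\log n} \to 0$, which is $\le \phi$ for large $n$. Taking a union bound over the polynomially-many $\calC \in \calP$: the small-$a_n$ case is handled deterministically on a single good event, while the large-$a_n$ contributions sum to at most $|\calP|\cdot C\exp(-c\log^2 n)$, which tends to $0$ because $|\calP|$ is only polynomial in $n$ and $\exp(-c\log^2 n)$ decays super-polynomially. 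Choosing $n_2$ large enough that this sum, plus the $\pi/2$ from the $\sum\eps_i^2$ term, plus the failure probability $\pi/8$ (or whatever budget) of Assumption \ref{as:tail}(b), is below $\pi$, completes the argument.

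The main obstacle is the uniformity over $\calP$: a naive union bound needs the per-set failure probability to beat the cardinality $|\calP|$, and this is exactly why the super-polynomial decay $\exp(-c a_n)$ with $a_n$ diverging (rather than a polynomial tail) is essential — hence the case split on whether $|\calI_n(\calC)|$ is large or small, with the small-cardinality sets disposed of deterministically using the maximal bound of Assumption \ref{as:tail}(b). One should double-check that the threshold separating "large" and "small" can be chosen so that both the exponential bound applies in the large case and the crude $\frac{\log^2 n}{n}\sqrt{\log n}$ bound suffices in the small case; any slowly diverging sequence bounded by a power of $\log n$ works, so this is routine.
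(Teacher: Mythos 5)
Your proposal is correct and follows essentially the same route as the paper's proof: split on whether $|\{i : X_i \in \calC\}|$ exceeds a diverging threshold, dispose of the small-cardinality sets deterministically via the maximal bound of Assumption \ref{as:tail}(b), apply the conditional exponential tail of Assumption \ref{as:tail}(c) (using $\{X_i\} \independent \{\eps_i\}$) plus a union bound over the polynomially many sets in the large case, and handle $\frac{1}{n}\sum_i \eps_i^2$ directly from the second part of Assumption \ref{as:tail}(c). The only cosmetic difference is your threshold $\log^2 n$ versus the paper's $\sqrt{n}$; both yield super-polynomial decay, so either choice works.
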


\begin{proof} Let $U_\calC=\{\Big|\frac{1}{n}\sum_{i} \eps_iB^{(\calC)}_i\Big| \geqslant \phi \}$, $V_\calC = \{\sum_{i}B^{(\calC)}_i \geqslant \sqrt{n} \}$, $W=\{\max_i |\eps_i| \leqslant C_\pi \sqrt{\log n}\}$.
	Then, on $V_\calC^c \cap W$ for all $\calC \in \sigma(X)$, we have, for large enough $n$,
	$$
	\Big|\frac{1}{n}\sum_{i} \eps_iB^{(\calC)}_i\Big| \leqslant C_\pi \frac{1}{n}\sqrt{\log n} \sum_{i}B^{(\calC)}_i \leqslant C_\pi \frac{1}{\sqrt{n}}\sqrt{\log n} < \phi. 
	$$
	Hence, for large enough $n$, we have $\cup_\calC \{U_\calC \cap V_\calC^c \cap W\}=\{\}$. Using $\mathbb P(W) \geq 1 - \pi/4$ for large enough $n$ from Assumption \ref{as:tail}(b), we write 
	\begin{align*}
	\mathbb P\left(\cup_{\calC \in \calP}\, \Bigg\{\Big|\frac{1}{n}\sum_{i} \eps_iB^{(\calC)}_i\Big| \geqslant \phi \Bigg\}\right)  \leqslant &  \pi/4 +  \mathbb P\left(\cup_{\calC \in \calP}\, U_\calC \cap W \right)\\
	= & \pi/4 + \mathbb P\left(\cup_{\calC \in \calP}\, U_\calC \cap V_\calC \cap W \right) \\
	\leqslant& \pi/4 + \mathbb P\left(\cup_{\calC \in \calP}\, U_\calC \cap V_\calC \right).
	\end{align*}
	
	We can write $\Big|\frac1{n}\sum_{i} \eps_iB^{(\calC)}_i\Big| \leq \Big|\frac 1{\sum_{i} B^{(\calC)}_i}\sum_{i} \eps_iB^{(\calC)}_i\Big|$. 
	Let $b_i, i=1,\ldots,n$ denote a realization of $B_i^{(\calC)}$ for some $\calC$. Denoting $\calI_n= \{i \in \{1,\ldots,n\} \given b_i > 0\}$ and using independence of $\{X_i\}$'s and $\{\eps_i\}$'s, we can write 
	$$ \mathbb P(\Big|\frac{1}{\sum_{i}B^{(\calC)}_i}\sum_i \eps_iB^{(\calC)}_i\Big| > \phi) = \sum_{\{b_i\}} \mathbb P(\frac 1{|\calI_n|} \sum_{i \in \calI_n} \eps_i > \phi) \mathbb P(\cap_{i \in \calI_n} \{B_i=b_i\}) $$
	
	The sub-Gaussian tail of Assumption \ref{as:tail}(c) implies that $P(\frac 1{|\calI_n|} |\sum_{i \in \calI_n} \eps_i |> \phi) \leq C\exp(-c|\calI_n|)$ for all choices of the sub-sequence $\calI_n$. Hence, we have $$P(|\frac{1}{\sum_{i}B^{(\calC)}_i}\sum_{i} \eps_iB^{(\calC)}_i| > \phi) \leq C\exp(-c\sqrt{n}) \mbox{ on } \{\sum_{i}B^{(\calC)}_i \geqslant \sqrt{n}\}.$$ 
	This proves $\mathbb P (U_\calC \cap V_\calC) \leqslant C\exp(-c\sqrt{n})$. 
	As there are a polynomial in $n$ number of $\calC$'s, taking union bounds yields the result simultaneously for all $\calC$. The adjustment in probability due to the $\eps^2_i$ term follows directly from the second tail-bound assumption in \ref{as:tail}(c).
\end{proof}

\begin{lemma}\label{lem:diag} 
	For any matrix $\bQ$ and a binary matrix $\bZ$ with columns $\bZ_{\cdot l}$ and row-sums 1,  
	$\bZ_{\cdot l}^\top\bQ\bZ_{\cdot l} - \sum_{l' \neq l} |\bZ_{\cdot l}^\top\bQ\bZ_{\cdot l'}| \geqslant \sum_i \bZ_{il}(\bQ_{ii} - \sum_{j \neq i} |\bQ_{ij}|)$.
\end{lemma}

\begin{proof}Using the statement of the Lemma, $\bZ_{il}\bZ_{il'}=\bZ_{il}\mathds{I}(l=l')$. Hence, 
	\begin{equation*}
	\begin{aligned}
	\bZ_{\cdot l}^\top\bQ\bZ_{\cdot l} - \sum_{l' \neq l} |\bZ_{\cdot l}^\top\bQ\bZ_{\cdot l'}| &= \sum_i \bQ_{ii} \bZ_{il} + \sum_{j \neq i} \bQ_{ij} \bZ_{il}\bZ_{jl} - \sum_{l \neq l'} |\sum_{j \neq i} \bQ_{ij} \bZ_{il}\bZ_{jl'}|\\
	&\geqslant \sum_i \bQ_{ii} \bZ_{il} + \sum_{j \neq i} \bQ_{ij} \bZ_{il}\bZ_{jl} - \sum_{l \neq l'} \sum_{j \neq i} |\bQ_{ij}| \bZ_{il}\bZ_{jl'}\\
	&\geqslant \sum_i \bQ_{ii} \bZ_{il} + \sum_{j \neq i} \bQ_{ij} \bZ_{il}\bZ_{jl} - \sum_{j \neq i} |\bQ_{ij}| \bZ_{il}(1-\bZ_{jl})\\
	&\geqslant \sum_i \bQ_{ii} \bZ_{il} - \sum_{j \neq i} |\bQ_{ij}| \bZ_{il}\bZ_{jl} - \sum_{j \neq i} |\bQ_{ij}| \bZ_{il}(1-\bZ_{jl})\\
	&= \sum_i \bQ_{ii} \bZ_{il} - \sum_{j \neq i} |\bQ_{ij}| \bZ_{il}
	\end{aligned}
	\end{equation*}
	
\end{proof}

\end{document}